\newcommand\MSE{\mathrm{MSE}}
\newcommand\loss{\mathcal{L}}
\newcommand\TV{\mathrm{TV}}
\newcommand\GG{\mathrm{Gen}}
\def\E{\mathbb{E}}
\def\R{\mathbb{R}}
\def\cD{\mathcal{D}}
\def\cF{\mathcal{F}}
\def\cI{\mathcal{I}}
\def\cL{\mathcal{L}}
\def\cN{\mathcal{N}}
\def\cP{\mathcal{P}}
\theoremstyle{plain}
\newtheorem{theorem}{Theorem}[section]
\newtheorem{proposition}[theorem]{Proposition}
\newtheorem{lemma}[theorem]{Lemma}
\newtheorem{corollary}[theorem]{Corollary}
\theoremstyle{definition}
\newtheorem{definition}[theorem]{Definition}
\newtheorem{assumption}[theorem]{Assumption}
\newtheorem{remark}[theorem]{Remark}
\begin{document}

\title{Stable Minima Cannot Overfit in Univariate ReLU Networks: Generalization by Large Step Sizes}
\author[1]{Dan Qiao}
\author[1]{Kaiqi Zhang}
\author[1]{Esha Singh}
\author[2]{Daniel Soudry}
\author[3]{Yu-Xiang Wang}
\affil[1]{Department of Computer Science, UC Santa Barbara}
\affil[2]{Technion – Israel Institute of Technology}
\affil[3]{Halıcıoğlu Data Science Institute, UC San Diego}
\affil[ ]{\texttt{\{danqiao,kzhang70,esingh\}@ucsb.edu},\;
        \texttt{daniel.soudry@gmail.com},\;
	\texttt{yuxiangw@ucsd.edu}}

\date{}

\maketitle

\begin{abstract}
 We study the generalization of two-layer ReLU neural networks in a univariate nonparametric regression problem with noisy labels. This is a problem where kernels (\emph{e.g.} NTK) are provably sub-optimal and benign overfitting does not happen, thus disqualifying existing theory for interpolating (0-loss, global optimal) solutions. We present a new theory of generalization for local minima that gradient descent with a constant learning rate can \emph{stably} converge to.  We show that gradient descent with a fixed learning rate $\eta$ can only find local minima that represent smooth functions with a certain weighted \emph{first order total variation} bounded by $1/\eta - 1/2 + \widetilde{O}(\sigma + \sqrt{\mathrm{MSE}})$ where $\sigma$ is the label noise level, $\mathrm{MSE}$ is short for mean squared error against the ground truth, and $\widetilde{O}(\cdot)$ hides a logarithmic factor. Under mild assumptions, we also prove a nearly-optimal MSE bound of $\widetilde{O}(n^{-4/5})$  within the strict interior of the support of the $n$ data points. Our theoretical results are validated by extensive simulation that demonstrates large learning rate training induces sparse linear spline fits. To the best of our knowledge, we are the first to obtain generalization bound via minima stability in the non-interpolation case and the first to show ReLU NNs without regularization can achieve near-optimal rates in nonparametric regression.
\end{abstract}

\newpage
\tableofcontents
\newpage

\section{Introduction}

\begin{wrapfigure}{r}{0.3\textwidth}
\vspace{-2em}
  \begin{center}
    \includegraphics[width=0.3\textwidth]{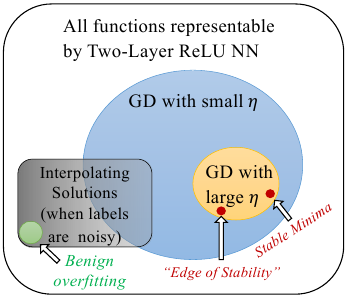}
  \end{center}
  \vspace{-1em}
  \caption{We show that "Large step size selects simple functions that generalize."}
  \vspace{-1em}
\end{wrapfigure}
How do gradient descent-trained neural networks work? It is an intriguing question that depends on model architecture,  data distribution, and optimization algorithms used for training \citep{zhang2021understanding}. Specifically, in the overparameterized regime with specific random initialization of the weights, it was shown that gradient descent finds global optimal (0-loss or interpolating) solutions despite the non-convex objective function \citep{jacot2018neural,du2018gradient,liu2022loss}. It was also shown that among the (many) global optimal solutions, the particular solutions that are selected by gradient descent often do not overfit despite having $0$ training error \citep{chizat2019lazy,arora2019fine,mei2019mean},  sometimes even if the data is noisy --- a phenomenon known as ``benign overfitting'' \citep[e.g.,][]{belkin2019does,bartlett2020benign,frei2022benign}.

What is less well-known is that interpolating solutions do overfit for ReLU neural networks (ranging from tempered to catastrophic) \citep{mallinar2022benign,joshi2023noisy,haas2023mind} and the generalization bounds in the kernel regime are provably suboptimal for certain univariate nonparametric regression problems \citep{suzuki2018adaptivity,zhang2022deep}.  These ``exceptions'' significantly limit the abilities of the kernel theory or ``benign overfitting'' theory in predicting the performance of an overparameterized neural network in practice.  

In many learning problems with noisy data, the best solutions are simply not among those that interpolate the data. For example, no interpolating solutions can be \emph{consistent} in a \emph{fixed-design} nonparametric regression problem. Even if interpolating solutions that satisfy benign overfitting can be found, they could be undesirable due to their ``spikiness'' \citep{haas2023mind} and lack of robustness \citep{hao2024surprising}. In addition, it was reported that when the label is noisy, it takes much longer for gradient descent to overfit \citep{zhang2021understanding}. Most practical NN training would have entered the \emph{Edge-of-Stability} regime \citep{cohen2020gradient} or stopped before the interpolation regime kicks in.

These observations motivate us to come up with an alternative theory for gradient-descent training of overparameterized neural networks that do not require interpolation.

\begin{figure}[t]
    \centering
\includegraphics[width=0.33\textwidth]{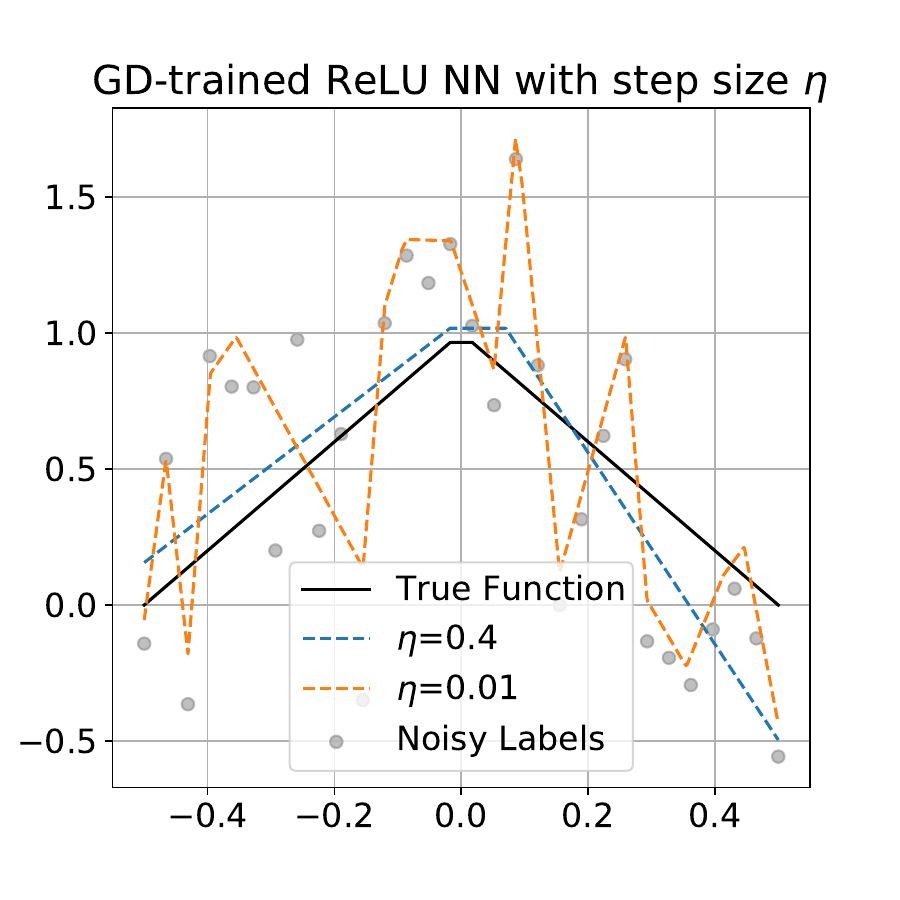}
\includegraphics[width=0.33\textwidth]{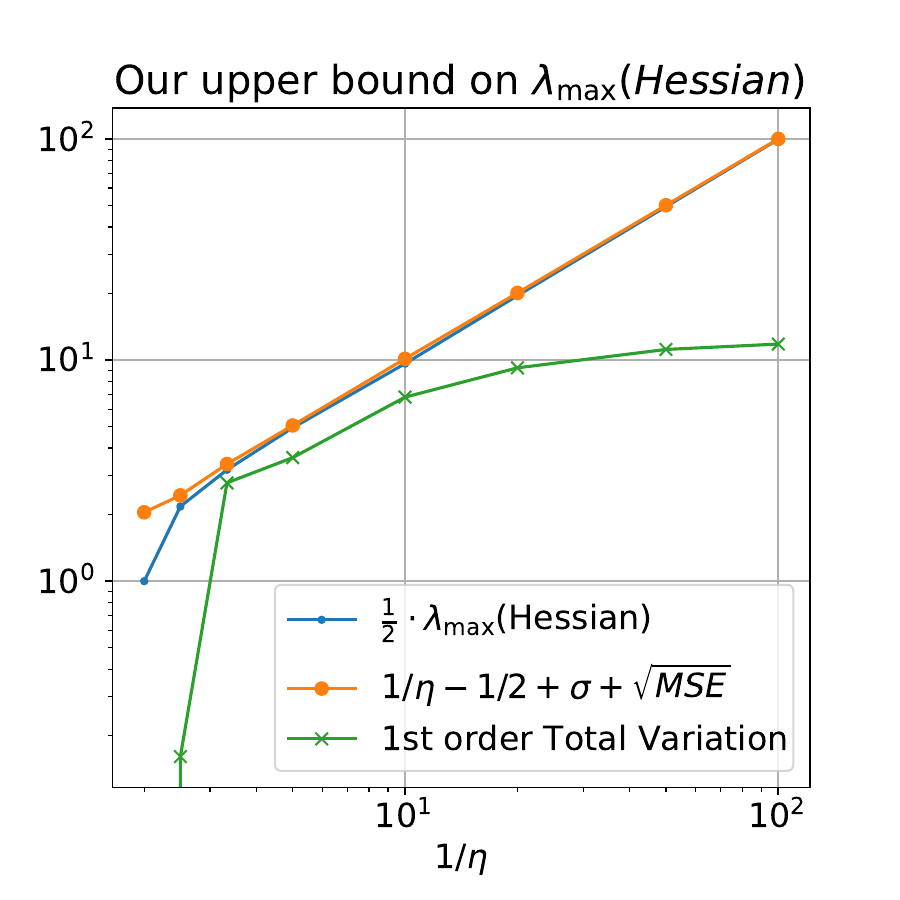}
\includegraphics[width=0.3\textwidth]{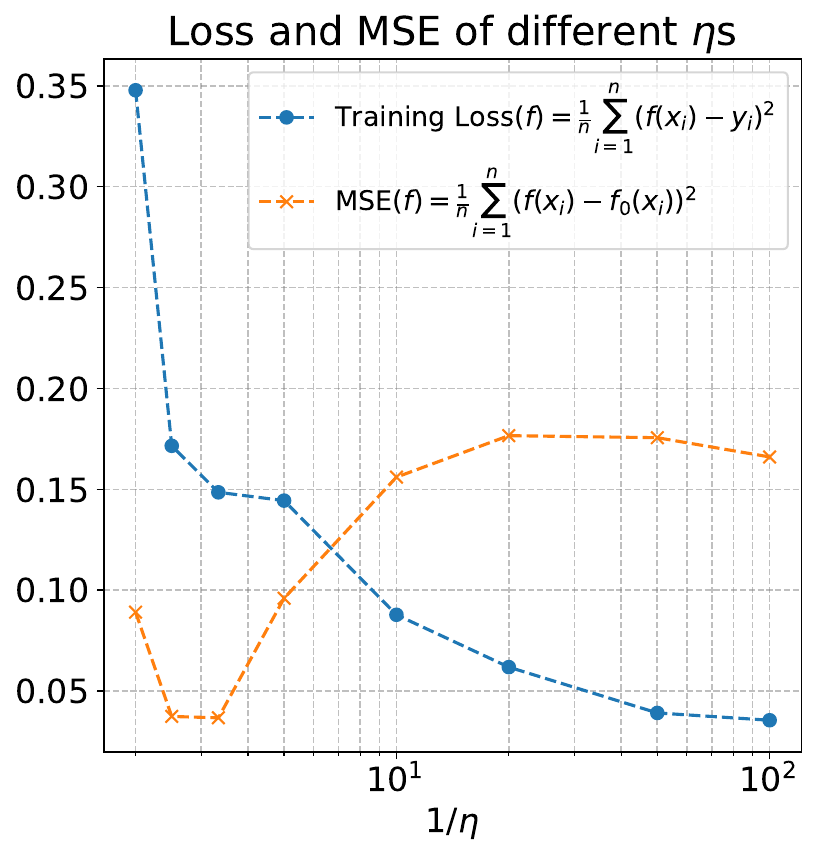}
    \caption{Empirical evidence of our claim. Constant step size gradient descent-trained two-layer ReLU neural networks generalize because of minima stability. The \textbf{left panel} shows that with increasing step size, gradient descent finds smoother solutions (linear splines) with a smaller number of knots. The \textbf{middle panel} illustrates our theoretical result with a \emph{numerically accurate} upper bound using $1/\eta + O(1)$ of the curvature and TV1-complexity of the smooth solution. The \textbf{right panel} shows that tuning $\eta$ gives the classical U-shape bias-variance tradeoff for overparameterized NN.}
    \label{fig:teaser}
\end{figure}
 
\subsection{Summary of Contributions}

In this paper, we present a new theory of generalization for solutions that gradient descent (GD) with a fixed learning rate can \emph{stably} converge to.  Specifically:
\begin{enumerate}[leftmargin=*]
    \item We show that for 1D nonparametric regression ($n$ data points with noisy labels), the solutions that GD can stably converge to \emph{must be} regular functions with small (weighted) first-order \emph{total variation} (Theorem \ref{thm:tv1} and Corollary \ref{cor:tv1}), thus promoting sparsity in the number of linear pieces. This generalizes the result of \citet{mulayoff2021implicit} by removing the ``interpolation'' assumption.  We also show that in the noisy case, there is no ``flat'' interpolating solution and gradient descent cannot converge to them unless the learning rate is $O(1/n^2)$  no matter how overparameterized the two-layer ReLU network is (Theorem \ref{thm:counter}).
    \item We show that such solutions (stable local minima that GD converges to) \emph{cannot overfit}, in the sense that the \emph{generalization gap} vanishes as $n\rightarrow \infty$ inside the strict interior of the data support (Theorem \ref{thm:gp}). Moreover, under a mild additional assumption on gradient descent finding solutions with training loss smaller than $\sigma^2$, we prove that these solutions achieve \emph{near-optimal} rate for estimating (the strict interior of) \emph{first-order bounded variation functions} (Theorem \ref{thm:under}) --- provably faster than any kernel ridge regression estimators, including neural networks in the ``kernel'' regime. 
    \item We conduct extensive numerical experiments to demonstrate our theoretical predictions, validate our technical assumptions, and illustrate the functional form of the ReLU NNs as well as the learned basis functions that gradient descent finds with different step sizes. These results reveal new insights into how gradient descent training aggressively learns representation and induces implicit sparsity. 
\end{enumerate}

To the best of our knowledge, these results are new to this paper. We emphasize that (1) the training objective function is not explicitly regularized; (2) we do not early-stop training in favor of algorithmic-stability; and (3) the solutions that gradient descent converges to are not global optimal (interpolating) solutions unless the label noise is $0$.  Our approach is new in that we directly analyze the complexity of a superset of solutions that gradient descent can stably converge to, which enables us to prove end-to-end generalization bounds that are near-optimal in nonparametric regression tasks.

Our analysis for gradient descent-training is categorically different from those in the ``kernel'' (a.k.a. ``lazy'') regime about interpolating solutions. Instead, we rigorously prove (and empirically demonstrate) that large step-size gradient descent do not behave this way and it does not converge to interpolating solutions. Our results fall into the non-kernel regime of neural network learning known as the ``rich'' (a.k.a. the ``feature learning'' or ``sparse'') regime \citep{chizat2019lazy,woodworth2020kernel}, in which the weights and biases can move arbitrarily far away from their initialization.  

\noindent\textbf{Technical novelty.}  The main technical innovation in our analysis is in handling the 
$
\frac{1}{n}\sum_i (y_i - f_\theta(x_i))\nabla^2_\theta f_\theta(x_i) 
$
term that arises in the minima stability analysis when it was previously handled by \citet{mulayoff2021implicit} using interpolation, i.e.,  $y_i = f(x_i)\; \forall\; i\in[n]$. It turns out in the noisy-label case, we can decompose the term into a certain Gaussian complexity measure and a self-bounding style MSE of that $f_\theta$. A non-trivial step is to bound the largest eigenvalue of $\nabla^2 f_\theta(\cdot)$ by a constant which results in a uniform bound of both the Gaussian complexity and the MSE. These bounds themselves are vacuous in terms of the implied generalization error, but plugging them into the function-space constraint imposed by the noisy minima stability bound restricts the learned ReLU NN to be inside a weighted TV1 function class (Details in Theorem \ref{thm:tv1}).  This, in turn, allows us to \emph{amplify} a vacuous MSE bound into a new MSE bound that is nearly optimal in the strict interior of the data support.  The main technique in the last step involves bounding the metric entropy of the weighted TV1 class and then carefully working out a self-bounding (square loss) version of the Dudley's chaining argument \citep{wainwright2019high}.

\noindent\textbf{Disclaimers and limitations.} It is important to note that while we analyze gradient descent training of overparameterized neural networks, the computational claim is very different from those in the kernel regime. The analysis in the kernel regime ensures that gradient descent finds interpolating solutions efficiently. We do not have a comparable efficiency claim. While computational guarantees on stationary point (and local minima) convergence in non-convex optimization problems are well-understood \citep{ghadimi2013stochastic,jin2017escape}, we do not have guarantees on whether the solution that gradient descent finds satisfies our assumption on the training loss being ``optimized'' (smaller than label noise $\sigma^2$).  Instead, our results provide a generalization gap bound for any stable solutions (Theorem \ref{thm:gp}) and a near-optimal excess risk (MSE vs ground truth) bound (Theorem \ref{thm:under}) when the solution that GD finds happens to satisfy the assumption (empirically it does!).  This is a meaningful middle ground between classical learning theory which does not concern optimization at all and modern theory that is fully optimization-dependent. 

While we focus on (full batch) gradient descent for a clean presentation, the minima stability for stochastic gradient descent is immediate under the stochastic definition of minima stability \citep{mulayoff2021implicit}. The same reason applies to us focusing on univariate nonparametric regression. Our technique can be used to generalize the multivariate function space interpretation of minima stability from \citet{nacson2022implicit} to the noisy case, but it will take substantial effort to formalize the corresponding generalization bounds in the multivariate case, which we leave as a future work.

\subsection{Related Work and Implications of Our Results}\label{sec:related works}

\noindent\textbf{Generalization in Overparameterized NNs, Interpolation, and Benign Overfitting.} 
Most existing theoretical work on understanding the generalization of overparameterized neural networks focuses on the interpolation regime \citep{cao2019generalization,frei2022benign,kou2023benign,buzaglo2024uniform}.  Handling label noise either requires explicit regularization \citep{hu2020simple,zhang2022deep}, algorithmic stability through early stopping \citep{hardt2016train,richards2021stability}, or carefully crafted data distribution that leads to a phenomenon known as benign overfitting \citep{bartlett2020benign,frei2022benign,kou2023benign}. Benign overfitting could happen for nonparametric regression tasks \citep{belkin2019does}, but there is well-documented empirical and theoretical evidence that benign overfitting does not occur for regression tasks with ReLU activation \citep{mallinar2022benign,haas2023mind,joshi2023noisy} and that the excess risk is required to be proportional to the standard deviation of the label noise \citep{kornowski2024tempered}.  We are the first to go beyond the interpolation regime and show that gradient descent-trained neural networks generalize in noisy regression tasks without explicit regularization. 

\noindent\textbf{Implicit bias of gradient descent.}
The implicit bias of gradient descent training of overparameterized NN is well-studied. It was shown that among the many globally optimal (interpolating) solutions, gradient descent finds the ones with the smallest norm in certain Hilbert spaces \citep{arora2019fine,mei2019mean}, classifiers with largest-margin \citep{chizat2020implicit}, or the smoothest cubic spline interpolation \citep{jin2023implicit}. None of these results, however, imply generalization bounds when the labels are noisy. Interestingly, our results show that gradient descent with a large step size induces an implicit bias that resembles sparse L1-regularization rather than the dense L2 regularization from gradient flow \citep{jin2023implicit}.

\noindent\textbf{Implicit bias of minima stability.}
The closest to our work is the line of work on the implicit bias of minima stability \citep{ma2021linear,mulayoff2021implicit,wu2023implicit}. We build directly on top of the function-space interpretation of minima stability established by \citet{mulayoff2021implicit}.
However, these works critically rely on the minima interpolating the data, which makes their results inapplicable to settings with label noise. \citet{mulayoff2021implicit} also did not establish formal generalization bounds.  \citet{ma2021linear,wu2023implicit} do have generalization bounds, but (again) their results require interpolation and thus do not apply to our settings. 

\noindent\textbf{Flat/Sharp Minima and generalization.}
Our work is also connected to the body of work on the hypothesis that ``flat local minima generalize better''. Despite compelling empirical evidence~\citep{hochreiter1997flat,keskar2017large}, rigorous theoretical understanding of this hypothesis is still lacking \citep[see, e.g.,][and the references therein]{wu2023implicit}. Our work contributes to this literature by formally proving that the hypothesis is real for two-layer ReLU NNs in a noisy regression task.

\noindent\textbf{Edge-of-Stability and Catapults.} Empirical observations on how large learning rate training of NN finds solutions with Hessian's largest eigenvalue dancing around $2/\eta$, i.e., ``edge of stability'' regime \citep{cohen2020gradient}; and that the loss may go up first before going down to a good solution (``catapult'') \citep{lewkowycz2020large}. Existing theoretical understanding of these curious behaviors of GD training is still limited to toy-scale settings (e.g., \cite{arora2022understanding,ahn2023learning,kreisler2023gradient}). Our work is complementary in that we provide generalization bound to the final solution GD stabilizes on no matter how GD gets there.
Outside the context of GD and neural networks, ``edge of stability'' and the implicit bias of large step-size were observed for forward stagewise regression \citep[see, e.g.][Section 4.4 and Page 42]{tibshirani2015general,tibshirani2014fast_talk} albeit only empirically. Our results may provide a theoretical handle in formally analyzing these observations.

\noindent\textbf{Optimal rates of NNs in nonparametric regression.}
Finally, it was previously known that neural networks can achieve optimal rates for estimating TV1 functions \citep{suzuki2018adaptivity,liu2021besov,parhi2021banach,zhang2022deep}. Specifically, \citet{savarese2019infinite,ongie2019function,parhi2021banach,zhang2022deep} show that weight decay in ReLU networks is connected to total variation regularization. However, these works assume one can solve an appropriately constrained or regularized empirical risk minimization problem. Our work is the first to show that the optimal rate is achievable with gradient descent without weight decay. In fact, it was a pleasant surprise to us that both weight decay and large learning rate induce total variation-like implicit regularization in the function space.

\section{Notations and Problem Setup}\label{sec:setup}
Let us set up the problem formally. Throughout the paper, we use $O(\cdot),\Omega(\cdot)$ to absorb constants while $\widetilde{O}(\cdot)$ suppresses logarithmic factors. Meanwhile, $[n]=\{1,2,\cdots,n\}$. 

\noindent\textbf{Two-layer neural network.} We consider two-layer (\emph{i.e.} one-hidden-layer) univariate ReLU networks,
\begin{equation}
    \mathcal{F} = \left\{f:\mathbb{R}\rightarrow\mathbb{R} \;\bigg|\; f(x)=\sum_{i=1}^{k} w_i^{(2)}\phi\left(w_i^{(1)}x+b_i^{(1)}\right)+b^{(2)} \right\},
\end{equation}
where the network consists of $k$ hidden neurons and $\phi(\cdot)$ denotes the ReLU activation function.

\noindent\textbf{Training data and loss function.} The training dataset is denoted by $\mathcal{D}=\{(x_i,y_i)\in \R \times \R, i\in[n]\}$. $\{x_i\}_{i=1}^n$ is assumed to be supported by $[-x_{\max},x_{\max}]$ for some constant $x_{\max}>0$.  We focus on regression problems with square loss $\ell(f,(x,y))= \frac{1}{2}(f(x)-y)^2$.  The training loss is defined as $\mathcal{L}(f)=\frac{1}{2n}\sum_{i=1}^n\left(f(x_i)-y_i\right)^2$. Notice that $f$ is parameterized by $\theta := [w^{(1)}_{1:k},b^{(1)}_{1:k},w^{(2)}_{1:k},b^{(2)}] \in \R^{3k+1}$. As a short hand, we define $\ell_i(\theta):=\ell(f_\theta,(x_i,y_i))$ and $\cL(\theta):=\frac{1}{n}\sum_{i\in[n]}\ell_i(\theta)$.

\noindent\textbf{Gradient descent.} We focus on the Gradient descent (GD) learner, which iteratively updates $\theta$:
\begin{equation}\label{equ:gd}
    \theta_{t+1}=\theta_t-\eta \nabla\loss(\theta_t),\; t\geq0,
\end{equation}
where $\eta>0$ is the step size (a.k.a. learning rate) and $\theta_0$ is the initial parameter. Detailed calculation of gradient for two-layer ReLU networks is deferred to Appendix \ref{app:calculate}. Below we define stability for local minima and discuss the conditions for a minimum to be stable.

\noindent\textbf{Twice differentiable stable local minima.} Similar to \citet{mulayoff2021implicit}, we consider twice differentiable minima. According to Taylor's expansion around a twice differentiable minimum $\theta^\star$, 
\begin{equation}
    \loss(\theta)\approx\loss(\theta^\star)+(\theta-\theta^\star)^T\nabla\loss(\theta^\star)+\frac{1}{2}(\theta-\theta^\star)^T\nabla^2\loss(\theta^\star)(\theta-\theta^\star),
\end{equation}
where $\nabla^2\loss$ denotes the Hessian matrix and $\nabla\loss(\theta^\star)=0$. Therefore, as $\theta_t$ gets close to $\theta^\star$, the update rule for GD \eqref{equ:gd} can be approximated as $\theta_{t+1}\approx\theta_t-\eta\left(\nabla\loss(\theta^\star)+\nabla^2\loss(\theta^\star)(\theta_t-\theta^\star)\right)$. Such approximation motivates the definition of linear stability, which is first stated in \citet{wu2018sgd}.

\begin{definition}[Linear stability]\label{def:ls}
    With the update rule $\theta_{t+1}=\theta_t-\eta\left(\nabla\loss(\theta^\star)+\nabla^2\loss(\theta^\star)(\theta_t-\theta^\star)\right)$, a twice differentiable local minimum $\theta^\star$ of $\loss$ is said to be $\epsilon$ linearly stable if for any $\theta_0$ in the $\epsilon$-ball $\mathcal{B}_\epsilon(\theta^\star)$, it holds that $\limsup_{t\rightarrow\infty}\|\theta_t-\theta^\star\|\leq\epsilon$.
\end{definition}
Note that different from previous works \citep{wu2018sgd,mulayoff2021implicit}, we remove the expectation before $\|\theta_t-\theta^\star\|$ since under GD everything is deterministic. Intuitively speaking, linear stability requires that once we have arrived at a distance of $\epsilon$ from $\theta^\star$, we end up staying in the $\epsilon$-ball $\mathcal{B}_\epsilon(\theta^\star)$. It is known that linear stability is connected to the flatness of the local minima. 


\begin{lemma}\label{lem:stable}
    Consider the update rule in Definition \ref{def:ls}, for any $\epsilon>0$, a local minimum $\theta^\star$ is an $\epsilon$ linearly stable minimum of $\loss$ if and only if $\lambda_{\max}(\nabla^2\loss(\theta^\star))\leq\frac{2}{\eta}$. 
\end{lemma}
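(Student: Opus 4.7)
The plan is to exploit the fact that at a twice differentiable local minimum $\theta^\star$ we have $\nabla\loss(\theta^\star)=0$ and $H:=\nabla^2\loss(\theta^\star)$ is symmetric and positive semidefinite, and then to reduce the dynamics to a linear recursion whose long-run behavior is controlled by the spectrum of $I-\eta H$.

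First I would rewrite the given linearized update as
\begin{equation*}
    \theta_{t+1}-\theta^\star = (I-\eta H)(\theta_t-\theta^\star),
\end{equation*}
using $\nabla\loss(\theta^\star)=0$, so that by iteration $\theta_t-\theta^\star=(I-\eta H)^t(\theta_0-\theta^\star)$. Since $H$ is symmetric, diagonalize $H=U\Lambda U^T$ with nonnegative eigenvalues $\lambda_1,\ldots,\lambda_d$; then $\|(I-\eta H)^t v\|_2$ in the eigenbasis equals $\bigl(\sum_i (1-\eta\lambda_i)^{2t} \langle u_i,v\rangle^2\bigr)^{1/2}$, and the long-run behavior is governed by $\rho := \max_i |1-\eta\lambda_i|$.

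For the ``if'' direction, suppose $\lambda_{\max}(H)\leq 2/\eta$. Combined with $\lambda_i\geq 0$, each eigenvalue of $I-\eta H$ satisfies $-1\leq 1-\eta\lambda_i\leq 1$, hence $\rho\leq 1$. Therefore, for any $\theta_0\in\mathcal{B}_\epsilon(\theta^\star)$,
\begin{equation*}
    \|\theta_t-\theta^\star\| = \|(I-\eta H)^t(\theta_0-\theta^\star)\| \leq \rho^t \|\theta_0-\theta^\star\|\leq \epsilon,
\end{equation*}
so $\limsup_{t\to\infty}\|\theta_t-\theta^\star\|\leq\epsilon$ as required.

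For the ``only if'' direction, I will argue by contrapositive. Suppose $\lambda_{\max}(H) > 2/\eta$, so there exists an eigenvector $u$ of $H$ with $\|u\|=1$ and eigenvalue $\lambda$ satisfying $|1-\eta\lambda|>1$. Set $\theta_0=\theta^\star+\epsilon u\in\mathcal{B}_\epsilon(\theta^\star)$; then $\theta_t-\theta^\star=(1-\eta\lambda)^t\epsilon u$ and $\|\theta_t-\theta^\star\|=|1-\eta\lambda|^t\epsilon\to\infty$, contradicting $\epsilon$ linear stability. The main thing to watch is the use of PSDness of $H$ at a local minimum, which is what pins down the lower bound $1-\eta\lambda_i\geq -1$ as the binding constraint; without it one would only get $\rho\leq 1$ from $|1-\eta\lambda_i|\leq 1$ on both sides, but here the upper bound $1-\eta\lambda_i\leq 1$ is automatic from $\lambda_i\geq 0$, and the spectral radius condition collapses exactly to $\lambda_{\max}\leq 2/\eta$. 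There are no serious obstacles; the only subtle point is to note that the choice of worst-case $\theta_0$ along the top eigenvector lies inside $\mathcal{B}_\epsilon(\theta^\star)$, which is immediate from scaling.
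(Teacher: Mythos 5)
Your proposal is correct and follows essentially the same route as the paper's own proof: reduce to the linear recursion $\theta_t-\theta^\star=(I-\eta H)^t(\theta_0-\theta^\star)$, bound the spectral radius of $I-\eta H$ using PSDness of $H$ for the ``if'' direction, and test along the top eigenvector for the ``only if'' direction. The only cosmetic difference is that you argue the ``only if'' direction by contrapositive, while the paper argues it directly from the stability assumption; both hinge on the same worst-case choice of $\theta_0$ and yield the same conclusion.
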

The implication is that the set of stable minima is equivalent to the set of flat local minima whose largest eigenvalue of Hessian is smaller than $2/\eta$. The proof is adapted from \citet{mulayoff2021implicit} and we state the proof in Appendix \ref{sec:pfmulayoff} for completeness. When the result does not depend on $\epsilon$ (as above), we simply say ``linearly stable''. Throughout the paper, we overload the notation by calling a function $f=f_{\theta}$ linearly stable function if $\theta$ is linearly stable.

\noindent\textbf{``Edge of Stability'' regime.} Extensive empirical and theoretical evidence (\citet{cohen2020gradient,damian2022self}, and see Section \ref{sec:related works}) have shown that the threshold of linear stability (from Lemma \ref{lem:stable}) is quite significant in GD dynamics: GD iterations initially tend to exhibit ``progressive sharpening'', where $\lambda_{\max}(\nabla^2\loss(\theta_t))$ is increasing, until finally GD reaches the ``Edge of Stability'', where $\lambda_{\max}(\nabla^2\loss(\theta_t))\approx 2/\eta$. We capture this phenomenon with the following definition. 
\begin{definition}[Below Edge of Stability]\label{def:BEoS}
    We say that a sequence of parameters $\{\theta_{t}\}_{t=1,2,\cdots}$ generated by gradient descent with step-size $\eta$ is  $\epsilon$-approximately Below-Edge-of-Stability (BEoS) for $\epsilon>0$ if there exists $t^*>0$ such that 
    $\lambda_{\max}(\nabla^2\loss(\theta_t)) \leq \frac{2e^{\epsilon}}{\eta}$ for all $t\geq t^*$. Any $\theta_t$ with $t\geq t^*$ is referred to as an $\epsilon$-BEoS solution.
\end{definition}
The BEoS regime provides a strong justification for the connection between the step size $\eta$ and the largest eigenvalue of the Hessian matrix. It holds for all twice-differentiable solutions GD finds along the way --- even if the GD does not converge to a (local or global) minimum. Empirically, BEoS is valid for both the ``progressive sharpening'' phase and the oscillating EoS phase for a small constant $\epsilon$. 

Our goal in this paper is to understand generalization for
both (twice differentiable) stable local minima (Definition~\ref{def:ls}) and any other solutions satisfying $\epsilon$-BEoS (Definition~\ref{def:BEoS}), which are both subsets of
\begin{equation}\label{eq:masterset}
\cF(\eta,\epsilon,\cD):=\left\{ f_\theta\; \middle| \; \lambda_{\max}(\nabla^2\cL(\theta)) \leq \frac{2 e^\epsilon}{\eta}  \right\}.
\end{equation}

To simplify the presentation, we focus on the case with $\epsilon=0$ w.l.o.g.\footnote{To handle the case when $\epsilon>0$, just replace $\eta$ with $\eta e^{-\epsilon}$ in all bounds in the remainder of the paper.} 
and unless otherwise specified, a \emph{``stable solution''} refers to an element of $\cF(\eta,0,\cD)$ in the remainder of the paper.

For the data generation process, we will consider two settings of interest: (1) the fixed design nonparametric regression setting (with noisy labels)  (2) the agnostic statistical learning setting. They have different data assumptions and performance metrics to capture ``generalization''.

\noindent\textbf{Nonparametric Regression with Noisy labels.} In this setting, we assume fixed input $x_1,\cdots,x_n$ and $y_i = f_0(x_i) + \epsilon_i$ for $ i\in[n]$, where $f_0:\R \rightarrow \R$ is the ground-truth (target) function and $\{\epsilon_i\}_{i=1}^n$ are independent Gaussian noises $\mathcal{N}(0,\sigma^2)$. Our goal is find a ReLU NN $f$ using the dataset to minimize the mean squared error (MSE):
\begin{equation}
    \MSE(f)=\frac{1}{n}\sum_{i=1}^n \left(f(x_i)-f_0(x_i)\right)^2.
\end{equation}
It is nonparametric because we do not require $f_0$ to be described by a smaller number of parameters, but rather satisfy certain regularity conditions. Specifically, we focus on estimating target functions inside the first order bounded variation class  $$f_0 \in \text{BV}^{(1)}(B,C_n) := \left\{ f:[-x_{\max},x_{\max}]\rightarrow \R \;\middle|\; \max_x |f(x)|\leq B,  \int_{-x_{\max}}^{x_{\max}} |f^{\prime\prime}(x)| d x  \leq C_n\right\},$$
where $f^{\prime\prime}$ denotes the second-order weak derivative of $f$ and we define a short hand $\TV^{(1)}(f):=\int_{-x_{\max}}^{x_{\max}} |f^{\prime\prime}(x)| d x$, which we refer to as the $\TV^{(1)}$ (semi)norm of $f$ throughout the paper. We refer readers to a recent paper \citep[Section 1.2]{hu2022voronoigram} for the historical importance and the challenges in estimating the BV functions. The complexity of such function class is discussed in Appendix \ref{app:besov}.

\noindent\textbf{Agnostic statistical learning and generalization gap.} In this setting, we assume  the $n$ data points $\{(x_i,y_i)\}_{i=1}^n$ are drawn i.i.d. from an unknown distribution $\cP$ defined on $[-x_{\max},x_{\max}]\times [-D,D]$. The expected performance on new data points is called ``Risk'', $R(f) = \E_{(x,y)\sim \cP}[ \ell\left(f, (x,y)\right)]$.  We define the absolute difference between training loss and the risk:
$$\GG(f):=\mathrm{Generalization Gap}(f) = |R(f) - \cL(f)|.$$
We say that $f$ generalizes if $\mathrm{Generalization Gap}(f) \rightarrow 0$ as $n\rightarrow \infty$ with high probability. 

\section{Stable Solutions Cannot Interpolate Noisy Labels}

A large portion of previous works studying minima stability assume the learned function interpolates the data. However, for various optimization problems, it is unclear whether there exists such an interpolating solution that is stable, especially when the number of samples $n$ becomes large.

For the nonparametric regression problem with noisy labels, we design an example where any interpolating function can not be stable.  Before presenting the example, we first define the $g$ function, which will be the weight function of the weighted $\TV^{(1)}$ norm throughout the paper: for $x\in[-x_{\max},x_{\max}]$, $g(x)=\min\{g^-(x),g^+(x)\}$ with
    \begin{equation}\label{equ:g}
    \begin{split}
        &g^-(x)=\mathbb{P}^2(X<x)\mathbb{E}[x-X|X<x]\sqrt{1+(\mathbb{E}[X|X<x])^2},\\ &g^+(x)=\mathbb{P}^2(X>x)\mathbb{E}[X-x|X>x]\sqrt{1+(\mathbb{E}[X|X>x])^2},
    \end{split}
    \end{equation}
    where $X$ is drawn from the empirical distribution of the data (a sample chosen uniformly from $\{x_j\}$).


For various distributions of training data (\emph{e.g.} Gaussian distribution, uniform distribution), most of $g$’s mass is located at the center while $g$ decays towards the extreme data points. The same $g(x)$ is also applied as the weight function in \citet{mulayoff2021implicit}, where they derived an upper bound $\int_{-x_{\max}}^{x_{\max}} |f^{\prime\prime}(x)|g(x)dx\leq \frac{1}{\eta}-\frac{1}{2}$ assuming $f$ is linearly stable (Definition \ref{def:ls}) and interpolating. We generalize the same upper bound to all stable solutions in $\cF(\eta,0,\cD)$ as in Theorem \ref{thm:previous}. Below we construct a counter-example, where we can prove a contradicting lower bound of $\int_{-x_{\max}}^{x_{\max}} |f^{\prime\prime}(x)|g(x)dx$ for any interpolating $f$, thus disproving the assumption of interpolation.

\noindent\textbf{Counter-example.} We fix $x_i=\frac{2x_{\max}i}{n-1}-\frac{(n+1)x_{\max}}{n-1}$ for $i\in[n]$ and $f_0(x)=0$ for any $x$, which implies that $y_i$'s are independent random variables from $\mathcal{N}(0,\sigma^2)$. 

\begin{theorem}\label{thm:counter}
    For the counter-example, with probability $1-\delta$, for any interpolating function $f$,
    \begin{equation}
        \int_{-x_{\max}}^{x_{\max}} |f^{\prime\prime}(x)|g(x)dx = \Omega\left(\sigma n\left[n-24\log\left(\frac{1}{\delta}\right)\right]\right),
    \end{equation}
    where the randomness comes from the noises $\{\epsilon_i\}$. Under this high-probability event, when $n\geq \Omega\left(\sqrt{\frac{1}{\sigma\eta}}\log\left(\frac{1}{\delta}\right)\right)$, any stable solution $f$ for GD with step size $\eta$ will not interpolate the data, i.e.
    \begin{equation}
        \cF(\eta,0,\cD)\cap \left\{f\;|\;f(x_i)=y_i,\;\forall\;i\in[n]\right\}=\emptyset.
    \end{equation}
\end{theorem}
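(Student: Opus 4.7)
My plan is to combine a probabilistic \emph{lower} bound on $\int |f^{\prime\prime}|g$ valid for any interpolating function with the \emph{upper} bound $\int |f^{\prime\prime}|g \le 1/\eta - 1/2$ provided by Theorem \ref{thm:previous} for stable minima. The two are incompatible once $n$ is large enough, which forces $\cF(\eta,0,\cD)$ to contain no interpolants.

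The lower bound is driven by second-order divided differences of the labels. Write $h := 2x_{\max}/(n-1)$ for the uniform spacing. If $f^{\prime}\notin \mathrm{BV}$, then $\int |f^{\prime\prime}|g = \infty$ and the statement is trivial, so assume $f^{\prime} \in \mathrm{BV}$. Integrating by parts twice yields the triangular-kernel identity
\[
y_{i+1}-2y_i+y_{i-1}\;=\;f(x_{i+1})-2f(x_i)+f(x_{i-1})\;=\;\int_{x_{i-1}}^{x_{i+1}}\bigl(h-|x-x_i|\bigr)\,f^{\prime\prime}(dx),
\]
where $f^{\prime\prime}$ is the signed (Radon) measure associated with $f^{\prime}\in \mathrm{BV}$. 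Taking absolute values gives the per-triple bound $\int_{x_{i-1}}^{x_{i+1}}|f^{\prime\prime}|(dx) \ge |Z_i|/h$ with $Z_i := y_{i+1}-2y_i+y_{i-1}$. For the uniform design, an explicit computation from \eqref{equ:g} shows $g(x) \ge c_0$ for some universal $c_0>0$ on the central half $[-x_{\max}/2, x_{\max}/2]$. Picking a maximal family $S$ of \emph{disjoint} triples whose middle point lies in that central half gives $|S| = \Theta(n)$ and
\[
\int_{-x_{\max}}^{x_{\max}} |f^{\prime\prime}(x)|\,g(x)\,dx \;\ge\; \frac{c_0}{h}\sum_{i\in S}|Z_i|.
\]

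Now I bring in the randomness. By disjointness, $\{Z_i\}_{i\in S}$ are i.i.d.\ $\mathcal{N}(0,6\sigma^2)$, so each satisfies $|Z_i|\ge \sigma$ with some universal probability $p_0>0$. A multiplicative Chernoff bound on the indicators $\mathbbm{1}\{|Z_i|\ge \sigma\}$ gives at least $p_0|S|/2$ such $i$ with probability $\ge 1-\exp(-c_1|S|)$ for a universal $c_1>0$. Absorbing the failure probability into an additive $\log(1/\delta)$ term (using $|S|=\Theta(n)$), one obtains $\sum_{i\in S}|Z_i|\ge c_2\sigma(n-24\log(1/\delta))$ with probability at least $1-\delta$. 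Combined with $1/h = \Theta(n)$, this gives the claimed lower bound $\int |f^{\prime\prime}|g = \Omega\bigl(\sigma n (n-24\log(1/\delta))\bigr)$. The emptiness conclusion then follows by contradiction with Theorem \ref{thm:previous}: the inequality $\sigma n (n-24\log(1/\delta)) \lesssim 1/\eta$ fails once $n = \Omega(\sqrt{1/(\sigma\eta)}\log(1/\delta))$, so no stable interpolant can exist at that sample size.

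The main technical subtlety is handling arbitrary (possibly very irregular) interpolants: a priori $f$ need not be $C^2$, and the kernel identity must be interpreted with $f^{\prime\prime}$ as a signed measure; the non-BV case is sent to infinity by convention, as above. A secondary care-point is the exact form $n-24\log(1/\delta)$ rather than a $\sqrt{n\log(1/\delta)}$ deviation: a direct Hoeffding bound on $\sum|Z_i|$ yields only the latter, so one should instead apply multiplicative Chernoff to the $0/1$ indicators $\mathbbm{1}\{|Z_i|\ge \sigma\}$, whose $\exp(-c_1|S|)$ tail converts cleanly into an additive $\log(1/\delta)$ correction.
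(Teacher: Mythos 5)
Your proof is correct and follows essentially the same route as the paper's: a per-triple lower bound on $\int|f^{\prime\prime}|$ in terms of second-order divided differences $Z_i=y_{i+1}-2y_i+y_{i-1}$ summed over disjoint triples in the central region (where $g$ is bounded below by a constant), a concentration bound on the count of ``large'' $|Z_i|$ to obtain the additive $n-24\log(1/\delta)$ term, and a contradiction with Theorem~\ref{thm:previous}. The one substantive variation is the per-triple estimate: you derive $\int_{x_{i-1}}^{x_{i+1}}|f^{\prime\prime}|(dx)\ge|Z_i|/h$ via a triangular-kernel identity from two integrations by parts, interpreting $f^{\prime\prime}$ as a signed Radon measure when $f^\prime\in\mathrm{BV}$, whereas the paper applies the Mean Value Theorem on each half of the triple and differences the two slopes; both yield the identical constant $1/h=(n-1)/(2x_{\max})$, and your derivation is arguably cleaner for the piecewise-linear ReLU interpolants at issue, where the classical MVT step requires some interpretation. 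The concentration step differs only in presentation: the paper applies a martingale Bernoulli tail (Lemma~\ref{lem:dann}) to truncated variables $H_i$, while you invoke multiplicative Chernoff on indicators $\mathds{1}\{|Z_i|\ge\sigma\}$; to actually recover the additive $\log(1/\delta)$ form from Chernoff you would need to set the deviation parameter to $\sqrt{2\log(1/\delta)/(p_0|S|)}$ and apply AM--GM, a routine step you gesture at but do not carry out explicitly.
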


The proof of Theorem \ref{thm:counter} is deferred to Appendix \ref{app:counter} due to space limit. This result, together with \citet[Theorem 1]{mulayoff2021implicit}, implies that gradient descent cannot converge to interpolating solutions unless $\eta=O(1/n^2)$. It also implies (when combined with Theorem~\ref{thm:tv1}) an intriguing geometric insight that all twice-differentiable interpolating solutions must be very sharp, \emph{i.e.}, its largest eigenvalue is larger than $\Omega(n^2)$ (see details in Appendix~\ref{app:geometry_of_interpolating_solution} ). Moreover, we highlight that the conclusion of Theorem \ref{thm:counter} is consistent with our observation in Figure \ref{fig:teaser}(a), where the learned function tends to be smoother and would not interpolate the data as $\eta$ becomes larger. Therefore, in the following discussion, we consider the case without assuming interpolation.

\section{Main Results}\label{sec:main}

In this section, we present the main results about stable solutions for GD (functions in $\cF(\eta,0,\cD)$) from three aspects. Section \ref{sec:stablefunction} describes the implicit bias of stable solutions of gradient descent with large learning rate in the function space. Section~\ref{sec:gengap} and~\ref{sec:msebound} derive concrete generalization bounds that leverage the implicit biases in the \emph{distribution-free statistical learning} setting and the \emph{non-parametric regression} setting respectively. An outline of the proof of our main theorems is given in Section \ref{sec:ps}. The full proof details are deferred to the appendix.

\subsection{Implicit Bias of Stable Solutions in the Function Space}\label{sec:stablefunction}

We begin with characterizing the stable solutions for GD with step size $\eta$ without the assumption of interpolation (there can be $i\in[n]$ such that $f(x_i)\neq y_i$). Similar to the interpolating case, the learned stable function $f$ enjoys a (weighted) $\TV^{(1)}$ bound as below.  

\begin{theorem}\label{thm:tv1}
    For a function $f=f_\theta$ where the training (square) loss $\loss$ is twice differentiable at $\theta$,\footnote{W.l.o.g. we assume that $x_{\max}\geq1$. If this does not hold, we can directly replace $x_{\max}$ in the bounds by 1.}
        {\small
    \begin{equation}\label{eq:flatness2tv_agnostic}
        \int_{-x_{\max}}^{x_{\max}}|f^{\prime\prime}(x)|g(x)dx\leq \frac{\lambda_{\max}(\nabla^2_\theta\cL(\theta))}{2} -\frac{1}{2} + x_{\max}\sqrt{2\cL(\theta)},
    \end{equation}
    }
    where $g(x)$ is defined as \eqref{equ:g}.
    Moreover, if we assume $y_i = f_0(x_i) + \epsilon_i$ for independent noise $\epsilon_i\sim \cN(0,\sigma^2)$, then with probability $1-\delta$ where the randomness is over the noises $\{\epsilon_i\}$,
    {\small
    \begin{equation}\label{eq:flatness2tv}
        \int_{-x_{\max}}^{x_{\max}}|f^{\prime\prime}(x)|g(x)dx\leq \frac{\lambda_{\max}(\nabla^2_\theta\cL(\theta))}{2} -\frac{1}{2} + 
        \widetilde{O}\left(\sigma x_{\max}\cdot\min\left\{1,\sqrt{\frac{k}{n}}\right\}\right)
        + x_{\max} \sqrt{\MSE(f)}.
    \end{equation}
    }
  In addition, if $f=f_{\theta}$ is a \emph{stable solution} of GD with step size $\eta$ on dataset $\cD$, i.e., $f_\theta \in \cF(\eta,0,\cD)$ as in \eqref{eq:masterset}, then we can replace  $\frac{\lambda_{\max}(\nabla^2_\theta\cL(\theta))}{2}$ with $\frac{1}{\eta}$ in \eqref{eq:flatness2tv_agnostic} and \eqref{eq:flatness2tv}.
\end{theorem}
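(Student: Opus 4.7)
The strategy is to extend the parameter-space Hessian analysis of \citet{mulayoff2021implicit} to the non-interpolating setting by isolating and controlling the extra cross term that their interpolation assumption eliminated. The starting point is the exact Hessian identity for the square loss,
$$\nabla^2_\theta \cL(\theta) = \frac{1}{n}\sum_{i=1}^n \nabla_\theta f_\theta(x_i)\, \nabla_\theta f_\theta(x_i)^T + \frac{1}{n}\sum_{i=1}^n (f_\theta(x_i) - y_i)\, \nabla^2_\theta f_\theta(x_i).$$
Let $v^\star$ denote the unit test direction constructed in \citet{mulayoff2021implicit} for the Gauss--Newton piece; they establish a lower bound of the form $(v^\star)^T\big(\tfrac{1}{n}\sum_i \nabla f_\theta(x_i)\nabla f_\theta(x_i)^T\big)v^\star \geq 1 + 2\int_{-x_{\max}}^{x_{\max}} |f''_\theta(x)|\,g(x)\,dx$. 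Combining this with $\lambda_{\max}(\nabla^2_\theta\cL(\theta))\geq (v^\star)^T\nabla^2_\theta\cL(\theta)v^\star$ reduces the theorem to controlling the residual-weighted Hessian term $R(v^\star):=\tfrac{1}{n}\sum_i(f_\theta(x_i)-y_i)(v^\star)^T\nabla^2_\theta f_\theta(x_i)v^\star$; specifically, it suffices to show $|R(v^\star)|\leq 2x_{\max}\sqrt{2\cL(\theta)}$ in the agnostic case and the analogous noisy bound in the Gaussian case.

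For \eqref{eq:flatness2tv_agnostic}, I would apply Cauchy--Schwarz in the index $i$,
$$|R(v^\star)| \leq \sqrt{\tfrac{1}{n}\sum_i (f_\theta(x_i) - y_i)^2}\cdot \sqrt{\tfrac{1}{n}\sum_i \bigl((v^\star)^T \nabla^2_\theta f_\theta(x_i) v^\star\bigr)^2} = \sqrt{2\cL(\theta)}\cdot \sqrt{\tfrac{1}{n}\sum_i \bigl((v^\star)^T \nabla^2_\theta f_\theta(x_i) v^\star\bigr)^2},$$
and then carry out the key new calculation: a uniform, data-independent bound $|(v^\star)^T\nabla^2_\theta f_\theta(x_i) v^\star|\leq C\,x_{\max}$ for an absolute constant $C$. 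This is the step the authors single out as nontrivial (bounding $\lambda_{\max}(\nabla^2_\theta f_\theta(\cdot))$ by a constant). Because the two-layer ReLU map is multilinear across layers, $\nabla^2_\theta f_\theta(x_i)$ has only mixed first-layer--outgoing-weight partials, so a direct block computation evaluates the quadratic form $(v^\star)^T \nabla^2_\theta f_\theta(x_i) v^\star$ as a bounded bilinear expression in $x_i$; using $|x_i|\leq x_{\max}$ together with $\|v^\star\|=1$ then closes the estimate and yields \eqref{eq:flatness2tv_agnostic}.

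For the noisy refinement \eqref{eq:flatness2tv}, decompose $f_\theta(x_i)-y_i=(f_\theta(x_i)-f_0(x_i))-\epsilon_i$ and split $R(v^\star)=R_{\mathrm{sig}}-R_{\mathrm{noise}}$ accordingly. The signal part $R_{\mathrm{sig}}$ is controlled by exactly the same Cauchy--Schwarz plus uniform-Hessian argument, producing the $x_{\max}\sqrt{\MSE(f)}$ term. The noise part $R_{\mathrm{noise}}=\tfrac{1}{n}\sum_i\epsilon_i\,(v^\star)^T\nabla^2_\theta f_\theta(x_i)v^\star$ is a Gaussian process, and I would bound it in two complementary ways: a crude Cauchy--Schwarz against the Gaussian norm $\sqrt{\tfrac{1}{n}\sum\epsilon_i^2}=\widetilde{O}(\sigma)$ gives $\widetilde{O}(\sigma x_{\max})$, while a uniform Dudley-chaining argument over the $(3k+1)$-dimensional parameter family produces $\widetilde{O}(\sigma x_{\max}\sqrt{k/n})$; the minimum of the two yields the $\min\{1,\sqrt{k/n}\}$ factor. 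The final sentence of the theorem is then immediate from the definition of $\cF(\eta,0,\cD)$: $\lambda_{\max}(\nabla^2_\theta\cL(\theta))\leq 2/\eta$ allows $\lambda_{\max}/2$ to be replaced by $1/\eta$ in both inequalities.

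The principal obstacle I expect is the uniform $O(x_{\max})$ bound on $(v^\star)^T \nabla^2_\theta f_\theta(x_i) v^\star$: since $v^\star$ is crafted in \citet{mulayoff2021implicit} to probe the slope change of individual hidden neurons (and its normalization involves the weight function $g$), one must carefully verify that this probing direction remains well-behaved when differentiated a second time through the ReLU parameterization, and that the factors of $g$ interact with $|x_i|\leq x_{\max}$ in a way that collapses into a single absolute constant. A secondary subtlety in the noisy case is that $v^\star$ itself depends on $\theta$ (and hence on the noise through GD), so the Gaussian-complexity step must be genuinely uniform over a parameter class that envelopes the GD trajectory, rather than a pointwise concentration bound at a single $\theta$.
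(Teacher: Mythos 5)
Your proposal is correct and follows the paper's proof essentially step for step: the same Hessian decomposition, the same use of Mulayoff et al.'s lower bound on the Gauss--Newton quadratic form (the paper takes $v$ to be the top eigenvector of the Gauss--Newton matrix and invokes $\lambda_{\max}(\text{GN})\geq 1+2\int|f''|g$, which is interchangeable with probing with their test direction directly), the same Cauchy--Schwarz plus uniform-operator-norm bound for the residual term, the same signal/noise split for \eqref{eq:flatness2tv}, and the same two-pronged Gaussian-complexity argument (a crude dimension-free bound and a covering-number bound giving $\sqrt{k/n}$; the paper uses a single-scale $\epsilon$-net over activation patterns and the unit ball rather than Dudley chaining, but that is a routing detail). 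The first obstacle you flag is in fact a non-issue: the paper's Lemma on the operator norm establishes $\lvert v^T\nabla^2_\theta f_\theta(x)v\rvert\leq 2\max\{x_{\max},1\}$ for \emph{every} unit vector $v$, not just the Mulayoff probing direction, because at a twice-differentiable $\theta$ the parameter-Hessian of $f_\theta(x)$ vanishes on all blocks except the diagonal $w^{(1)}_i$--$w^{(2)}_i$ and $b^{(1)}_i$--$w^{(2)}_i$ cross-entries (each bounded by $\max\{|x|,1\}$), so only $\|v\|_2=1$ is used; your second subtlety (uniformity over the $\theta$- and noise-dependent direction) is real and is exactly what the $\epsilon$-net/covering step handles.
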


Theorem \ref{thm:tv1}, which we prove in Appendix \ref{app:tv1}, associates the local curvature of the loss landscape at $\theta$ with the smoothness of the function $f_\theta$ it represents as measured in a weighted $\TV^{(1)}$ norm. In short, it says that \emph{flat solutions are simple}. The result is a strict generalization of Theorem 1 in \citet{mulayoff2021implicit} which requires interpolation, i.e., $\cL(\theta)=0$. Observe that the number of neurons $k$ does not appear in \eqref{eq:flatness2tv_agnostic} and have no effect in \eqref{eq:flatness2tv} when $k>n$, thus the result applies to arbitrarily overparameterized two-layer NNs. 
Under the standard nonparametric regression assumption, \eqref{eq:flatness2tv} is a stronger bound that asymptotically matches the bound under interpolation \citep[Theorem 1]{mulayoff2021implicit} when $k = o(n)$ and $\MSE(f_\theta)= o(1)$ as the number of data points $n\rightarrow \infty$.

Another interesting observation when combining \eqref{eq:flatness2tv} with Theorem~\ref{thm:counter} is that for all interpolating solutions (observe that $\MSE(f_\theta) = \widetilde{O}(\sigma^2)$ w.h.p.)
$$
\lambda_{\max}(\nabla^2_\theta\cL(\theta)) \geq \Omega(n^2\sigma) - \widetilde{O}(\sigma).
$$
To say it differently, \emph{all interpolating solutions are very sharp minima when the labels are noisy}. This provides theoretical explanation of the empirical observation that noisy labels are harder to overfit using gradient training \citep{zhang2021understanding}.

Note that we leave the term $\loss(\theta)$ (or $\MSE(f)$) in the $\TV^{(1)}$ bound. Therefore, we can plug any upper bound for these terms into Theorem \ref{thm:tv1} for a concrete result, and below we instantiate the $\TV^{(1)}$ bound with a crude MSE bound under the assumption that $f$ is ``optimized''.  

\begin{corollary}\label{cor:tv1}
    In the nonparametric regression problem with ground-truth function $f_0$, for a stable solution $f=f_\theta$ of GD with step size $\eta$ where $\loss$ is twice differentiable at $\theta$, assume that $f$ is optimized, i.e, the empirical loss of $f$ is smaller than $f_0$: $\frac{1}{2n}\sum_{i=1}^n\left(f(x_i)-y_i\right)^2\leq \frac{1}{2n}\sum_{i=1}^n\left(f_0(x_i)-y_i\right)^2$, then with probability $1-\delta$, the function $f$ satisfies 
    \begin{equation}\label{equ:crudetv1}
        \int_{-x_{\max}}^{x_{\max}}|f^{\prime\prime}(x)|g(x)dx\leq\frac{1}{\eta}-\frac{1}{2}+\widetilde{O}\left(\sigma x_{\max}\right),
    \end{equation}
    where the randomness is over the noises $\{\epsilon_i\}$ and $\widetilde{O}$ suppresses logarithmic terms of $n,1/\delta$.
\end{corollary}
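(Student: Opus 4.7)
The plan is to combine Theorem~\ref{thm:tv1} with a bound on the training loss $\cL(\theta)$ derived from the ``optimized'' hypothesis. Since the stability assumption already lets us replace $\lambda_{\max}(\nabla^2\cL(\theta))/2$ by $1/\eta$ in \eqref{eq:flatness2tv_agnostic}, the only missing piece is to show that the residual term $x_{\max}\sqrt{2\cL(\theta)}$ (or equivalently $x_{\max}\sqrt{\MSE(f)}$ in \eqref{eq:flatness2tv}) is $\widetilde{O}(\sigma x_{\max})$ with high probability over the label noise.

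The shortest route is to invoke \eqref{eq:flatness2tv_agnostic} directly, bypassing $\MSE(f)$ entirely. Substituting $y_i=f_0(x_i)+\epsilon_i$ into the optimized hypothesis gives the deterministic inequality
\begin{equation*}
2\cL(\theta) \;=\; \frac{1}{n}\sum_{i=1}^n (f(x_i)-y_i)^2 \;\leq\; \frac{1}{n}\sum_{i=1}^n (f_0(x_i)-y_i)^2 \;=\; \frac{1}{n}\sum_{i=1}^n \epsilon_i^2.
\end{equation*}
Since the $\epsilon_i\sim\cN(0,\sigma^2)$ are independent, a standard chi-squared (Laurent--Massart) tail bound yields $\frac{1}{n}\sum_{i=1}^n \epsilon_i^2 \leq \sigma^2\bigl(1 + O(\sqrt{\log(1/\delta)/n}+\log(1/\delta)/n)\bigr)$ with probability at least $1-\delta$, so $\sqrt{2\cL(\theta)} = \widetilde{O}(\sigma)$ on this event. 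Plugging this together with the stability inequality $\lambda_{\max}(\nabla^2\cL(\theta))/2 \leq 1/\eta$ into \eqref{eq:flatness2tv_agnostic} immediately produces \eqref{equ:crudetv1}. No union bound is required because \eqref{eq:flatness2tv_agnostic} is deterministic.

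If one prefers to route through \eqref{eq:flatness2tv} instead (speaking in terms of $\MSE(f)$), a parallel ``basic inequality $+$ Cauchy--Schwarz'' argument works: expanding $(f(x_i)-y_i)^2=(f(x_i)-f_0(x_i))^2-2\epsilon_i(f(x_i)-f_0(x_i))+\epsilon_i^2$ and cancelling the $\epsilon_i^2$ terms on both sides of the optimized hypothesis yields $\MSE(f)\leq \frac{2}{n}\sum_i\epsilon_i(f(x_i)-f_0(x_i))$, and Cauchy--Schwarz then self-bounds $\sqrt{\MSE(f)}\leq 2\sqrt{\frac{1}{n}\sum_i\epsilon_i^2}=\widetilde{O}(\sigma)$ on the same chi-squared event; combined with the $\widetilde{O}(\sigma x_{\max}\min\{1,\sqrt{k/n}\})$ term in \eqref{eq:flatness2tv}, this again gives \eqref{equ:crudetv1}. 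There is no real obstacle in either route: the corollary is essentially a one-line consequence of Theorem~\ref{thm:tv1} once $\cL(\theta)$ is controlled by comparison with the trivial ``estimator'' $f_0$, whose training risk is a scaled $\chi^2_n$ concentrating around $\sigma^2/2$.
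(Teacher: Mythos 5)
Your argument is correct, and your primary route is genuinely different from the paper's. The paper proves Corollary~\ref{cor:tv1} by plugging an MSE bound (Lemma~\ref{lem:constantmse}: the optimized hypothesis plus $(a+b)^2\le 2a^2+2b^2$ gives $\MSE(f)\le 4\max_i\epsilon_i^2\le 16\sigma^2\log(2n/\delta)$) into the \emph{probabilistic} inequality \eqref{eq:flatness2tv}, so its event is the intersection of the Gaussian-complexity event of Lemma~\ref{lem:termi} with the $\max_i|\epsilon_i|$ event. You instead feed the optimized hypothesis directly into the \emph{deterministic} inequality \eqref{eq:flatness2tv_agnostic}, using only $2\cL(\theta)\le \frac1n\sum_i\epsilon_i^2$ and chi-squared concentration; this bypasses Lemma~\ref{lem:termi} entirely, needs no union bound, and even avoids the $\log n$ factor that the paper's $\max_i\epsilon_i^2$ bound incurs. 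What the paper's route buys in exchange is extensibility: going through $\MSE(f)$ and \eqref{eq:flatness2tv} is what permits the improved, $k/n$-dependent bound of Corollary~\ref{cor:bettertv1} in the under-parameterized case, whereas your route is capped at $\widetilde{O}(\sigma x_{\max})$ since $\cL(\theta)$ itself concentrates around $\sigma^2/2$ and cannot be driven below the noise level. Your second route (basic inequality plus Cauchy--Schwarz, giving $\sqrt{\MSE(f)}\le 2\sqrt{\frac1n\sum_i\epsilon_i^2}$) is essentially a sharpened version of Lemma~\ref{lem:constantmse}; the only nitpick there is that, unlike your first route, it does require intersecting the chi-squared event with the event on which \eqref{eq:flatness2tv} holds, a trivial union bound you should state for completeness.
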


The assumption of optimized $f$ is rather mild since, in practice, gradient-based optimizers are commonly quite effective in loss minimization (see also our experiments). With such assumption, we can derive an MSE upper bound (with high probability) of order $\widetilde{O}(\sigma^2)$ (details in Lemma \ref{lem:constantmse}), and thus the $\TV^{(1)}$ bound above.

In some cases, we can decrease the MSE upper bound we assumed in Corollary \ref{cor:tv1}, and use this to improve the resulting $\TV^{(1)}$ bound (\ref{equ:crudetv1}). For instance, if the neural network is under-parameterized (i.e. $k$ is smaller than $n$), in Appendix \ref{app:improvemse} we derive a (high-probability) bound for MSE of order $\widetilde{O}(k/n)$, which implies that the last term in (\ref{equ:crudetv1}) becomes $\widetilde{O}(\sigma x_{\max}\sqrt{k/n})$. The term vanishes if $n/k$ is large enough, where the $\TV^{(1)}$ bound reduces to the noiseless and interpolating case.

\subsection{GD on ReLU NN Does Not Overfit 
}\label{sec:gengap}

Why would anyone care about the function space implications of stable solutions? The next theorem shows that these solutions cannot overfit (in the strict interior of the data support) without making strong assumptions on the shape of the data distribution.
 
\begin{theorem}\label{thm:gp}
    Let $\cP$ be a joint distribution of $(x,y)$ supported on $[-x_{\max},x_{\max}]\times[-D,D]$. Assume the dataset $\cD\sim \cP^n$ i.i.d.
    For any fixed interval $\mathcal{I}\subset[-x_{\max},x_{\max}]$ and a universal constant $c>0$ such that with probability $1-\delta/2$, $g(x)\geq c$ for all $x\in\mathcal{I}$, if the function $f=f_\theta$ is a stable solution of GD with step size $\eta$ such that $\loss$ is twice differentiable at $\theta$ and $\|f\|_\infty\leq D$, with probability $1-\delta$ (randomness over the dataset), the generalization gap restricted to $\mathcal{I}$ satisfies 
    {\small
    \begin{equation}
        \GG_{\mathcal{I}}(f):=\left|\mathbb{E}_{\mathcal{I}}\left[\left(f(x)-y\right)^2\right]-\frac{1}{n_{\mathcal{I}}}\sum_{x_i\in\mathcal{I}}\left(f(x_i)-y_i\right)^2\right|\leq \widetilde{O}\left(D^{\frac{9}{5}}\left[\frac{x_{\max}\left(\frac{1}{\eta}-\frac{1}{2}+2x_{\max}D\right)}{n_{\mathcal{I}}^2}\right]^{\frac{1}{5}}\right),
    \end{equation}
    }
    where $\mathbb{E}_{\mathcal{I}}$ means that $(x,y)$ is a new sample from the data distribution conditioned on $x\in\mathcal{I}$ and $n_{\mathcal{I}}$ is the number of data points in $\mathcal{D}$ such that $x_i\in\mathcal{I}$.
\end{theorem}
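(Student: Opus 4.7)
I would split the proof into three stages: first, localize $f|_\mathcal{I}$ inside a TV1-bounded function class using Theorem~\ref{thm:tv1}; second, bound the $L^\infty$ metric entropy of that class; and third, apply a covering-plus-Hoeffding argument to the square loss, then optimize the covering radius to obtain the $n_\mathcal{I}^{-2/5}$ rate.

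\emph{Stage 1 (TV1 localization).} The assumptions $\|f\|_\infty \leq D$ and $|y|\leq D$ give the deterministic bound $\cL(\theta) \leq 2D^2$. Plugging into the agnostic inequality \eqref{eq:flatness2tv_agnostic} (and using the stable-solution upgrade $\lambda_{\max}(\nabla^2\cL(\theta))/2 \leq 1/\eta$) yields
\[
\int_{-x_{\max}}^{x_{\max}} |f''(x)|\, g(x)\, dx \ \leq\ V_0 \ :=\ \tfrac{1}{\eta}-\tfrac{1}{2}+2x_{\max}D.
\]
On the high-probability event $\{g(x) \geq c \text{ on } \mathcal{I}\}$ (which consumes $\delta/2$ of the failure budget), this becomes $\int_\mathcal{I} |f''|\, dx \leq V_0/c$. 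Hence $f|_\mathcal{I}$ lies in the data-independent class $\mathcal{H} := \bigl\{h: \mathcal{I} \to \R \ \big|\ \|h\|_\infty \leq D,\ \int_\mathcal{I} |h''| \leq V_0/c\bigr\}$, and it suffices to bound the generalization gap uniformly over $\mathcal{H}$ on samples drawn from $\cP$ conditioned on $x \in \mathcal{I}$.

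\emph{Stage 2 (metric entropy of $\mathcal{H}$).} A classical Birman–Solomyak-type adaptive-knot argument shows that any $h\in\mathcal{H}$ admits a continuous piecewise-linear approximant $\tilde h$ with $k$ pieces and $\|h-\tilde h\|_\infty \lesssim |\mathcal{I}|\,V_0/(c k^2)$, the knots being placed so that the measure $|h''|\,dx$ distributes equally across subintervals. After discretizing the $k{+}1$ nodal values to precision $\epsilon$, this gives
\[
\log N\bigl(\epsilon,\mathcal{H},L^\infty(\mathcal{I})\bigr) \ \lesssim\ \sqrt{\tfrac{|\mathcal{I}|\,V_0/c}{\epsilon}}\cdot\log(D/\epsilon).
\]

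\emph{Stage 3 (uniform concentration and optimization).} Let $\mathcal{H}_\epsilon$ be a minimal $\epsilon$-cover of $\mathcal{H}$ in $L^\infty(\mathcal{I})$. The square loss is $4D$-Lipschitz in $f$, so for any $h\in\mathcal{H}$ and its nearest cover point $\tilde h$, $|R_\mathcal{I}(h)-R_\mathcal{I}(\tilde h)|,|\cL_\mathcal{I}(h)-\cL_\mathcal{I}(\tilde h)| \leq 4D\epsilon$. Fixing $\tilde h$, per-sample losses lie in $[0,4D^2]$, so Hoeffding gives $\P(|R_\mathcal{I}(\tilde h)-\cL_\mathcal{I}(\tilde h)|>t \mid n_\mathcal{I}) \leq 2\exp(-n_\mathcal{I} t^2/(8D^4))$. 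Union-bounding over $\mathcal{H}_\epsilon$ (failure probability $\delta/2$) and combining with the $8D\epsilon$ discretization error yields
\[
\sup_{h\in\mathcal{H}} \GG_\mathcal{I}(h) \ \leq\ 8D\epsilon \ +\ C D^2 \sqrt{\tfrac{\sqrt{|\mathcal{I}|V_0/(c\epsilon)}\,\log(D/\epsilon) + \log(1/\delta)}{n_\mathcal{I}}}.
\]
Balancing the two terms (setting the derivative in $\epsilon$ to zero) gives $\epsilon \asymp D^{4/5}(|\mathcal{I}|V_0/c)^{1/5}/n_\mathcal{I}^{2/5}$, whence
\[
\sup_{h\in\mathcal{H}} \GG_\mathcal{I}(h) \ =\ \widetilde O\!\left(D^{9/5}\left(\tfrac{x_{\max}V_0}{n_\mathcal{I}^2}\right)^{1/5}\right),
\]
after using $|\mathcal{I}|\leq 2x_{\max}$ and absorbing the constant $c$ into $\widetilde O$.

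\emph{Main obstacle.} I expect Stage~2 to be the most technically delicate: the $\sqrt{V/\epsilon}$ covering rate is achievable only via \emph{adaptive} knot placement (a uniform grid gives the inferior $V/\epsilon$ rate of Lipschitz functions), and one must separately discretize the knot locations and nodal values to produce a valid sup-norm cover with the claimed cardinality. A secondary subtlety is that $n_\mathcal{I}$ is itself random (binomial with mean $n\,\P(x\in\mathcal{I})$); we can either condition on it (as above) or apply a Chernoff step to express the bound purely in terms of $n$.
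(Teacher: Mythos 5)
Your three-stage strategy coincides exactly with the paper's: Stage~1 is Lemma~\ref{lem:last} (the deterministic TV bound $V_0 = \tfrac{1}{\eta}-\tfrac12+2x_{\max}D$ obtained from \eqref{eq:flatness2tv_agnostic} and the crude bound $\cL(\theta)\leq 2D^2$), Stage~2 is Lemma~\ref{lem:entropygp}, and Stage~3 is the covering--Hoeffding--balancing argument in the proof of Theorem~\ref{thm:regp}, with the same $\delta/2+\delta/2$ budget split and the same $\epsilon\asymp n_\mathcal{I}^{-2/5}$ balance. The only genuine departure is Stage~2, where the paper proves the $\log N(\epsilon,\cdot,\|\cdot\|_\infty)\lesssim\epsilon^{-1/2}$ bound by embedding $\mathrm{BV}^{(1)}\subset B^2_{1,\infty}$ and citing \citep{nickl2007bracketing} (Lemma~\ref{lem:entropy}/\ref{lem:entropygp}), while you sketch a self-contained free-knot-spline cover. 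That route is perfectly viable (the extra $\log(D/\epsilon)$ is absorbed by $\widetilde O$), and it has the advantage of being elementary and constructive.

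However, the specific knot-placement rule you wrote down — place knots so that ``the measure $|h''|\,dx$ distributes equally across subintervals'' — does \emph{not} yield the claimed $\|h-\tilde h\|_\infty \lesssim |\mathcal{I}|V_0/(ck^2)$. Equal-mass placement only controls $V_j := \int_{[a_j,b_j]}|h''| = V/k$ but leaves $\ell_j = b_j-a_j$ uncontrolled; since the per-piece interpolation error is $\tfrac14\ell_j V_j$, a single long piece of length $\Theta(|\mathcal{I}|)$ carrying mass $V/k$ (which happens, e.g., when $|h''|$ consists of $k$ atoms of mass $V/k$ at dyadically spaced points) gives sup-error $\Theta(|\mathcal{I}|V/k)$, the same $k^{-1}$ rate as a uniform grid. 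The correct rule is to greedily balance the \emph{product} $\ell_j V_j\leq \epsilon$; then by Cauchy--Schwarz $m\sqrt\epsilon \leq \sum_j\sqrt{\ell_j V_j}\leq\sqrt{\sum_j\ell_j}\sqrt{\sum_j V_j}=\sqrt{|\mathcal{I}|V}$, so $m\leq\sqrt{|\mathcal{I}|V/\epsilon}$ pieces suffice, i.e.\ error $\lesssim |\mathcal{I}|V/m^2$. (Equal $\sqrt{|h''|}$-mass placement, which you may have had in mind via Birman--Solomyak, is the right balance for $L^p$, $p<\infty$, but not directly for $L^\infty$.) This is a repairable slip — you correctly identified Stage~2 as the delicate step and the conclusion you assert is true — but the placement rule as stated is wrong and the argument as written would not close.
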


The proof of Theorem \ref{thm:gp} is deferred to Appendix \ref{app:gengap}.
Briefly speaking, we show that in the strict interior of the data support, the generalization gap will vanish as the number of data points $n$ increases. This vanishing generalization gap further implies that the expected performance on new data points is close to the (observable) training loss, i.e., the output stable solutions do not overfit.

Regarding our assumptions, in addition to standard boundedness assumptions, we focus on the strict interior of the domain where $g$ can be lower bounded (i.e. interval $\mathcal{I}$). This is because for extreme data points, $g(x)$ decays and thus imposes little constraint on the output function $f$. In Appendix \ref{app:exampleoni}, we show that if the marginal distribution of $x$ is the uniform distribution on $[-x_{\max},x_{\max}]$, when $n$ is sufficiently large, $\mathcal{I}$ can be chosen as $[-\frac{2x_{\max}}{3},\frac{2x_{\max}}{3}]$. In this case, with high probability, $\mathcal{I}$  incorporates a large portion of the data points and $n_{\mathcal{I}}=\Omega(n)$. More illustrations about the choice of $\mathcal{I}$ under various data distributions are deferred to Appendix \ref{app:moreill}.

Meanwhile, the generalization gap bound has dependence $\frac{1}{\eta}$ on the learning rate $\eta$. Therefore, as we increase the learning rate (in a reasonable range), the learned stable solution tends to be smoother, which further implies better generalization performances. 

\subsection{GD on ReLU NN Achieves Optimal Rate for Estimating BV(1) Functions }\label{sec:msebound}
Finally, we zoom into the nonparametric regression task that we described in Section~\ref{sec:setup} where $x_1,\cdots,x_n$ are fixed and the noisy labels $y_i=f_0(x_i) + \cN(0,\sigma^2)$ independently for $i\in[n]$ for a ground truth function $f_0$ in the first order bounded variation class (see Section~\ref{sec:setup} for details). Similar to Theorem \ref{thm:gp}, we focus on the strict interior $\cI$ of $[-x_{\max},x_{\max}]$, but instead of a generalization gap bound, we prove an MSE bound against $f_0$ on $\cI$ that nearly matches the theoretical limit.

\begin{theorem}\label{thm:under}
    Under the same conditions in Corollary \ref{cor:tv1}, for any interval $\mathcal{I}\subset[-x_{\max},x_{\max}]$ and a universal constant $c>0$ such that $g(x)\geq c$ for all $x\in\mathcal{I}$ and $f$ is optimized over $\mathcal{I}$, i.e. 
    $\sum_{x_i\in\mathcal{I}}(f(x_i)-y_i)^2\leq\sum_{x_i\in\mathcal{I}}(f_0(x_i)-y_i)^2$, if the output stable solution $\theta$ satisfies $\|\theta\|_\infty\leq \rho$ (for some constant $\rho>0$) and the ground truth $f_0\in\mathrm{BV}^{(1)}(k\rho^2,\frac{1}{c}\widetilde{O}(\frac{1}{\eta}+\sigma x_{\max}))$, then with probability $1-\delta$ (over the random noises $\{\epsilon_i\}$), the function $f=f_\theta$ satisfies
    \begin{equation}
        \MSE_{\mathcal{I}}(f)=\frac{1}{n_{\mathcal{I}}}\sum_{x_i \in\mathcal{I}}\left(f(x_i)-f_0(x_i)\right)^2\leq \widetilde{O}\left(\left(\frac{\sigma^2}{n_{\mathcal{I}}}\right)^{\frac{4}{5}}\left(\frac{x_{\max}}{\eta}+\sigma x_{\max}^2\right)^{\frac{2}{5}}\right),
    \end{equation}
    where $n_{\mathcal{I}}$ is the number of data points in $\mathcal{D}$ such that $x_i\in\mathcal{I}$.
\end{theorem}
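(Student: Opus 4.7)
The plan is to combine the implicit $\TV^{(1)}$ constraint from Corollary~\ref{cor:tv1} with a fixed-design least-squares / localized Gaussian-complexity argument specialized to second-order bounded variation functions on the interval $\mathcal{I}$. First, I would translate the weighted total-variation bound $\int |f^{\prime\prime}(x)|g(x)dx \leq \frac{1}{\eta}-\frac{1}{2}+\widetilde{O}(\sigma x_{\max})$ into an unweighted bound on $\mathcal{I}$: since $g(x)\geq c$ on $\mathcal{I}$, we obtain $\int_\mathcal{I}|f^{\prime\prime}(x)|dx \leq C := \frac{1}{c}\widetilde{O}(1/\eta + \sigma x_{\max})$. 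Together with $\|f\|_\infty \leq k\rho^2$ (a direct consequence of $\|\theta\|_\infty \leq \rho$) and the hypothesis on $f_0$, both $f|_\mathcal{I}$ and $f_0|_\mathcal{I}$ lie in the same BV(1) ball $\mathcal{G}_\mathcal{I}:=\mathrm{BV}^{(1)}(k\rho^2, C)|_\mathcal{I}$, so the difference $h := f - f_0$ lies in the ``doubled'' class $\mathcal{G}_\mathcal{I}-\mathcal{G}_\mathcal{I}$ with $\TV^{(1)}$-radius $2C$ and sup-norm radius $2k\rho^2$.

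Next, the optimization hypothesis $\sum_{x_i\in\mathcal{I}}(f(x_i)-y_i)^2 \leq \sum_{x_i\in\mathcal{I}}(f_0(x_i)-y_i)^2$ together with $y_i = f_0(x_i)+\epsilon_i$ yields the standard basic inequality
$$\|h\|_{n_\mathcal{I}}^2 \leq \frac{2}{n_\mathcal{I}}\sum_{x_i\in\mathcal{I}}\epsilon_i h(x_i),$$
where $\|\cdot\|_{n_\mathcal{I}}$ is the empirical $L^2$ norm restricted to $\mathcal{I}$. This reduces the task to bounding a Gaussian empirical process indexed by a ball in $\mathcal{G}_\mathcal{I}-\mathcal{G}_\mathcal{I}$. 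I would control it using Dudley's chaining integral. A classical Birman--Solomjak-style metric entropy estimate for second-order BV balls of radius $C$ on an interval of length $|\mathcal{I}|\leq 2x_{\max}$ (after rescaling, an effective radius of $x_{\max}C$) gives $\log N(\varepsilon, \mathcal{G}_\mathcal{I}, \|\cdot\|_\infty) \lesssim \sqrt{x_{\max}C/\varepsilon}$ in the relevant range of $\varepsilon$, which when plugged into Dudley's integral produces
$$\mathbb{E}\bigg[\sup_{h\in \mathcal{G}_\mathcal{I}-\mathcal{G}_\mathcal{I},\ \|h\|_{n_\mathcal{I}}\leq r}\frac{1}{n_\mathcal{I}}\sum_{x_i\in\mathcal{I}}\epsilon_i h(x_i)\bigg] \lesssim \frac{\sigma (x_{\max}C)^{1/4} r^{3/4}}{\sqrt{n_\mathcal{I}}},$$
with Gaussian concentration (Borell--TIS) providing a matching high-probability upgrade.

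The final step is the self-bounding fixed-point argument. Setting $r^2 = \|h\|_{n_\mathcal{I}}^2$ and combining the basic inequality with the chaining bound yields the self-referential estimate $r^2 \lesssim \sigma(x_{\max}C)^{1/4}r^{3/4}/\sqrt{n_\mathcal{I}}$, which solves to $r^2 \lesssim (\sigma^2/n_\mathcal{I})^{4/5}(x_{\max}C)^{2/5}$. Substituting the value $x_{\max}C = \frac{1}{c}\widetilde{O}(x_{\max}/\eta + \sigma x_{\max}^2)$ recovers the advertised rate $\widetilde{O}\bigl((\sigma^2/n_\mathcal{I})^{4/5}(x_{\max}/\eta + \sigma x_{\max}^2)^{2/5}\bigr)$. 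The main obstacle --- precisely the ``self-bounding square-loss Dudley'' step flagged in the technical-novelty paragraph --- is the circular dependence: the localized chaining bound depends on a radius $r$ that is itself the unknown $\|h\|_{n_\mathcal{I}}$. I would resolve it by a standard dyadic peeling argument over shells $\{2^j r_\ast \leq \|h\|_{n_\mathcal{I}} \leq 2^{j+1}r_\ast\}$ with a union bound, choosing $r_\ast$ as the candidate critical radius and showing that with high probability no shell beyond the critical one can contain $h$; alternatively one can invoke Wainwright's local Gaussian complexity fixed-point theorem almost verbatim. A secondary delicate point is that the basic inequality only holds over $\mathcal{I}$, so both the empirical norm and the metric entropy estimate must be taken with respect to the $\mathcal{I}$-restricted empirical measure; this is what introduces the $x_{\max}$ factors (via the length $|\mathcal{I}|$) into the final bound.
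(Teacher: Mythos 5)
Your proposal is correct and follows essentially the same route as the paper: translate the weighted $\TV^{(1)}$ bound from Corollary~\ref{cor:tv1} into an unweighted bound on $\cI$ via $g \geq c$, observe the learned function and $f_0$ both lie in a BV(1)-type ball whose metric entropy scales as $\epsilon^{-1/2}$ (Lemmas~\ref{lem:entropy2}--\ref{lem:entropy3}), derive the basic inequality from the ``optimized over $\cI$'' hypothesis, and close with a localized Gaussian-complexity fixed-point argument --- the paper invokes Theorem~13.5 and Corollary~13.7 of \citet{wainwright2019high} directly, which is exactly the ``almost verbatim'' alternative you flag in place of hand-rolling the dyadic peeling and Borell--TIS steps. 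The only cosmetic difference is that the paper bounds $|f(0)|$ and $|f'(0)|$ separately rather than appealing to a sup-norm bound (your stated $\|f\|_\infty \leq k\rho^2$ is slightly too aggressive --- it should be $O(k\rho^2 x_{\max})$ --- but this only enters logarithmically and does not affect the rate).
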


The proof of Theorem \ref{thm:under} is deferred to Appendix \ref{app:mse}. Below we discuss some interesting aspects of the result. First of all, Theorem \ref{thm:under} focuses on an interval $\mathcal{I}$ where $g(x)$ can be lower bounded. In this way, we ignore the extreme data points and derive an MSE upper bound (restricted to $\mathcal{I}$) of order $\widetilde{O}(n_{\mathcal{I}}^{-4/5})$, which matches the minimax optimal rate for estimating $\mathrm{BV}^{(1)}$ functions \citep[see, e.g.,][Theorem~1]{donoho1998minimax}. In contrast, it is well-known that neural networks in the kernel regime (and any other ``linear smoothers'') must incur a strictly suboptimal worst-case $\MSE = \Omega({n_{\cI}}^{-2/3})$ \citep{donoho1990minimax,suzuki2018adaptivity}. According to the discussions in Appendix \ref{app:exampleoni}, if the data follows a uniform distribution, the interval $\mathcal{I}$ can incorporate most of the data points where $n_{\mathcal{I}}= \Omega(n)$.

Meanwhile, the MSE bound has dependence $\eta^{-2/5}$ on the learning rate $\eta$. Such dependence is because with a larger learning rate, the learned function $f$ will have smaller $\TV^{(1)}$ bound, and therefore the set of possible output functions will contain fewer non-smooth functions, which implies a tighter MSE bound. However, this does not mean that a larger learning rate is always better. When $\eta$ is too large, GD may diverge, and even if it does not, the set of stable solutions cannot approximate the ground truth $f_0$ well if $\int_{-x_{\max}}^{x_{\max}}|f_0^{\prime\prime}(x)|dx\gg \frac{1}{c\eta}+\frac{\sigma x_{\max}}{c}$, thus failing to satisfy the ``optimized'' assumption. In our experiments, we verify the ``optimized'' assumption numerically for a wide range of $\eta$ and demonstrate that by tuning learning rate $\eta$, we are adapting to the unknown $\TV^{(1)}(f_0)$.

Lastly, we remark that Theorem \ref{thm:under} holds for arbitrary $k$, even if the neural network is heavily over-parameterized ($k\gg n$). The dependence $\frac{1}{\eta}+\sigma x_{\max}$ on $\eta$ results from the $\TV^{(1)}$ bound in Corollary \ref{cor:tv1}, where the term $\frac{1}{\eta}$ will dominate if $\eta\leq \frac{1}{\sigma x_{\max}}$, which is the case if the step size is not large. Furthermore, if $\frac{n}{k}$ is sufficiently large, we can improve the $\TV^{(1)}$ bound in Corollary \ref{cor:tv1} as in Appendix \ref{app:improvemse}, and improve the mean squared error $\MSE_{\mathcal{I}}(f)$ to $\widetilde{O}\left(\left(\frac{\sigma^2}{n_{\mathcal{I}}}\right)^{\frac{4}{5}}\left(\frac{x_{\max}}{\eta}\right)^{\frac{2}{5}}\right)$ accordingly. The assumptions and detailed statements for the improved MSE are deferred to Appendix \ref{app:nlarge}.

\section{Proof Overview}\label{sec:ps}
In this section, we outline proof ideas for the main theorems in Section~\ref{sec:main}. 

\noindent\textbf{Proof overview of Theorem \ref{thm:tv1}.} According to direct calculation, the Hessian is given by
\begin{equation}
    \nabla^2_{\theta}\loss(\theta)=\frac{1}{n}\sum_{i=1}^n(\nabla_\theta f_{\theta}(x_i))(\nabla_\theta f_{\theta}(x_i))^T+\frac{1}{n}\sum_{i=1}^n(f_{\theta}(x_i)-y_i)\nabla^2_{\theta}f(x_i).
\end{equation}
Let $v$ denote the unit eigenvector ($\|v\|_2=1$) of $\frac{1}{n}\sum_{i=1}^n(\nabla_\theta f_{\theta}(x_i))(\nabla_\theta f_{\theta}(x_i))^T$ with respect to the largest eigenvalue, then it holds that
\begin{equation}
\begin{split}
    &\lambda_{\max}(\nabla^2_{\theta}\loss(\theta))\geq v^T \nabla^2_{\theta}\loss(\theta) v\\
    =&\underbrace{\lambda_{\max}\left(\frac{1}{n}\sum_{i=1}^n(\nabla_\theta f_{\theta}(x_i))(\nabla_\theta f_{\theta}(x_i))^T\right)}_{(\star)}+\underbrace{\frac{1}{n}\sum_{i=1}^n(f_{\theta}(x_i)-y_i)v^T \nabla^2_{\theta}f(x_i)v}_{(\#)}.
\end{split}
\end{equation}
For the term $(\star)$, we connect the maximal eigenvalue at $\theta$ to the (weighted) $\TV^{(1)}$ norm of the corresponding $f=f_{\theta}$. Let $g(x)$ be defined as \eqref{equ:g}, \citet[Lemma 4]{mulayoff2021implicit} shows that (details in Lemma \ref{lem:lambda}):
\begin{equation}
    (\star)=\lambda_{\max}\left(\frac{1}{n}\sum_{i=1}^n(\nabla_\theta f_{\theta}(x_i))(\nabla_\theta f_{\theta}(x_i))^T\right)\geq 1+2\int_{-x_{\max}}^{x_{\max}}|f^{\prime\prime}(x)|g(x)dx.
\end{equation}
For the term $(\#)$, we can bound it by the training loss $\loss(\theta)$ using Cauchy-Schwarz inequality and a \emph{somewhat surprising} uniform upper bound of $v^T \nabla^2_{\theta}f(x_i)v$ in Lemma \ref{lem:operatornorm}:
\begin{equation}
    |(\#)|\leq \sqrt{\frac{1}{n}\sum_{i=1}^n\left(f_{\theta}(x_i)-y_i\right)^2}\cdot\sqrt{\frac{1}{n}\sum_{i=1}^n\left(v^T \nabla^2_{\theta}f(x_i)v\right)^2}\leq 2x_{\max}\sqrt{2\loss(\theta)}.
\end{equation}
The first conclusion \eqref{eq:flatness2tv_agnostic} of Theorem \ref{thm:tv1} is derived by combining the inequalities above. For the second conclusion \eqref{eq:flatness2tv}, note that the term $(\#)$ can be further decomposed as:
\begin{equation}
    (\#)=\underbrace{\frac{1}{n}\sum_{i=1}^n(f_0(x_i)-y_i)v^T \nabla^2_{\theta}f(x_i)v}_{(i)}+\underbrace{\frac{1}{n}\sum_{i=1}^n(f_{\theta}(x_i)-f_0(x_i))v^T \nabla^2_{\theta}f(x_i)v}_{(ii)}.
\end{equation}
Similar to $(\#)$, the term $|(ii)|$ can be bounded by $2x_{\max}\sqrt{\MSE(f_\theta)}$ using Cauchy-Schwarz inequality. Under the data-generating assumption $y_i - f_0(x_i) = \epsilon_i\sim \cN(0,\sigma^2)$ i.i.d., $|(i)|$ can be bounded by a certain empirical \emph{Gaussian complexity} term $\sup_{\theta}\frac{1}{n}\sum_i \epsilon_i h_\theta(x_i)$ with $h_\theta(x_i)= v^T\nabla^2_{\theta}f(x_i)v$. The proof is complete by Lemma \ref{lem:termi} which bounds this Gaussian complexity term (w.h.p.) in two ways: (1) a \emph{dimension-free} bound of $\widetilde{O}(\sigma x_{\max})$ and (2) $\widetilde{O}(\sigma x_{\max}\sqrt{k/n})$ for the under-parameterized case.

\noindent\textbf{Proof overview of Theorem \ref{thm:gp}.} Under the boundedness assumption in Theorem \ref{thm:gp}, we can prove a constant upper bound for $\int_{-x_{\max}}^{x_{\max}}|f^{\prime\prime}(x)|g(x)dx$ (Lemma \ref{lem:last}), which further implies another constant upper bound for $\int_{\mathcal{I}}|f^{\prime\prime}(x)|dx$. Therefore, the metric entropy (logarithmic of $\epsilon$-covering number $N_{\epsilon}$) of the possible output function class is of order $\epsilon^{-1/2}$ (Details in Lemma \ref{lem:entropygp}). 

For a fixed $\epsilon$-cover of the possible output function class, according to Hoeffding's inequality and a union bound, the uniform upper bound of $\GG_{\mathcal{I}}(f)$ can be bounded by $\widetilde{O}(\sqrt{\log N_\epsilon/n_{\mathcal{I}}})=\widetilde{O}(\epsilon^{-\frac{1}{4}}n_{\mathcal{I}}^{-\frac{1}{2}})$ with high probability. Note that the approximation error is of order $\widetilde{O}(\epsilon)$, therefore $\GG_{\mathcal{I}}(f)$ can be uniformly bounded over the possible output function class by 
\begin{equation}
    \widetilde{O}(\epsilon)+\widetilde{O}(\epsilon^{-\frac{1}{4}}n_{\mathcal{I}}^{-\frac{1}{2}})=\widetilde{O}(n_{\mathcal{I}}^{-\frac{2}{5}}),
\end{equation}
where $\epsilon$ is chosen to minimize the bound. More details can be found in the proof of Theorem \ref{thm:regp}.

\noindent\textbf{Proof overview of Theorem \ref{thm:under}.} Similar to Theorem \ref{thm:gp}, we can prove a constant upper bound for $\int_{\mathcal{I}}|f^{\prime\prime}(x)|dx$, which implies that the metric entropy of the possible output function class \eqref{eq:masterset} is of order $\epsilon^{-1/2}$ (Details in Lemma \ref{lem:entropy3}). Therefore, the \emph{critical radius} $r$ is of order $\widetilde{O}(n_{\mathcal{I}}^{-\frac{2}{5}})$, which leads to a (high probability) MSE upper bound of order $\widetilde{O}(n_{\mathcal{I}}^{-\frac{4}{5}})$ using a self-bounding technique. More details about handling other parameters can be found in Appendix \ref{app:mse}.

\section{Experiments}
\begin{figure}
\centering
\includegraphics[width=0.32\textwidth]{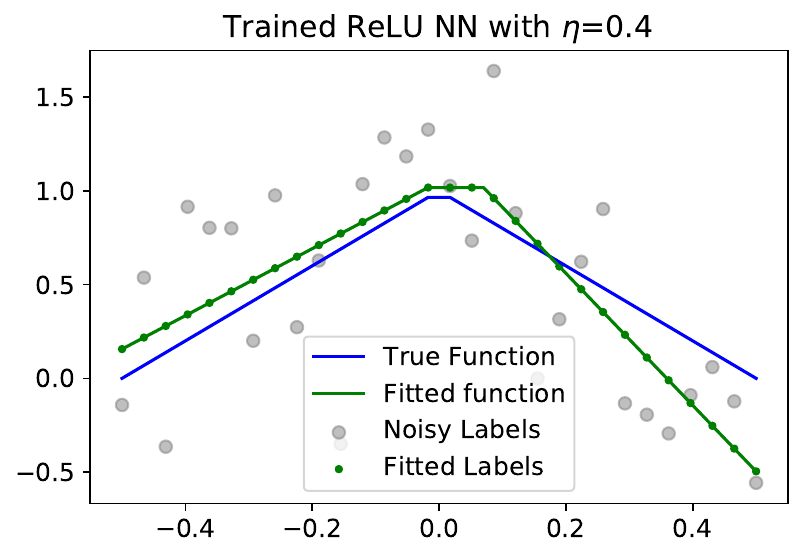}
\includegraphics[width=0.32\textwidth]{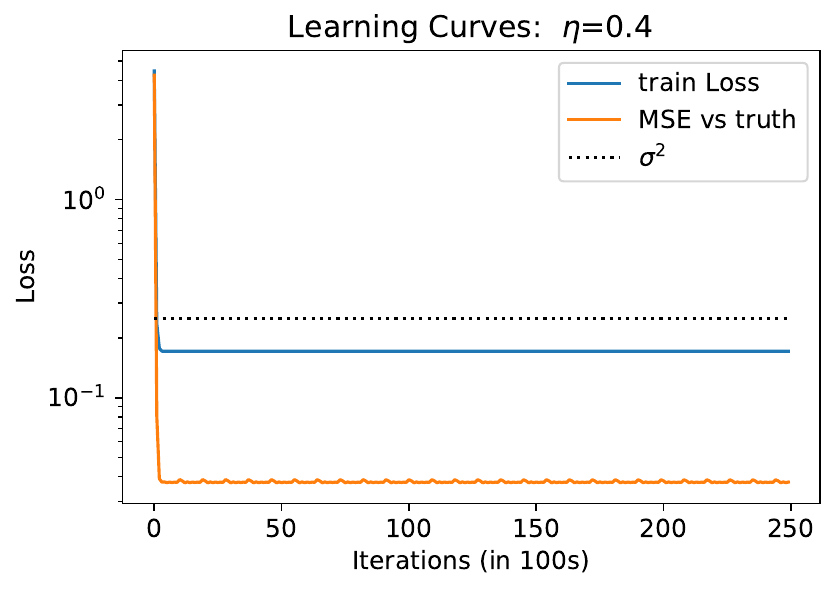}
\includegraphics[width=0.32\textwidth]{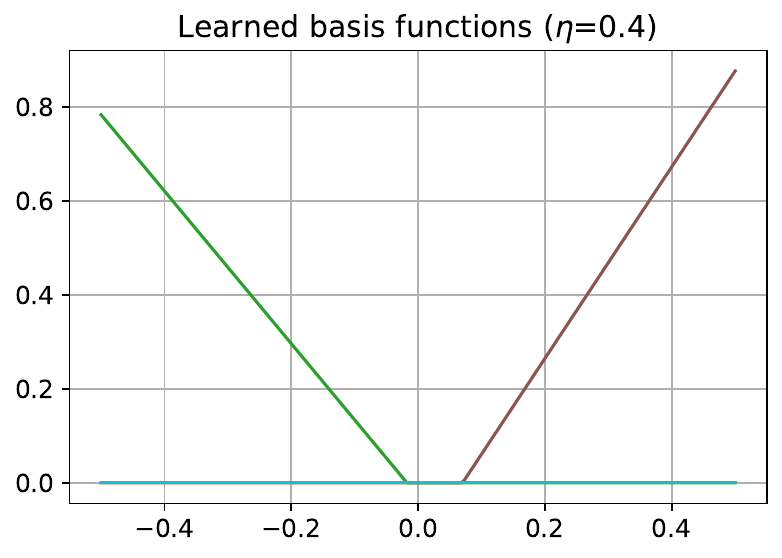}\\
\includegraphics[width=0.32\textwidth]{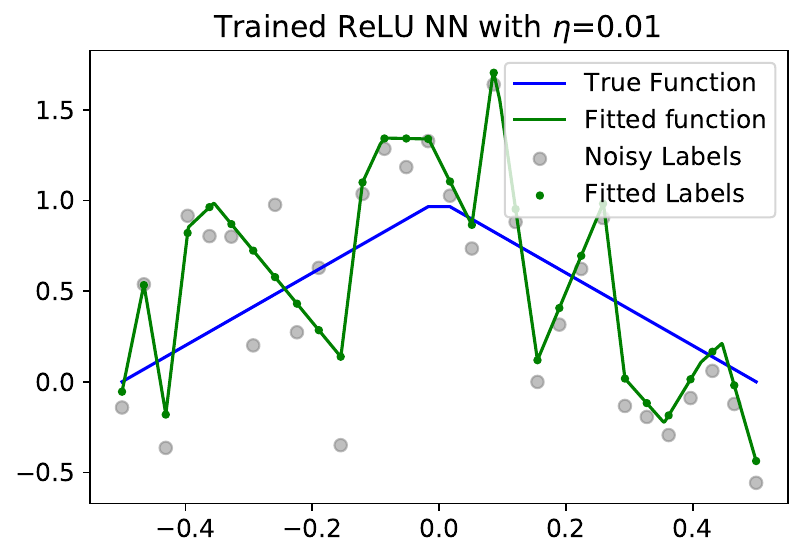}
\includegraphics[width=0.32\textwidth]{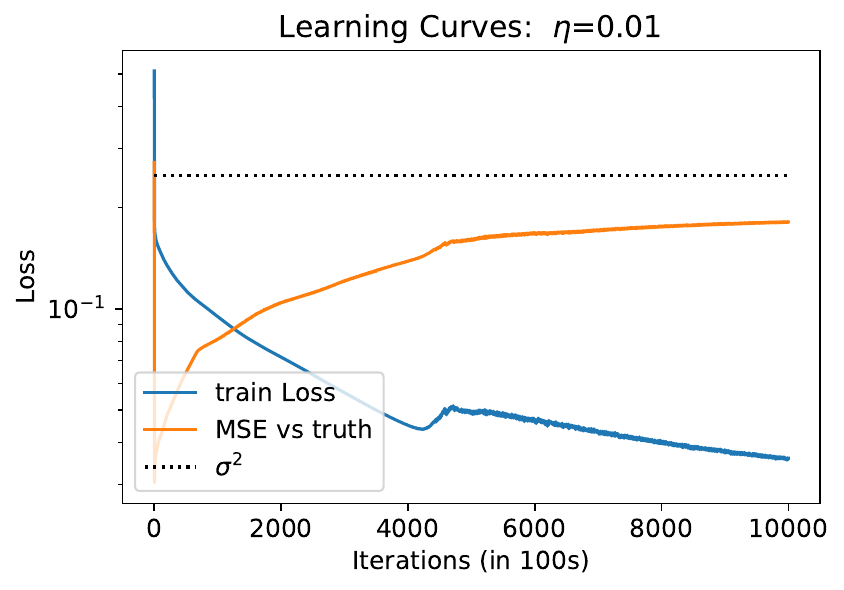}
\includegraphics[width=0.32\textwidth]{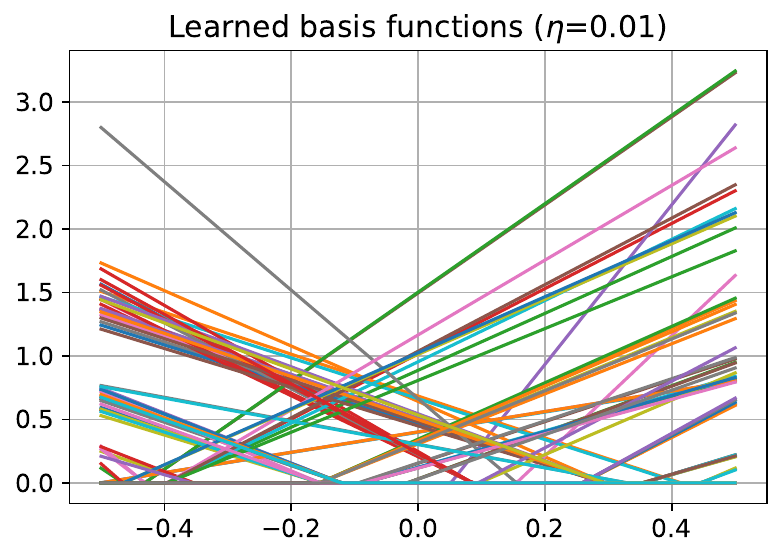}\\
\caption{Highlights of our numerical simulation for large step size ($\eta=0.4$, \textbf{first row}) and small step size ($\eta=0.01$, \textbf{second row}) gradient descent training of a univariate ReLU NN with $n=30$ noisy observations and $k=100$ hidden neurons. From left to right, the three columns illustrate (a) Trained NN function (b) Learning curves (c) Learned basis functions (each of the 100 neurons). }\label{fig:highlights}
\end{figure}

In this section, we empirically validate our claims by training a two-layer fully connected neural network with ReLU activation using gradient descent (GD) with varying step sizes. We focus on fitting a mildly overparameterized ReLU network to a simple nonparametric regression problem. The input dataset comprises of 30 equally spaced fixed design points $\{x_i\}_{i=1}^n$, where each $x_i \in [-0.5,0.5]$ ($n=30$). Label $y_i = f_0(x_i) + \cN(0,\sigma^2)$ with $\sigma=0.5$ and $f_0(x) = (2x+1)\mathbf{1}(x\leq 0) + (-2x+1)\mathbf{1}(x>0)$. The two-layer ReLU network is parameterized by $\theta$ (see Section \ref{sec:setup}) with $k=100$ neurons per layer. The network uses standard parameterization (scale factor of 1) and parameters are initialized randomly (see Figure~\ref{fig:basis_learned} for the initial basis functions). 

Figure \ref{fig:teaser} (in the introduction) illustrates how changing the learning rate affects the learned ReLU NN that GD-training stabilizes on. The main take-aways are (a) large learning rate learns flatter minima which represent more regular functions (in $\TV^{(1)}$); (b) Our bound from Theorem~\ref{thm:tv1} is a very accurate description of the curvature of the Hessian as well as a valid upper bound of the $\TV^{(1)}$-(pseudo) norm; (c) When we tune learning rate $\eta$, it is implicitly regularizing the complexity, which provides a satisfying variance tradeoff explanation to how GD-training works. 

Figure~\ref{fig:highlights} provides further details on the learning curves and representation learning.  We note that the learned representation is very different from the initialization, thus our experiments are clearly describing phenomena not covered by the ``kernel'' regime. In addition, it seems that all solutions that GD finds after a small number of iterations satisfy the ``optimized'' assumption as required in Theorem~\ref{thm:under}.  In the appendix (Figure~\ref{fig:knots}), we provide empirical justification for the other assumption we make about the twice-differentiability of the solutions.   More experiments can be found in the appendix with more learning rate choices, as well as a discussion on the \emph{catastrophic} and \emph{tempered} overfitting of interpolating solutions when we adjust $k$.

\section{Conclusion}

In this paper, we took a new look into how gradient descent-trained two-layer ReLU neural networks generalize from a lens of minima stability (and the closely related Edge-of-Stability phenomena). We focused on univariate inputs with noisy labels and showed GD with typical choice of learning rate cannot interpolate the data. We also established that local smoothness of the training loss functions implies a first order total variation constraint on the function the neural network represents, hence proving that all such solutions have a vanishing generalization gap inside the strict interior of the data support. In addition, under a mild assumption, we prove that these stable solutions achieve near-optimal rate for estimating first-order bounded variation functions. Future work includes generalization beyond 1D input, two-hidden layers, and understanding the choice of optimization algorithms.

\section*{Acknowledgments}
The research is partially supported by NSF \#2134214. KZ and YW's work was partially completed while they were with the Department of Computer Science at UCSB.

\bibliography{reference}

\begin{thebibliography}{60}
\providecommand{\natexlab}[1]{#1}
\providecommand{\url}[1]{\texttt{#1}}
\expandafter\ifx\csname urlstyle\endcsname\relax
  \providecommand{\doi}[1]{doi: #1}\else
  \providecommand{\doi}{doi: \begingroup \urlstyle{rm}\Url}\fi

\bibitem[Ahn et~al.(2023)Ahn, Bubeck, Chewi, Lee, Suarez, and
  Zhang]{ahn2023learning}
Kwangjun Ahn, S{\'e}bastien Bubeck, Sinho Chewi, Yin~Tat Lee, Felipe Suarez,
  and Yi~Zhang.
\newblock Learning threshold neurons via edge of stability.
\newblock \emph{Advances in Neural Information Processing Systems}, 36, 2023.

\bibitem[Arora et~al.(2019)Arora, Du, Hu, Li, and Wang]{arora2019fine}
Sanjeev Arora, Simon Du, Wei Hu, Zhiyuan Li, and Ruosong Wang.
\newblock Fine-grained analysis of optimization and generalization for
  overparameterized two-layer neural networks.
\newblock In \emph{International Conference on Machine Learning}, pages
  322--332. PMLR, 2019.

\bibitem[Arora et~al.(2022)Arora, Li, and Panigrahi]{arora2022understanding}
Sanjeev Arora, Zhiyuan Li, and Abhishek Panigrahi.
\newblock Understanding gradient descent on the edge of stability in deep
  learning.
\newblock In \emph{International Conference on Machine Learning}, pages
  948--1024. PMLR, 2022.

\bibitem[Bartlett et~al.(2020)Bartlett, Long, Lugosi, and
  Tsigler]{bartlett2020benign}
Peter~L Bartlett, Philip~M Long, G{\'a}bor Lugosi, and Alexander Tsigler.
\newblock Benign overfitting in linear regression.
\newblock \emph{Proceedings of the National Academy of Sciences}, 117\penalty0
  (48):\penalty0 30063--30070, 2020.

\bibitem[Belkin et~al.(2019)Belkin, Rakhlin, and Tsybakov]{belkin2019does}
Mikhail Belkin, Alexander Rakhlin, and Alexandre~B Tsybakov.
\newblock Does data interpolation contradict statistical optimality?
\newblock In \emph{The 22nd International Conference on Artificial Intelligence
  and Statistics}, pages 1611--1619. PMLR, 2019.

\bibitem[Buzaglo et~al.(2024)Buzaglo, Harel, Nacson, Brutzkus, Srebro, and
  Soudry]{buzaglo2024uniform}
Gon Buzaglo, Itamar Harel, Mor~Shpigel Nacson, Alon Brutzkus, Nathan Srebro,
  and Daniel Soudry.
\newblock How uniform random weights induce non-uniform bias: Typical
  interpolating neural networks generalize with narrow teachers.
\newblock In \emph{International Conference on Machine Learning}, 2024.

\bibitem[Cao and Gu(2019)]{cao2019generalization}
Yuan Cao and Quanquan Gu.
\newblock Generalization bounds of stochastic gradient descent for wide and
  deep neural networks.
\newblock \emph{Advances in neural information processing systems}, 32, 2019.

\bibitem[Chernoff(1952)]{chernoff1952measure}
Herman Chernoff.
\newblock A measure of asymptotic efficiency for tests of a hypothesis based on
  the sum of observations.
\newblock \emph{The Annals of Mathematical Statistics}, pages 493--507, 1952.

\bibitem[Chizat and Bach(2020)]{chizat2020implicit}
Lenaic Chizat and Francis Bach.
\newblock Implicit bias of gradient descent for wide two-layer neural networks
  trained with the logistic loss.
\newblock In \emph{Conference on learning theory}, pages 1305--1338. PMLR,
  2020.

\bibitem[Chizat et~al.(2019)Chizat, Oyallon, and Bach]{chizat2019lazy}
Lenaic Chizat, Edouard Oyallon, and Francis Bach.
\newblock On lazy training in differentiable programming.
\newblock \emph{Advances in neural information processing systems}, 32, 2019.

\bibitem[Cohen et~al.(2020)Cohen, Kaur, Li, Kolter, and
  Talwalkar]{cohen2020gradient}
Jeremy Cohen, Simran Kaur, Yuanzhi Li, J~Zico Kolter, and Ameet Talwalkar.
\newblock Gradient descent on neural networks typically occurs at the edge of
  stability.
\newblock In \emph{International Conference on Learning Representations}, 2020.

\bibitem[Damian et~al.(2024)Damian, Nichani, and Lee]{damian2022self}
Alex Damian, Eshaan Nichani, and Jason~D Lee.
\newblock Self-stabilization: The implicit bias of gradient descent at the edge
  of stability.
\newblock In \emph{International Conference on Learning Representations}, 2024.

\bibitem[Dann et~al.(2017)Dann, Lattimore, and Brunskill]{dann2017unifying}
Christoph Dann, Tor Lattimore, and Emma Brunskill.
\newblock Unifying pac and regret: Uniform pac bounds for episodic
  reinforcement learning.
\newblock \emph{Advances in Neural Information Processing Systems}, 30, 2017.

\bibitem[DeVore and Lorentz(1993)]{devore1993constructive}
Ronald~A DeVore and George~G Lorentz.
\newblock \emph{Constructive approximation}, volume 303.
\newblock Springer Science \& Business Media, 1993.

\bibitem[Donoho and Johnstone(1998)]{donoho1998minimax}
David~L Donoho and Iain~M Johnstone.
\newblock Minimax estimation via wavelet shrinkage.
\newblock \emph{The annals of Statistics}, 26\penalty0 (3):\penalty0 879--921,
  1998.

\bibitem[Donoho et~al.(1990)Donoho, Liu, and MacGibbon]{donoho1990minimax}
David~L Donoho, Richard~C Liu, and Brenda MacGibbon.
\newblock Minimax risk over hyperrectangles, and implications.
\newblock \emph{The Annals of Statistics}, pages 1416--1437, 1990.

\bibitem[Du et~al.(2018)Du, Zhai, Poczos, and Singh]{du2018gradient}
Simon~S Du, Xiyu Zhai, Barnabas Poczos, and Aarti Singh.
\newblock Gradient descent provably optimizes over-parameterized neural
  networks.
\newblock In \emph{International Conference on Learning Representations}, 2018.

\bibitem[Edmunds and Triebel(1996)]{edmunds1996function}
David~Eric Edmunds and Hans Triebel.
\newblock Function spaces, entropy numbers, differential operators.
\newblock \emph{(No Title)}, 1996.

\bibitem[Frei et~al.(2022)Frei, Chatterji, and Bartlett]{frei2022benign}
Spencer Frei, Niladri~S Chatterji, and Peter Bartlett.
\newblock Benign overfitting without linearity: Neural network classifiers
  trained by gradient descent for noisy linear data.
\newblock In \emph{Conference on Learning Theory}, pages 2668--2703. PMLR,
  2022.

\bibitem[Ghadimi and Lan(2013)]{ghadimi2013stochastic}
Saeed Ghadimi and Guanghui Lan.
\newblock Stochastic first-and zeroth-order methods for nonconvex stochastic
  programming.
\newblock \emph{SIAM journal on optimization}, 23\penalty0 (4):\penalty0
  2341--2368, 2013.

\bibitem[Haas et~al.(2023)Haas, Holzm{\"u}ller, Luxburg, and
  Steinwart]{haas2023mind}
Moritz Haas, David Holzm{\"u}ller, Ulrike Luxburg, and Ingo Steinwart.
\newblock Mind the spikes: Benign overfitting of kernels and neural networks in
  fixed dimension.
\newblock \emph{Advances in Neural Information Processing Systems}, 36, 2023.

\bibitem[Hao and Zhang(2024)]{hao2024surprising}
Yifan Hao and Tong Zhang.
\newblock The surprising harmfulness of benign overfitting for adversarial
  robustness.
\newblock \emph{arXiv preprint arXiv:2401.12236}, 2024.

\bibitem[Hardt et~al.(2016)Hardt, Recht, and Singer]{hardt2016train}
Moritz Hardt, Ben Recht, and Yoram Singer.
\newblock Train faster, generalize better: Stability of stochastic gradient
  descent.
\newblock In \emph{International conference on machine learning}, pages
  1225--1234. PMLR, 2016.

\bibitem[Hochreiter and Schmidhuber(1997)]{hochreiter1997flat}
Sepp Hochreiter and J{\"u}rgen Schmidhuber.
\newblock Flat minima.
\newblock \emph{Neural computation}, 9\penalty0 (1):\penalty0 1--42, 1997.

\bibitem[Hu et~al.(2022)Hu, Green, and Tibshirani]{hu2022voronoigram}
Addison~J Hu, Alden Green, and Ryan~J Tibshirani.
\newblock The voronoigram: Minimax estimation of bounded variation functions
  from scattered data.
\newblock \emph{arXiv preprint arXiv:2212.14514}, 2022.

\bibitem[Hu et~al.(2020)Hu, Li, and Yu]{hu2020simple}
Wei Hu, Zhiyuan Li, and Dingli Yu.
\newblock Simple and effective regularization methods for training on noisily
  labeled data with generalization guarantee.
\newblock In \emph{International Conference on Learning Representations}, 2020.

\bibitem[Jacot et~al.(2018)Jacot, Gabriel, and Hongler]{jacot2018neural}
Arthur Jacot, Franck Gabriel, and Cl{\'e}ment Hongler.
\newblock Neural tangent kernel: Convergence and generalization in neural
  networks.
\newblock \emph{Advances in neural information processing systems}, 31, 2018.

\bibitem[Jin et~al.(2017)Jin, Ge, Netrapalli, Kakade, and
  Jordan]{jin2017escape}
Chi Jin, Rong Ge, Praneeth Netrapalli, Sham~M Kakade, and Michael~I Jordan.
\newblock How to escape saddle points efficiently.
\newblock In \emph{International conference on machine learning}, pages
  1724--1732. PMLR, 2017.

\bibitem[Jin and Mont{\'u}far(2023)]{jin2023implicit}
Hui Jin and Guido Mont{\'u}far.
\newblock Implicit bias of gradient descent for mean squared error regression
  with two-layer wide neural networks.
\newblock \emph{Journal of Machine Learning Research}, 24\penalty0
  (137):\penalty0 1--97, 2023.

\bibitem[Joshi et~al.(2023)Joshi, Vardi, and Srebro]{joshi2023noisy}
Nirmit Joshi, Gal Vardi, and Nathan Srebro.
\newblock Noisy interpolation learning with shallow univariate relu networks.
\newblock In \emph{International Conference on Learning Representations}, 2023.

\bibitem[Keskar et~al.(2017)Keskar, Mudigere, Nocedal, Smelyanskiy, and
  Tang]{keskar2017large}
Nitish~Shirish Keskar, Dheevatsa Mudigere, Jorge Nocedal, Mikhail Smelyanskiy,
  and Ping Tak~Peter Tang.
\newblock On large-batch training for deep learning: Generalization gap and
  sharp minima.
\newblock In \emph{International Conference on Learning Representations}, 2017.

\bibitem[Kornowski et~al.(2024)Kornowski, Yehudai, and
  Shamir]{kornowski2024tempered}
Guy Kornowski, Gilad Yehudai, and Ohad Shamir.
\newblock From tempered to benign overfitting in relu neural networks.
\newblock \emph{Advances in Neural Information Processing Systems}, 36, 2024.

\bibitem[Kou et~al.(2023)Kou, Chen, Chen, and Gu]{kou2023benign}
Yiwen Kou, Zixiang Chen, Yuanzhou Chen, and Quanquan Gu.
\newblock Benign overfitting in two-layer relu convolutional neural networks.
\newblock In \emph{International Conference on Machine Learning}, pages
  17615--17659. PMLR, 2023.

\bibitem[Kreisler et~al.(2023)Kreisler, Nacson, Soudry, and
  Carmon]{kreisler2023gradient}
Itai Kreisler, Mor~Shpigel Nacson, Daniel Soudry, and Yair Carmon.
\newblock Gradient descent monotonically decreases the sharpness of gradient
  flow solutions in scalar networks and beyond.
\newblock In \emph{International Conference on Machine Learning}, pages
  17684--17744. PMLR, 2023.

\bibitem[Lewkowycz et~al.(2020)Lewkowycz, Bahri, Dyer, Sohl-Dickstein, and
  Gur-Ari]{lewkowycz2020large}
Aitor Lewkowycz, Yasaman Bahri, Ethan Dyer, Jascha Sohl-Dickstein, and Guy
  Gur-Ari.
\newblock The large learning rate phase of deep learning: the catapult
  mechanism.
\newblock \emph{arXiv preprint arXiv:2003.02218}, 2020.

\bibitem[Liu et~al.(2022)Liu, Zhu, and Belkin]{liu2022loss}
Chaoyue Liu, Libin Zhu, and Mikhail Belkin.
\newblock Loss landscapes and optimization in over-parameterized non-linear
  systems and neural networks.
\newblock \emph{Applied and Computational Harmonic Analysis}, 59:\penalty0
  85--116, 2022.

\bibitem[Liu et~al.(2021)Liu, Chen, Zhao, and Liao]{liu2021besov}
Hao Liu, Minshuo Chen, Tuo Zhao, and Wenjing Liao.
\newblock Besov function approximation and binary classification on
  low-dimensional manifolds using convolutional residual networks.
\newblock In \emph{International Conference on Machine Learning}, pages
  6770--6780. PMLR, 2021.

\bibitem[Ma and Ying(2021)]{ma2021linear}
Chao Ma and Lexing Ying.
\newblock On linear stability of sgd and input-smoothness of neural networks.
\newblock \emph{Advances in Neural Information Processing Systems},
  34:\penalty0 16805--16817, 2021.

\bibitem[Mallinar et~al.(2022)Mallinar, Simon, Abedsoltan, Pandit, Belkin, and
  Nakkiran]{mallinar2022benign}
Neil Mallinar, James Simon, Amirhesam Abedsoltan, Parthe Pandit, Misha Belkin,
  and Preetum Nakkiran.
\newblock Benign, tempered, or catastrophic: Toward a refined taxonomy of
  overfitting.
\newblock \emph{Advances in Neural Information Processing Systems},
  35:\penalty0 1182--1195, 2022.

\bibitem[Mei et~al.(2019)Mei, Misiakiewicz, and Montanari]{mei2019mean}
Song Mei, Theodor Misiakiewicz, and Andrea Montanari.
\newblock Mean-field theory of two-layers neural networks: dimension-free
  bounds and kernel limit.
\newblock In \emph{Conference on Learning Theory}, pages 2388--2464. PMLR,
  2019.

\bibitem[Mulayoff et~al.(2021)Mulayoff, Michaeli, and
  Soudry]{mulayoff2021implicit}
Rotem Mulayoff, Tomer Michaeli, and Daniel Soudry.
\newblock The implicit bias of minima stability: A view from function space.
\newblock \emph{Advances in Neural Information Processing Systems},
  34:\penalty0 17749--17761, 2021.

\bibitem[Nacson et~al.(2022)Nacson, Mulayoff, Ongie, Michaeli, and
  Soudry]{nacson2022implicit}
Mor~Shpigel Nacson, Rotem Mulayoff, Greg Ongie, Tomer Michaeli, and Daniel
  Soudry.
\newblock The implicit bias of minima stability in multivariate shallow relu
  networks.
\newblock In \emph{The Eleventh International Conference on Learning
  Representations}, 2022.

\bibitem[Nickl and P{\"o}tscher(2007)]{nickl2007bracketing}
Richard Nickl and Benedikt~M P{\"o}tscher.
\newblock Bracketing metric entropy rates and empirical central limit theorems
  for function classes of besov-and sobolev-type.
\newblock \emph{Journal of Theoretical Probability}, 20:\penalty0 177--199,
  2007.

\bibitem[Ongie et~al.(2020)Ongie, Willett, Soudry, and
  Srebro]{ongie2019function}
Greg Ongie, Rebecca Willett, Daniel Soudry, and Nathan Srebro.
\newblock A function space view of bounded norm infinite width relu nets: The
  multivariate case.
\newblock In \emph{ICLR}, 2020.

\bibitem[Parhi and Nowak(2021)]{parhi2021banach}
Rahul Parhi and Robert~D Nowak.
\newblock Banach space representer theorems for neural networks and ridge
  splines.
\newblock \emph{Journal of Machine Learning Research}, 22\penalty0
  (43):\penalty0 1--40, 2021.

\bibitem[Qiao and Wang(2023{\natexlab{a}})]{qiao2023near}
Dan Qiao and Yu-Xiang Wang.
\newblock Near-optimal differentially private reinforcement learning.
\newblock In \emph{International Conference on Artificial Intelligence and
  Statistics}, pages 9914--9940. PMLR, 2023{\natexlab{a}}.

\bibitem[Qiao and Wang(2023{\natexlab{b}})]{qiao2023offline}
Dan Qiao and Yu-Xiang Wang.
\newblock Offline reinforcement learning with differential privacy.
\newblock \emph{Advances in Neural Information Processing Systems}, 36,
  2023{\natexlab{b}}.

\bibitem[Richards and Kuzborskij(2021)]{richards2021stability}
Dominic Richards and Ilja Kuzborskij.
\newblock Stability \& generalisation of gradient descent for shallow neural
  networks without the neural tangent kernel.
\newblock \emph{Advances in neural information processing systems},
  34:\penalty0 8609--8621, 2021.

\bibitem[Savarese et~al.(2019)Savarese, Evron, Soudry, and
  Srebro]{savarese2019infinite}
Pedro Savarese, Itay Evron, Daniel Soudry, and Nathan Srebro.
\newblock How do infinite width bounded norm networks look in function space?
\newblock In \emph{Conference on Learning Theory}, pages 2667--2690. PMLR,
  2019.

\bibitem[Sridharan(2002)]{sridharan2002gentle}
Karthik Sridharan.
\newblock A gentle introduction to concentration inequalities.
\newblock \emph{Dept. Comput. Sci., Cornell Univ., Tech. Rep}, 2002.

\bibitem[Suzuki(2018)]{suzuki2018adaptivity}
Taiji Suzuki.
\newblock Adaptivity of deep relu network for learning in besov and mixed
  smooth besov spaces: optimal rate and curse of dimensionality.
\newblock \emph{arXiv preprint arXiv:1810.08033}, 2018.

\bibitem[Tibshirani(2014)]{tibshirani2014fast_talk}
Ryan~J Tibshirani.
\newblock Fast stagewise algorithms for approximate regularization paths.
\newblock
  \url{https://www.stat.berkeley.edu/~ryantibs/talks/stagewise-2014.pdf}, 2014.

\bibitem[Tibshirani(2015)]{tibshirani2015general}
Ryan~J Tibshirani.
\newblock A general framework for fast stagewise algorithms.
\newblock \emph{J. Mach. Learn. Res.}, 16\penalty0 (1):\penalty0 2543--2588,
  2015.

\bibitem[Wainwright(2019)]{wainwright2019high}
Martin~J Wainwright.
\newblock \emph{High-dimensional statistics: A non-asymptotic viewpoint},
  volume~48.
\newblock Cambridge university press, 2019.

\bibitem[Woodworth et~al.(2020)Woodworth, Gunasekar, Lee, Moroshko, Savarese,
  Golan, Soudry, and Srebro]{woodworth2020kernel}
Blake Woodworth, Suriya Gunasekar, Jason~D Lee, Edward Moroshko, Pedro
  Savarese, Itay Golan, Daniel Soudry, and Nathan Srebro.
\newblock Kernel and rich regimes in overparametrized models.
\newblock In \emph{Conference on Learning Theory}, pages 3635--3673. PMLR,
  2020.

\bibitem[Wu and Su(2023)]{wu2023implicit}
Lei Wu and Weijie~J Su.
\newblock The implicit regularization of dynamical stability in stochastic
  gradient descent.
\newblock In \emph{International Conference on Machine Learning}, pages
  37656--37684. PMLR, 2023.

\bibitem[Wu et~al.(2018)Wu, Ma, and E]{wu2018sgd}
Lei Wu, Chao Ma, and Weinan E.
\newblock How sgd selects the global minima in over-parameterized learning: A
  dynamical stability perspective.
\newblock \emph{Advances in Neural Information Processing Systems}, 31, 2018.

\bibitem[Zhang et~al.(2021)Zhang, Bengio, Hardt, Recht, and
  Vinyals]{zhang2021understanding}
Chiyuan Zhang, Samy Bengio, Moritz Hardt, Benjamin Recht, and Oriol Vinyals.
\newblock Understanding deep learning (still) requires rethinking
  generalization.
\newblock \emph{Communications of the ACM}, 64\penalty0 (3):\penalty0 107--115,
  2021.

\bibitem[Zhang and Wang(2022)]{zhang2022deep}
Kaiqi Zhang and Yu-Xiang Wang.
\newblock Deep learning meets nonparametric regression: Are weight-decayed dnns
  locally adaptive?
\newblock In \emph{International Conference on Learning Representations}, 2022.

\bibitem[Zhao et~al.(2022)Zhao, Qiao, Redberg, Agrawal, El~Abbadi, and
  Wang]{zhao2022differentially}
Fuheng Zhao, Dan Qiao, Rachel Redberg, Divyakant Agrawal, Amr El~Abbadi, and
  Yu-Xiang Wang.
\newblock Differentially private linear sketches: Efficient implementations and
  applications.
\newblock \emph{Advances in Neural Information Processing Systems},
  35:\penalty0 12691--12704, 2022.

\end{thebibliography}
\bibliographystyle{plainnat}

\newpage
\appendix

\section{Full Experimental Results}\label{sec:more_exps}

\subsection{Stable Minima GD Converges to and Learning Curves}
\begin{figure}[h!]
    \centering
    \includegraphics[width=0.38\textwidth]{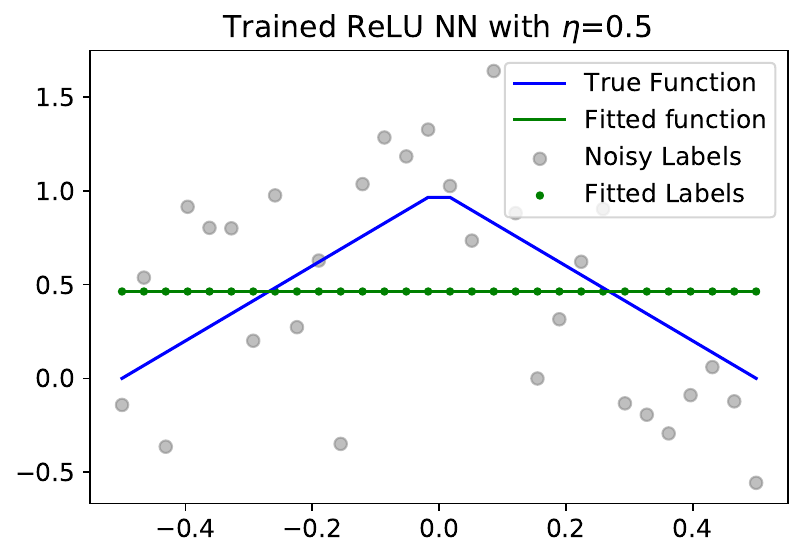}
    \includegraphics[width=0.38\textwidth]{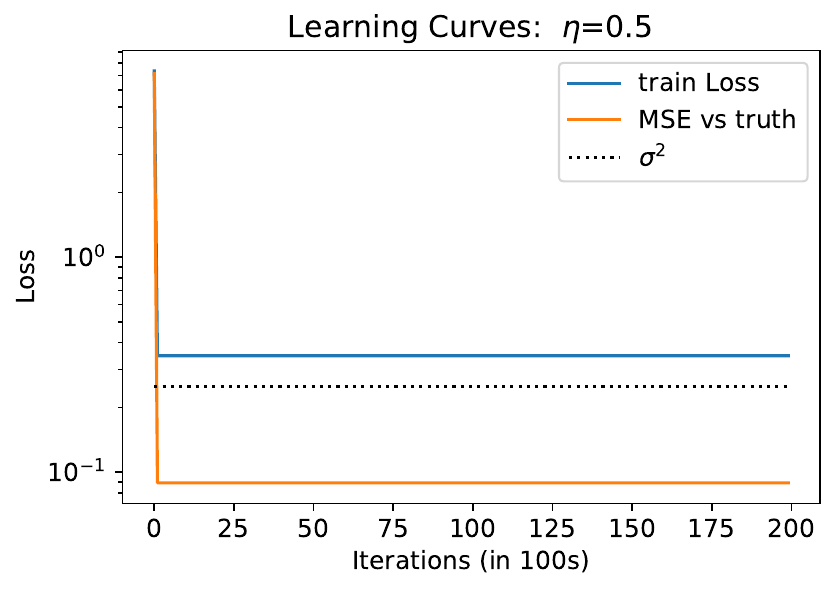}\\
    \includegraphics[width=0.38\textwidth]{figs/func_eta=0.4.pdf}
    \includegraphics[width=0.38\textwidth]{figs/learning_curve_eta=0.4.pdf}\\    
    \includegraphics[width=0.38\textwidth]{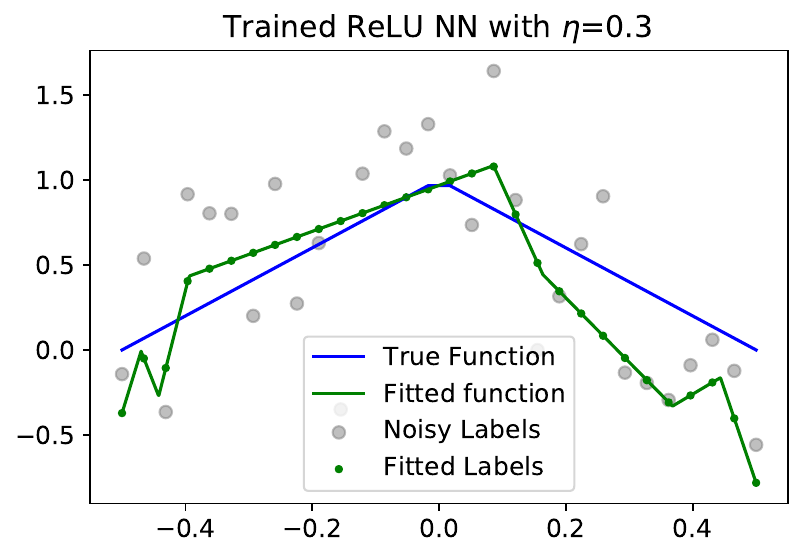}
    \includegraphics[width=0.38\textwidth]{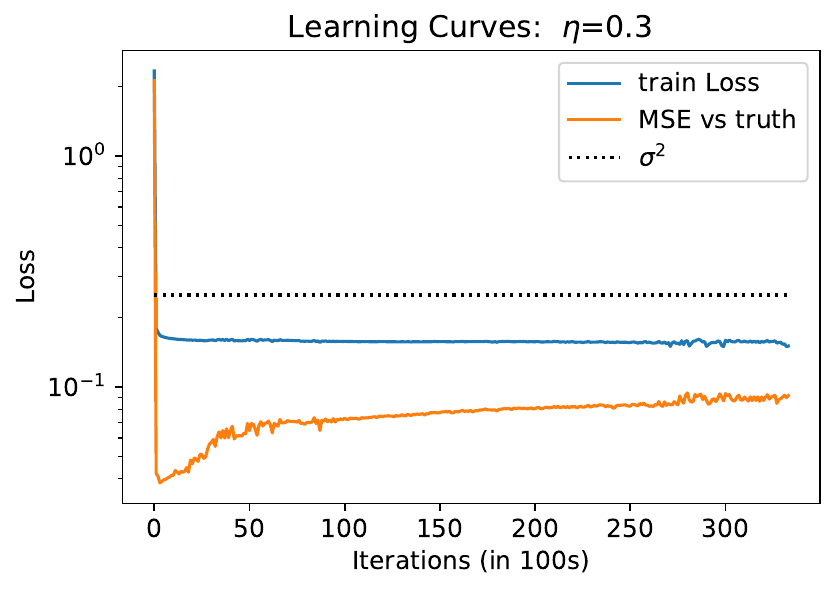}\\
    \includegraphics[width=0.38\textwidth]{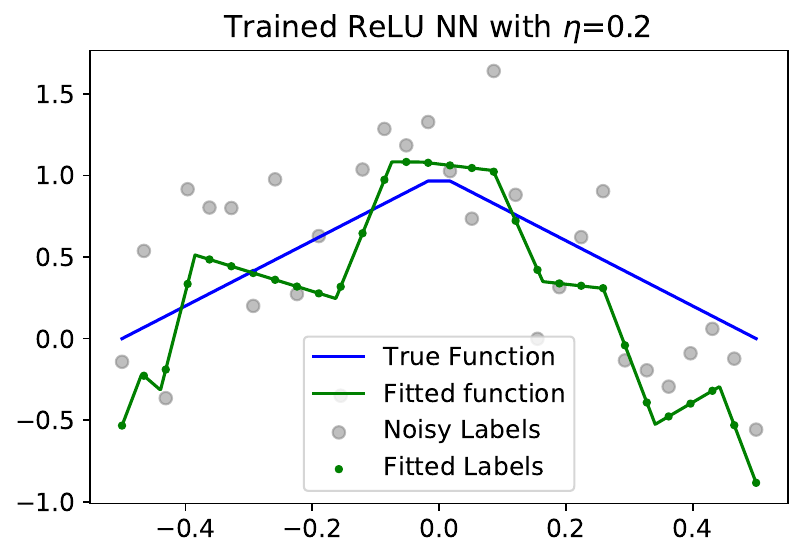}
    \includegraphics[width=0.38\textwidth]{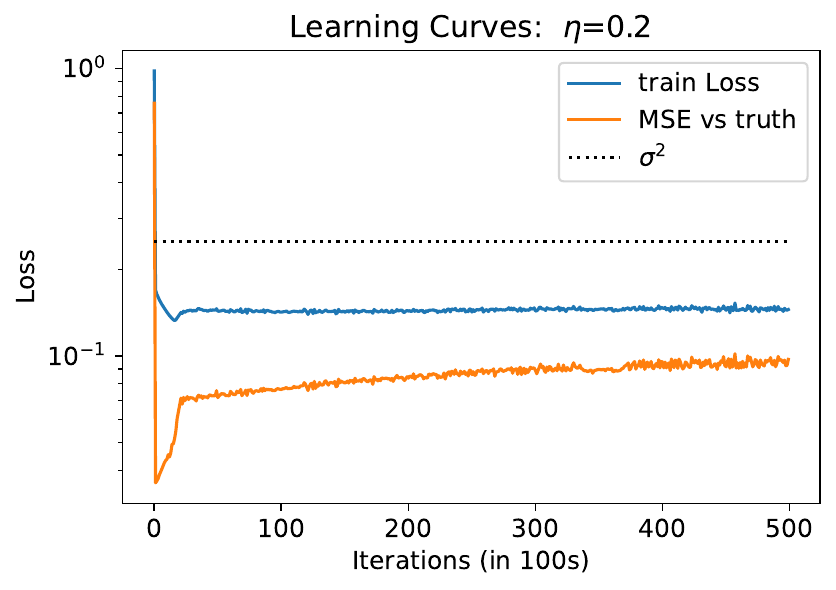}\\
    \caption{Illustration of the solutions gradient descent with learning rate $\eta$ converges to (Part I). As $\eta$ decreases, the fitted function goes from simple to complex. Any line below the $\sigma^2$ line \textbf{satisfies the ``optimized'' assumption} from Corollary \ref{cor:tv1} and Theorem~\ref{thm:under}.}
    \label{fig:stable_minima_visualization1}
\end{figure}

\begin{figure}[h!]
    \centering
    \includegraphics[width=0.42\textwidth]{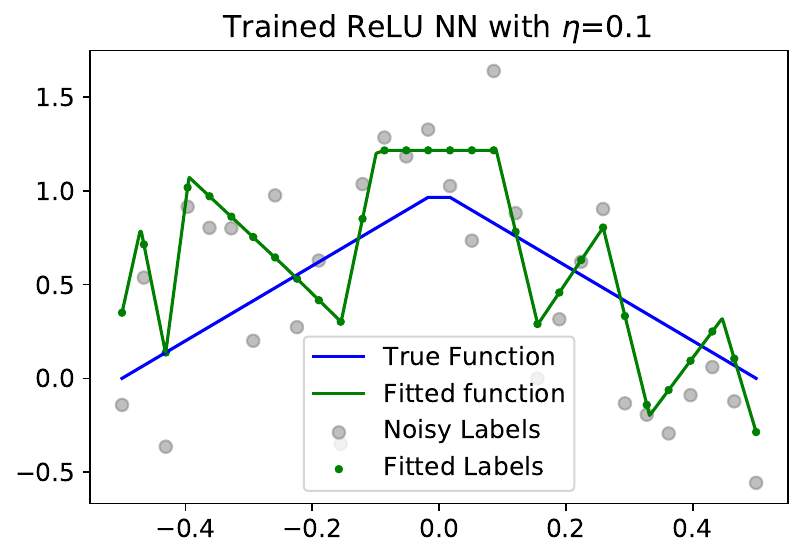}
    \includegraphics[width=0.42\textwidth]{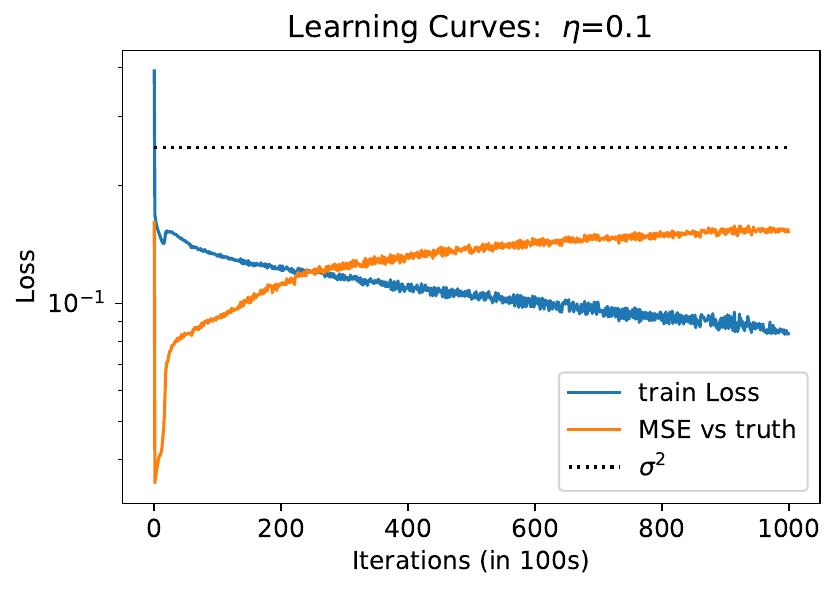}\\
    \includegraphics[width=0.42\textwidth]{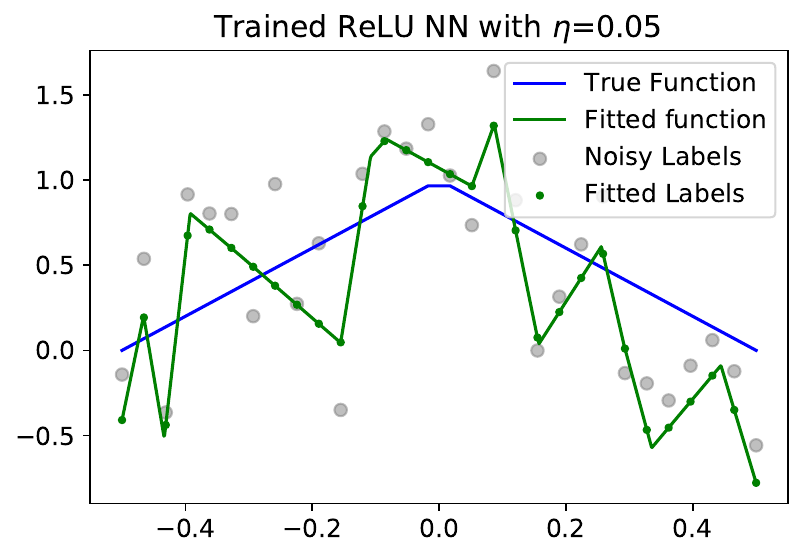}
    \includegraphics[width=0.42\textwidth]{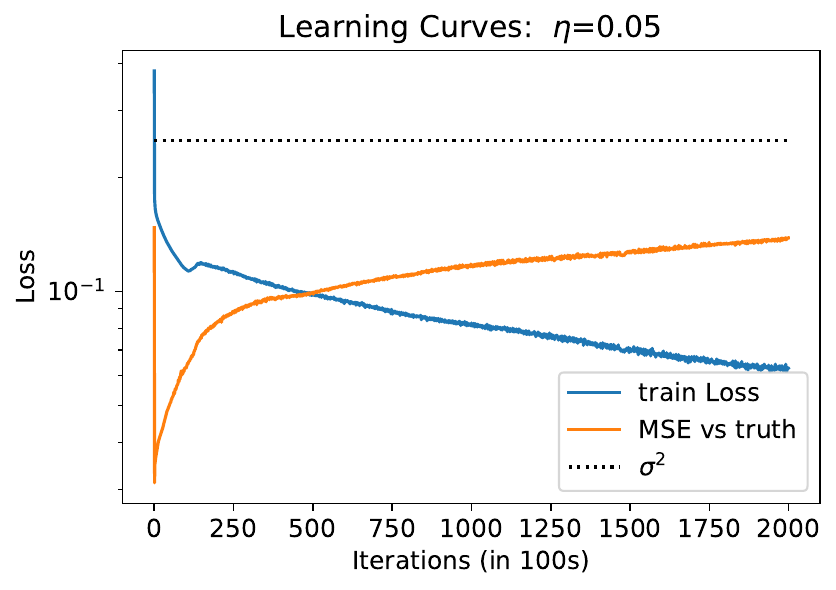}\\
    \includegraphics[width=0.42\textwidth]{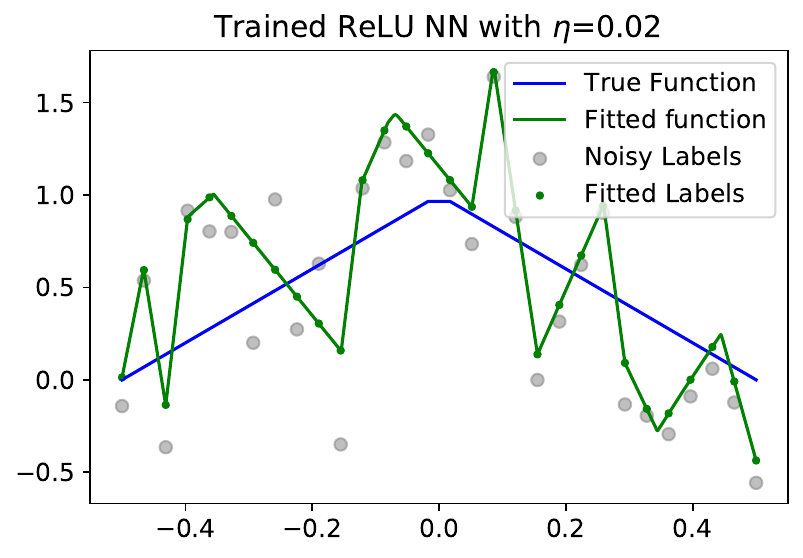}
    \includegraphics[width=0.42\textwidth]{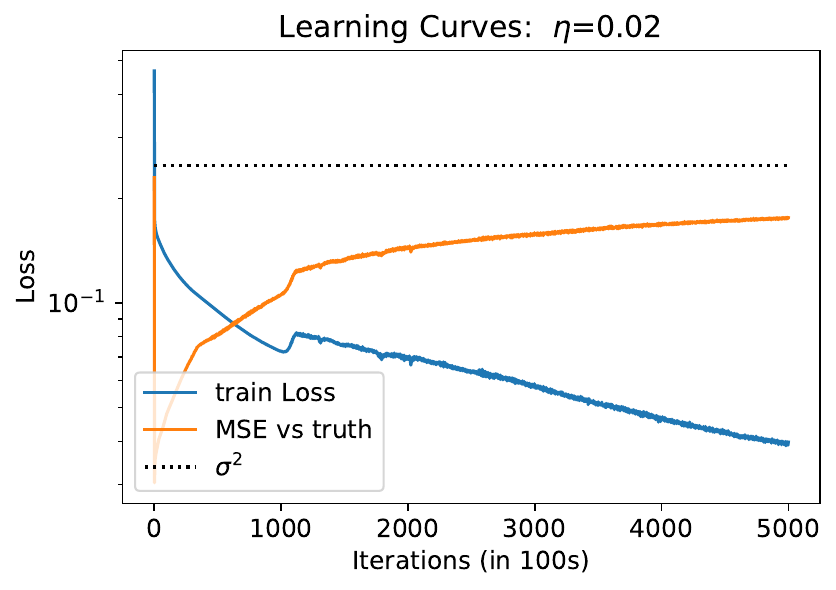}\\
    \includegraphics[width=0.42\textwidth]{figs/func_eta=0.01.pdf}
    \includegraphics[width=0.42\textwidth]{figs/learning_curve_eta=0.01.pdf}\\
    \caption{Illustration of the solutions gradient descent with learning rate $\eta$ converges to (Part II). As $\eta$ decreases further, the fitted function starts to overfit to the noisy label.}
    \label{fig:stable_minima_visualization2}
\end{figure}

\newpage

\subsection{Interpolating Solutions as the Number of Hidden Neurons Increases}
  In this section, we illustrate a sequence of interpolating solutions that are the global optimal solutions, which is also the kernel limit in the ``lazy'' regime. The results are obtained by randomly initializing the weights, but solve the minimum norm solution by directly solving the least square problem (optimizing only the second layer weights.)
\begin{figure}[h!]
    \centering
    \includegraphics[width=0.49\textwidth]{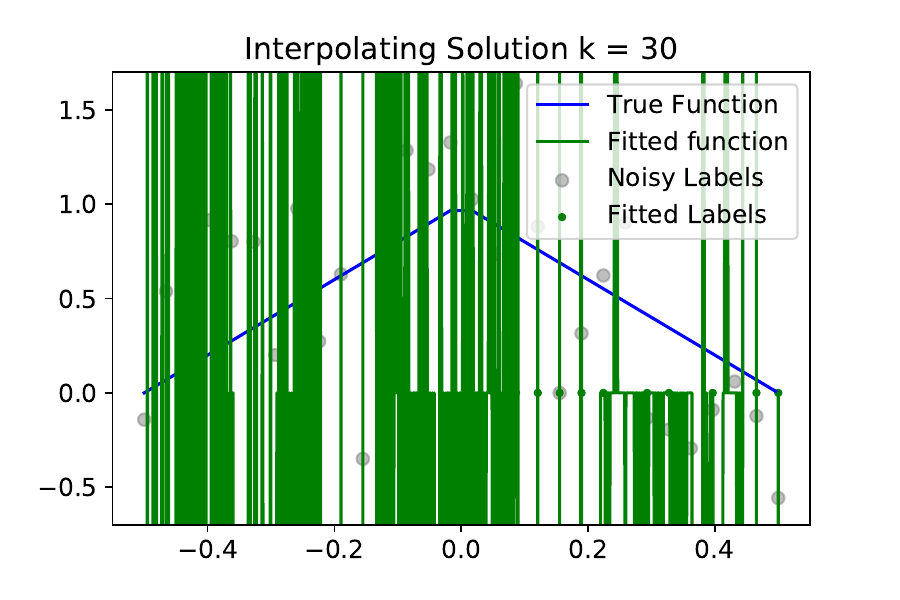}
    \includegraphics[width=0.49\textwidth]{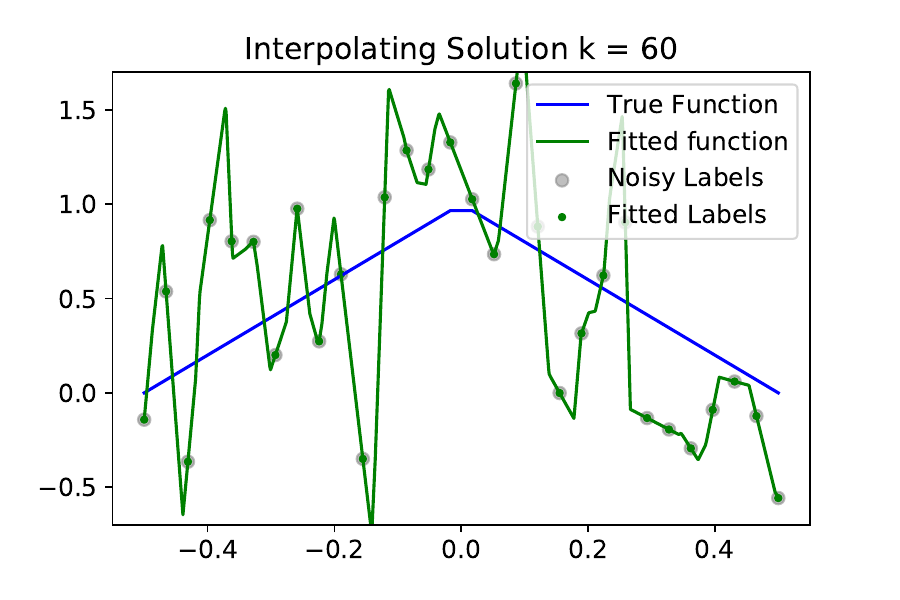}\\
    \includegraphics[width=0.49\textwidth]{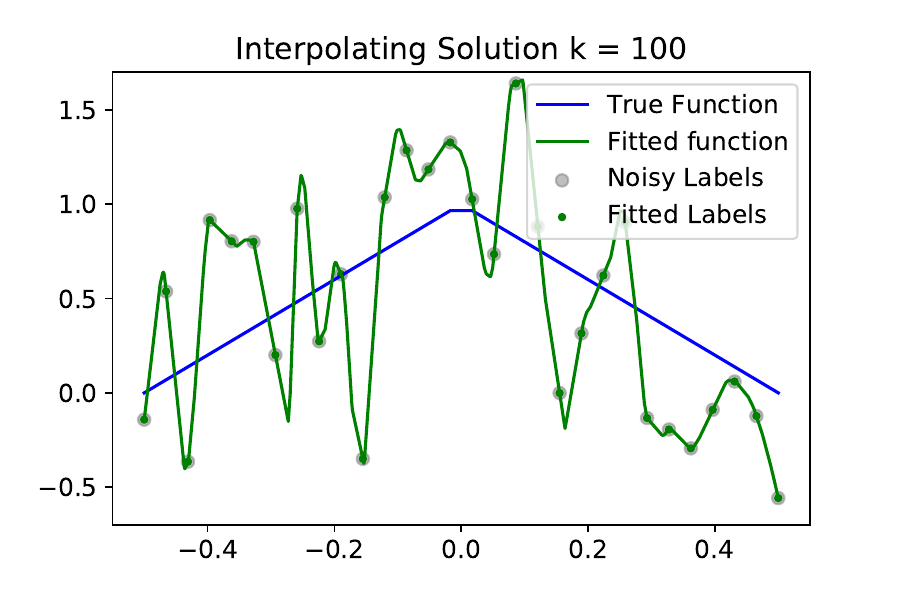}
     \includegraphics[width=0.49\textwidth]{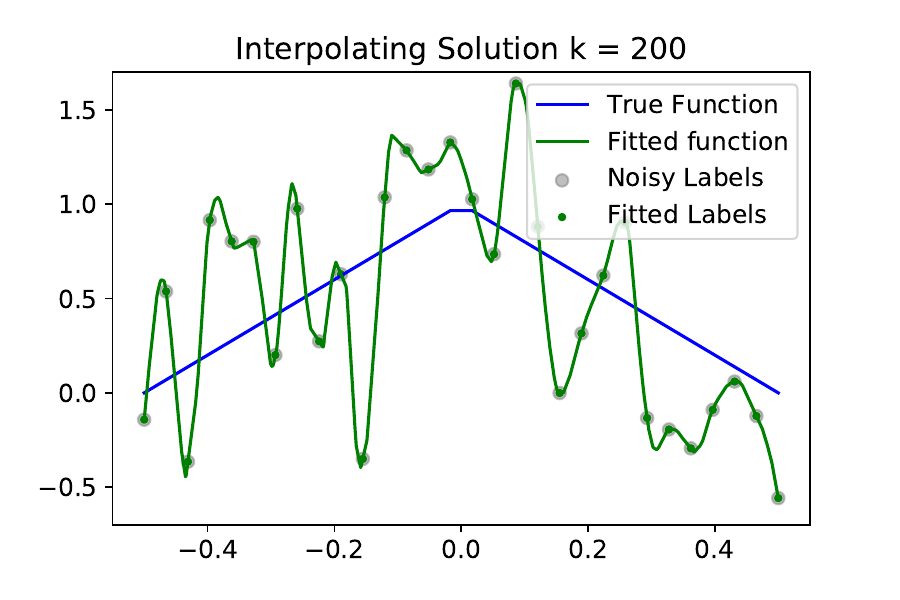}\\
          \includegraphics[width=0.49\textwidth]{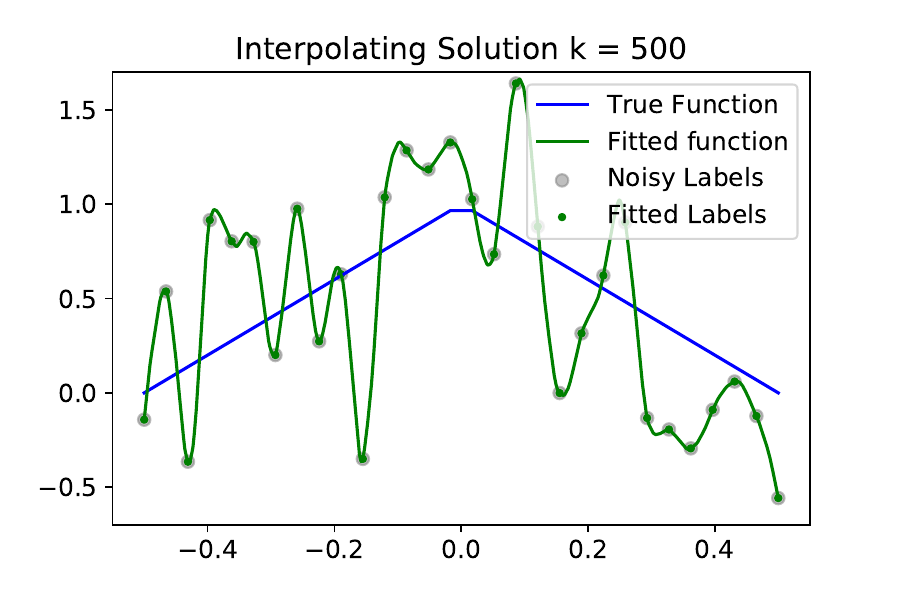}
               \includegraphics[width=0.49\textwidth]{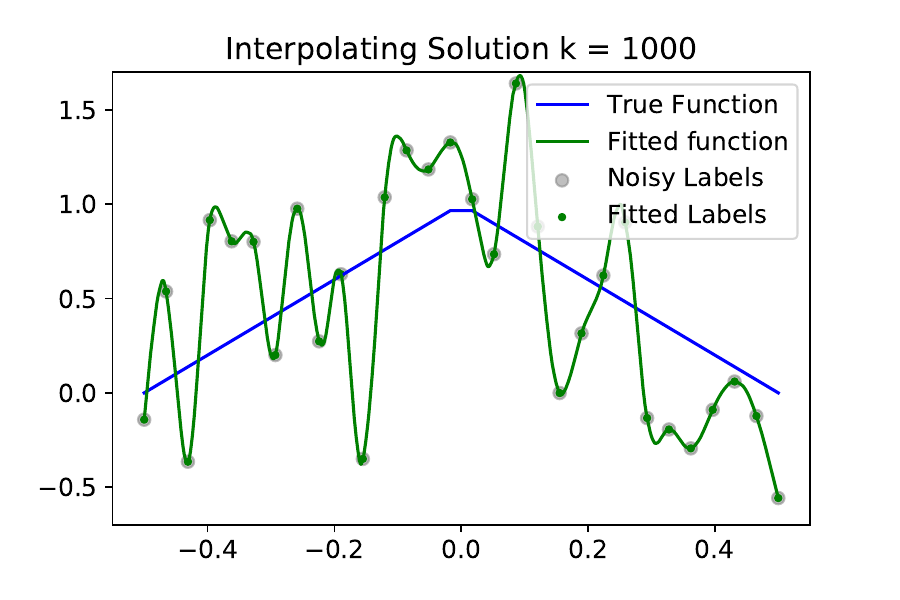}\\
    \caption{Examples of global optimal (interpolating) solutions (fitting only second layer weights). Notice that the number of data points $n=30$. When the model is barely able to interpolate ($k=30$), the fitted function experiences the \emph{catastrophic} overfitting. When the number of neurons $k$ increases the interpolation solution becomes smoother and enters the \emph{tempered} overfitting regime.}
    \label{fig:interpolation_is_bad}
\end{figure}

\newpage

\subsection{Representation Learning of Large Learning Rate: Visualizing Learned Basis Functions.}
In this section, we visualize the basis functions at initialization and after training with different learning rate.
\begin{figure}[h!]
    \centering
    \includegraphics[width=0.32\textwidth]{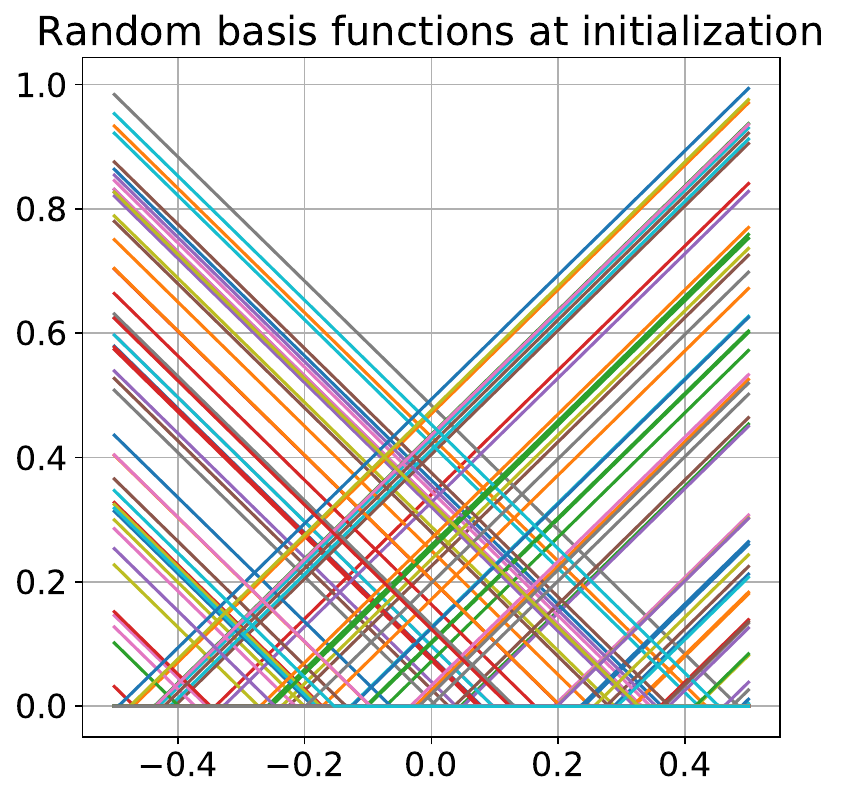}
    \includegraphics[width=0.32\textwidth]{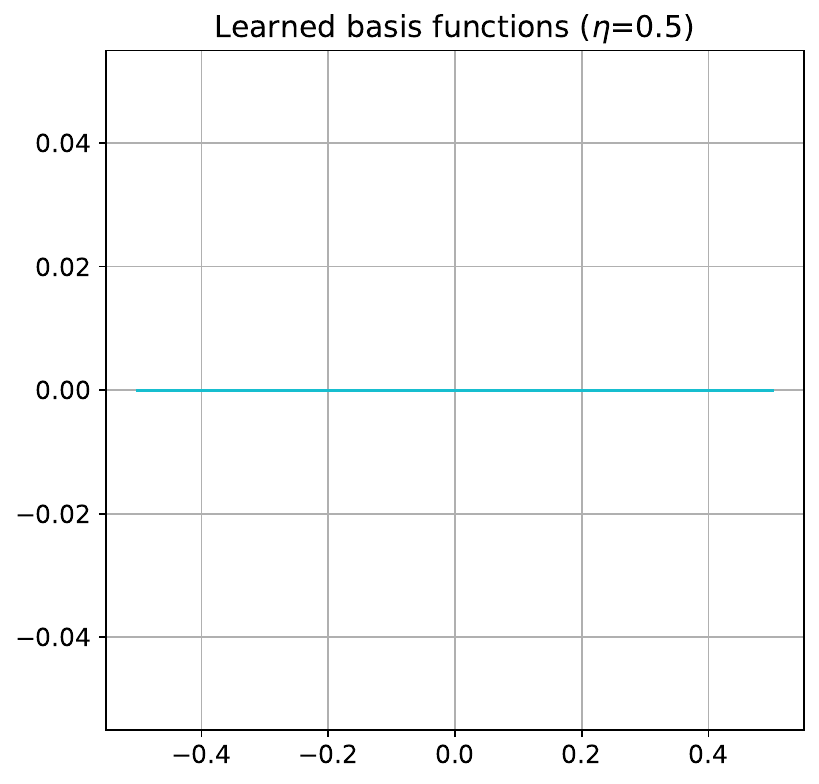}
    \includegraphics[width=0.32\textwidth]{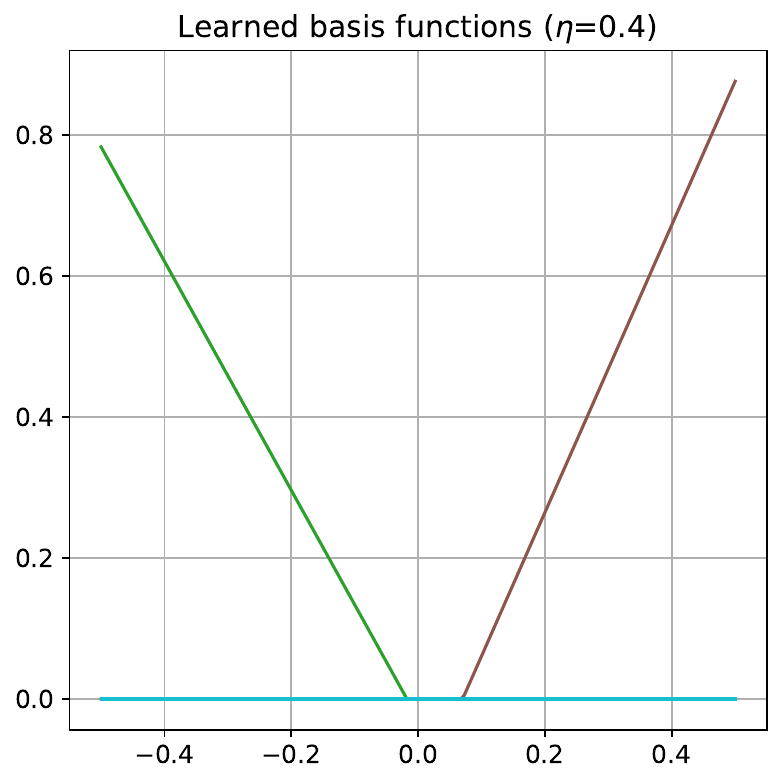}\\
    \includegraphics[width=0.32\textwidth]{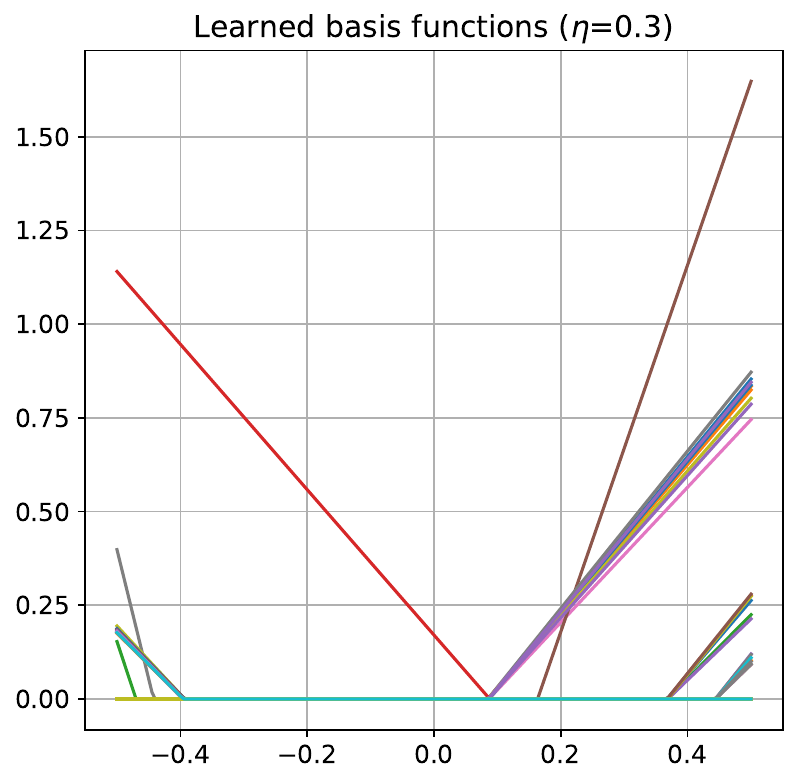}
    \includegraphics[width=0.32\textwidth]{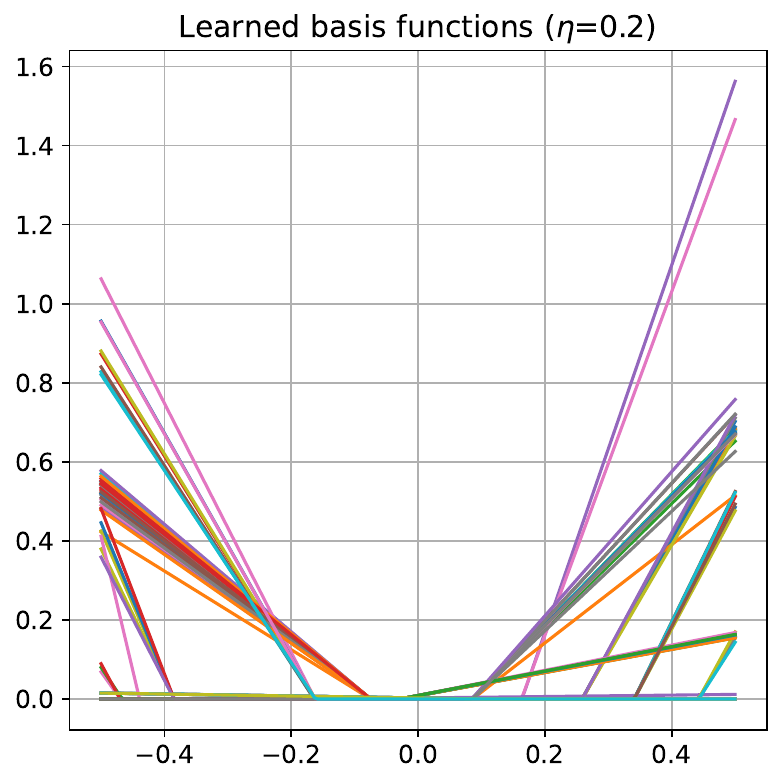}
    \includegraphics[width=0.32\textwidth]{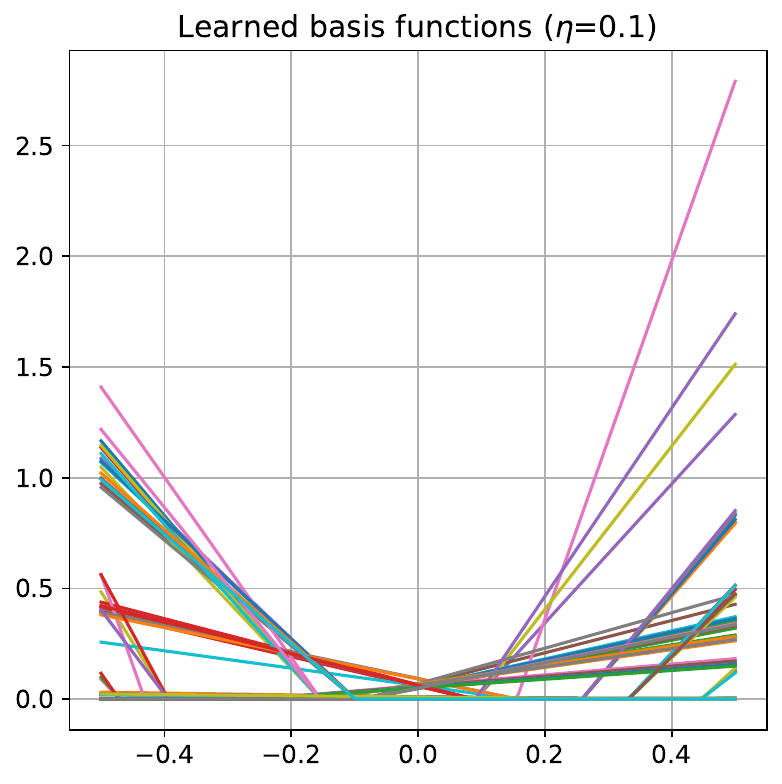}\\
    \includegraphics[width=0.32\textwidth]{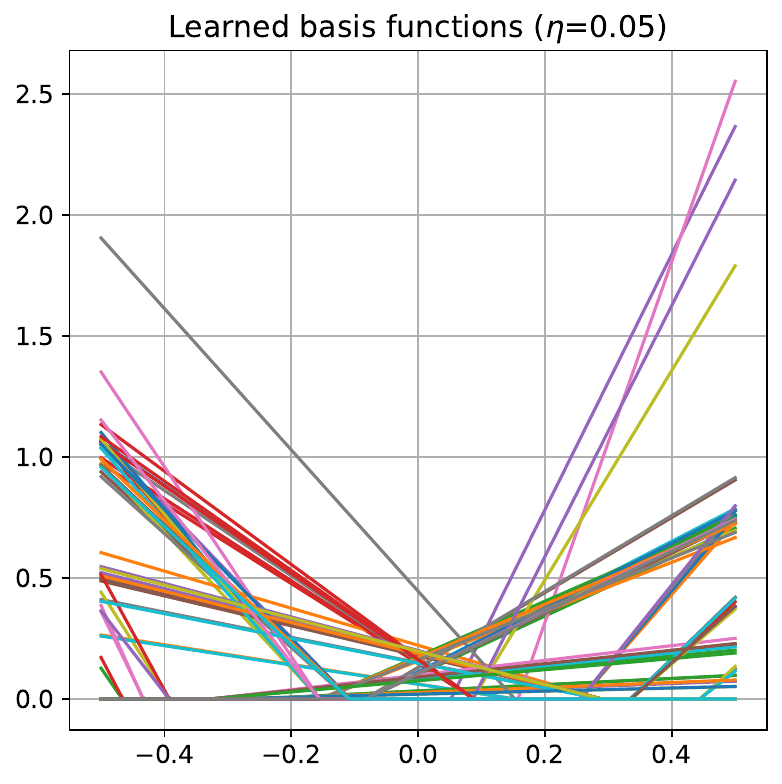}
    \includegraphics[width=0.32\textwidth]{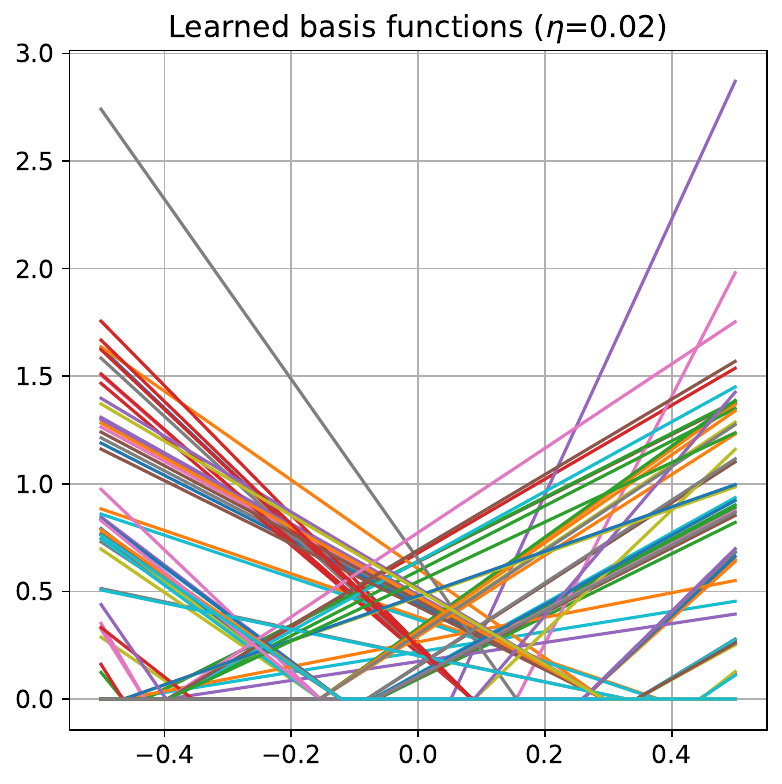}
    \includegraphics[width=0.32\textwidth]{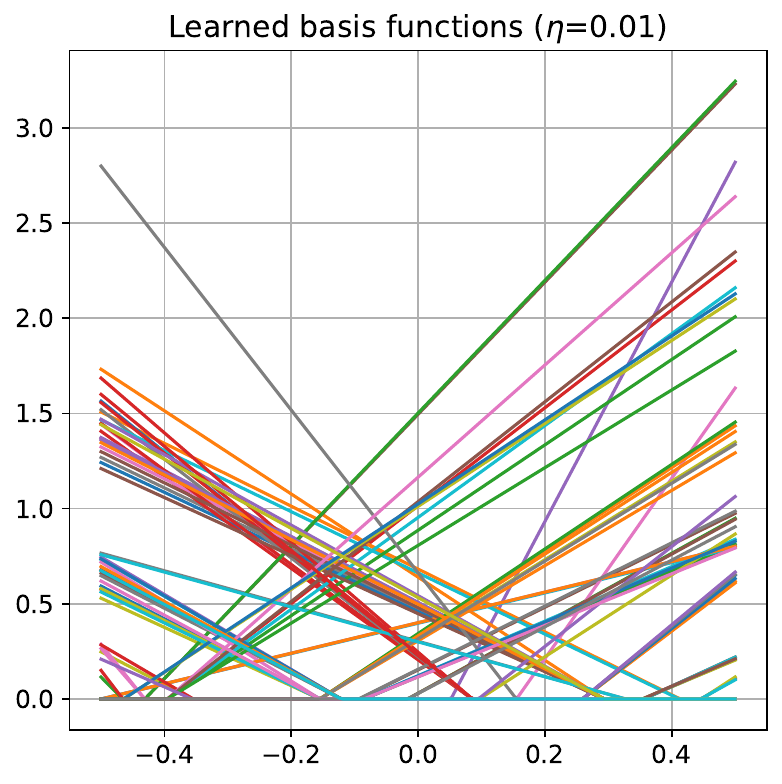}
    \caption{Illustration of the learned basis function with learning rate $\eta$. It is clear from the figures that there were substantial representation learning and the number of active basis functions gets smaller as the learning rate $\eta$ gets bigger.}
    \label{fig:basis_learned}
\end{figure}

We make several observations about Figure~\ref{fig:basis_learned}. First, the learned basis functions are very different from the initialization, so a lot of representation learning is happening, in comparison to the ``kernel'' regime in which nearly no representation learning is happening.  Second,  as $\eta$ gets smaller, the number of learned basis functions increases, hence increasing the number of knots in the fitted function. Third, the learned basis function displays a strong ``clustering'' effect in the sense that despite overparameterization, many learned basis functions end up being the same on the data support. Interestingly,  they are not the same on $\R$, we verified that they are still different outside the data support, e.g., one of the learned basis function has a knot at $x<- 800$.

\subsection{Knots of the Learned ReLU NN (aka Linear Splines) and Their Coefficients}
Recall that a linear spline is a continuous piecewise linear function and a two-layer ReLU NN with $k$ neurons span the class of all linear splines with at most $k$ knots.  In Figure~\ref{fig:knots}, we visualize the locations of the knots of the linear spline that the learned ReLU neural networks represent. 

\begin{figure}[h!]
    \centering
    \includegraphics[width=0.3\textwidth]{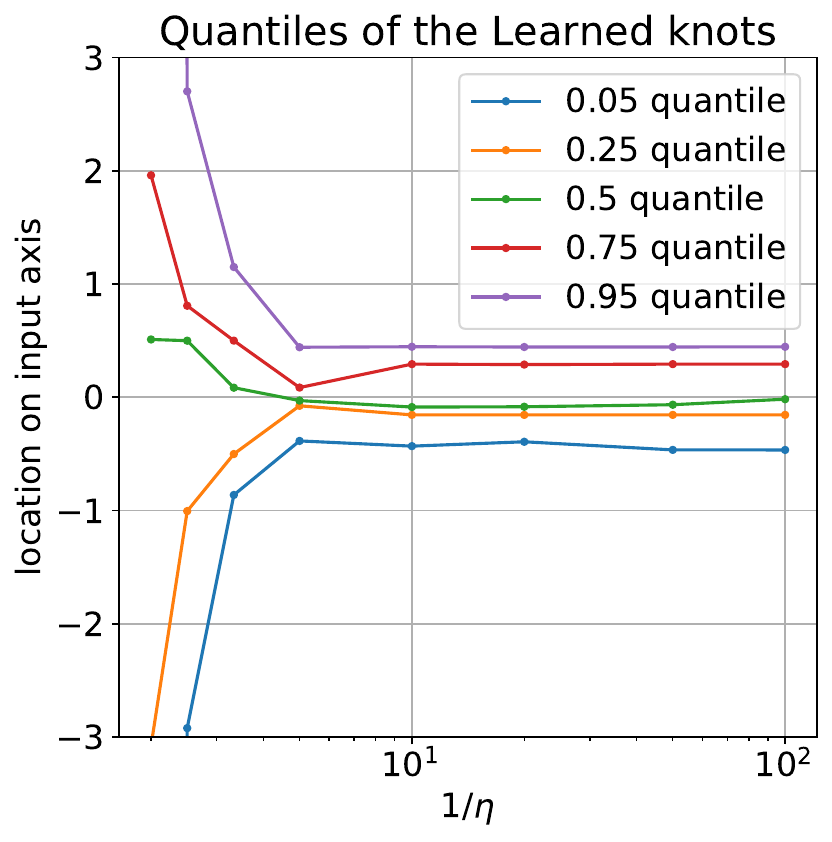}
    \includegraphics[width=0.3\textwidth]{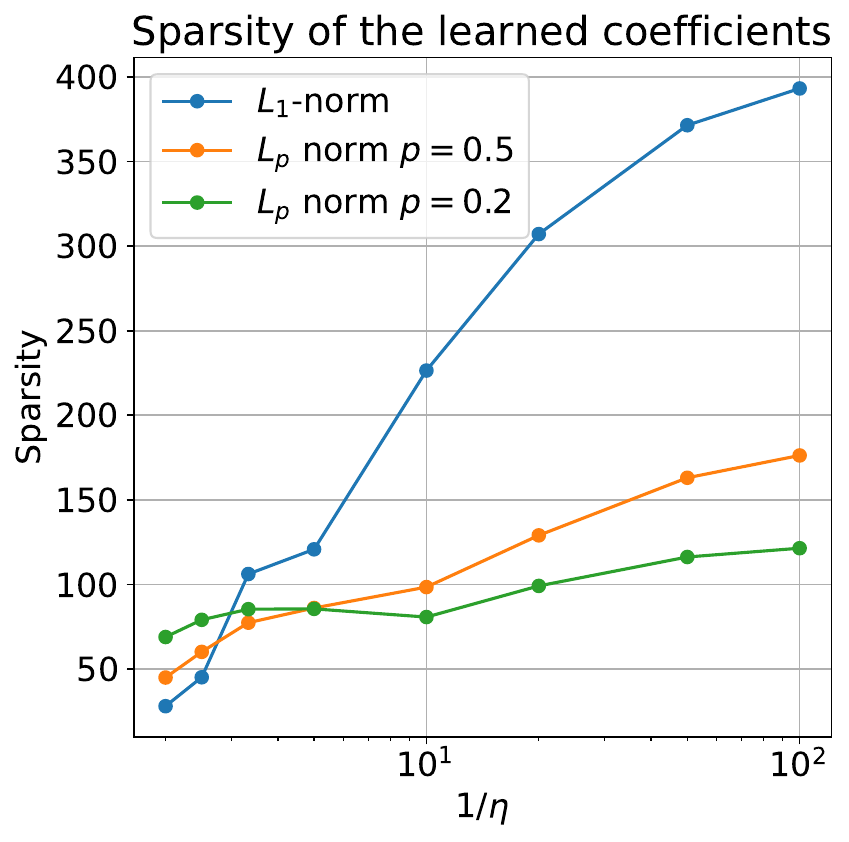}
    \includegraphics[width=0.32\textwidth]{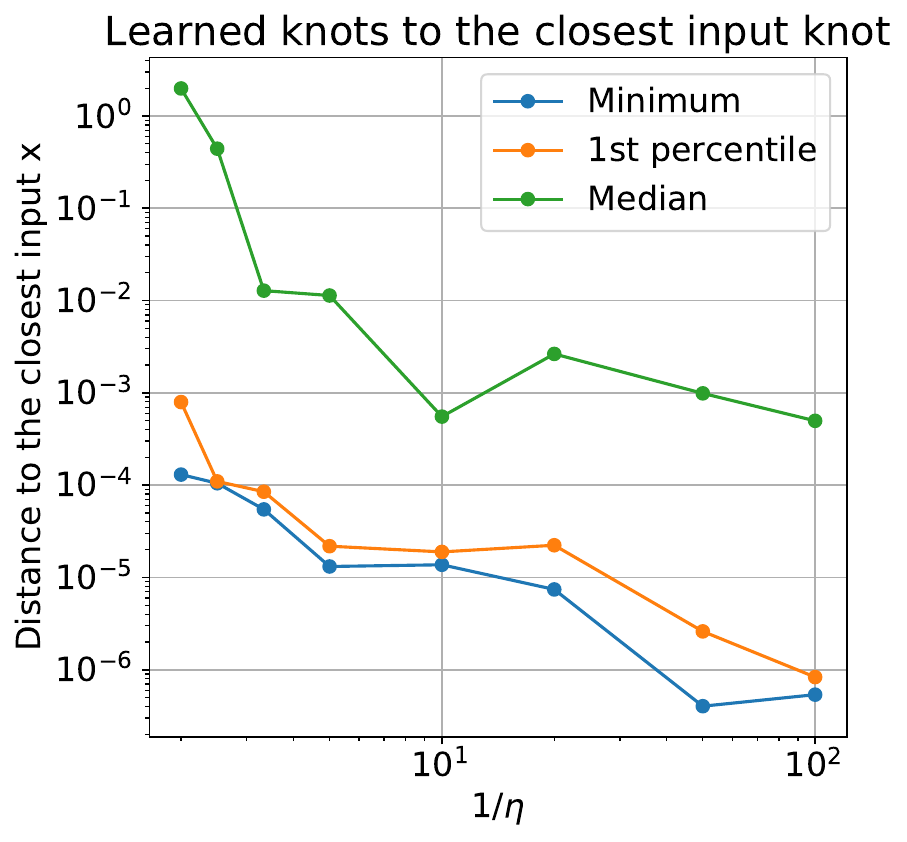}
    \caption{Illustration of the learned (steady-state) ReLU NN with different learning rate. Recall that all ReLU NNs are linear splines, therefore the location of the knots (i.e., the change points of the linear pieces) describes the representation learning that happens. Each basis function is a ReLU function at the knot. The final ReLU NN is a linear combination of these learned basis functions.  In the first panel, we plot the quantiles of the locations of the learned knots as the function of $1/\eta$.  In the second panel, we plot the sparsity of the learned coefficients in sparse $L_1$ and $L_p$ norm as the function of $1/\eta$.  The third panel plots the distance of the learned knots from the closest input data points.  \textbf{This empirically verifies that the solution that gradient descent finds at the end is a twice differentiable function} w.r.t. the parameters in the sense that not a single learned knot is exactly at the input data point, thus ensuring the applicability of our theorems. }
    \label{fig:knots}
\end{figure}
In the first panel of Figure~\ref{fig:knots}, we find for large learning rate, most of the location of the learned knot is actually outside the data support on $[-0.5,0.5]$.  This is a somewhat surprising finding in that the mechanism of neural network learning may actually be ``pushing the knot outside the data support'' so they become inactive on the training data (and only the ReLU truncated 0s are active). This is a new (and very interesting) way to understand how sparsity arrives in gradient descent learning. 

The second panel describes the sparsification effect of the implicit biases from large learning rate, which again, indicates that the weighted TV1 constraint is indeed making the learned function sparse (in the coefficient vector). The third panel shows that despite that the learning rate gets as small as $1e-2$, none of the learned basis function actually have knots coinciding with any of the input data, thus empirically justifying our assumptions on the twice-differentiability of the solutions GD finds.

\section{Some Optimization Results}\label{sec:pfmulayoff}

\begin{lemma}[Restate Lemma \ref{lem:stable}]\label{lem:restable}
    Consider the update rule in Definition \ref{def:ls}, for any $\epsilon>0$, a local minimum $\theta^\star$ is an $\epsilon$ linearly stable minimum of $\loss$ if and only if $\lambda_{\max}(\nabla^2\loss(\theta^\star))\leq\frac{2}{\eta}$.
\end{lemma}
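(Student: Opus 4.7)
The plan is to reduce the linearized recursion to a spectral condition on the matrix $I-\eta H$, where $H:=\nabla^2\loss(\theta^\star)$. Since $\theta^\star$ is a local minimum, $\nabla\loss(\theta^\star)=0$ and $H\succeq 0$, so the update rule in Definition~\ref{def:ls} collapses to the linear recursion
\begin{equation*}
\theta_{t+1}-\theta^\star \;=\; (I-\eta H)(\theta_t-\theta^\star),
\end{equation*}
which iterates to $\theta_t-\theta^\star=(I-\eta H)^t(\theta_0-\theta^\star)$. Writing $u_t:=\theta_t-\theta^\star$, $\epsilon$-linear stability becomes: for every $u_0$ with $\|u_0\|\le\epsilon$, $\limsup_{t\to\infty}\|(I-\eta H)^t u_0\|\le\epsilon$.

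Next I would use the spectral decomposition $H=Q\Lambda Q^T$ with $\Lambda=\mathrm{diag}(\lambda_1,\dots,\lambda_d)$ and $\lambda_i\ge 0$, which diagonalizes $(I-\eta H)$ simultaneously with eigenvalues $1-\eta\lambda_i$. In the eigen-coordinates $\tilde u_t = Q^T u_t$, each component evolves independently as $(\tilde u_t)_i = (1-\eta\lambda_i)^t (\tilde u_0)_i$, so the behavior of $\|u_t\|$ is governed componentwise by $|1-\eta\lambda_i|$.

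For the ``if'' direction, assume $\lambda_{\max}(H)\le 2/\eta$. Since $\lambda_i\in[0,2/\eta]$, we have $|1-\eta\lambda_i|\le 1$ for every $i$, so $\|I-\eta H\|_{\mathrm{op}}\le 1$ and thus $\|u_t\|\le\|u_0\|\le\epsilon$ for all $t$, giving $\limsup_t\|u_t\|\le\epsilon$. For the ``only if'' direction, contrapositively assume $\lambda_{\max}(H)>2/\eta$, pick a unit eigenvector $v$ for $\lambda_{\max}$, and choose $u_0=\epsilon v$ (which lies in $\mathcal{B}_\epsilon(\theta^\star)$). Then $u_t=(1-\eta\lambda_{\max})^t\epsilon\, v$ and $|1-\eta\lambda_{\max}|>1$, so $\|u_t\|=|1-\eta\lambda_{\max}|^t\epsilon\to\infty$, violating $\epsilon$-linear stability.

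There is no substantive obstacle here; the only mild subtlety is the boundary case $\lambda_{\max}(H)=2/\eta$, where the extremal eigenvalue of $I-\eta H$ equals $-1$ and $\|u_t\|$ stays at $\epsilon$ for all $t$, so $\limsup_t\|u_t\|=\epsilon\le\epsilon$ and stability holds as stated. This argument is an adaptation of the stability analysis in \citet{mulayoff2021implicit} stripped of the stochastic expectation, since under deterministic GD everything is pathwise.
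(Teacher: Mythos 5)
Your proof is correct and follows essentially the same route as the paper: reduce to the linear map $I-\eta H$, bound $\|(I-\eta H)^t\|$ via the spectrum when $\lambda_{\max}\le 2/\eta$ (using $H\succeq 0$ for the lower bound on eigenvalues), and for the converse initialize along the top eigenvector to force divergence when $\lambda_{\max}>2/\eta$. Your added remarks on the eigencoordinate diagonalization and the boundary case $\lambda_{\max}=2/\eta$ are harmless elaborations of the same argument.
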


\begin{proof}[Proof of Lemma \ref{lem:restable}]
    It holds that
\begin{equation}
    \begin{split}
        \theta_{t+1}-\theta^\star &= \theta_t-\theta^\star-\eta\left(\nabla\loss(\theta^\star)+\nabla^2\loss(\theta^\star)(\theta_t-\theta^\star)\right) \\&= \theta_t-\theta^\star-\eta\nabla^2\loss(\theta^\star)(\theta_t-\theta^\star)\\&=\left(I-\eta\nabla^2\loss(\theta^\star)\right)(\theta_t-\theta^\star),
    \end{split}
\end{equation}
where the first equation is from the update rule in Definition \ref{def:ls}. The second equation holds because $\theta^\star$ is a local minimum and therefore $\nabla\loss(\theta^\star)=0$. As a result,
\begin{equation}
    \theta_t-\theta^\star=\left(I-\eta\nabla^2\loss(\theta^\star)\right)^t(\theta_0-\theta^\star).
\end{equation}

On one hand, if $\lambda_{\max}(\nabla^2\loss(\theta^\star))\leq\frac{2}{\eta}$, it holds that 
\begin{equation}
    \|\theta_t-\theta^\star\|\leq\left\|I-\eta\nabla^2\loss(\theta^\star)\right\|_2^t\cdot \|\theta_0-\theta^\star\|\leq \|\theta_0-\theta^\star\|,
\end{equation}
where the second inequality is because all the eigenvalues of $I-\eta\nabla^2\loss(\theta^\star)$ is bounded between $[-1,1]$. Therefore, $\theta^\star$ is $\epsilon$ linearly stable for any $\epsilon$.

On the other hand, if $\theta^\star$ is $\epsilon$ linearly stable, we choose $\theta_0$ such that $\frac{\theta_0-\theta^\star}{\|\theta_0-\theta^\star\|}$ is the top eigenvector of $\nabla^2\loss(\theta^\star)$ and $\|\theta_0-\theta^\star\|=\epsilon$. Then we have 
\begin{equation}
    \|\theta_t-\theta^\star\|=\left|1-\eta\lambda_{\max}\left(\nabla^2\loss(\theta^\star)\right)\right|^t\cdot \epsilon,
\end{equation}
which implies that $\limsup_{t\rightarrow\infty}\left|1-\eta\lambda_{\max}\left(\nabla^2\loss(\theta^\star)\right)\right|^t\leq 1$, and therefore $\lambda_{\max}(\nabla^2\loss(\theta^\star))\leq\frac{2}{\eta}$, which finishes the proof.
\end{proof}

The following Theorem \ref{thm:previous} is an extension of the main result in \citet{mulayoff2021implicit}. Recall that stable solutions refer to the functions in $\mathcal{F}(\eta,0,\mathcal{D})$ as defined in Section \ref{sec:setup}.
\begin{theorem}[Extension of Theorem 1 in \citet{mulayoff2021implicit}]\label{thm:previous}
    Let $f=f_\theta$ be a stable solution for GD with step size $\eta$ where the training loss $\loss(f)=0$ and $\loss$ is twice differentiable at $\theta$. Then
    \begin{equation}\label{equ:previous}
        \int_{-x_{\max}}^{x_{\max}} |f^{\prime\prime}(x)|g(x)dx\leq \frac{1}{\eta}-\frac{1}{2},
    \end{equation}
    where $g(x)=\min\{g^-(x),g^+(x)\}$ for $x\in[-x_{\max},x_{\max}]$ with
    \begin{equation}
    \begin{split}
        &g^-(x)=\mathbb{P}^2(X<x)\mathbb{E}[x-X|X<x]\sqrt{1+(\mathbb{E}[X|X<x])^2},\\ &g^+(x)=\mathbb{P}^2(X>x)\mathbb{E}[X-x|X>x]\sqrt{1+(\mathbb{E}[X|X>x])^2}.
    \end{split}
    \end{equation}
    Here $X$ is drawn from the empirical distribution of the data (a sample chosen uniformly from $\{x_j\}$).
\end{theorem}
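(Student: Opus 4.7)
The plan is to show that this theorem reduces to the stability-to-function-space argument of \citet{mulayoff2021implicit} once interpolation is invoked to eliminate the residual term in the Hessian. Unlike Theorem~\ref{thm:tv1}, there is no need to control a Gaussian complexity or a self-bounding MSE term, so the proof is a clean three-line chain of inequalities.

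First, I would write out the standard Hessian decomposition for the square loss,
\begin{equation*}
\nabla_\theta^2 \loss(\theta) = \frac{1}{n}\sum_{i=1}^n \bigl(\nabla_\theta f_\theta(x_i)\bigr)\bigl(\nabla_\theta f_\theta(x_i)\bigr)^T + \frac{1}{n}\sum_{i=1}^n \bigl(f_\theta(x_i)-y_i\bigr)\,\nabla_\theta^2 f_\theta(x_i),
\end{equation*}
and observe that under the hypothesis $\loss(f)=0$ every residual $f_\theta(x_i)-y_i$ is identically zero, so the second sum vanishes. Hence the full Hessian coincides with the empirical Gram matrix of the NN-gradients at the training inputs.

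Next I would invoke the definition of stable solution: $f_\theta \in \cF(\eta,0,\cD)$ means $\lambda_{\max}(\nabla_\theta^2 \loss(\theta))\leq 2/\eta$. Combined with the Gram-matrix identity from the previous step, this yields
\begin{equation*}
\lambda_{\max}\!\left(\frac{1}{n}\sum_{i=1}^n \bigl(\nabla_\theta f_\theta(x_i)\bigr)\bigl(\nabla_\theta f_\theta(x_i)\bigr)^T\right) \leq \frac{2}{\eta}.
\end{equation*}
The key structural ingredient is Lemma~4 of \citet{mulayoff2021implicit} (to be restated in this paper as Lemma~\ref{lem:lambda}), which produces a matching \emph{lower} bound on the same quantity in terms of the weighted curvature of $f_\theta$, namely $1+2\int_{-x_{\max}}^{x_{\max}}|f''(x)|g(x)\,dx$. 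Chaining the two inequalities and solving for the integral gives the claimed bound $\int|f''(x)|g(x)\,dx \leq 1/\eta - 1/2$.

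The only real technical content lies in Lemma~\ref{lem:lambda}; since it is imported verbatim from prior work, the main obstacle here is purely presentational, i.e.\ verifying that the weaker hypothesis $f_\theta \in \cF(\eta,0,\cD)$ (flatness at $\theta$, not necessarily a local minimum) is actually what Lemma~\ref{lem:lambda} consumes. This is indeed the case because the Gram-matrix lower bound is a pointwise statement about $\theta$ that uses only twice-differentiability of $f_\theta$ at the $x_i$'s, not any stationarity condition. The purpose of this theorem in the narrative is therefore to restate the Mulayoff--Soudry bound in the language $\cF(\eta,0,\cD)$ used throughout the paper, so that it becomes the $\loss(\theta)=0$ specialization of Theorem~\ref{thm:tv1} and can be directly contrasted with the counter-example in Theorem~\ref{thm:counter}.
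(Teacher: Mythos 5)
Your proof is correct and takes essentially the same route as the paper: the paper's proof simply observes that Mulayoff--Soudry's argument factors through the intermediate Hessian-eigenvalue bound $\lambda_{\max}(\nabla^2\loss(\theta))\leq 2/\eta$, which is exactly the membership condition of $\cF(\eta,0,\cD)$, and you unpack the remainder of that argument explicitly (interpolation kills the residual term, then apply Lemma~\ref{lem:lambda}). Your closing observation that Lemma~\ref{lem:lambda} consumes only twice-differentiability and not stationarity is a correct and useful clarification of why the generalization to $\cF(\eta,0,\cD)$ is legitimate, though the paper leaves it implicit.
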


\begin{proof}[Proof of Theorem \ref{thm:previous}]
    The proof of Theorem 1 in \citet{mulayoff2021implicit} first proves that \\$\lambda_{\max}(\nabla^2\loss(\theta))\leq\frac{2}{\eta}$ according to the assumption that $f$ is linearly stable, and then proves the conclusion above. Therefore, the same conclusion directly follows for the stable solutions $f=f_{\theta}$ in $\mathcal{F}(\eta,0,\mathcal{D})$ satisfying $\lambda_{\max}(\nabla^2\loss(\theta))\leq\frac{2}{\eta}$.
\end{proof}

\noindent\textbf{Some discussions about the result.} \citet{mulayoff2021implicit} studied the problem assuming that the optimization converges to a global minimum and the learned function interpolates the training data, \emph{i.e.} $\loss(f)=0$. In this way, they link the Hessian matrix to properties of the learned function $f$ and show that stable solutions of GD correspond to functions whose second order derivative has a bounded weighted norm. Moreover, as the learning rate increases, the set of stable solutions contains less and less non-smooth functions. For a fixed learning rate, according to the curve of $g(x)$, stable solutions tend to be smoother for instances near the center of the data distribution, and less smooth for instances near the edges. More discussions can be found in \citet{mulayoff2021implicit}.

\section{Bounded Variation Function Class and its Metric entropy}\label{app:besov}
In this section, we first define the Besov function class. Then we recall the definition of bounded variation function class and discuss the connection between these two classes. Finally we bound the metric entropy of bounded variation function class using analysis about Besov class.

\subsection{Definition of Besov Class}
Let $\Omega$ be the domain of the function class (which we omit in the definition) and $\|\cdot\|_p$ denote the $\ell_p$ norm. We first define the modulus of smoothness.

\begin{definition}
    For a function $f\in L^p(\Omega)$ where $1\leq p\leq \infty$, the $r$-th modulus of smoothness is defined by
    \begin{equation}
        w_{r,p}(f,t)=\sup_{h\in\mathbb{R}^d:\|h\|_2\leq t}\|\Delta_h^r(f)\|_p,
    \end{equation}
    where $\Delta$ is defined as
    \begin{equation}
        \Delta_h^r(f):=
        \begin{cases}
            \sum_{j=0}^r \begin{pmatrix}
                r \\ j
            \end{pmatrix}(-1)^{r-j}f(x+jh), \;\;\;\; \text{if }x\in\Omega,\;x+rh\in\Omega, \\
            0,\;\;\;\;\;\;\;\;\text{otherwise}.
        \end{cases}
    \end{equation}
\end{definition}

Then the norm of Besov space is defined as below.

\begin{definition}
    For $1\leq p,q\leq\infty$, $\alpha>0$, $r:=\left\lceil\alpha\right\rceil+1$, define
    \begin{equation}
        |f|_{B_{p,q}^\alpha}=\begin{cases}
            \left(\bigintss_{t=0}^\infty\left(t^{-\alpha}w_{r,p}(f,t)\right)^q\frac{dt}{t} \right)^{\frac{1}{q}}, \;\;\;\;\;\;\; q<\infty, \\
            \sup_{t>0}t^{-\alpha}w_{r,p}(f,t), \;\;\;\;\;\;\;\; q=\infty.
        \end{cases}
    \end{equation}
    Then the norm of Besov space is defined as:
    \begin{equation}
        \|f\|_{B_{p,q}^\alpha}=\|f\|_p+|f|_{B_{p,q}^\alpha}.
    \end{equation}
\end{definition}

Finally, a function $f$ is in the Besov class $B_{p,q}^\alpha$ if $\|f\|_{B_{p,q}^\alpha}$ is finite. For more discussions and properties of Besov class, we refer interesting readers to \citet{edmunds1996function}.

\subsection{Definition of Bounded Variation Class and the Connection}

For the same domain $\Omega$, recall that the bounded variation function class is defined as
\begin{equation}
    \text{BV}^{(1)}(B,C_n) := \left\{ f:\Omega\rightarrow \R \;\middle|\; \max_x |f(x)|\leq B,  \int_{-x_{\max}}^{x_{\max}} |f^{\prime\prime}(x)| d x  \leq C_n\right\}.
\end{equation}

According to \citet{devore1993constructive}, bounded total variation class is closely connected to the Besov class. Specifically, for any constant $B,C_n$, it holds that
\begin{equation}
     \text{BV}^{(1)}(B,C_n) \subset B_{1,\infty}^{2}.
\end{equation}

\subsection{Metric Entropy of Bounded Total Variation Function Class}
Now we bound the complexity of $\text{BV}^{(1)}(1,1)$, which will be helpful for bounding the complexity of $\text{BV}^{(1)}(B,C_n)$ in the future. We first define the metric entropy of a metric space $(\mathbb{T},\rho)$.
\begin{definition}
    For a set $\mathbb{T}$ with a corresponding metric $\rho(\cdot,\cdot)$, let $N(\epsilon,\mathbb{T},\rho)$ denote the $\epsilon$-covering number of $\mathbb{T}$ under metric $\rho$. Then the metric entropy of $\mathbb{T}$ with respect to $\rho$ is $\log N(\epsilon,\mathbb{T},\rho)$.
\end{definition}

More details and examples about covering and metric entropy can be found in Chapter 5 of \citet{wainwright2019high}. Next we bound the metric entropy of a bounded subset of $BV(1)$. Note that the $\ell_\infty$ metric over domain $\Omega$ is $\rho_{\infty}(f,g)=\sup_{x\in\Omega}|f(x)-g(x)|$, which we denote by $\|\cdot\|_\infty$ for short.

\begin{lemma}\label{lem:entropy}
    Assume the set $\mathbb{T}_1=\left\{f:[-1,1]\rightarrow\mathbb{R}\;\bigg|\;\int_{-1}^1|f^{\prime\prime}(x)|dx\leq 1, \;|f(x)|\leq 1\right\}$ and the metric is the $\ell_\infty$ distance $\|\cdot\|_\infty$, then there exists a universal constant $C_1>0$ such  that for any $\epsilon>0$, the metric entropy of $(\mathbb{T}_1,\|\cdot\|_\infty)$ satisfies 
    \begin{equation}
        \log N(\epsilon,\mathbb{T}_1,\|\cdot\|_\infty)\leq C_1\epsilon^{-\frac{1}{2}}.
    \end{equation}
\end{lemma}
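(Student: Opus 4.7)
The plan is to construct an $\epsilon$-net for $\mathbb{T}_1$ via adaptive (free-knot) piecewise linear approximation, exploiting the $1/N^2$ approximation rate in $L^\infty$ enjoyed by $N$-knot piecewise linear interpolation of functions with bounded $\int|f''|$. For the adaptive approximation step, given any $f \in \mathbb{T}_1$ and any integer $N \geq 1$, I would choose knots $-1 = t_0 < t_1 < \cdots < t_N = 1$ by equi-distributing the increasing function $F(x) = x + 2\int_{-1}^x |f''(t)|\,dt$, whose total increase $F(1) - F(-1) = 2 + 2\int_{-1}^1|f''| \leq 4$. Setting $F(t_i) - F(t_{i-1}) = (F(1)-F(-1))/N \leq 4/N$ forces the constraint $(t_i - t_{i-1}) + 2\int_{I_i}|f''| \leq 4/N$ on each subinterval $I_i = [t_{i-1}, t_i]$, so that AM--GM yields $(t_i - t_{i-1})\int_{I_i}|f''| \leq 2/N^2$. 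The classical Lagrange interpolation error bound $\sup_{I_i}|f - \tilde f| \leq \tfrac{t_i - t_{i-1}}{4}\int_{I_i}|f''|$ on each piece then implies $\|f - \tilde f\|_\infty \leq 1/(2N^2)$, where $\tilde f$ is the piecewise linear interpolant of $f$ at these knots.

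For the counting step, I would choose $N = \Theta(\epsilon^{-1/2})$ so that the approximation error is at most $\epsilon/3$. Because $|f| \leq 1$ and $\int|f''| \leq 1$ together imply $|f'| \leq 2$ via the mean-value theorem (locate $c$ with $|f'(c)|\leq 1$ using the $f$-bound, then integrate $|f''|$), the interpolant $\tilde f$ is $2$-Lipschitz with knot values in $[-1, 1]$. Discretizing each knot location to a grid of spacing $\Theta(\epsilon)$ (using the Lipschitz property to bound the $L^\infty$ perturbation) and each knot value to a grid of spacing $\Theta(\epsilon)$ introduces an additional $L^\infty$ error of at most $2\epsilon/3$, producing at most $(O(1/\epsilon))^{2N+1}$ distinct discretized approximants. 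The resulting log metric entropy is $O(N \log(1/\epsilon)) = O(\epsilon^{-1/2} \log(1/\epsilon))$.

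The main obstacle is removing the extra $\log(1/\epsilon)$ factor to obtain the clean bound $C_1 \epsilon^{-1/2}$ stated in the lemma. The naive knot-parameter counting is loose because the slope sequence $(a_i)$ of $\tilde f$ inherits the bounded-variation constraint $\sum_i |a_{i+1} - a_i| \leq \int|f''| \leq 1$, which should absorb the logarithmic factor when properly exploited. A self-contained approach would use a Schauder-basis expansion at dyadic scales $2^{-j}$ in which the level-$j$ coefficient vector lies in an $\ell_1$-ball of radius $O(2^{-j})$ derived from $\int|f''|\leq 1$, followed by covering these $\ell_1$-balls in $\ell_\infty$ with per-level precision chosen to balance the error budget across levels. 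Alternatively, I would invoke the classical Birman-Solomyak metric entropy estimate for Sobolev-Besov balls: $\mathbb{T}_1$ is contained in a universal multiple of the unit ball of the Besov space $B^2_{1,\infty}([-1,1])$, whose $L^\infty$ metric entropy is known to be of order $\epsilon^{-1/2}$.
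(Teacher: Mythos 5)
Your fallback route is essentially the paper's own proof: the paper also observes $\mathbb{T}_1\subset$ (a bounded subset of) the Besov ball $B^{2}_{1,\infty}([-1,1])$ via the classical embedding of second-order bounded variation into $B^2_{1,\infty}$ (citing \citet{devore1993constructive}), and then quotes a known metric/bracketing entropy bound for such Besov balls (Corollary 2 of \citet{nickl2007bracketing}, with $s=2$, $p=1$, $r=\infty$, $d=1$), which is the same classical $\epsilon^{-1/2}$ estimate you attribute to Birman--Solomyak. So the part of your proposal that actually delivers the lemma as stated coincides with the paper's citation-based argument.

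Your primary, self-contained construction is a genuinely different and mostly sound route, but as executed it has the gap you yourself flag: the free-knot balanced partition via $F(x)=x+2\int_{-1}^x|f''|$, the Peano-kernel interpolation bound $\sup_{I_i}|f-\tilde f|\le \tfrac{t_i-t_{i-1}}{4}\int_{I_i}|f''|$, the Lipschitz bound $|f'|\le 2$, and the naive discretization of $O(N)$ knot locations/values each to precision $\Theta(\epsilon)$ are all correct, but they only yield $\log N(\epsilon)\le C\,\epsilon^{-1/2}\log(1/\epsilon)$, which is strictly weaker than the stated $C_1\epsilon^{-1/2}$. The claim that the slope-variation constraint $\sum_i|a_{i+1}-a_i|\le 1$ ``should absorb'' the logarithm is true in spirit but is not a proof: removing the log requires an actual multiscale argument (e.g., dyadic/Schauder coefficients in $\ell_1$-balls of radius $O(2^{-j})$ covered in $\ell_\infty$ with level-dependent precision, or Birman--Solomyak's variable-precision piecewise-polynomial scheme), and you only sketch this without carrying out the per-level error/cardinality bookkeeping where the cancellation of the log actually happens. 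Two further remarks: (i) for the paper's downstream uses (Lemmas on metric entropy feeding Hoeffding-plus-union-bound and critical-radius computations, all stated with $\widetilde{O}$), the extra $\log(1/\epsilon)$ would in fact be harmless, but it does not prove the lemma as literally stated; (ii) if you do invoke the classical Besov entropy result, you should state the embedding $\mathbb{T}_1\subset c\cdot B^2_{1,\infty}$ explicitly (as the paper does), since that containment, not the approximation scheme, is what carries the load.
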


\begin{proof}[Proof of Lemma \ref{lem:entropy}]
    First of all, the domain $\Omega=[-1,1]$ is a bounded set in $\mathbb{R}$.
    Moreover, according to \citet{devore1993constructive}, we have $BV(1)\subset B_{1,\infty}^2$. Therefore, $\mathbb{T}_1$ is a bounded subset of $B_{1,\infty}^2(\Omega)$ with both $B_{1,\infty}^2$ and $\ell_\infty$ norm bounded by a universal constant.

    Therefore, the metric space $(\mathbb{T}_1,\|\cdot\|_\infty)$ satisfies the assumptions in (the second point) of Corollary 2 in \citet{nickl2007bracketing} with $r=\infty,\; d=1,\;s=2$. Then combining the conclusion of Corollary 2 in \citet{nickl2007bracketing} and the fact that the metric entropy is upper bounded by bracketing metric entropy (Definition 4 in \citet{nickl2007bracketing}), we finish the proof.
\end{proof}

\begin{remark}
    For our purpose of bounding the metric entropy of $\text{BV}(1)$, we only consider the case where $\Omega$ is bounded. For more results regarding the metric entropy of (weighted) Besov space, please refer to \citet{nickl2007bracketing}.
\end{remark}

\section{Calculation of Gradient and Hessian Matrix}\label{app:calculate}
In this section, we calculate the gradient and Hessian matrix of $f_{\theta}(x)$ with respect to $\theta$. Recall that $f_{\theta}(x)=\sum_{i=1}^{k} w_i^{(2)}\phi\left(w_i^{(1)}x+b_i^{(1)}\right)+b^{(2)}$ where $\phi(x)=\max\{x,0\}$. We denote $\theta=(w_1^{(1)},\cdots,w_k^{(1)},b_1^{(1)},\cdots,b_k^{(1)},w_1^{(2)},\cdots,w_k^{(2)},b^{(2)})^T$.
\subsection{Calculation of Gradient}
According to direct calculation, for a given $x\in[-x_{\max},x_{\max}]$ we have
\begin{equation}
    \begin{cases}
        \nabla_{w_{i}^{(1)}}f_{\theta}(x)=xw_i^{(2)}\mathds{1}\left(w_i^{(1)}x+b_i^{(1)}>0\right),\;\;\forall\;i\in[k] \\ \nabla_{b_{i}^{(1)}}f_{\theta}(x)=w_i^{(2)}\mathds{1}\left(w_i^{(1)}x+b_i^{(1)}>0\right),\;\;\forall\;i\in[k] \\ \nabla_{w_{i}^{(2)}}f_{\theta}(x)=\phi\left(w_i^{(1)}x+b_i^{(1)}\right)=\left(w_i^{(1)}x+b_i^{(1)}\right)\mathds{1}\left(w_i^{(1)}x+b_i^{(1)}>0\right),\;\;\forall\;i\in[k] \\
        \nabla_{b^{(2)}}f_{\theta}(x)=1
    \end{cases}
\end{equation}

\subsection{Calculation of the Hessian Matrix}
In this part, we calculate $\nabla^2_{\theta}f_{\theta}(x)$ for a given $x\in[-x_{\max},x_{\max}]$. Below we calculate $\frac{\partial^2 f_{\theta}(x)}{\partial \theta_i \partial \theta_j}$.

First of all, if $\theta_i=b^{(2)}$ or $\theta_j=b^{(2)}$, $\frac{\partial^2 f_{\theta}(x)}{\partial \theta_i \partial \theta_j}=0$. Then it remains to calculate $\frac{\partial^2 f_{\theta}(x)}{\partial \theta_i \partial \theta^\prime_j}$ where $i,j\in[k]$ and $\theta,\theta^\prime\in\{w^{(1)},b^{(1)},w^{(2)}\}$. It is obvious that if $i\neq j$, $\frac{\partial^2 f_{\theta}(x)}{\partial \theta_i \partial \theta^\prime_j}=0$. Therefore, we only calculate the case when $j=i$. Let $\delta$ denote the Dirac function, it holds that:

\begin{equation}
    \begin{cases}
        \frac{\partial^2 f_{\theta}(x)}{\partial w_i^{(1)} \partial w_i^{(1)}}=w_i^{(2)}x^2\delta\left(w_i^{(1)}x+b_i^{(1)}\right),\;\;\forall\;i\in[k]\\ \frac{\partial^2 f_{\theta}(x)}{\partial w_i^{(1)} \partial b_i^{(1)}}=\frac{\partial^2 f_{\theta}(x)}{\partial b_i^{(1)} \partial w_i^{(1)}}=xw_i^{(2)}\delta\left(w_i^{(1)}x+b_i^{(1)}\right),\;\;\forall\;i\in[k]\\ \frac{\partial^2 f_{\theta}(x)}{\partial b_i^{(1)} \partial b_i^{(1)}}=w_i^{(2)}\delta\left(w_i^{(1)}x+b_i^{(1)}\right),\;\;\forall\;i\in[k]\\ \frac{\partial^2 f_{\theta}(x)}{\partial w_i^{(2)} \partial w_i^{(2)}}=0,\;\;\forall\;i\in[k]\\ \frac{\partial^2 f_{\theta}(x)}{\partial w_i^{(1)} \partial w_i^{(2)}}=\frac{\partial^2 f_{\theta}(x)}{\partial w_i^{(2)} \partial w_i^{(1)}}=x\mathds{1}\left(w_i^{(1)}x+b_i^{(1)}>0\right),\;\;\forall\;i\in[k]\\ \frac{\partial^2 f_{\theta}(x)}{\partial b_i^{(1)} \partial w_i^{(2)}}=\frac{\partial^2 f_{\theta}(x)}{\partial w_i^{(2)} \partial b_i^{(1)}}=\mathds{1}\left(w_i^{(1)}x+b_i^{(1)}>0\right),\;\;\forall\;i\in[k]
    \end{cases}
\end{equation}
 
The gradient is generally not well-defined according to the existence of the Dirac function. However, under the assumption that $f_{\theta}$ is twice differentiable with respect to $\theta$ (\emph{i.e.} the knots of $f$ do not coincide with $x$), all the Dirac functions take the value $0$. In this case,

\begin{equation}
    \begin{cases}
        \frac{\partial^2 f_{\theta}(x)}{\partial w_i^{(1)} \partial w_i^{(1)}}=0,\;\;\forall\;i\in[k]\\ \frac{\partial^2 f_{\theta}(x)}{\partial w_i^{(1)} \partial b_i^{(1)}}=\frac{\partial^2 f_{\theta}(x)}{\partial b_i^{(1)} \partial w_i^{(1)}}=0,\;\;\forall\;i\in[k]\\ \frac{\partial^2 f_{\theta}(x)}{\partial b_i^{(1)} \partial b_i^{(1)}}=0,\;\;\forall\;i\in[k]
    \end{cases}
\end{equation}

\subsection{Upper Bound of Operator Norm}
In this part, we upper bound the operator norm of the Hessian matrix. Equivalently, we upper bound $\left|v^T \nabla^2 f_{\theta}(x) v\right|$ under the constraint that $\|v\|_2=1$. We have the following lemma.

\begin{lemma}\label{lem:operatornorm}
    Assume that $f_{\theta}(x)$ is twice differentiable with respect to $\theta$ and $x\in[-x_{\max},x_{\max}]$, for any $v$ such that $\|v\|_2=1$, it holds that 
    \begin{equation}
       \left|v^T \nabla^2 f_{\theta}(x) v\right|\leq 2\max\{x_{\max},1\}. 
    \end{equation}
\end{lemma}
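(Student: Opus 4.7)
The plan is to exploit the very sparse block structure of $\nabla^2 f_\theta(x)$ that was just computed in the previous subsection. Under the twice-differentiability assumption, all ``same-block'' Dirac-delta entries vanish, so the only nonzero entries of $\nabla^2 f_\theta(x)$ are the cross-partials $\partial^2 f_\theta(x)/(\partial w_i^{(1)} \partial w_i^{(2)}) = x \mathds{1}_i$ and $\partial^2 f_\theta(x)/(\partial b_i^{(1)} \partial w_i^{(2)}) = \mathds{1}_i$, where $\mathds{1}_i := \mathds{1}(w_i^{(1)}x + b_i^{(1)} > 0)$. Crucially, these nonzero entries couple only coordinates belonging to the \emph{same} neuron $i$, and $b^{(2)}$ decouples entirely.

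After reordering coordinates as $(w_1^{(1)}, b_1^{(1)}, w_1^{(2)}, \ldots, w_k^{(1)}, b_k^{(1)}, w_k^{(2)}, b^{(2)})$, I would observe that $\nabla^2 f_\theta(x)$ becomes block-diagonal with $k$ blocks of size $3 \times 3$ plus a single zero entry for $b^{(2)}$. When neuron $i$ is active at $x$ (i.e., $\mathds{1}_i = 1$), the block acting on $(w_i^{(1)}, b_i^{(1)}, w_i^{(2)})$ is
$$A_i = \begin{pmatrix} 0 & 0 & x \\ 0 & 0 & 1 \\ x & 1 & 0 \end{pmatrix},$$
and $A_i = 0$ otherwise. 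A short characteristic-polynomial calculation yields $\det(\lambda I - A_i) = \lambda(\lambda^2 - (x^2+1))$, so the eigenvalues are $\{0, \pm \sqrt{x^2+1}\}$ and hence $\|A_i\|_{\mathrm{op}} = \sqrt{x^2+1}$.

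Since the operator norm of a block-diagonal matrix is the maximum of the operator norms of its blocks, we conclude $\|\nabla^2 f_\theta(x)\|_{\mathrm{op}} \leq \sqrt{x^2+1} \leq \sqrt{x_{\max}^2+1}$. For any unit vector $v$, the Rayleigh-quotient bound then gives
$$|v^T \nabla^2 f_\theta(x) v| \;\leq\; \|\nabla^2 f_\theta(x)\|_{\mathrm{op}} \;\leq\; \sqrt{x_{\max}^2+1} \;\leq\; \sqrt{2}\max\{x_{\max},1\} \;\leq\; 2\max\{x_{\max},1\},$$
which is the claimed bound. There is essentially no technical obstacle in this argument; the only thing to double-check is that one has not miscounted the factor of $2$ coming from the symmetric off-diagonal cross-terms in the expansion $v^T \nabla^2 f_\theta(x) v = 2\sum_i \mathds{1}_i v_i^{w^{(2)}}(x v_i^{w^{(1)}} + v_i^{b^{(1)}})$. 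As a side remark, the constant we obtain is actually slightly sharper than the stated $2\max\{x_{\max},1\}$, and one could alternatively avoid diagonalizing $A_i$ altogether by applying Cauchy--Schwarz directly to this bilinear expansion.
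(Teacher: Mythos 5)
Your argument is correct. The paper's proof also exploits the block structure computed in Appendix~\ref{app:calculate}, but rather than reordering into per-neuron blocks and diagonalizing, it bounds the bilinear form directly: writing $v = (\alpha_{1:k}, \beta_{1:k}, \gamma_{1:k}, \iota)^T$ with $\|v\|_2 = 1$ and using that the only nonzero blocks of $\nabla^2_\theta f_\theta(x)$ are the diagonal $k\times k$ matrices $A_{w^{(1)}w^{(2)}}$, $A_{b^{(1)}w^{(2)}}$ (and transposes) with entries in $[-\max\{x_{\max},1\},\max\{x_{\max},1\}]$, it gets $|v^T \nabla^2_\theta f_\theta(x) v| \leq 2\max\{x_{\max},1\}\sum_i\left(|\alpha_i\gamma_i| + |\beta_i\gamma_i|\right)$ and then applies Cauchy--Schwarz together with the elementary bound $ac\leq 1/4$ when $a+c\leq 1$. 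Your route --- permuting coordinates into $k$ decoupled $3\times 3$ blocks, reading off the exact spectrum $\{0,\pm\sqrt{x^2+1}\}$ of each active block, and using that the operator norm of a block-diagonal matrix is the maximum of its blocks' norms --- is arguably cleaner and gives the slightly sharper constant $\sqrt{x_{\max}^2+1}$, which is in fact the exact operator norm of $\nabla^2_\theta f_\theta(x)$ whenever at least one neuron is active at $x$. Since the lemma is only invoked up to constant factors downstream, either version serves the paper equally well; yours has the minor pedagogical advantage of making the spectral picture explicit.
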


\begin{proof}[Proof of Lemma \ref{lem:operatornorm}]
    Assume $v=(\alpha_1,\cdots,\alpha_k,\beta_1,\cdots,\beta_k,\gamma_1,\cdots,\gamma_k,\iota)^T\in\mathbb{R}^{3k+1}$ such that $\sum_{i=1}^k (\alpha_i^2+\beta_i^2+\gamma_i^2)+\iota^2=1$. Note that the Hessian matrix $\nabla^2_{\theta}f_{\theta}(x)$ follows the structure:
    \begin{equation}
     \nabla^2_{\theta}f_{\theta}(x) =  
     \begin{pmatrix}
    A_{w^{(1)}w^{(1)}} & A_{w^{(1)}b^{(1)}} & A_{w^{(1)}w^{(2)}} & A_{w^{(1)}b^{(2)}} \\
    A_{b^{(1)}w^{(1)}} & A_{b^{(1)}b^{(1)}} & A_{b^{(1)}w^{(2)}} & A_{b^{(1)}b^{(2)}} \\
    A_{w^{(2)}w^{(1)}} & A_{w^{(2)}b^{(1)}} & A_{w^{(2)}w^{(2)}} & A_{w^{(2)}b^{(2)}} \\
    A_{b^{(2)}w^{(1)}} & A_{b^{(2)}b^{(1)}} & A_{b^{(2)}w^{(2)}} & A_{b^{(2)}b^{(2)}} 
     \end{pmatrix}
    \end{equation}
    where $A_{w^{(1)}w^{(1)}},A_{w^{(1)}b^{(1)}},A_{b^{(1)}w^{(1)}},A_{b^{(1)}b^{(1)}},A_{w^{(2)}w^{(2)}}\in\mathbb{R}^{k\times k}$, $A_{w^{(1)}b^{(2)}},A_{b^{(1)}b^{(2)}},A_{w^{(2)}b^{(2)}}\in\mathbb{R}^{k\times 1}$, $A_{b^{(2)}w^{(1)}}, A_{b^{(2)}b^{(1)}}, A_{b^{(2)}w^{(2)}}\in\mathbb{R}^{1\times k}$ and $A_{b^{(2)}b^{(2)}}\in\mathbb{R}$ are all zero matrices. Meanwhile, \\$A_{w^{(1)}w^{(2)}},A_{b^{(1)}w^{(2)}},A_{w^{(2)}w^{(1)}},A_{w^{(2)}b^{(1)}}\in\mathbb{R}^{k\times k}$ are all diagonal matrices whose non-zero elements are between $[-\max\{x_{\max},1\},\max\{x_{\max},1\}]$. Therefore, it holds that:
    \begin{equation}
    \begin{split}
        \left|v^T \nabla^2 f_{\theta}(x) v\right|&\leq 2\max\{x_{\max},1\}\sum_{i=1}^k\left(|\alpha_i\gamma_i|+|\beta_i\gamma_i|\right)\\&\leq 2\max\{x_{\max},1\}\left(\sqrt{\sum_{i=1}^k\alpha_i^2\cdot\sum_{i=1}^k\gamma_i^2}+\sqrt{\sum_{i=1}^k\beta_i^2\cdot\sum_{i=1}^k\gamma_i^2}\right)\\&\leq
        2\max\{x_{\max},1\},
    \end{split}
    \end{equation}
    where the second inequality holds because of Cauchy-Schwarz inequality. The last inequality results from $x(1-x)\leq \frac{1}{4}$.
\end{proof}

\section{Proof for the Counter-Example (Theorem \ref{thm:counter})}\label{app:counter}
\begin{theorem}[Restate Theorem \ref{thm:counter}]\label{thm:recounter}
    For the counter-example, with probability $1-\delta$, for any interpolating function $f$,
    \begin{equation}
        \int_{-x_{\max}}^{x_{\max}} |f^{\prime\prime}(x)|g(x)dx = \Omega\left(\sigma n\left[n-24\log\left(\frac{1}{\delta}\right)\right]\right),
    \end{equation}
    where the randomness comes from the noises $\{\epsilon_i\}$. Under this high-probability event, when $n\geq \Omega\left(\sqrt{\frac{1}{\sigma\eta}}\log\left(\frac{1}{\delta}\right)\right)$, any stable solution $f$ for GD with step size $\eta$ will not interpolate the data, i.e.
    \begin{equation}
        \cF(\eta,0,\cD)\cap \left\{f\;|\;f(x_i)=y_i,\;\forall\;i\in[n]\right\}=\emptyset.
    \end{equation}
\end{theorem}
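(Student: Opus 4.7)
The plan is to exhibit a high-probability lower bound on $\int_{-x_{\max}}^{x_{\max}}|f''(x)|g(x)\,dx$ that holds for \emph{any} function $f$ interpolating the counter-example data, and then compare it with the upper bound $\int|f''|g\,dx\leq 1/\eta-1/2$ from Theorem~\ref{thm:previous} (which applies to any interpolating element of $\cF(\eta,0,\cD)$) to derive a contradiction once $n$ is large enough.

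First I would lower bound $g$ on a fixed center interval. Since the $\{x_i\}$ are evenly spaced, the empirical distribution of $X$ converges to the uniform on $[-x_{\max},x_{\max}]$, and a direct computation from \eqref{equ:g} gives a universal constant $c>0$ such that $g(x)\geq c$ for all $x\in\cI := [-x_{\max}/2,\, x_{\max}/2]$ once $n$ is sufficiently large. Next, for any interpolating $f$ with $f'$ of bounded variation I introduce the local slopes $s_i := (y_{i+1}-y_i)/(x_{i+1}-x_i) = (n-1)(y_{i+1}-y_i)/(2x_{\max})$. Because the mean of $f'$ on $[x_i,x_{i+1}]$ equals $s_i$, its essential supremum on that sub-interval is $\geq s_i$ and its essential infimum is $\leq s_i$; hence on the window $[x_{i-1},x_{i+1}]$ the oscillation (essential sup minus essential inf) of $f'$ is at least $|s_i-s_{i-1}|$, which forces $\mathrm{TV}(f'|[x_{i-1},x_{i+1}])\geq |s_i-s_{i-1}|$. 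Summing this over every other index so that the windows are non-overlapping, and restricting to the $\Theta(n)$ indices with $[x_{i-1},x_{i+1}]\subset\cI$, yields
\begin{equation*}
  \int_{-x_{\max}}^{x_{\max}}|f''(x)|g(x)\,dx \;\geq\; c\int_{\cI}|f''(x)|\,dx \;\geq\; \Omega\!\left(\tfrac{n}{x_{\max}}\right)\sum_{i\in I_\cI} |y_{i+2}-2y_{i+1}+y_i|.
\end{equation*}

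For the concentration step, note that the second differences $y_{i+2}-2y_{i+1}+y_i\sim\cN(0,6\sigma^2)$ become i.i.d.\ after sub-sampling every third index, producing $m=\Theta(n)$ independent copies of $Z=|\cN(0,6\sigma^2)|$ with $\E Z = \sqrt{12/\pi}\,\sigma$. A standard sub-Gaussian Hoeffding inequality yields $\sum_j Z_j \geq \Omega(\sigma)(m - O(\log(1/\delta)))$ with probability at least $1-\delta$; multiplying by $(n-1)/(2x_{\max})$ gives the announced lower bound $\Omega(\sigma n(n-24\log(1/\delta)))$. Comparing this with Theorem~\ref{thm:previous}, the existence of any interpolating $f\in\cF(\eta,0,\cD)$ would force $\sigma n(n-24\log(1/\delta)) \lesssim 1/\eta$, which fails for $n\geq \Omega(\sqrt{1/(\sigma\eta)}\log(1/\delta))$, giving $\cF(\eta,0,\cD)\cap\{f:f(x_i)=y_i\ \forall i\}=\emptyset$.

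The main obstacle is making the BV/essential-range step rigorous for the piecewise-linear functions arising from ReLU networks, where $f'$ is a step function that need not literally take the value $s_i$; the inequality $\mathrm{TV}(f'|[x_{i-1},x_{i+1}])\geq|s_i-s_{i-1}|$ has to be derived from the oscillation characterization above, and the disjoint-window packing loses at most a universal constant that is absorbed into the $\Omega(\cdot)$. The concentration itself is routine but does require the three-apart sub-sampling to escape the dependence between adjacent second differences.
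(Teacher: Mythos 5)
Your proposal follows essentially the same strategy as the paper's proof: lower-bound $g$ by a universal constant on a central sub-interval, use the interpolation constraint to force $\int|f''|$ over disjoint triplet windows $[x_{i-1},x_{i+1}]$ to pick up the second differences of the noisy labels (which are $\sim\cN(0,6\sigma^2)$ and become independent after thinning every third index), concentrate, and then contradict the stability upper bound $\int|f''|g\,dx\leq 1/\eta-1/2$ from Theorem~\ref{thm:previous}. Your oscillation/essential-range argument for $\mathrm{TV}(f'|[x_{i-1},x_{i+1}])\geq|s_i-s_{i-1}|$ is in fact a more careful rendering of the paper's Mean Value Theorem step: since $f$ is piecewise linear, $f'$ is a step function that need not literally attain the averages $s_i$, so the essential-sup/essential-inf phrasing is the right way to make the inequality airtight, and the fact that the average of $f'$ over $[x_i,x_{i+1}]$ is $s_i$ (via the fundamental theorem of calculus and interpolation) indeed pins the essential range on each side.

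The one technical slip is in the concentration step. Plain sub-Gaussian Hoeffding for $m$ i.i.d.\ copies of $Z=|\cN(0,6\sigma^2)|$ gives a deviation of order $\sigma\sqrt{m\log(1/\delta)}$, not $\sigma\log(1/\delta)$, so it yields $\Omega\!\left(\sigma n\bigl[n-O(\sqrt{n\log(1/\delta)})\bigr]\right)$ rather than the stated $\Omega(\sigma n[n-24\log(1/\delta)])$. The paper gets the additive-$\log(1/\delta)$ form by truncating each $|G_i|$ at the median of $|\cN(0,6\sigma^2)|$, turning the sum into a sum of i.i.d.\ $\mathrm{Bernoulli}(1/2)$ random variables scaled by $\sqrt{6}c\sigma$, and then invoking the multiplicative/martingale lower-tail bound of Lemma~\ref{lem:dann}. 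Your weaker bound is qualitatively equivalent and still forces $\cF(\eta,0,\cD)\cap\{f:f(x_i)=y_i\ \forall i\}=\emptyset$ once $n$ is large enough (with a slightly different implicit constant in the threshold on $n$), but to recover the exact form claimed in the theorem you should replace the Hoeffding step with the median-truncation plus Lemma~\ref{lem:dann} argument.
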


\begin{proof}[Proof of Theorem \ref{thm:recounter}]
    Consider any three consecutive points $x_j,x_{j+1},x_{j+2}$ where $j\in[n-2]$, note that their corresponding $y$'s are $y_j,y_{j+1},y_{j+2}$ which are i.i.d. Gaussian random variables $\mathcal{N}(0,\sigma^2)$. Then according to Mean Value Theorem, there exists $a\in[x_j,x_{j+1}]$ and $b\in[x_{j+1},x_{j+2}]$ such that
    \begin{equation}
        f^\prime(a)=\frac{y_{j+1}-y_j}{x_{j+1}-x_j}=\frac{n-1}{2x_{\max}}(y_{j+1}-y_j),\;\;\;\;\;f^\prime(b)=\frac{y_{j+2}-y_{j+1}}{x_{j+2}-x_{j+1}}=\frac{n-1}{2x_{\max}}(y_{j+2}-y_{j+1}).
    \end{equation}
    Therefore, it holds that
    \begin{equation}
        \int_{x_{j}}^{x_{j+2}}|f^{\prime\prime}(x)|dx\geq |f^\prime(b)-f^\prime(a)|=\frac{n-1}{2x_{\max}}|y_{j+2}-2y_{j+1}+y_j|\sim \frac{n-1}{2x_{\max}}\cdot|\mathcal{N}(0,6\sigma^2)|,
    \end{equation}
    where the last equation means that $y_{j+2}-2y_{j+1}+y_j$ follows the distribution $\mathcal{N}(0,6\sigma^2)$.
    
    We focus on the interval in the middle. For any $x\in[x_{n/4},x_{3n/4}]$, we have
    \begin{equation}
        \begin{split}
            &\mathbb{P}^2(X<x)\geq \frac{1}{16},\;\;\;\;\mathbb{E}[x-X|X<x]\geq \frac{x_{\max}}{4},\\ &\mathbb{P}^2(X>x)\geq \frac{1}{16},\;\;\;\;\mathbb{E}[X-x|X>x]\geq \frac{x_{\max}}{4}.
        \end{split}
    \end{equation}
    Together with the definition of $g(x)$ \eqref{equ:g}, we have for any $x\in[x_{n/4},x_{3n/4}]$, $g(x)\geq \frac{x_{\max}}{64}$. Therefore, for any interpolating solutions $f$, it holds that
    \begin{equation}
        \int_{-x_{\max}}^{x_{\max}} |f^{\prime\prime}(x)|g(x)dx\geq \frac{x_{\max}}{64}\int_{x_{n/4}}^{x_{3n/4}} |f^{\prime\prime}(x)|dx\geq \frac{x_{\max}}{64}\cdot\frac{n-1}{2x_{\max}}\sum_{i=1}^{n/6} |G_i|,
    \end{equation}
    where $G_i$'s are i.i.d samples from $\mathcal{N}(0,6\sigma^2)$.

    Assume the median of $|\mathcal{N}(0,1)|$ is $c>0$, which is a universal constant. For any $i\in[\frac{n}{6}]$, define
    \begin{equation}
        H_i=\begin{cases}
          \sqrt{6}c\sigma,\;\;\;\;\;\text{if}\; |G_i|\geq \sqrt{6}c\sigma,\\
          0,\;\;\;\;\;\;\;\;\;\;\;\;\text{otherwise}.
        \end{cases}
    \end{equation}
    Then we have $H_i=\sqrt{6}c\sigma$ with probability $\frac{1}{2}$. In addition, $|G_i|\geq H_i$. According to Lemma \ref{lem:dann}, with probability at least $1-\delta$,
    \begin{equation}
        \int_{-x_{\max}}^{x_{\max}} |f^{\prime\prime}(x)|g(x)dx\geq\frac{n-1}{128}\sum_{i=1}^{n/6}H_i\geq \frac{n-1}{128}\cdot \sqrt{6}c\sigma \cdot \left(\frac{n}{24}-\log\left(\frac{1}{\delta}\right)\right)= c^\prime \sigma n\left(n-24\log\left(\frac{1}{\delta}\right)\right),
    \end{equation}
    for some universal constant $c^\prime$.

    Together with the conclusion in Theorem \ref{thm:previous}, for any interpolating and stable solution $f$, with probability $1-\delta$,
    \begin{equation}
        \frac{1}{\eta}-\frac{1}{2}\geq c^\prime \sigma n\left(n-24\log\left(\frac{1}{\delta}\right)\right),
    \end{equation}
    which does not hold when $n\geq \Omega\left(\sqrt{\frac{1}{\sigma\eta}}\log\left(\frac{1}{\delta}\right)\right)$.
\end{proof}

\section{Proof for the $\TV^{(1)}$ Bound (Theorem \ref{thm:tv1})}\label{app:tv1}

We begin by calculating the gradient of empirical loss $\loss$ at $\theta$:

\begin{equation}
\nabla_{\theta}\loss(\theta)=\frac{1}{n}\sum_{i=1}^n(f_{\theta}(x_i)-y_i)\nabla_\theta f_{\theta}(x_i),
\end{equation}

where the detailed calculation of $\nabla_\theta f_{\theta}(x)$ can be found in Appendix \ref{app:calculate}. Then the Hessian is given by

\begin{equation}
    \nabla^2_{\theta}\loss(\theta)=\frac{1}{n}\sum_{i=1}^n(\nabla_\theta f_{\theta}(x_i))(\nabla_\theta f_{\theta}(x_i))^T+\frac{1}{n}\sum_{i=1}^n(f_{\theta}(x_i)-y_i)\nabla^2_{\theta}f(x_i),
\end{equation}

where $\nabla^2_{\theta}f(x)$ is calculated in Appendix \ref{app:calculate}. Let $v$ denote the unit eigenvector ($\|v\|_2=1$) of $\frac{1}{n}\sum_{i=1}^n(\nabla_\theta f_{\theta}(x_i))(\nabla_\theta f_{\theta}(x_i))^T$ with respect to the largest eigenvalue, it holds that
\begin{equation}\label{equ:main}
\begin{split}
    &\lambda_{\max}(\nabla^2_{\theta}\loss(\theta))\geq v^T \nabla^2_{\theta}\loss(\theta) v\\=&\underbrace{\lambda_{\max}\left(\frac{1}{n}\sum_{i=1}^n(\nabla_\theta f_{\theta}(x_i))(\nabla_\theta f_{\theta}(x_i))^T\right)}_{(\star)}+\underbrace{\frac{1}{n}\sum_{i=1}^n(f_{\theta}(x_i)-y_i)v^T \nabla^2_{\theta}f(x_i)v}_{(\#)}\\=&\underbrace{\lambda_{\max}\left(\frac{1}{n}\sum_{i=1}^n(\nabla_\theta f_{\theta}(x_i))(\nabla_\theta f_{\theta}(x_i))^T\right)}_{(\star)}+\underbrace{\frac{1}{n}\sum_{i=1}^n(f_0(x_i)-y_i)v^T \nabla^2_{\theta}f(x_i)v}_{(i)}\\&+\underbrace{\frac{1}{n}\sum_{i=1}^n(f_{\theta}(x_i)-f_0(x_i))v^T \nabla^2_{\theta}f(x_i)v}_{(ii)}.
\end{split}
\end{equation}

For term $(\star)$, we connect the maximal eigenvalue at $\theta$ to the (weighted) $\TV^{(1)}$ norm of the corresponding $f=f_{\theta}$. Let the weight function $g(x)$ be defined as \eqref{equ:g}, then the lemma below holds. 

\begin{lemma}\label{lem:lambda}
    Assume $\loss$ is twice differentiable at $\theta$ and the corresponding function of $\theta$ is $f$, then 
    \begin{equation}
        \lambda_{\max}\left(\frac{1}{n}\sum_{i=1}^n(\nabla_\theta f_{\theta}(x_i))(\nabla_\theta f_{\theta}(x_i))^T\right)\geq 1+2\int_{-x_{\max}}^{x_{\max}}|f^{\prime\prime}(x)|g(x)dx,
    \end{equation}
    where $g(x)$ is defined as \eqref{equ:g}.
\end{lemma}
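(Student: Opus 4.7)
My plan is to prove the lemma by the variational characterization of the largest eigenvalue: since $M := \frac{1}{n}\sum_{i=1}^n (\nabla_\theta f_\theta(x_i))(\nabla_\theta f_\theta(x_i))^T$ is positive semi-definite,
\begin{equation*}
\lambda_{\max}(M) \;=\; \sup_{\|v\|_2=1} v^T M v \;=\; \sup_{\|v\|_2=1} \frac{1}{n}\sum_{i=1}^n \bigl\langle v, \nabla_\theta f_\theta(x_i)\bigr\rangle^2.
\end{equation*}
It therefore suffices to exhibit a family of unit directions whose quadratic form realizes the required lower bound. The ``$1$'' term falls out immediately: taking $v = e_{b^{(2)}}$ (the coordinate for the output bias) gives $\langle v, \nabla_\theta f_\theta(x_i)\rangle = 1$ for every $i$, hence $v^T M v = 1$. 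So the task reduces to showing that one can add, in an $L^2$-orthogonal way to $e_{b^{(2)}}$, directions that contribute $2\int |f''(x)|\,g(x)\,dx$.

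The second step exploits the structure of the gradient for a two-layer ReLU network. Twice-differentiability at $\theta$ forces every knot $\xi_j = -b_j^{(1)}/w_j^{(1)}$ to avoid the data, so for each active neuron the partial derivatives $\partial f_\theta/\partial w_j^{(1)}$, $\partial f_\theta/\partial b_j^{(1)}$, $\partial f_\theta/\partial w_j^{(2)}$ at $x_i$ are smooth in $\theta$ and depend on $x_i$ only through the indicator $\mathbf{1}(w_j^{(1)} x_i + b_j^{(1)} > 0)$ and the linear factor $w_j^{(1)} x_i + b_j^{(1)}$. I plan to construct, for each neuron $j$ separately, a normalized direction $v_j$ supported on those three coordinates (together with a small correction along $e_{b^{(2)}}$ to keep $v$ orthogonal to the direction already used) such that $\langle v_j, \nabla_\theta f_\theta(x_i)\rangle$ becomes (after rescaling) a centered indicator of whether $x_i$ lies on one side of $\xi_j$. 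A direct moment computation over the empirical distribution of $\{x_i\}$ shows that the corresponding contribution to $v_j^T M v_j$ is exactly $2 |a_j|\cdot\min\{g^-(\xi_j), g^+(\xi_j)\}$, where $a_j := w_j^{(2)} w_j^{(1)}$ is the jump of $f'$ at $\xi_j$—i.e., the atomic mass of $f''$ at the knot. Choosing the ``side'' that realizes the minimum of $g^\pm$ is what yields the weighted norm $\int|f''|g$.

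Third, I aggregate the single-neuron directions. Since the $v_j$'s act on disjoint parameter blocks (different neurons), I can form an arbitrary convex combination $v = \alpha_0 e_{b^{(2)}} + \sum_j \alpha_j v_j$ with $\sum_j \alpha_j^2 + \alpha_0^2 = 1$; the Gram matrix then decomposes, and optimizing the coefficients $\{\alpha_j\}$ over the unit sphere concentrates mass appropriately to produce $v^T M v \ge 1 + 2\sum_j |a_j|\,g(\xi_j)$. Because $f''$ for a linear-spline ReLU network is a sum of Dirac deltas at the knots $\xi_j$ with weights $|a_j|$, the last expression is precisely $1 + 2\int_{-x_{\max}}^{x_{\max}} |f''(x)|\,g(x)\,dx$, which is the claim.

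The main obstacle I anticipate is the bookkeeping in step two—correctly identifying the specific combination of $(\partial_{w^{(1)}}, \partial_{b^{(1)}}, \partial_{w^{(2)}})$ coordinates whose inner product with $\nabla_\theta f_\theta(x_i)$ collapses to a clean centered indicator, and verifying that the resulting moment of the empirical distribution matches the definition of $g^\pm$ in \eqref{equ:g}. Once this single-neuron computation is pinned down, everything else is assembly. This is precisely the calculation carried out in Lemma 4 of \citet{mulayoff2021implicit}; since that lemma is a purely algebraic statement about the Gram matrix $M$ and does not invoke the interpolation hypothesis used elsewhere in their paper, their proof transfers verbatim to our setting, which is the route I will take.
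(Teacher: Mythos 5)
Your final route---observing that Lemma~4 of \citet{mulayoff2021implicit} is a statement purely about the Gram matrix $\frac{1}{n}\sum_i (\nabla_\theta f_\theta(x_i))(\nabla_\theta f_\theta(x_i))^T$, which only coincides with $\nabla^2_\theta\cL$ in the interpolating case, so the bound transfers without any interpolation hypothesis---is exactly the paper's proof of this lemma. One small caution about your expository sketch: the aggregation step does not follow from a block-diagonal decomposition of $M$, since the cross terms $\frac{1}{n}\sum_i \langle v_j, \nabla_\theta f_\theta(x_i)\rangle\langle v_k, \nabla_\theta f_\theta(x_i)\rangle$ need not vanish for $j\neq k$ even when $v_j,v_k$ live on disjoint neuron coordinate blocks; this does not affect your conclusion, however, since you ultimately invoke the cited lemma rather than the sketch.
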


\begin{proof}[Proof of Lemma \ref{lem:lambda}]
    The proof of Lemma 4 in \citet{mulayoff2021implicit} directly proves the result for $\lambda_{\max}\left(\frac{1}{n}\sum_{i=1}^n(\nabla_\theta f_{\theta}(x_i))(\nabla_\theta f_{\theta}(x_i))^T\right)$, which is the $\lambda_{\max}(\nabla^2_{\theta}\loss)$ in Lemma 4 of \citet{mulayoff2021implicit} when $f$ is an interpolating solution.
\end{proof}

For the first inequality of Theorem \ref{thm:tv1}, we directly handle the term $(\#)$ as below.

\begin{equation}\label{equ:jinghao}
    \left|(\#)\right|\leq \sqrt{\frac{1}{n}\sum_{i=1}^n (f_\theta(x_i)-y_i)^2}\cdot\sqrt{\frac{1}{n}\sum_{i=1}^n (v^T \nabla^2_{\theta}f(x_i)v)^2}\leq 2x_{\max}\sqrt{2\loss(\theta)},
\end{equation}
where the first inequality results from Cauchy-Schwarz inequality. The second inequality is because of the uniform upper bound of $v^T \nabla^2_{\theta}f(x_i)v$ (Lemma \ref{lem:operatornorm}).

For the second inequality of Theorem \ref{thm:tv1}, we bound the two terms (i) and (ii). We begin with $|(i)|=\left|\frac{1}{n}\sum_{i=1}^n v^T \nabla^2_{\theta}f(x_i)v\cdot\epsilon_i\right|$, which is a weighted sum of noises $\{\epsilon_i\}$. 

\begin{lemma}\label{lem:termi}
    Assume $\epsilon_i$'s are independently sampled from $\mathcal{N}(0,\sigma^2)$ for some $\sigma>0$, with probability at least $1-\delta$, uniformly over all $\theta,v$ such that $\loss$ is twice differentiable at $\theta$ and $\|v\|_2=1$,
    \begin{equation}
        \left|\frac{1}{n}\sum_{i=1}^n v^T \nabla^2_{\theta}f(x_i)v\cdot\epsilon_i\right|\leq\sigma\max\{x_{\max},1\}\cdot\min\left\{4\sqrt{\log\left(\frac{4n}{\delta}\right)},14\sqrt{\frac{k\log\left(\frac{13n}{\delta}\right)}{n}}\right\}.
    \end{equation}
\end{lemma}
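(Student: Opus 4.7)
The plan is to prove the two upper bounds inside the $\min$ separately, then combine them by a union bound over the two high-probability events. Both bounds rest on the explicit form of $v^T\nabla^2_\theta f(x)v$ obtained from Appendix~\ref{app:calculate}: the only nonzero blocks of $\nabla^2_\theta f(x)$ are the off-diagonal cross-terms between $(w_j^{(1)},b_j^{(1)})$ and $w_j^{(2)}$, so writing $v$ componentwise as $(\alpha_j,\beta_j,\gamma_j,\iota)_{j=1}^k$ yields
$$
v^T\nabla^2_\theta f(x)\,v \;=\; 2\sum_{j=1}^k \gamma_j\,s_j(x)\,(x\alpha_j+\beta_j), \qquad s_j(x):=\mathbf{1}\{w_j^{(1)}x+b_j^{(1)}>0\}.
$$

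For the dimension-free bound $4\sigma\max\{x_{\max},1\}\sqrt{\log(4n/\delta)}$, the plan is to apply H\"older's inequality together with the uniform operator-norm bound of Lemma~\ref{lem:operatornorm},
$$
\left|\frac{1}{n}\sum_i \epsilon_i\cdot v^T\nabla^2_\theta f(x_i)v\right| \;\leq\; \Bigl(\max_{i\in[n]}|\epsilon_i|\Bigr)\cdot \frac{1}{n}\sum_i \bigl|v^T\nabla^2_\theta f(x_i)v\bigr| \;\leq\; 2\max\{x_{\max},1\}\cdot\max_{i}|\epsilon_i|,
$$
which is deterministic in $(\theta,v)$. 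A Gaussian tail bound combined with a union bound over the $n$ noises gives $\max_i|\epsilon_i|\leq \sigma\sqrt{2\log(4n/\delta)}$ with probability at least $1-\delta/2$, yielding the first term after absorbing constants.

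For the under-parameterized bound $14\sigma\max\{x_{\max},1\}\sqrt{k\log(13n/\delta)/n}$, the plan is to exploit the combinatorial structure of univariate ReLU activations. First, decompose
$$
\frac{1}{n}\sum_i \epsilon_i\cdot v^T\nabla^2_\theta f(x_i)v \;=\; \frac{2}{n}\sum_{j=1}^k \gamma_j\bigl(\alpha_j A_j+\beta_j B_j\bigr), \quad A_j:=\sum_i \epsilon_i x_i s_j(x_i),\; B_j:=\sum_i \epsilon_i s_j(x_i).
$$
As $(w_j^{(1)},b_j^{(1)})$ varies, the activation sequence $(s_j(x_1),\ldots,s_j(x_n))$ takes at most $2(n+1)$ distinct values (it is a half-line cut of the scalar data), so collectively over $j\in[k]$ and $\theta$, the pairs $(A_j,B_j)$ live in a common finite set of cardinality at most $2(n+1)$. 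Each such pair is a centered Gaussian vector in $\R^2$ with coordinate variance at most $\sigma^2 \sum_i x_i^2 \leq \sigma^2 n x_{\max}^2$; a standard Gaussian tail estimate plus a union bound over the $\leq 2(n+1)$ patterns yields $\max_{\text{pattern}}\sqrt{A^2+B^2} \leq \sigma x_{\max}\sqrt{c\,n\log(n/\delta)}$ with probability at least $1-\delta/2$ for an absolute constant $c$. Finally, Cauchy--Schwarz on the outer $j$-sum via $\|\gamma\|_2\leq 1$ combined with $z_j^2\leq (\alpha_j^2+\beta_j^2)(A_j^2+B_j^2)\leq A_j^2+B_j^2$ (using $\alpha_j^2+\beta_j^2\leq \|v\|_2^2\leq 1$) and $\sum_j (A_j^2+B_j^2)\leq k\max_j(A_j^2+B_j^2)$ produces the $\sqrt{k/n}$ rate with the stated explicit constant after tracking logarithmic factors.

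The main obstacle is the under-parameterized bound, where one must (i) observe that the union bound collapses from over all $(2(n+1))^k$ joint activation patterns to only $2(n+1)$ per-neuron patterns, since $\max_j \sqrt{A_j^2+B_j^2}$ is an $\ell_\infty$ over a \emph{common} finite pattern set, keeping the log factor linear in $\log n$; and (ii) sequence the Cauchy--Schwarz steps so that the constraint $\|v\|_2=1$ is exploited to bound $\|\gamma\|_2\leq 1$ and each $\alpha_j^2+\beta_j^2\leq 1$ in separate steps, introducing the factor $\sqrt{k}$ precisely through $\sum_{j=1}^k (A_j^2+B_j^2)\leq k\max_j(A_j^2+B_j^2)$.
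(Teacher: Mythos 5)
Your proof is correct, and the interesting part is that your second bound is obtained by a genuinely different route than the paper's. For the dimension-free term your argument (H\"older plus the operator-norm bound of Lemma~\ref{lem:operatornorm} and a max-of-Gaussians bound) coincides with the paper's. For the $\sqrt{k/n}$ term, the paper runs a double discretization: it enumerates the $(2n+2)^k$ joint activation patterns of $\{\nabla^2_\theta f(x_i)\}_{i=1}^n$, takes an $\epsilon$-net of the unit ball in $\R^{3k+1}$ for $v$, controls the discretization error through the operator-norm bound, and pays a union bound of size $\exp(O(k\log(n/\epsilon)))$; you instead use the explicit low-rank structure $v^T\nabla^2_\theta f(x)v=2\sum_j\gamma_j s_j(x)(x\alpha_j+\beta_j)$ to collapse the uniformity over $(\theta,v)$ into a maximum of the $2(n+1)$ per-neuron Gaussian pairs $(A_S,B_S)$ indexed by half-line cuts of the fixed design, after which Cauchy--Schwarz in $j$ (with $\|\gamma\|_2\le1$, $\alpha_j^2+\beta_j^2\le1$, and $\sum_j(A_j^2+B_j^2)\le k\max_j(A_j^2+B_j^2)$) gives the stated rate with constants comfortably below $14$ and a $\log$ factor dominated by $\log(13n/\delta)$, so the claimed inequality follows. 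Your approach buys a shorter argument with no net over $v$ and no discretization error; it is also sharper than needed: if you keep the tight step $\sum_j z_j^2\le\max_j(A_j^2+B_j^2)\sum_j(\alpha_j^2+\beta_j^2)\le\max_j(A_j^2+B_j^2)$ instead of bounding $\alpha_j^2+\beta_j^2\le1$ neuron by neuron, the same computation yields a $k$-free bound of order $\sigma\max\{x_{\max},1\}\sqrt{\log(n/\delta)/n}$, strictly stronger than both terms of the lemma for large $k$; deliberately loosening it as you did is what reproduces the paper's $\sqrt{k/n}$ form. The paper's covering argument, by contrast, is less structure-dependent and would extend more readily to settings where the Hessian of $f_\theta$ does not have such an explicit two-coordinate-per-neuron form.
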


\begin{proof}[Proof of Lemma \ref{lem:termi}]
    For the first part, according to Lemma \ref{lem:operatornorm}, we have 
    \begin{equation}
        \left|\frac{1}{n}\sum_{i=1}^n v^T \nabla^2_{\theta}f(x_i)v\cdot\epsilon_i\right|\leq 2\max\{x_{\max},1\}\cdot\max_i\{|\epsilon_i|\}.
    \end{equation}
    Since $\epsilon_i$'s are independently sampled from Gaussian distribution $\mathcal{N}(0,\sigma^2)$, according to concentration of Gaussian distribution and a union bound, with probability $1-\frac{\delta}{2}$, it holds that:
    \begin{equation}\label{equ:uniform}
        \max_i\{|\epsilon_i|\}\leq 2\sigma\sqrt{\log\left(\frac{4n}{\delta}\right)}.
    \end{equation}
    Under this high-probability event, we have 
    \begin{equation}
        |(i)|=\left|\frac{1}{n}\sum_{i=1}^n v^T \nabla^2_{\theta}f(x_i)v\cdot\epsilon_i\right|\leq 4\sigma\max\{x_{\max},1\}\sqrt{\log\left(\frac{4n}{\delta}\right)}.
    \end{equation}

    For the second part, we bound the complexity of $\left\{v^T \nabla^2_{\theta}f(x_i)v\right\}_{i=1}^n$.  Notice that $\theta$ is a function of the dataset, thus not independent to $\epsilon_i$. Also, $v$ is a function of the dataset and $\theta$. Our strategy is to apply an $\epsilon$-net argument for both $v$ and $\nabla^2_{\theta}f(x_i)$ for $i=1,\cdots,n$.
    
    We begin with considering the possibilities of $\{\nabla^2_{\theta}f(x_i)\}_{i=1}^n$. According to the detailed form of $\nabla^2_{\theta}f_{\theta}(x)$ in Appendix \ref{app:calculate}, we have the set $\{\nabla^2_{\theta}f(x_i)\}_{i=1}^n$ is fully determined by $\left\{\mathds{1}\left(w^{(1)}_j x_i+b^{(1)}_j>0\right)\right\}_{i,j=1,1}^{n,k}$. Therefore it suffices to cover all the possibilities of $\left\{\mathds{1}\left(w^{(1)}_j x_i+b^{(1)}_j>0\right)\right\}_{i=1}^{n}$ for all $j\in[k]$. Without loss of generality, we can assume that $x_1<x_2<\cdots <x_n$, and then $\left\{w_j^{(1)}x_i+b_j^{(1)}\right\}_{i=1}^n$ is also monotonic, which implies that there are $2(n+1)$ possibilities of $\left\{\mathds{1}\left(w^{(1)}_j x_i+b^{(1)}_j>0\right)\right\}_{i=1}^{n}$ in total. As a result, the product space $\left\{\mathds{1}\left(w^{(1)}_j x_i+b^{(1)}_j>0\right)\right\}_{i,j=1,1}^{n,k}$ (and also $\{\nabla^2_{\theta}f(x_i)\}_{i=1}^n$) has $N_1=(2n+2)^k$ possibilities. 

    For a fixed matrix $M=\nabla^2_{\theta}f(x)$ for some $\theta,x$ and $v,v^\prime$ such that $\|v\|_2\leq 1, \|v^\prime\|_2\leq 1, \|v-v^\prime\|_2\leq \epsilon$ with $\epsilon\in (0,1)$, it holds that
    \begin{equation}
    \begin{split}
        \left|v^T Mv-(v^\prime)^T Mv^\prime\right|\leq& 2\left|(v-v^\prime)^T Mv^\prime\right|+\left|(v-v^\prime)^T M (v-v^\prime)\right|\\\leq& 2\|v-v^\prime\|_2\|M\|_2\|v^\prime\|_2+\|v-v^\prime\|_2\|M\|_2\|v-v^\prime\|_2\\\leq& 4\max\{x_{\max},1\}\epsilon+2\max\{x_{\max},1\}\epsilon^2\\\leq& 6\max\{x_{\max},1\}\epsilon,
    \end{split}
    \end{equation}
    where the third inequality results from the upper bound of operator norm of $M$ (Lemma \ref{lem:operatornorm}). Therefore, the exact covering set of $\{\nabla^2_{\theta}f(x_i)\}_{i=1}^n$ and an $\frac{\epsilon}{6\max\{x_{\max},1\}}$-covering set of the unit Euclidean Ball with dimension $3k+1$ (which is exactly the domain of $v$) together provides an $\epsilon$-cover of $\left\{\left(v^T \nabla^2_{\theta}f(x_i)v\right)_{i=1}^n\right\}$ with respect to $\|\cdot\|_\infty$. Meanwhile, an $\frac{\epsilon}{6\max\{x_{\max},1\}}$-cover with respect to $\|\cdot\|_2$ of the unit Euclidean Ball with dimension $3k+1$ has cardinality bounded by $N_2=\left(1+\frac{12\max\{x_{\max},1\}}{\epsilon}\right)^{3k+1}$ according to Lemma \ref{lem:cover}. Combining the two covering arguments, the $\epsilon$-covering set of $\left\{\left(v^T \nabla^2_{\theta}f(x_i)v\right)_{i=1}^n\right\}$ with respect to $\|\cdot\|_\infty$ has cardinality $N_\epsilon$ satisfying:
    \begin{equation}
        \log N_\epsilon \leq \log N_1+\log N_2\leq 4k\log\left(\frac{13n\max\{x_{\max},1\}}{\epsilon}\right).
    \end{equation}

    Consider a fixed stream $(a_i)_{i=1}^n$ in the covering set, we have $|a_i|\leq 2\max\{x_{\max},1\}$ for all $i\in[n]$. Then according to the concentration result of Gaussian distribution, with probability $1-\delta$,
    \begin{equation}
        \left|\frac{1}{n}\sum_{i=1}^n a_i\epsilon_i\right|\leq 4\sigma\max\{x_{\max},1\}\sqrt{\frac{\log(2/\delta)}{n}}.
    \end{equation}
    With a union bound over the $\epsilon$-covering set of $\left\{\left(v^T \nabla^2_{\theta}f(x_i)v\right)_{i=1}^n\right\}$ and conditioned on the high probability event of \eqref{equ:uniform}, with probability $1-\delta$, uniformly over all possible $\|v\|_2=1$ and $\theta$,
    \begin{equation}
    \begin{split}
        &\left|\frac{1}{n}\sum_{i=1}^n v^T \nabla^2_{\theta}f(x_i)v\cdot\epsilon_i\right|\leq2\sigma\epsilon\sqrt{\log\left(\frac{4n}{\delta}\right)}+4\sigma\max\{x_{\max},1\}\sqrt{\frac{\log(4N_\epsilon/\delta)}{n}}\\\leq&
        2\sigma\epsilon\sqrt{\log\left(\frac{4n}{\delta}\right)}+4\sigma\max\{x_{\max},1\}\sqrt{\frac{4k\log\left(\frac{13n\max\{x_{\max},1\}}{\epsilon\delta}\right)}{n}}.
    \end{split}
    \end{equation}
    Finally, by choosing $\epsilon=\frac{\max\{x_{\max},1\}}{\sqrt{n}}$, we have 
    \begin{equation}
        \left|\frac{1}{n}\sum_{i=1}^n v^T \nabla^2_{\theta}f(x_i)v\cdot\epsilon_i\right|\leq 14\sigma\max\{x_{\max},1\}\sqrt{\frac{k\log\left(\frac{13n}{\delta}\right)}{n}}.
    \end{equation}
\end{proof}

\begin{remark}
    According to the Lemma \ref{lem:termi} above, there are two cases. When the neural network is over-parameterized, we can derive a constant upper bound for term (i). Meanwhile, if the number of neurons is smaller than the sample complexity, we can derive a tighter bound for (i) by covering $\left\{v^T \nabla^2_{\theta}f(x_i)v\right\}_{i=1}^n$. For $k,n$ such that $k$ is polynomially smaller than $n$ (\emph{e.g.} $k=n^{1-\alpha}$ for some positive $\alpha$), the term (i) will vanish if $n$ converges to infinity.
\end{remark}

Meanwhile, the term (ii) can be bounded by the mean squared error $\MSE(f_{\theta})$ using Cauchy-Schwarz inequality and the uniform upper bound of $v^T \nabla^2_{\theta}f(x_i)v$ (Lemma \ref{lem:operatornorm}).

\begin{equation}\label{equ:termii}
    |(ii)|\leq \sqrt{\frac{1}{n}\sum_{i=1}^n\left(f_{\theta}(x_i)-f_0(x_i)\right)^2}\cdot\sqrt{\frac{1}{n}\sum_{i=1}^n\left(v^T \nabla^2_{\theta}f(x_i)v\right)^2}\leq 2\max\{x_{\max},1\}\sqrt{\MSE(f_\theta)}.
\end{equation}

Combining the results above, we state the following weighted $\TV^{(1)}$ bound of the learned function. We leave the training loss and mean squared error (MSE) in the bound, which will be handled later.
\begin{theorem}[Restate Theorem \ref{thm:tv1}]\label{thm:retv1}
    For a function $f=f_\theta$ where the training (square) loss $\loss$ is twice differentiable at $\theta$,
        {\small
    \begin{equation}
        \int_{-x_{\max}}^{x_{\max}}|f^{\prime\prime}(x)|g(x)dx\leq \frac{\lambda_{\max}(\nabla^2_\theta\cL(\theta))}{2} -\frac{1}{2} + x_{\max}\sqrt{2\cL(\theta)},
    \end{equation}
    }
    where $g(x)$ is defined as \eqref{equ:g}.
    Moreover, if we assume $y_i = f_0(x_i) + \epsilon_i$ for independent noise $\epsilon_i\sim \cN(0,\sigma^2)$, then with probability $1-\delta$ where the randomness is over the noises $\{\epsilon_i\}$,
    {\small
    \begin{equation}
        \int_{-x_{\max}}^{x_{\max}}|f^{\prime\prime}(x)|g(x)dx\leq \frac{\lambda_{\max}(\nabla^2_\theta\cL(\theta))}{2} -\frac{1}{2} + 
        \widetilde{O}\left(\sigma x_{\max}\cdot\min\left\{1,\sqrt{\frac{k}{n}}\right\}\right)
        + x_{\max} \sqrt{\MSE(f)}.
    \end{equation}
    }
  In addition, if $f=f_{\theta}$ is a \emph{stable solution} of GD with step size $\eta$ on dataset $\cD$, i.e., $f_\theta \in \cF(\eta,0,\cD)$ as in \eqref{eq:masterset}, then we can replace  $\frac{\lambda_{\max}(\nabla^2_\theta\cL(\theta))}{2}$ with $\frac{1}{\eta}$ in \eqref{eq:flatness2tv_agnostic} and \eqref{eq:flatness2tv}.
\end{theorem}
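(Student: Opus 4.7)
The plan is to estimate $\lambda_{\max}(\nabla^2_\theta \mathcal{L}(\theta))$ from below by a Rayleigh quotient using a carefully chosen unit vector $v$, and then match the lower bound to the weighted $\TV^{(1)}$ quantity. Start from the Gauss--Newton style decomposition
\begin{equation*}
\nabla^2_\theta \mathcal{L}(\theta) = \frac{1}{n}\sum_{i=1}^n (\nabla_\theta f_\theta(x_i))(\nabla_\theta f_\theta(x_i))^T + \frac{1}{n}\sum_{i=1}^n (f_\theta(x_i)-y_i)\nabla^2_\theta f_\theta(x_i),
\end{equation*}
and pick $v$ to be the top unit eigenvector of the first (Gram) term. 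Then $\lambda_{\max}(\nabla^2_\theta \mathcal{L}(\theta)) \geq v^T \nabla^2_\theta \mathcal{L}(\theta) v = (\star) + (\#)$, where $(\star)$ is the top eigenvalue of the Gram term and $(\#) = \tfrac{1}{n}\sum_i (f_\theta(x_i)-y_i) v^T \nabla^2_\theta f_\theta(x_i) v$. By Lemma~\ref{lem:lambda} (which is essentially \citet[Lemma~4]{mulayoff2021implicit} adapted to the non-interpolating case, since that lemma only uses the Gram part), $(\star) \geq 1 + 2\int |f''(x)| g(x)\, dx$. Rearranging gives
\begin{equation*}
\int |f''(x)|g(x)\, dx \leq \frac{\lambda_{\max}(\nabla^2_\theta\mathcal{L}(\theta))}{2} - \frac{1}{2} - \frac{(\#)}{2},
\end{equation*}
so the entire task reduces to upper bounding $-(\#)/2$ (and thus $|(\#)|$) in two different ways.

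For the first (agnostic) bound~\eqref{eq:flatness2tv_agnostic}, I would apply Cauchy--Schwarz:
\begin{equation*}
|(\#)| \leq \sqrt{\tfrac{1}{n}\sum_i (f_\theta(x_i)-y_i)^2} \cdot \sqrt{\tfrac{1}{n}\sum_i (v^T \nabla^2_\theta f_\theta(x_i) v)^2} \leq \sqrt{2\mathcal{L}(\theta)} \cdot 2x_{\max},
\end{equation*}
using Lemma~\ref{lem:operatornorm}, which gives the uniform pointwise bound $|v^T \nabla^2_\theta f_\theta(x) v| \leq 2x_{\max}$ (for $x_{\max}\geq 1$). Dividing by $2$ yields the $x_{\max}\sqrt{2\mathcal{L}(\theta)}$ tail term.

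For the sharper bound~\eqref{eq:flatness2tv} under the noise model, I would decompose the residual as $f_\theta(x_i) - y_i = (f_\theta(x_i) - f_0(x_i)) - \epsilon_i$, splitting $(\#) = -(i) + (ii)$ with $(i) = \tfrac{1}{n}\sum_i \epsilon_i\, v^T \nabla^2_\theta f_\theta(x_i) v$ and $(ii) = \tfrac{1}{n}\sum_i (f_\theta(x_i)-f_0(x_i))\, v^T \nabla^2_\theta f_\theta(x_i) v$. Term $(ii)$ is handled exactly as before with Cauchy--Schwarz and Lemma~\ref{lem:operatornorm}, producing $2x_{\max}\sqrt{\MSE(f)}$. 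Term $(i)$ is a linear functional of the Gaussian noise and is controlled via Lemma~\ref{lem:termi}, whose conclusion I would instantiate directly.

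The main obstacle is Lemma~\ref{lem:termi}: a \emph{uniform} bound on $(i)$ across all twice-differentiable $\theta$ and all unit vectors $v$, since both $\theta$ and $v$ are data-dependent. My plan is a two-layer covering argument. First, observe that $\nabla^2_\theta f_\theta(x_i)$ depends on $\theta$ only through the activation pattern $\mathbb{1}(w^{(1)}_j x_i + b^{(1)}_j > 0)$, and for each neuron $j$ the $n$-tuple of patterns across monotone $x_i$'s takes at most $2(n+1)$ values, giving $(2n+2)^k$ matrix-tuples total. Second, since $\|\nabla^2_\theta f_\theta(x_i)\|_{\mathrm{op}} \leq 2x_{\max}$ by Lemma~\ref{lem:operatornorm}, an $\epsilon$-net of the unit sphere in $\mathbb{R}^{3k+1}$ (of size $\leq (1 + O(x_{\max}/\epsilon))^{3k+1}$) suffices to approximate the quadratic form up to $O(x_{\max}\epsilon)$. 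For each discretized $(v, \{\nabla^2_\theta f(x_i)\}_i)$, Gaussian concentration of $\tfrac{1}{n}\sum_i a_i \epsilon_i$ gives a $\widetilde{O}(\sigma x_{\max}\sqrt{k/n})$ bound, and a union bound over the net handles all cases. Balancing the discretization error with $\epsilon \sim x_{\max}/\sqrt{n}$ gives the $\widetilde{O}(\sigma x_{\max}\sqrt{k/n})$ branch. The dimension-free $\widetilde{O}(\sigma x_{\max})$ branch comes from the cruder inequality $|(i)| \leq 2x_{\max} \max_i |\epsilon_i|$ combined with the Gaussian maximum bound. Finally, the ``stable solution'' version of the theorem follows by substituting $\lambda_{\max}(\nabla^2_\theta \mathcal{L}(\theta)) \leq 2/\eta$ from the definition of $\mathcal{F}(\eta,0,\mathcal{D})$ in~\eqref{eq:masterset}.
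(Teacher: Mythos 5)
Your proposal is correct and matches the paper's proof essentially line for line: the same Gauss--Newton decomposition with $v$ chosen as the top eigenvector of the Gram term, the same use of Lemma~\ref{lem:lambda} for $(\star)$, the same Cauchy--Schwarz plus Lemma~\ref{lem:operatornorm} bound for the residual, the same split of the residual into bias and noise pieces, and the same two-layer covering argument (activation patterns giving $(2n+2)^k$ possibilities times an $\epsilon$-net of the unit sphere, balanced at $\epsilon \sim x_{\max}/\sqrt{n}$, with the dimension-free branch from $\max_i|\epsilon_i|$) for Lemma~\ref{lem:termi}. The only cosmetic difference is a sign convention in defining $(i)$, which is immaterial since only $|(i)|$ is used.
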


\begin{proof}[Proof of Theorem \ref{thm:retv1}]
    The first inequality results from plugging Lemma \ref{lem:lambda} and \eqref{equ:jinghao} into \eqref{equ:main}. The second inequality holds by plugging Lemma \ref{lem:lambda}, Lemma \ref{lem:termi} and inequality \eqref{equ:termii} into \eqref{equ:main}.
    For the instantiation of $\lambda_{\max}(\nabla^2_\theta\cL(\theta))$, the replacement holds due to the definition of $\cF(\eta,0,\cD)$ in \eqref{eq:masterset}.
\end{proof}

\subsection{A Crude Bound for MSE and the Instantiated $\TV^{(1)}$ Bound (Corollary \ref{cor:tv1})}
Now we prove a crude upper bound for MSE, which could instantiate a weighted $\TV^{(1)}$ bound.
\begin{lemma}\label{lem:constantmse}
    Assume that the function $f$ is optimized, then with probability $1-\delta$, we have
    \begin{equation}
        \MSE(f)=\frac{1}{n}\sum_{i=1}^n\left(f(x_i)-f_0(x_i)\right)^2\leq 16\sigma^2\log\left(\frac{2n}{\delta}\right).
    \end{equation}
\end{lemma}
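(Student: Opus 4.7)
The plan is to exploit the \textbf{optimized} assumption together with a simple quadratic expansion and then control $\sum_i \epsilon_i^2$ by a uniform Gaussian tail bound.

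First, let $g_i := f(x_i) - f_0(x_i)$, so that $y_i - f(x_i) = \epsilon_i - g_i$. The ``optimized'' hypothesis $\sum_i (f(x_i)-y_i)^2 \leq \sum_i (f_0(x_i)-y_i)^2 = \sum_i \epsilon_i^2$ expands, after cancelling $\sum_i \epsilon_i^2$ from both sides, to the basic inequality
\begin{equation}
    \sum_{i=1}^n g_i^2 \;\leq\; 2\sum_{i=1}^n g_i \epsilon_i.
\end{equation}

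Second, I apply Cauchy--Schwarz to the right-hand side: $\sum_i g_i \epsilon_i \leq \sqrt{\sum_i g_i^2}\sqrt{\sum_i \epsilon_i^2}$. Combining with the previous display and dividing through by $\sqrt{\sum_i g_i^2}$ (the trivial case $\sum_i g_i^2 = 0$ is vacuous) yields the self-bounding conclusion $\sum_i g_i^2 \leq 4\sum_i \epsilon_i^2$, i.e.,
\begin{equation}
    \MSE(f) \;=\; \frac{1}{n}\sum_{i=1}^n g_i^2 \;\leq\; \frac{4}{n}\sum_{i=1}^n \epsilon_i^2.
\end{equation}

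Third, I bound $\frac{1}{n}\sum_i \epsilon_i^2$ in high probability. The simplest route giving the target constants is to apply a union bound on Gaussian tails: since $\epsilon_i \sim \mathcal{N}(0,\sigma^2)$ i.i.d., with probability at least $1-\delta$ we have $\max_{i\in[n]} \epsilon_i^2 \leq 4\sigma^2 \log(2n/\delta)$, hence $\sum_i \epsilon_i^2 \leq 4n\sigma^2 \log(2n/\delta)$. Plugging into the previous display gives $\MSE(f) \leq 16\sigma^2\log(2n/\delta)$, which is exactly the claim.

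There is no real obstacle here; the proof is essentially two lines once one notes the expansion $(f(x_i)-y_i)^2 = g_i^2 - 2\epsilon_i g_i + \epsilon_i^2$. The only mildly nontrivial choice is to bound $\sum_i \epsilon_i^2$ via $n \cdot \max_i \epsilon_i^2$ rather than a sharper $\chi^2$ concentration --- this is loose by a log factor but it yields the stated clean constant $16$ and matches the $\widetilde{O}(\sigma^2)$ rate advertised in the text.
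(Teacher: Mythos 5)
Your proof is correct and lands on exactly the same intermediate inequality as the paper, namely $\MSE(f) \leq \tfrac{4}{n}\sum_i \epsilon_i^2$, and then concludes identically by bounding $\sum_i \epsilon_i^2 \leq n\max_i \epsilon_i^2$ and applying the Gaussian maximum tail bound with a union bound. The only (cosmetic) difference is how you reach that intermediate bound: the paper applies the single inequality $(a+b)^2 \leq 2a^2+2b^2$ to $f(x_i)-f_0(x_i) = (f(x_i)-y_i)+(y_i-f_0(x_i))$ together with the optimized assumption, whereas you expand the square, cancel $\sum_i\epsilon_i^2$, and run the standard basic-inequality-plus-Cauchy--Schwarz self-bounding argument; both yield the same constant $4$, so this is essentially the same proof.
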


\begin{proof}[Proof of Lemma \ref{lem:constantmse}]
    According to the assumption that $f$ is optimized, it holds that 
    \begin{equation}
        \sum_{i=1}^n\left(f(x_i)-y_i\right)^2\leq \sum_{i=1}^n\left(f_0(x_i)-y_i\right)^2=\sum_{i=1}^n \epsilon_i^2.
    \end{equation}
    Then since $(a+b)^2\leq 2a^2+2b^2$ always holds (AM-GM inequality), we have
    \begin{equation}
    \begin{split}
        &\frac{1}{n}\sum_{i=1}^n \left(f(x_i)-f_0(x_i)\right)^2=\frac{1}{n}\sum_{i=1}^n \left(f(x_i)-y_i+y_i-f_0(x_i)\right)^2\\\leq&
        \frac{2}{n}\left[\sum_{i=1}^n\left(f(x_i)-y_i\right)^2+ \sum_{i=1}^n\left(f_0(x_i)-y_i\right)^2\right]\\\leq&
        \frac{4}{n}\sum_{i=1}^n \epsilon_i^2\leq 4\max_{i}\epsilon_i^2.
    \end{split}
    \end{equation}
    Recall that $\epsilon_i$'s are independently sampled from $\mathcal{N}(0,\sigma^2)$, then according to the concentration of Gaussian distribution and a union bound, with probability $1-\delta$,
    \begin{equation}
        \max_{i}\epsilon_i^2\leq 4\sigma^2\log\left(\frac{2n}{\delta}\right).
    \end{equation}

    Combining the two results, with probability $1-\delta$ where the randomness is over the noises $\{\epsilon_i\}$,
    \begin{equation}
        \MSE(f)=\frac{1}{n}\sum_{i=1}^n\left(f(x_i)-f_0(x_i)\right)^2\leq 16\sigma^2\log\left(\frac{2n}{\delta}\right).
    \end{equation}
\end{proof}

Finally, Corollary \ref{cor:tv1} results from directly plugging Lemma \ref{lem:constantmse} into Theorem \ref{thm:tv1}.

\subsection{An Improved MSE Bound and the Corresponding $\TV^{(1)}$ Bound}\label{app:improvemse}

In this part, we provide an improved upper bound of the mean squared error $\MSE(f_\theta)$ and also the term (ii). We first make the assumption below that the parameters are from a bounded space.

\begin{assumption}\label{ass:bound}
    There exists some constant $\rho>0$ such that gradient descent converges to some local minimum $\theta$ with $\|\theta\|_\infty\leq \rho$. In addition, we assume that $\|f_0\|_\infty\leq D$ where $D>0$ is some universal constant satisfying that $D\leq k\rho^2(x_{\max}+1)$.
\end{assumption}

Assumption \ref{ass:bound} ensures that the parameter space is bounded while the ground truth function $f_0$ can be approximated well by some possible output function. The assumption will surely hold for some large enough constants $\rho,D$, which is without loss of generality. In the following analysis, we will replace $D$ with its upper bound $k\rho^2(x_{\max}+1)$ to reduce the parameters in the logarithmic terms.

To handle the mean squared error $\MSE(f_\theta)$, we begin with an analysis on the complexity of the function class of two-layer ReLU networks with bounded parameters\\
$\mathcal{F}_\rho = \left\{f:[-x_{\max},x_{\max}]\rightarrow\mathbb{R} \;\bigg|\; f(x)=\sum_{i=1}^{k} w_i^{(2)}\phi\left(w_i^{(1)}x+b_i^{(1)}\right)+b^{(2)}, \|\theta\|_\infty\leq \rho \right\}$. Note that here we assume that the input is from $[-x_{\max},x_{\max}]$ and there exists an upper bound $\rho>0$ on the parameter $\theta=(w_1^{(1)},\cdots,w_k^{(1)},b_1^{(1)},\cdots,b_k^{(1)},w_1^{(2)},\cdots,w_k^{(2)},b^{(2)})^T\in\mathbb{R}^{3k+1}$.

\begin{lemma}\label{lem:coverii}
    The $\epsilon$-covering number $N_\epsilon$ of function class $\mathcal{F}_\rho$ with respect to $\|\cdot\|_{\infty}$ satisfies
    \begin{equation}
        \log N_\epsilon\leq 4k\log\left(\frac{11\max\{x_{\max},1\}k\rho^2}{\epsilon}\right).
    \end{equation}
\end{lemma}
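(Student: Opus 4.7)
The plan is to combine a parameter-to-function Lipschitz estimate with a standard volumetric cover of the parameter box $[-\rho,\rho]^{3k+1}$. Specifically, I will first show that for any two parameter vectors $\theta,\theta'$ with $\|\theta-\theta'\|_\infty\leq \delta$ one has $\|f_\theta - f_{\theta'}\|_\infty \leq L\delta$ with $L = O(k\rho \max\{x_{\max},1\})$; then use that $[-\rho,\rho]^{3k+1}$ has an $\ell_\infty$ $\delta$-cover of cardinality at most $(1+2\rho/\delta)^{3k+1}$; finally set $\delta=\epsilon/L$ and simplify constants.

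For the Lipschitz step, I write
\begin{equation*}
f_\theta(x) - f_{\theta'}(x) = \sum_{i=1}^k \bigl[w_i^{(2)}\phi(w_i^{(1)}x+b_i^{(1)}) - w_i'^{(2)}\phi(w_i'^{(1)}x+b_i'^{(1)})\bigr] + (b^{(2)}-b'^{(2)}),
\end{equation*}
and for each summand apply $|wu - w'u'| \leq |w-w'|\,|u| + |w'|\,|u-u'|$ with $u=\phi(w_i^{(1)}x+b_i^{(1)})$. Since $|\phi(a)|\leq |a|$ and $\phi$ is $1$-Lipschitz, and since $|w_i^{(1)}x+b_i^{(1)}|\leq \rho(x_{\max}+1)$, $|(w_i^{(1)}-w_i'^{(1)})x+(b_i^{(1)}-b_i'^{(1)})|\leq \delta(x_{\max}+1)$, each term is bounded by $2\rho(x_{\max}+1)\delta$. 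Summing over $i\in[k]$ and adding the bias term gives $\|f_\theta-f_{\theta'}\|_\infty \leq [2k\rho(x_{\max}+1)+1]\delta$, which I bound by $L\delta$ where $L \leq 5k\rho \max\{x_{\max},1\}$ using $x_{\max}+1\leq 2\max\{x_{\max},1\}$ and absorbing the $+1$ into the leading term (permissible once $k\rho\geq 1$, which is the only relevant regime; otherwise the lemma is vacuous).

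Combining, any $\delta$-cover of the parameter box transports through $f_\theta$ to an $L\delta$-cover of $\mathcal{F}_\rho$ in $\|\cdot\|_\infty$. Choosing $\delta=\epsilon/L$ yields
\begin{equation*}
\log N_\epsilon \leq (3k+1)\log\!\left(1+\tfrac{2\rho L}{\epsilon}\right)\leq 4k\,\log\!\left(\tfrac{11\max\{x_{\max},1\}\,k\rho^2}{\epsilon}\right),
\end{equation*}
where in the last step I use $3k+1\leq 4k$ and collect the constants inside the log into a single prefactor (the exact $11$ is adjusted by noting $1+2\rho L/\epsilon \leq 1+10k\rho^2\max\{x_{\max},1\}/\epsilon$ and absorbing the additive $1$).

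The main obstacle is simply bookkeeping the constants so that the final bound matches the stated form; the Lipschitz estimate itself is routine because ReLU is $1$-Lipschitz and the only non-trivial nonlinearity comes from the bilinear combination $w_i^{(2)}\phi(\cdot)$, which is handled by the $|wu-w'u'|$ splitting above. No measure-theoretic or capacity subtleties arise, so once the Lipschitz constant $L$ is established, the rest is a direct application of the volumetric $\ell_\infty$ covering bound.
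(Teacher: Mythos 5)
Your proposal is correct and follows essentially the same route as the paper's proof: the paper also covers the parameter box $[-\rho,\rho]^{3k+1}$ coordinate-wise with mesh $\bar{\epsilon}$ (equivalent to the volumetric $\ell_\infty$ cover), establishes the same Lipschitz-type bound $\|f_\theta-f_{\bar\theta}\|_\infty\leq 5\max\{x_{\max},1\}k\rho\bar\epsilon$ via the same bilinear splitting, and does the same constant bookkeeping to reach the factor $11$ and the $3k+1\leq 4k$ simplification. Your explicit remark about absorbing the additive bias term (requiring $k\rho\gtrsim 1$) is, if anything, slightly more careful than the paper, which absorbs it silently.
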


\begin{proof}[Proof of Lemma \ref{lem:coverii}]
    We consider the discrete function class below
    \begin{equation}
        \bar{\mathcal{F}}_{\bar{\epsilon}} = \left\{f:[-x_{\max},x_{\max}]\rightarrow\mathbb{R} \;\bigg|\; \substack{f(x)=\sum_{i=1}^{k} \bar{w}_i^{(2)}\phi\left(\bar{w}_i^{(1)}x+\bar{b}_i^{(1)}\right)+\bar{b}^{(2)}\\
        \text{s.t.}\; \theta_j\in \bar{\epsilon}\cdot \mathbb{Z}\cap[-\rho,\rho],\;\forall\;j\in[3k+1]} \right\},
    \end{equation}
    where $\theta_j$ is the $j$-th element of $(\bar{w}_1^{(1)},\cdots,\bar{w}_k^{(1)},\bar{b}_1^{(1)},\cdots,\bar{b}_k^{(1)},\bar{w}_1^{(2)},\cdots,\bar{w}_k^{(2)},\bar{b}^{(2)})^T\in\mathbb{R}^{3k+1}$ and $\bar{\epsilon}\cdot\mathbb{Z}$ is the set $\{\bar{\epsilon}\cdot i, \; i \in\mathbb{Z}\}$. Since for each element, the number of choices is bounded by $\frac{2\rho}{\bar{\epsilon}}+1$, the total cardinality of $\bar{\mathcal{F}}_{\bar{\epsilon}}$ is bounded by $\left(\frac{2\rho}{\bar{\epsilon}}+1\right)^{3k+1}$.

    For each function $f\in\mathcal{F}_\rho$ with corresponding parameter $\theta$ ($\|\theta\|_\infty\leq\rho$), we choose function $\bar{f}$ from $\bar{\mathcal{F}}_{\bar{\epsilon}}$ with corresponding parameter $\bar{\theta}$ such that $|\bar{\theta}_j-\theta_j|$ is minimized for all $j\in[3k+1]$. According to our definition of $\bar{\mathcal{F}}_{\bar{\epsilon}}$, we have for all $j\in[3k+1]$, $|\bar{\theta}_j-\theta_j|\leq \bar{\epsilon}$. Therefore, it holds that
    \begin{equation}
        \begin{split}
            \|f-\bar{f}\|_\infty\leq&\sum_{i=1}^k \left\|w_i^{(2)}\phi\left(w_i^{(1)}x+b_i^{(1)}\right)-\bar{w}_i^{(2)}\phi\left(\bar{w}_i^{(1)}x+\bar{b}_i^{(1)}\right)\right\|_\infty + \left|b^{(2)}-\bar{b}^{(2)}\right|\\\leq&
            \sum_{i=1}^k \left\|w_i^{(2)}\phi\left(w_i^{(1)}x+b_i^{(1)}\right)-\bar{w}_i^{(2)}\phi\left(w_i^{(1)}x+b_i^{(1)}\right)\right\|_\infty \\&+\sum_{i=1}^k \left\|\bar{w}_i^{(2)}\phi\left(w_i^{(1)}x+b_i^{(1)}\right)-\bar{w}_i^{(2)}\phi\left(\bar{w}_i^{(1)}x+\bar{b}_i^{(1)}\right)\right\|_\infty +\bar{\epsilon} \\\leq&  k\bar{\epsilon}\cdot \rho(x_{\max}+1)+k\rho(x_{\max}+1)\bar{\epsilon} +\bar{\epsilon}\\\leq& 5\max\{x_{\max},1\}k\rho\bar{\epsilon}\leq \epsilon,
        \end{split}
    \end{equation}
    where the last inequality is by choosing $\bar{\epsilon}=\frac{\epsilon}{5\max\{x_{\max},1\}k\rho}$. 

    Therefore, the $\epsilon$-covering number $N_\epsilon$ of function class $\mathcal{F}_\rho$ with respect to $\|\cdot\|_{\infty}$ satisfies 
    \begin{equation}
        N_\epsilon\leq \left(1+\frac{10\max\{x_{\max},1\}k\rho^2}{\epsilon}\right)^{3k+1},
    \end{equation}
    which implies that
    \begin{equation}
        \log N_\epsilon\leq 4k\log\left(\frac{11\max\{x_{\max},1\}k\rho^2}{\epsilon}\right).
    \end{equation}
\end{proof}

Now we provide an upper bound of $\MSE(f_\theta)$ under the assumption that $\theta$ is optimized, which means that the empirical error of $f_\theta$ is smaller than that of $f_0$, \emph{i.e.}
\begin{equation}
    \frac{1}{2n}\sum_{i=1}^n\left(f_\theta(x_i)-y_i\right)^2\leq\frac{1}{2n}\sum_{i=1}^n \left(f_0(x_i)-y_i\right)^2.
\end{equation}
Below we state the improved upper bound of mean squared error under such assumptions.

\begin{lemma}\label{lem:mse}
    Assume $\epsilon_i$'s are independently sampled from $\mathcal{N}(0,\sigma^2)$ for some $\sigma>0$, if Assumption \ref{ass:bound} holds and $f_{\theta}\in\mathcal{F}_\rho$ is optimized, then with probability at least $1-\delta$, it holds that
    \begin{equation}
        \MSE(f_{\theta})\leq O\left(\frac{\sigma^2 k\log\left(\frac{\max\{x_{\max},1\}kn\rho}{\delta}\right)}{n}\right).
    \end{equation}
\end{lemma}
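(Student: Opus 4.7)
\textbf{Proof plan for Lemma \ref{lem:mse}.} The plan is to use the classical \emph{basic inequality plus metric entropy} argument for bounded least-squares regression. From the optimized assumption, expanding $\sum_i(f_\theta(x_i)-y_i)^2 \le \sum_i(f_0(x_i)-y_i)^2$ with $y_i=f_0(x_i)+\epsilon_i$ and cancelling the $\sum_i\epsilon_i^2$ terms yields the basic inequality
\[
    S_\theta := \sum_{i=1}^n \bigl(f_\theta(x_i) - f_0(x_i)\bigr)^2 \;\leq\; 2 \sum_{i=1}^n \epsilon_i \bigl(f_\theta(x_i) - f_0(x_i)\bigr) \;=:\; 2 Z(f_\theta).
\]
So the task reduces to controlling $\sup_{f_\theta \in \mathcal{F}_\rho} Z(f_\theta)$ by a quantity growing like $\sqrt{S_\theta}$ times a factor involving the covering number, then using a self-bounding argument.

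Next, I would invoke Lemma \ref{lem:coverii} to take an $\epsilon$-cover of $\mathcal{F}_\rho$ in $\|\cdot\|_\infty$ of cardinality $N_\epsilon$ with $\log N_\epsilon \leq 4k\log(11\max\{x_{\max},1\}k\rho^2/\epsilon)$. For each fixed $\tilde f$ in the cover, $Z(\tilde f)$ is a centered Gaussian with variance $\sigma^2 S_{\tilde f}$, so by the Gaussian tail bound and a union bound, with probability at least $1-\delta/3$,
\[
    |Z(\tilde f)| \;\leq\; \sigma \sqrt{2 \log(6N_\epsilon/\delta) \cdot S_{\tilde f}} \qquad \text{for every $\tilde f$ in the cover.}
\]
For a general $f_\theta \in \mathcal{F}_\rho$, pick $\tilde f$ in the cover with $\|f_\theta-\tilde f\|_\infty \leq \epsilon$. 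Then $|Z(f_\theta)-Z(\tilde f)| \leq \epsilon \sum_i |\epsilon_i|$, which is at most $2\epsilon n \sigma\sqrt{\log(6n/\delta)}$ with probability $\geq 1-\delta/3$ by standard concentration on $\max_i |\epsilon_i|$, and $|\sqrt{S_\theta}-\sqrt{S_{\tilde f}}| \leq \epsilon \sqrt n$.

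Combining these ingredients yields a self-bounding inequality of the form $S_\theta \leq 2A\sqrt{S_\theta} + 2A\epsilon\sqrt n + B$, where $A := \sigma\sqrt{2\log(6N_\epsilon/\delta)}$ and $B := 4\epsilon n \sigma \sqrt{\log(6n/\delta)}$. Applying AM--GM ($2A\sqrt{S_\theta} \leq \tfrac12 S_\theta + 2A^2$) and absorbing the $\tfrac12 S_\theta$ onto the LHS gives $S_\theta \leq 4A^2 + 4A\epsilon\sqrt n + 2B$. Choosing $\epsilon = 1/n$ (using Assumption \ref{ass:bound} so that $\sqrt{S_\theta}$ and $\sqrt{S_{\tilde f}}$ are each $O(k\rho^2 x_{\max}\sqrt n)$, keeping all error terms well-defined) makes the $A\epsilon\sqrt n$ and $B$ contributions $O(\sigma \sqrt{\log(n/\delta)})$, strictly lower order than $A^2 = O(\sigma^2 k \log(\max\{x_{\max},1\}kn\rho/\delta))$. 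Dividing by $n$ yields the claimed $\widetilde O(\sigma^2 k / n)$ bound.

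The main obstacle is arranging the self-bounding step cleanly: the available cover is in $\|\cdot\|_\infty$, not in the empirical $L^2$ pseudonorm $\sqrt{S_\cdot}$, so one must carefully track both the bias between $S_{\tilde f}$ and $S_\theta$ and the deterministic approximation to $Z(f_\theta)$ via $\sum_i |\epsilon_i|$ (rather than a sharper empirical-process increment). Fortunately $\mathcal{F}_\rho$ is a bounded parametric class of dimension $3k+1$, so a coarse scale $\epsilon = 1/n$ is enough and no Dudley chaining is needed, which is precisely what keeps the final rate at the parametric $\widetilde O(k/n)$ rather than the nonparametric rates appearing in Theorem \ref{thm:gp} and Theorem \ref{thm:under}.
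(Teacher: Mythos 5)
Your proposal is correct and follows essentially the same route as the paper's proof: the basic inequality from the ``optimized'' assumption, the $\|\cdot\|_\infty$-cover of $\mathcal{F}_\rho$ from Lemma \ref{lem:coverii}, Gaussian concentration plus a union bound over the cover, control of the discretization error via $\max_i|\epsilon_i|$, and a final self-bounding (quadratic) step. The only differences are cosmetic --- your triangle-inequality comparison of $\sqrt{S_\theta}$ and $\sqrt{S_{\tilde f}}$ in place of the paper's boundedness-based bound relating the two squared sums, the coarser choice $\epsilon = 1/n$ versus the paper's $\epsilon \asymp 1/(k^2\rho^2 n^2 \max\{x_{\max},1\})$, and AM--GM instead of explicitly solving the quadratic inequality.
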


\begin{proof}[Proof of Lemma \ref{lem:mse}]
    Since $\frac{1}{2n}\sum_{i=1}^n\left(f_\theta(x_i)-y_i\right)^2\leq\frac{1}{2n}\sum_{i=1}^n \left(f_0(x_i)-y_i\right)^2$, we have
    \begin{equation}
        \frac{1}{2}\MSE(f_\theta)=\frac{1}{2n}\sum_{i=1}^n \left(f_\theta(x_i)-f_0(x_i)\right)^2\leq\frac{1}{n}\sum_{i=1}^n \epsilon_i\left(f_\theta(x_i)-f_0(x_i)\right).
    \end{equation}
    For the optimized function $f_\theta$, we choose a function $\bar{f}$ from the $\epsilon$-covering set in Lemma \ref{lem:coverii} such that $\|\bar{f}-f_\theta\|_\infty\leq \epsilon$. Due to identical analysis as \eqref{equ:uniform}, with probability $1-\frac{\delta}{2}$, $\max_i\{|\epsilon_i|\}\leq 2\sigma\sqrt{\log\left(\frac{4n}{\delta}\right)}$.
    Under this high-probability event, it holds that
    \begin{equation}
        \frac{1}{n}\sum_{i=1}^n \epsilon_i\left(f_\theta(x_i)-f_0(x_i)\right)\leq \frac{1}{n}\sum_{i=1}^n \epsilon_i\left(\bar{f}(x_i)-f_0(x_i)\right)+2\sigma\epsilon\sqrt{\log\left(\frac{4n}{\delta}\right)}.
    \end{equation}

    According to Lemma \ref{lem:coverii}, the $\epsilon$-covering number $N_\epsilon$ of function class $\mathcal{F}_\rho$ with respect to $\|\cdot\|_{\infty}$ satisfies $\log N_\epsilon\leq 4k\log\left(\frac{11\max\{x_{\max},1\}k\rho^2}{\epsilon}\right)$. Due to the concentration of Gaussian distribution and a union bound over the covering set, with probability $1-\frac{\delta}{2}$, for all $\bar{f}$ in the covering set,
    \begin{equation}
        \sum_{i=1}^n \epsilon_i\left(\bar{f}(x_i)-f_0(x_i)\right)\leq 2\sigma\sqrt{\sum_{i=1}^n\left(\bar{f}(x_i)-f_0(x_i)\right)^2\cdot\log\left(\frac{4N_\epsilon}{\delta}\right)}.
    \end{equation}

    Combining the two high-probability events, with probability $1-\delta$, we have
    \begin{equation}
        \begin{split}
            &\frac{1}{2}\MSE(f_\theta)\leq\frac{1}{n}\sum_{i=1}^n \epsilon_i\left(\bar{f}(x_i)-f_0(x_i)\right)+2\sigma\epsilon\sqrt{\log\left(\frac{4n}{\delta}\right)}\\\leq&\frac{2\sigma}{n}\sqrt{\sum_{i=1}^n\left(\bar{f}(x_i)-f_0(x_i)\right)^2\cdot\log\left(\frac{4N_\epsilon}{\delta}\right)}+2\sigma\epsilon\sqrt{\log\left(\frac{4n}{\delta}\right)}\\\leq&
            \frac{2\sigma}{n}\sqrt{\sum_{i=1}^n\left(\bar{f}(x_i)-f_0(x_i)\right)^2\cdot 4k\log\left(\frac{11\max\{x_{\max},1\}k\rho^2}{\epsilon\delta}\right)}+2\sigma\epsilon\sqrt{\log\left(\frac{4n}{\delta}\right)}\\\leq&
            \frac{2\sigma}{n}\sqrt{\left[\sum_{i=1}^n\left(f_\theta(x_i)-f_0(x_i)\right)^2+10nk\rho^2\max\{x_{\max},1\}\epsilon\right]\cdot 4k\log\left(\frac{11\max\{x_{\max},1\}k\rho^2}{\epsilon\delta}\right)}\\&+2\sigma\epsilon\sqrt{\log\left(\frac{4n}{\delta}\right)}\\\leq&
            O\left(\sigma\sqrt{k\log\left(\frac{\max\{x_{\max},1\}k\rho}{\epsilon\delta}\right)\cdot\frac{\MSE(f_\theta)}{n}}\right)+O\left(\sigma 
            k\rho\sqrt{\frac{\max\{x_{\max},1\}\epsilon\cdot\log\left(\frac{\max\{x_{\max},1\}k\rho}{\epsilon\delta}\right)}{n}}\right)\\&+O\left(\sigma\epsilon\sqrt{\log\left(\frac{n}{\delta}\right)}\right),
        \end{split}
    \end{equation}
    where the forth inequality results from Assumption \ref{ass:bound}. Selecting $\epsilon=\frac{1}{k^2\rho^2 n^2 \max\{x_{\max},1\}}$ and solving the second order inequality, it holds that 
    \begin{equation}
        \MSE(f_{\theta})\leq O\left(\frac{\sigma^2 k\log\left(\frac{\max\{x_{\max},1\}kn\rho}{\delta}\right)}{n}\right).
    \end{equation}
\end{proof}

Plugging in the upper bound for MSE (Lemma \ref{lem:mse}) to Theorem \ref{thm:tv1}, we have the corollary below.

\begin{corollary}[Improved version of Corollary \ref{cor:tv1}]\label{cor:bettertv1}
    For a stable solution $f=f_\theta$ of GD with step size $\eta$ where $\loss$ is twice differentiable at $\theta$, assume Assumption \ref{ass:bound} holds and $f\in\mathcal{F}_\rho$ is optimized, with probability $1-\delta$, the function $f$ satisfies 
    \begin{equation}\label{equ:improvemse}
        \int_{-x_{\max}}^{x_{\max}}|f^{\prime\prime}(x)|g(x)dx\leq\frac{1}{\eta}-\frac{1}{2}+O\left(\sigma x_{\max}\sqrt{\frac{k\log\left(\frac{\max\{x_{\max},1\}kn\rho}{\delta}\right)}{n}}\right),
    \end{equation}
    where the randomness is over the noises $\{\epsilon_i\}$ and $g(x)$ is defined as \eqref{equ:g}.
\end{corollary}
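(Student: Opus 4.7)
The plan is to simply instantiate the second conclusion of Theorem \ref{thm:tv1} with the improved MSE bound of Lemma \ref{lem:mse} in place of the crude constant bound (Lemma \ref{lem:constantmse}) that was used to prove Corollary \ref{cor:tv1}. Because $f = f_\theta \in \cF(\eta,0,\cD)$ is a stable solution at which $\cL$ is twice differentiable, Theorem \ref{thm:tv1} (with $\tfrac{\lambda_{\max}(\nabla^2\cL(\theta))}{2}$ replaced by $\tfrac{1}{\eta}$) yields, with probability $1-\delta/2$ over the noises,
\begin{equation*}
\int_{-x_{\max}}^{x_{\max}} |f''(x)|\, g(x)\, dx
\;\le\; \frac{1}{\eta} - \frac{1}{2}
+ \widetilde{O}\!\left(\sigma x_{\max}\cdot \min\!\left\{1,\sqrt{k/n}\right\}\right)
+ x_{\max}\sqrt{\MSE(f)}.
\end{equation*}
This will be the template into which the improved $\MSE$ bound is substituted.

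Next, under Assumption \ref{ass:bound} and the ``optimized'' assumption on $f$, Lemma \ref{lem:mse} gives, with probability $1-\delta/2$,
\begin{equation*}
\MSE(f_\theta) \;\le\; O\!\left(\frac{\sigma^2 k \log\!\big(\max\{x_{\max},1\}\,k n \rho/\delta\big)}{n}\right).
\end{equation*}
Taking square roots, $x_{\max}\sqrt{\MSE(f)} \le O\!\big(\sigma x_{\max}\sqrt{k\log(\cdot)/n}\big)$, which is of the same order (up to logarithmic factors) as the Gaussian-complexity term $\widetilde{O}(\sigma x_{\max}\sqrt{k/n})$ already present in Theorem \ref{thm:tv1}. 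A union bound combines the two high-probability events, and summing the two matching terms absorbs everything into the single expression in the corollary's right-hand side.

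The only nontrivial step is the one already carried out in Lemma \ref{lem:mse}: cover $\cF_\rho$ at scale $\epsilon$ using Lemma \ref{lem:coverii} (yielding $\log N_\epsilon = O(k\log(k\rho x_{\max}/\epsilon))$), apply Gaussian concentration to $\tfrac{1}{n}\sum_i \epsilon_i(\bar{f}(x_i)-f_0(x_i))$ for each cover element $\bar f$, union-bound, and control the residual discretization error via $\|\bar f - f_\theta\|_\infty \le \epsilon$. Starting from the ``optimized'' inequality $\tfrac{1}{n}\sum_i(f(x_i)-y_i)^2 \le \tfrac{1}{n}\sum_i \epsilon_i^2$ and expanding, this produces a self-bounding inequality of the schematic form
\begin{equation*}
\MSE(f) \;\le\; c\,\sigma\sqrt{\MSE(f)\cdot k\log(\cdot)/n} + (\text{lower order terms in } \epsilon),
\end{equation*}
which is then solved by the standard AM-GM / quadratic-formula trick and $\epsilon$ is tuned to $1/\mathrm{poly}(k,\rho,n,x_{\max})$ so that the residual is negligible. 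This self-bounding step is the main obstacle; everything else in the corollary is pure substitution.
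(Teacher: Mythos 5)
Your proposal is correct and follows essentially the same route as the paper: the paper proves Corollary \ref{cor:bettertv1} by plugging the improved MSE bound of Lemma \ref{lem:mse} (itself established via the covering argument of Lemma \ref{lem:coverii}, Gaussian concentration, and the self-bounding step you describe) into Theorem \ref{thm:tv1}, with both the Gaussian-complexity term and the $x_{\max}\sqrt{\MSE(f)}$ term being of order $\widetilde{O}(\sigma x_{\max}\sqrt{k/n})$ and absorbed into the single remainder. Your explicit union-bound bookkeeping is a minor presentational addition but does not change the argument.
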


\begin{remark}
    The additional term here $\widetilde{O}\left(\sigma x_{\max}\sqrt{\frac{k}{n}}\right)$ improves over the constant additional term in Corollary \ref{cor:tv1} if $k<n$. In addition, if $\frac{n}{k}$ converges to infinity (\emph{e.g.} $k=n^{1-\alpha}$ for some $\alpha>0$ and $n$ is sufficiently large), the additional term could vanish.  
\end{remark}

\section{Proof for the Generalization Gap Bound (Theorem~\ref{thm:gp})}\label{app:gengap}
Recall that the generalization gap of function $f$ is defined as 
\begin{equation}
    \GG(f):=\left|\mathbb{E}\left[\left(f(x)-y\right)^2\right]-\frac{1}{n}\sum_{i=1}^n\left(f(x_i)-y_i\right)^2\right|, 
\end{equation}
where $(x,y)$ is a new sample from the data distribution. The generalization gap measures the difference of the expected testing error and the training loss, and a small generalization gap implies that the model is not overfitting.

For a stable solution $f=f_\theta$ of GD with step size $\eta$ where $\loss$ is twice differentiable at $\theta$, we first derive a corresponding analysis for the weighted $\TV^{(1)}$ bound. Recall that the empirical loss is still defined as $\loss(f)=\frac{1}{2n}\sum_{i=1}^n(f(x_i)-y_i)^2$. Then using the same calculation as \eqref{equ:main}, we have

\begin{equation}
\begin{split}
    \frac{2}{\eta}\geq&\lambda_{\max}(\nabla^2_{\theta}\loss(\theta))\geq v^T \nabla^2_{\theta}\loss(\theta) v\\=&\lambda_{\max}\left(\frac{1}{n}\sum_{i=1}^n(\nabla_\theta f_{\theta}(x_i))(\nabla_\theta f_{\theta}(x_i))^T\right)+\frac{1}{n}\sum_{i=1}^n(f_{\theta}(x_i)-y_i)v^T \nabla^2_{\theta}f(x_i)v\\\geq&\underbrace{\lambda_{\max}\left(\frac{1}{n}\sum_{i=1}^n(\nabla_\theta f_{\theta}(x_i))(\nabla_\theta f_{\theta}(x_i))^T\right)}_{(\star)}-\underbrace{\sqrt{\frac{1}{n}\sum_{i=1}^n (f_\theta(x_i)-y_i)^2}\cdot\sqrt{\frac{1}{n}\sum_{i=1}^n (v^T \nabla^2_{\theta}f(x_i)v)^2}}_{(\#)},
\end{split}
\end{equation}
where the last inequality results from Cauchy-Schwarz inequality.

According to Lemma \ref{lem:lambda}, the term $(\star)$ satisfies
\begin{equation}
    (\star)=\lambda_{\max}\left(\frac{1}{n}\sum_{i=1}^n(\nabla_\theta f_{\theta}(x_i))(\nabla_\theta f_{\theta}(x_i))^T\right)\geq 1+2\int_{-x_{\max}}^{x_{\max}}|f^{\prime\prime}(x)|g(x)dx.
\end{equation}

In addition, the term $(\#)$ satisfies (w.l.o.g, we assume $x_{\max}\geq 1$)
\begin{equation}
    |(\#)|\leq 2x_{\max}\sqrt{\frac{1}{n}\sum_{i=1}^n (f_\theta(x_i)-y_i)^2}.
\end{equation}

Under the assumption that the learned function $f=f_\theta$ satisfies $\|f\|_\infty\leq D$ and $|y_i|\leq D$ for all $i\in[n]$, it further implies that 
\begin{equation}
    |(\#)|\leq 4x_{\max}D.
\end{equation}

Combining the results, the following $\TV^{(1)}$ bound holds.
\begin{lemma}\label{lem:last}
    Assume the data distribution satisfies that for all possible $(x,y)$ from the distribution, $|x|\leq x_{\max}$ and $|y|\leq D$, if the function $f=f_\theta$ is a stable solution of GD with step size $\eta$ such that $\loss$ is twice differentiable at $\theta$ and $\|f\|_\infty\leq D$, then it holds that
    \begin{equation}
        \int_{-x_{\max}}^{x_{\max}}|f^{\prime\prime}(x)|g(x)dx\leq\frac{1}{\eta}-\frac{1}{2}+2x_{\max}D,
    \end{equation}
    where $g(x)$ is defined as \eqref{equ:g}.
\end{lemma}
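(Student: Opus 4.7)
The plan is to obtain Lemma \ref{lem:last} as a direct specialization of the first (distribution-free) part of Theorem \ref{thm:tv1}, i.e., equation \eqref{eq:flatness2tv_agnostic}. The noise-model machinery used for equation \eqref{eq:flatness2tv} (the Gaussian-complexity term from the decomposition into $(i)$ and $(ii)$) is not needed here, since the only assumption we are given about the labels is the uniform bound $|y|\leq D$.  Because of that, I will route the residual Hessian term through $\sqrt{2\cL(\theta)}$ rather than through $\sigma$ and $\mathrm{MSE}(f)$.

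First, I invoke \eqref{eq:flatness2tv_agnostic}: since $\cL$ is twice differentiable at $\theta$,
\begin{equation*}
\int_{-x_{\max}}^{x_{\max}} |f''(x)|\,g(x)\,dx \;\leq\; \frac{\lambda_{\max}(\nabla^2_\theta \cL(\theta))}{2} - \frac{1}{2} + x_{\max}\sqrt{2\cL(\theta)}.
\end{equation*}
Next, use that $f$ is a stable solution at step size $\eta$, i.e.\ $f_\theta\in\cF(\eta,0,\cD)$. By the definition in \eqref{eq:masterset}, $\lambda_{\max}(\nabla^2_\theta\cL(\theta)) \leq 2/\eta$, so the first term is at most $1/\eta$.

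Finally, I bound the training loss using the boundedness hypotheses. Under $\|f\|_\infty \leq D$ and $|y_i|\leq D$ for every $i\in[n]$, each residual satisfies $(f(x_i)-y_i)^2 \leq (|f(x_i)|+|y_i|)^2 \leq 4D^2$, hence $2\cL(\theta) = \tfrac{1}{n}\sum_{i=1}^n (f(x_i)-y_i)^2 \leq 4D^2$, and therefore $\sqrt{2\cL(\theta)} \leq 2D$. Combining these three bounds yields $\int |f''(x)|g(x)\,dx \leq 1/\eta - 1/2 + 2x_{\max}D$, which is exactly the stated conclusion.

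There is no real obstacle: the conceptual work was already done inside Theorem \ref{thm:tv1}, where the hard step was controlling the curvature term $\tfrac{1}{n}\sum_i (f_\theta(x_i)-y_i) v^T\nabla^2_\theta f(x_i)v$ via Cauchy-Schwarz together with the uniform operator-norm bound $|v^T\nabla^2_\theta f(x_i)v|\leq 2\max\{x_{\max},1\}$ from Lemma \ref{lem:operatornorm}. The only genuinely new content of Lemma \ref{lem:last} is converting the ``empirical loss'' remainder $\sqrt{2\cL(\theta)}$ into the constant $2D$ using the data-domain and range bounds, which is the natural distribution-free replacement for the noise-plus-MSE decomposition used in the nonparametric version. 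This lemma is exactly what is needed downstream in the generalization-gap argument (Theorem \ref{thm:gp}), because it supplies a data-independent upper bound on $\int |f''|g\,dx$ that can then be transferred to $\int_\cI |f''|\,dx$ via the lower bound $g(x)\geq c$ on $\cI$.
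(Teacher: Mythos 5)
Your proposal is correct and follows essentially the same route as the paper: the paper's proof in Appendix~\ref{app:gengap} re-derives the $(\star)$/$(\#)$ decomposition and the Cauchy--Schwarz bound for $(\#)$ rather than citing \eqref{eq:flatness2tv_agnostic} directly, but the mathematics is identical, and the final step—using $\|f\|_\infty\leq D$ and $|y_i|\leq D$ to bound $\sqrt{2\cL(\theta)}\leq 2D$ (equivalently, $|(\#)|\leq 4x_{\max}D$)—is exactly the same. Your version is arguably cleaner in that it reuses the already-proved agnostic inequality \eqref{eq:flatness2tv_agnostic} as a black box rather than repeating its derivation.
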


Note that we assume there exists some interval $\mathcal{I}\subset[-x_{\max},x_{\max}]$ and a universal constant $c>0$ such that with probability $1-\delta/2$ (randomness over the dataset), $g(x)\geq c$ for all $x\in\mathcal{I}$ (w.l.o.g. we assume $c<1$), which further implies that 
\begin{equation}
    \int_{\mathcal{I}}|f^{\prime\prime}(x)|dx\leq\frac{1}{c}\left(\frac{1}{\eta}-\frac{1}{2}+2x_{\max}D\right).
\end{equation}
We base on the high-probability event above in the following discussions. Next we bound the metric entropy of the possible output function class.

\begin{lemma}\label{lem:entropygp}
    Define the set $\mathbb{T}_3=\left\{f:\mathcal{I}\rightarrow\mathbb{R}\;\big|\;\|f\|_\infty\leq D,\; \int_{\mathcal{I}}|f^{\prime\prime}(x)|dx\leq \frac{1}{c}\left(\frac{1}{\eta}-\frac{1}{2}+2x_{\max}D\right)\right\}$, then the metric entropy with respect to $\|\cdot\|_\infty$ satisfies that
    \begin{equation}
        \log N(\epsilon,\mathbb{T}_3,\|\cdot\|_\infty)\leq O\left(\sqrt{\frac{x_{\max}\left(\frac{1}{\eta}-\frac{1}{2}+2x_{\max}D\right)}{\epsilon}}\right),
    \end{equation}
    where $O$ also absorbs the constant $c$.
\end{lemma}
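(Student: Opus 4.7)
The plan is to reduce $\mathbb{T}_3$ to the canonical unit class $\mathbb{T}_1$ from Lemma \ref{lem:entropy} by two successive affine rescalings (one in the domain, one in the range), and then pull back an $\epsilon$-cover. Let $L$ denote the length of $\mathcal{I}$, so $L\le 2x_{\max}$, and set $M:=\frac{1}{c}\bigl(\frac{1}{\eta}-\frac{1}{2}+2x_{\max}D\bigr)$ for the $\TV^{(1)}$ budget of functions in $\mathbb{T}_3$.

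First, I would shift and rescale the input: let $\tilde f(t)=f\bigl(\tfrac{a+b}{2}+\tfrac{L}{2}t\bigr)$ for $t\in[-1,1]$, where $[a,b]=\mathcal{I}$. The change of variables gives $\tilde f''(t)=(L/2)^2 f''(x)$ and $dt=(2/L)dx$, so
\begin{equation}
\int_{-1}^{1}|\tilde f''(t)|\,dt \;=\; \frac{L}{2}\int_{\mathcal{I}}|f''(x)|\,dx \;\le\; \frac{L}{2}M \;\le\; x_{\max}M,
\end{equation}
and $\|\tilde f\|_\infty=\|f\|_\infty\le D$. Since the change of variable is a bijection between $\mathcal{I}$ and $[-1,1]$, it preserves the $\|\cdot\|_\infty$ metric.

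Next, rescale the range: set $K:=\max\{D,\, x_{\max}M\}$ and $h(t):=\tilde f(t)/K$. Then $\|h\|_\infty\le 1$ and $\int_{-1}^{1}|h''(t)|\,dt\le 1$, so $h\in\mathbb{T}_1$. A $(\epsilon/K)$-cover of $\mathbb{T}_1$ in $\|\cdot\|_\infty$ thus yields an $\epsilon$-cover of $\mathbb{T}_3$ in $\|\cdot\|_\infty$ by scaling back up by $K$ and undoing the domain rescaling. Invoking Lemma \ref{lem:entropy},
\begin{equation}
\log N(\epsilon,\mathbb{T}_3,\|\cdot\|_\infty) \;\le\; \log N(\epsilon/K,\mathbb{T}_1,\|\cdot\|_\infty) \;\le\; C_1\sqrt{K/\epsilon}.
\end{equation}

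Finally, bound $K$ in the stated form. Since $x_{\max}\ge 1$ (the standing convention in the paper) and $c<1$, we have $D\le \tfrac{1}{c}\cdot 2x_{\max}^2 D \le x_{\max} M$, so $K\le x_{\max} M = O\bigl(x_{\max}(\tfrac{1}{\eta}-\tfrac{1}{2}+2x_{\max}D)\bigr)$ with the constant $c$ absorbed into $O(\cdot)$. Substituting into the display above gives the claimed bound. The only place one has to be careful is tracking the two rescalings simultaneously so that both the sup-norm constraint and the $\TV^{(1)}$ budget get normalized by the \emph{same} factor $K$; taking $K=\max\{D,x_{\max}M\}$ handles this cleanly, and the rest is bookkeeping.
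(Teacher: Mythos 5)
Your proof is correct and follows essentially the same route as the paper: reduce $\mathbb{T}_3$ to the canonical unit class $\mathbb{T}_1$ from Lemma~\ref{lem:entropy} by an affine rescaling of domain and range, then pull back an $\epsilon$-cover. The only difference is cosmetic but in fact slightly cleaner: you map $\mathcal{I}$ directly onto $[-1,1]$, whereas the paper first embeds $\mathbb{T}_3$ into the analogous class $\mathbb{T}_4$ over the full interval $[-x_{\max},x_{\max}]$ (which implicitly requires an extension argument) before rescaling; your choice of the common normalization $K=\max\{D,x_{\max}M\}$ makes explicit the check the paper does implicitly when it verifies $|g(x)|<1$.
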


\begin{proof}[Proof of Lemma \ref{lem:entropygp}]
    Define the set $\mathbb{T}_4$ as:
    \begin{equation}
        \mathbb{T}_4=\left\{f:[-x_{\max},x_{\max}]\rightarrow\mathbb{R}\;\big|\;\|f\|_\infty\leq D,\; \int_{-x_{\max}}^{x_{\max}}|f^{\prime\prime}(x)|dx\leq \frac{1}{c}\left(\frac{1}{\eta}-\frac{1}{2}+2x_{\max}D\right)\right\}.
    \end{equation}
    Note that the metric entropy of $\mathbb{T}_3$ is bounded by that of $\mathbb{T}_4$, therefore we directly prove the upper bound of $\log N(\epsilon,\mathbb{T}_4,\|\cdot\|_\infty)$.

    Let the set $\mathbb{T}_1=\left\{f:[-1,1]\rightarrow\mathbb{R}\;\bigg|\;\int_{-1}^1|f^{\prime\prime}(x)|dx\leq 1, \;|f(x)|\leq 1\right\}$ (as in Lemma \ref{lem:entropy}). For a fixed $\epsilon>0$, according to Lemma \ref{lem:entropy}, there exists a $\frac{\epsilon}{\frac{1}{c}\left(\frac{1}{\eta}-\frac{1}{2}+2x_{\max}D\right) x_{\max}}$-covering set of $\mathbb{T}_1$ with respect to $\|\cdot\|_\infty$, denoted as $\{h_i(x)\}_{i\in[N]}$, whose cardinality $N$ satisfies 
    \begin{equation}
        \log N\leq C_1\sqrt{\frac{\frac{1}{c}\left(\frac{1}{\eta}-\frac{1}{2}+2x_{\max}D\right) x_{\max}}{\epsilon}}.
    \end{equation}
    We define $g_i(x)=\frac{1}{c}\left(\frac{1}{\eta}-\frac{1}{2}+2x_{\max}D\right)x_{\max}h_i(\frac{x}{x_{\max}})$ for all $i\in[N]$. Then $g_i$'s are all defined on $[-x_{\max},x_{\max}]$. Obviously, we have $\{g_i(x)\}_{i\in[N]}$ also has cardinality $N$.

    For any $f(x)\in\mathbb{T}_4$, we define $g(x)=\frac{1}{\frac{1}{c}\left(\frac{1}{\eta}-\frac{1}{2}+2x_{\max}D\right) x_{\max}}f(x\cdot x_{\max})$ which is defined on $[-1,1]$. We now show that $g(x)\in\mathbb{T}_1$. First of all, for any $x\in[-x_{\max},x_{\max}]$, we have 
    \begin{equation}
        |g(x)|\leq \frac{D}{\frac{1}{c}\left(\frac{1}{\eta}-\frac{1}{2}+2x_{\max}D\right) x_{\max}}<1.
    \end{equation} 
    Meanwhile, it holds that
    \begin{equation}
        \begin{split}
            \int_{-1}^1 |g^{\prime\prime}(x)|dx=&\int_{-1}^1 \frac{1}{\frac{1}{c}\left(\frac{1}{\eta}-\frac{1}{2}+2x_{\max}D\right) x_{\max}}\cdot x_{\max}^2|f^{\prime\prime}(x\cdot x_{\max})|dx\\\leq& \frac{1}{\frac{1}{c}\left(\frac{1}{\eta}-\frac{1}{2}+2x_{\max}D\right)}\int_{-x_{\max}}^{x_{\max}}|f^{\prime\prime}(x)|dx\leq 1.
        \end{split}
    \end{equation}
    Combining the two results, we have $g\in\mathbb{T}_1$. Therefore, there exists some $h_i$ such that $\|g-h_i\|_\infty\leq\frac{\epsilon}{\frac{1}{c}\left(\frac{1}{\eta}-\frac{1}{2}+2x_{\max}D\right) x_{\max}}$. Since $f(x)=\frac{1}{c}\left(\frac{1}{\eta}-\frac{1}{2}+2x_{\max}D\right) x_{\max}g(\frac{x}{x_{\max}})$, $$\|g_i-f\|_\infty=\frac{1}{c}\left(\frac{1}{\eta}-\frac{1}{2}+2x_{\max}D\right) x_{\max}\|h_i-g\|_\infty\leq \epsilon.$$

    In conclusion, $\{g_i\}_{i\in[N]}$ is an $\epsilon$-covering of $\mathbb{T}_4$ with respect to $\|\cdot\|_\infty$. Moreover, the cardinality of $\{g_i\}_{i\in[N]}$ is $N$, which finishes the proof.
\end{proof}

Now we provide our main result about the generalization gap (restricted to $\mathcal{I}$). Below we define the generalization gap restricted to $\mathcal{I}$:
\begin{equation}
    \GG_{\mathcal{I}}(f):=\left|\mathbb{E}_{\mathcal{I}}\left[\left(f(x)-y\right)^2\right]-\frac{1}{n_{\mathcal{I}}}\sum_{x_i\in\mathcal{I}}\left(f(x_i)-y_i\right)^2\right|, 
\end{equation}
where $\mathbb{E}_{\mathcal{I}}$ means that $(x,y)$ is a new sample from the data distribution conditioned on $x\in\mathcal{I}$ and $n_{\mathcal{I}}$ is the number of data points in $\mathcal{D}$ such that $x_i\in\mathcal{I}$.

\begin{theorem}[Restate Theorem \ref{thm:gp}]\label{thm:regp}
    Let $\cP$ be a joint distribution of $(x,y)$ supported on $[-x_{\max},x_{\max}]\times[-D,D]$. Assume the dataset $\cD\sim \cP^n$ i.i.d.
    For any fixed interval $\mathcal{I}\subset[-x_{\max},x_{\max}]$ and a universal constant $c>0$ such that with probability $1-\delta/2$, $g(x)\geq c$ for all $x\in\mathcal{I}$, if the function $f=f_\theta$ is a stable solution of GD with step size $\eta$ such that $\loss$ is twice differentiable at $\theta$ and $\|f\|_\infty\leq D$, with probability $1-\delta$ (randomness over the dataset), the generalization gap restricted to $\mathcal{I}$ satisfies 
    {\small
    \begin{equation}
        \GG_{\mathcal{I}}(f):=\left|\mathbb{E}_{\mathcal{I}}\left[\left(f(x)-y\right)^2\right]-\frac{1}{n_{\mathcal{I}}}\sum_{x_i\in\mathcal{I}}\left(f(x_i)-y_i\right)^2\right|\leq \widetilde{O}\left(D^{\frac{9}{5}}\left[\frac{x_{\max}\left(\frac{1}{\eta}-\frac{1}{2}+2x_{\max}D\right)}{n_{\mathcal{I}}^2}\right]^{\frac{1}{5}}\right),
    \end{equation}
    }
    where $\mathbb{E}_{\mathcal{I}}$ means that $(x,y)$ is a new sample from the data distribution conditioned on $x\in\mathcal{I}$ and $n_{\mathcal{I}}$ is the number of data points in $\mathcal{D}$ such that $x_i\in\mathcal{I}$.
\end{theorem}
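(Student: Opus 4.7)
\textbf{Proof Plan for Theorem \ref{thm:regp}.}
The plan is to translate the stability constraint on $\theta$ into a deterministic complexity bound on the function $f = f_\theta$, then apply a standard covering-plus-concentration argument restricted to the interval $\mathcal{I}$. First I would invoke Lemma \ref{lem:last} (whose derivation I sketch by the same Cauchy--Schwarz manipulation as in the proof of Theorem \ref{thm:tv1}, except that the cross term $(\#)$ is bounded by $2x_{\max}\sqrt{2\mathcal{L}(\theta)} \leq 4x_{\max}D$ using $\|f\|_\infty,|y|\leq D$) to get
\begin{equation*}
\int_{-x_{\max}}^{x_{\max}}|f''(x)|\,g(x)\,dx \;\leq\; \tfrac{1}{\eta}-\tfrac{1}{2}+2x_{\max}D.
\end{equation*}
Conditioning on the $1-\delta/2$ event that $g(x)\geq c$ on $\mathcal{I}$, this yields $\int_{\mathcal{I}}|f''(x)|dx \leq \tfrac{1}{c}(\tfrac{1}{\eta}-\tfrac{1}{2}+2x_{\max}D)$, so $f|_{\mathcal{I}}$ lies in the bounded-variation class $\mathbb{T}_3$ from Lemma \ref{lem:entropygp}, whose $\epsilon$-covering number in $\|\cdot\|_\infty$ satisfies $\log N_\epsilon = O(\sqrt{x_{\max}(1/\eta-1/2+2x_{\max}D)/\epsilon})$.

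Next I would run a uniform concentration argument over an $\epsilon$-cover $\{\bar f_j\}$ of $\mathbb{T}_3$. For each fixed $\bar f_j$, conditionally on $n_{\mathcal{I}}$, the restricted data $\{(x_i,y_i):x_i\in\mathcal{I}\}$ are i.i.d.\ from $\cP$ conditioned on $x\in\mathcal{I}$, and the per-sample squared loss $(\bar f_j(x)-y)^2$ is bounded in $[0,4D^2]$. Hoeffding's inequality together with a union bound over the cover gives, with probability $1-\delta/2$,
\begin{equation*}
\max_{j}\left|\mathbb{E}_{\mathcal{I}}[(\bar f_j(x)-y)^2]-\tfrac{1}{n_{\mathcal{I}}}\sum_{x_i\in\mathcal{I}}(\bar f_j(x_i)-y_i)^2\right| \;\leq\; O\!\left(D^2\sqrt{\tfrac{\log N_\epsilon+\log(1/\delta)}{n_{\mathcal{I}}}}\right).
\end{equation*}
Approximating $f$ by its nearest neighbor $\bar f$ in the cover and using $|(f-y)^2-(\bar f-y)^2|\leq 4D\,\|f-\bar f\|_\infty \leq 4D\epsilon$, both the population risk on $\mathcal{I}$ and the empirical risk on $\mathcal{I}$ differ by at most $4D\epsilon$ from the corresponding quantities for $\bar f$, hence
\begin{equation*}
\GG_{\mathcal{I}}(f) \;\leq\; 8D\epsilon + O\!\left(D^2\left(\frac{x_{\max}(1/\eta-1/2+2x_{\max}D)}{\epsilon\, n_{\mathcal{I}}^2}\right)^{1/4}\right),
\end{equation*}
absorbing logarithmic factors (including $\log(1/\delta)$) into $\widetilde{O}$.

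Finally I would optimize over $\epsilon$. Setting the two error terms equal gives $\epsilon \asymp D^{4/5}\bigl(x_{\max}(1/\eta-1/2+2x_{\max}D)/n_{\mathcal{I}}^2\bigr)^{1/5}$, at which point both terms are of order $D^{9/5}\bigl(x_{\max}(1/\eta-1/2+2x_{\max}D)/n_{\mathcal{I}}^2\bigr)^{1/5}$, matching the statement. Combining the two $1-\delta/2$ events (the curvature lower bound on $g|_{\mathcal{I}}$ and the uniform Hoeffding bound) via a union bound yields the claim with overall probability $1-\delta$.

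\textbf{Main obstacle.}  The technical heart is the uniform concentration over an uncountable class of squared losses; the clean way is to reduce to Hoeffding on a finite cover and handle the approximation error via the $4D\epsilon$ Lipschitz-in-$\|\cdot\|_\infty$ property of $(f-y)^2$, which is why we need both the $\ell_\infty$ bound $\|f\|_\infty\leq D$ and the metric-entropy rate $\epsilon^{-1/2}$ of Lemma \ref{lem:entropygp}. A secondary subtlety is that $n_{\mathcal{I}}$ is itself random and $g$ depends on $\cD$; both are handled by conditioning on $n_{\mathcal{I}}$ and on the $g(x)\geq c$ event, since the Hoeffding step only uses i.i.d.\ structure of the conditional samples given $n_{\mathcal{I}}$.
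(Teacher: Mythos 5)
Your proposal is correct and follows essentially the same route as the paper's proof: Lemma \ref{lem:last} plus the $g\geq c$ event to place $f|_{\mathcal{I}}$ in the class $\mathbb{T}_3$ of Lemma \ref{lem:entropygp}, then a covering argument with Hoeffding and a union bound, a $O(D\epsilon)$ Lipschitz approximation step, and optimization over $\epsilon$ yielding the same $n_{\mathcal{I}}^{-2/5}$ rate and $D^{9/5}$ dependence. No meaningful differences from the paper's argument.
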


\begin{proof}[Proof of Theorem \ref{thm:regp}]
    We base on the following event that holds with probability $1-\delta/2$:
    \begin{equation}
        \int_{\mathcal{I}}|f^{\prime\prime}(x)|dx\leq\frac{1}{c}\left(\frac{1}{\eta}-\frac{1}{2}+2x_{\max}D\right).
    \end{equation}
    Then the output function $f\in\mathbb{T}_3$ defined in Lemma \ref{lem:entropygp}.
    
    For a fixed $\epsilon>0$, according to Lemma \ref{lem:entropygp}, there exists an $\epsilon$-covering set of $\mathbb{T}_3$ (with respect to $\|\cdot\|_\infty$) whose cardinality $N$ satisfies that
    \begin{equation}
        \log N\leq O\left(\sqrt{\frac{x_{\max}\left(\frac{1}{\eta}-\frac{1}{2}+2x_{\max}D\right)}{\epsilon}}\right).
    \end{equation}
    For a fixed function $\bar{f}$ in the covering set, since the data set $\{(x_i,y_i)\}_{x_i\in\mathcal{I}}$ is still i.i.d. from the data distribution conditioned on $x\in\mathcal{I}$, Hoeffding's inequality (Lemma \ref{lem:hoeffding}) implies that with probability $1-\delta$, it holds that
    \begin{equation}
        \left|\mathbb{E}_{\mathcal{I}}\left[\left(\bar{f}(x)-y\right)^2\right]-\frac{1}{n_{\mathcal{I}}}\sum_{x_i\in\mathcal{I}}\left(\bar{f}(x_i)-y_i\right)^2\right|\leq 4D^2\cdot\sqrt{\frac{\log(2/\delta)}{n_{\mathcal{I}}}}.
    \end{equation}
    Together with a union bound over the covering set, we have with probability $1-\delta/2$, for all $\bar{f}$ in the covering set,
    \begin{equation}
        \begin{split}
            &\left|\mathbb{E}_{\mathcal{I}}\left[\left(\bar{f}(x)-y\right)^2\right]-\frac{1}{n_{\mathcal{I}}}\sum_{x_i\in\mathcal{I}}\left(\bar{f}(x_i)-y_i\right)^2\right|\leq 4D^2\cdot\sqrt{\frac{\log(4N/\delta)}{n_{\mathcal{I}}}}\\\leq& O\left(D^2\cdot\frac{\left[x_{\max}\left(\frac{1}{\eta}-\frac{1}{2}+2x_{\max}D\right)\right]^{\frac{1}{4}}\log(1/\delta)^{\frac{1}{2}}}{n_{\mathcal{I}}^{\frac{1}{2}}\epsilon^{\frac{1}{4}}}\right).
        \end{split}
    \end{equation}
    Under such high probability event, for any $f\in\mathbb{T}_3$, let $\bar{f}$ be a function in the covering set such that $\|f-\bar{f}\|_\infty \leq\epsilon$. Then it holds that
    \begin{equation}
    \begin{split}
        &\left|\mathbb{E}_{\mathcal{I}}\left[\left(f(x)-y\right)^2\right]-\frac{1}{n_{\mathcal{I}}}\sum_{x_i\in\mathcal{I}}\left(f(x_i)-y_i\right)^2\right|\\\leq&
        \left|\mathbb{E}_{\mathcal{I}}\left[\left(\bar{f}(x)-y\right)^2\right]-\frac{1}{n_{\mathcal{I}}}\sum_{x_i\in\mathcal{I}}\left(\bar{f}(x_i)-y_i\right)^2\right|+O(D\epsilon)\\\leq&
        O(D\epsilon)+O\left(D^2\cdot\frac{\left[x_{\max}\left(\frac{1}{\eta}-\frac{1}{2}+2x_{\max}D\right)\right]^{\frac{1}{4}}\log(1/\delta)^{\frac{1}{2}}}{n_{\mathcal{I}}^{\frac{1}{2}}\epsilon^{\frac{1}{4}}}\right)\\\leq&
        O\left(D^{\frac{9}{5}}\left[\frac{x_{\max}\left(\frac{1}{\eta}-\frac{1}{2}+2x_{\max}D\right)\log(1/\delta)^2}{n_{\mathcal{I}}^2}\right]^{\frac{1}{5}}\right),
    \end{split}
    \end{equation}
    where the last inequality results from selecting the $\epsilon$ that minimizes the objective.
\end{proof}

\subsection{Choice of the Interval Under Uniform Distribution}\label{app:exampleoni}
In this part, we discuss the choice of the interval $\mathcal{I}$ under the case that the marginal distribution of $x$ is the uniform distribution on $[-x_{\max},x_{\max}]$. For simplicity, we assume that $x_{\max}=1$. 

\begin{lemma}\label{lem:choiceofi}
    Assume that $x\sim\mathrm{Unif}([-1,1])$, then we can choose $\mathcal{I}$ to be $[-\frac{2}{3},\frac{2}{3}]$. In this way, with probability $1-24e^{-\frac{n}{96}}$, for any $x\in\mathcal{I}$, it holds that
    \begin{equation}
        g(x)\geq \frac{1}{4320}.
    \end{equation}
    As a result, when $n\geq 96\log\left(\frac{48}{\delta}\right)$, we can choose $\mathcal{I}=[-\frac{2}{3},\frac{2}{3}]$ and $c=\frac{1}{4320}$.
\end{lemma}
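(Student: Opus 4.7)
The plan is to reduce the lemma to two ingredients: (1) an exact computation of the population version of $g^-$ and $g^+$ under the uniform law, and (2) monotonicity-plus-Hoeffding concentration of their empirical counterparts, so that we only need to verify the bound at the two endpoints $x=\pm 2/3$.

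\textbf{Step 1 (Population bound).} For $X\sim\mathrm{Unif}[-1,1]$ and $x\in[-2/3,2/3]$, direct integration gives $\mathbb{P}(X<x)=(x+1)/2\geq 1/6$, and since $X\mid X<x$ is $\mathrm{Unif}[-1,x]$, also $\mathbb{E}[X\mid X<x]=(x-1)/2$ and $\mathbb{E}[x-X\mid X<x]=(x+1)/2$. Hence
\begin{equation*}
\mathbb{P}^2(X<x)\,\mathbb{E}[x-X\mid X<x]=\tfrac{(x+1)^3}{8}\geq \tfrac{1}{216},
\end{equation*}
and using $\sqrt{1+\mathbb{E}[X\mid X<x]^2}\geq 1$, the population $g^-(x)\geq 1/216$. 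The analogue for $g^+$ follows by symmetry.

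\textbf{Step 2 (Monotonicity reduction).} Write the empirical versions as sample means
\begin{equation*}
\hat{p}^-(x):=\tfrac{1}{n}\sum_{j=1}^n\mathbbm{1}(x_j<x),\qquad \hat{m}^-(x):=\tfrac{1}{n}\sum_{j=1}^n (x-x_j)_+,
\end{equation*}
so that the empirical version of $\mathbb{P}^2(X<x)\mathbb{E}[x-X\mid X<x]$ equals $\hat{p}^-(x)\hat{m}^-(x)$. Both factors are non-decreasing in $x$, so for all $x\in[-2/3,2/3]$,
\begin{equation*}
\hat{p}^-(x)\hat{m}^-(x)\geq \hat{p}^-(-2/3)\,\hat{m}^-(-2/3).
\end{equation*}
The same with $\hat{p}^+(x):=\tfrac{1}{n}\sum_j\mathbbm{1}(x_j>x)$ and $\hat{m}^+(x):=\tfrac{1}{n}\sum_j(x_j-x)_+$ is non-increasing, so the infimum over $\mathcal{I}$ is attained at $x=2/3$. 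This reduces the uniform-in-$x$ claim to lower bounds on four specific sample means.

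\textbf{Step 3 (Hoeffding at the endpoints).} Each of $\hat{p}^-(-2/3),\hat{p}^+(2/3)$ is a Bernoulli average with population mean $1/6$ and summands in $[0,1]$; each of $\hat{m}^-(-2/3),\hat{m}^+(2/3)$ is an average with population mean $1/36$ and summands in $[0,1/3]$. Applying Hoeffding with deviation half the mean gives
\begin{equation*}
\hat{p}^\pm\geq 1/12 \text{ with prob.\ } \geq 1-2e^{-n/72},\qquad \hat{m}^\pm\geq 1/72 \text{ with prob.\ } \geq 1-2e^{-n/288},
\end{equation*}
(for the $\hat{m}^\pm$ bound the range $[0,1/3]$ produces the factor $9$ in the Hoeffding exponent). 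A union bound over these four events, bounded crudely by $24e^{-n/96}$ after absorbing constants, yields on the good event
\begin{equation*}
\hat{p}^-(-2/3)\hat{m}^-(-2/3)\geq \tfrac{1}{864},\qquad \hat{p}^+(2/3)\hat{m}^+(2/3)\geq \tfrac{1}{864}.
\end{equation*}

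\textbf{Step 4 (Assembly).} Combining Steps 2 and 3 and using $\sqrt{1+\mathbb{E}^2}\geq 1$ to strip off the last factor, $g(x)=\min\{g^-(x),g^+(x)\}\geq 1/864\geq 1/4320$ uniformly on $\mathcal{I}=[-2/3,2/3]$, with the claimed failure probability $24e^{-n/96}$. The second assertion $n\geq 96\log(48/\delta)$ is obtained by setting $24e^{-n/96}\leq \delta$. The only mildly tricky step is the monotonicity observation in Step 2, which is what allows us to avoid covering/DKW-type uniform bounds; the rest is routine bookkeeping of constants, and indeed the final numerical constant $1/4320$ is loose (our argument already gives $1/864$), leaving ample slack to absorb the Hoeffding losses into the stated probability bound.
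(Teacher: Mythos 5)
Your approach is genuinely different from the paper's, and the key idea --- observing that the empirical quantity $\mathbb{P}^2(X<x)\,\mathbb{E}[x-X\mid X<x]$ equals $\hat p^-(x)\hat m^-(x)$, a product of two monotone functions of $x$, so that the uniform bound reduces to the two endpoints $\pm 2/3$ --- is correct and cleaner than the paper's partition argument. (The paper instead chops $[-1,1]$ into $12$ subintervals, applies a two-sided multiplicative Chernoff bound to the occupancy count $P_i$ of each subinterval, and then lower-bounds $\mathbb{P}(X<x)$ and $\mathbb{E}[x-X\mid X<x]$ in terms of the $P_i$. Your reduction avoids the partition entirely and is easier to verify.)

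However, Step 3 has a genuine quantitative gap that is not just loose bookkeeping. With deviation $t=1/72$ (half the mean) for $\hat m^\pm$, Hoeffding on variables in $[0,1/3]$ gives tail $\exp(-18nt^2)=\exp(-n/288)$. The union bound therefore yields a failure probability of order $e^{-n/288}$, and the claim that this is ``bounded crudely by $24e^{-n/96}$ after absorbing constants'' is false for large $n$: the ratio is $\tfrac{1}{6}e^{n/144}\to\infty$, so an exponential rate deficit cannot be hidden in a multiplicative constant. As written, the proof does not establish the lemma's stated high-probability bound (e.g.\ already for $n\gtrsim 250$). The statement that the slack from $1/864$ to $1/4320$ covers this is a category error: that slack is in the lower bound on $g$, not in the failure probability exponent, and the two are not fungible as you used them.

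The fix is to spend the slack properly, by choosing a larger deviation for $\hat m^\pm$. Taking $t=1/(24\sqrt{3})$ gives $18nt^2 = n/96$, hence tail $e^{-n/96}$, while $1/36 - 1/(24\sqrt{3}) \approx 3.7\times 10^{-3}$; together with $\hat p^\pm \geq 1/12$ (tail $e^{-n/72}\leq e^{-n/96}$) this gives $\hat p^\pm\hat m^\pm \geq \tfrac{1}{12}\bigl(\tfrac{1}{36}-\tfrac{1}{24\sqrt{3}}\bigr) \approx 3.1\times 10^{-4} > 1/4320$, and the union bound over the four one-sided events is at most $4e^{-n/96}\leq 24e^{-n/96}$. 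With this adjustment your argument is correct and strictly simpler than the paper's.
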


\begin{proof}[Proof of Lemma \ref{lem:choiceofi}]
    Let the intervals $A_i$ be defined as below: for all $i\in[12]$,
    \begin{equation}
        A_i=\left[\frac{i-7}{6},\frac{i-6}{6}\right].
    \end{equation}
    For a fixed $n$, let $P_i$ denote the number of data points in $A_i$, which follows Binomial distribution with $p=\frac{1}{12}$. Then for a fixed $i\in[12]$, according to Multiplicative Chernoff bound (Lemma \ref{lem:chernoff}), with probability $1-2e^{-\frac{n}{96}}$, it holds that
    \begin{equation}
        \frac{1}{2}\cdot\frac{n}{12}\leq P_i\leq 2\cdot\frac{n}{12}.
    \end{equation}
    Then according to a union bound, with probability $1-24e^{-\frac{n}{96}}$, the above inequality holds for all $i\in[12]$. Under the case above, we prove that $g(x)=\min\{g^-(x),g^+(x)\}\geq \frac{1}{4320}$ for all $x\in\mathcal{I}$.

    Recall that $g^-(x)=\mathbb{P}^2(X<x)\mathbb{E}[x-X|X<x]\sqrt{1+(\mathbb{E}[X|X<x])^2}$ where $X$ is drawn from the empirical distribution of the data (a sample chosen uniformly from $\{x_j\}$). Then for any $x\geq-\frac{2}{3}$,
    \begin{equation}
    \begin{split}
        &\mathbb{P}^2(X<x)\geq \left(\frac{P_1+P_2}{n}\right)^2\geq \frac{1}{144},\\
        &\mathbb{E}[x-X|X<x]\geq \frac{\frac{n}{24}\cdot \frac{1}{6}}{\frac{n}{24}+\frac{n}{6}}=\frac{1}{30},\\
        &\sqrt{1+(\mathbb{E}[X|X<x])^2}\geq 1.
    \end{split}
    \end{equation}
    Combining the inequalities, we have $g^-(x)\geq \frac{1}{4320}$ for all $x\in\mathcal{I}$. The result for $g^+(x)$ can be proven similarly, which implies that with probability $1-24e^{-\frac{n}{96}}$, $g(x)\geq\frac{1}{4320}$ for all $x\in\mathcal{I}$.

    The implication can be proven by direct calculation.
\end{proof}

\begin{remark}
    We only consider the case where the data is sampled from uniform distribution, while we remark that for various distributions that are not heavy-tailed (\emph{e.g.} Gaussian distribution, Laplace distribution), a similar result can be derived. Some empirical illustrations are shown in Appendix \ref{app:moreill} below.
\end{remark}

\subsection{More Illustrations of the Choice of the Interval}\label{app:moreill}

\begin{figure}[h!]
    \centering
    \includegraphics[width=0.37\textwidth]{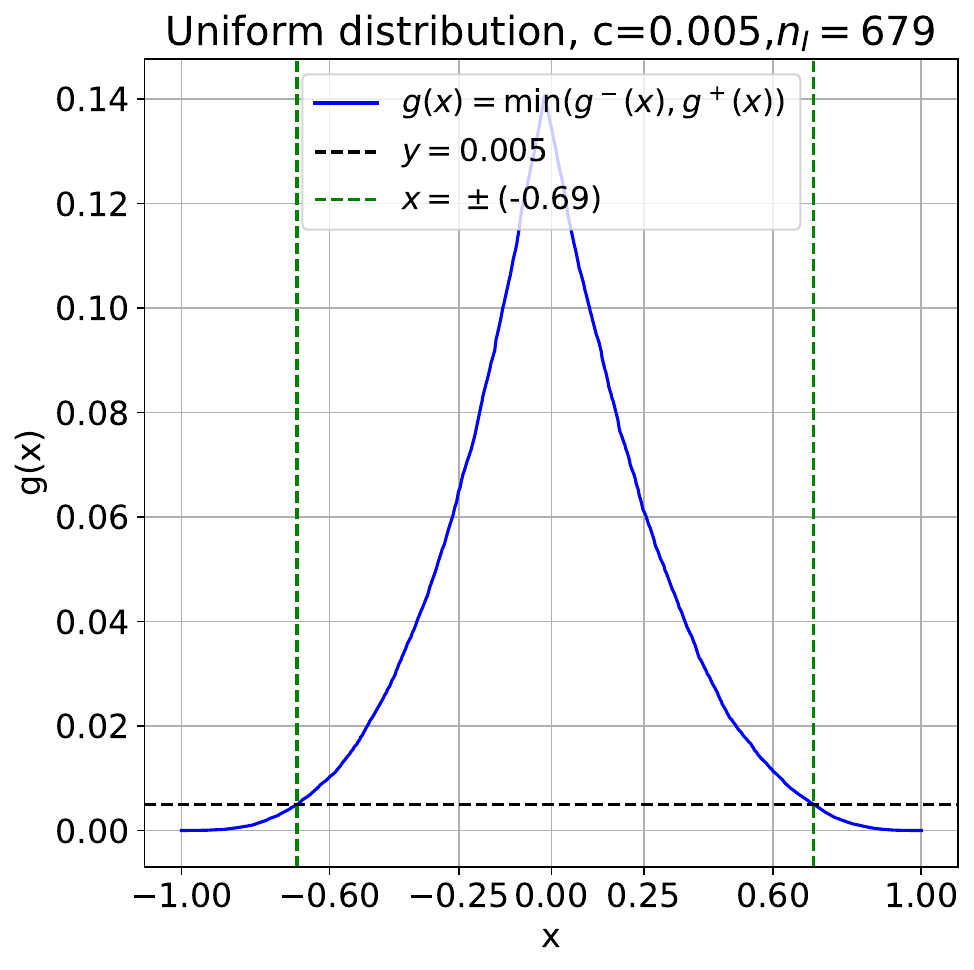}
    \includegraphics[width=0.37\textwidth]{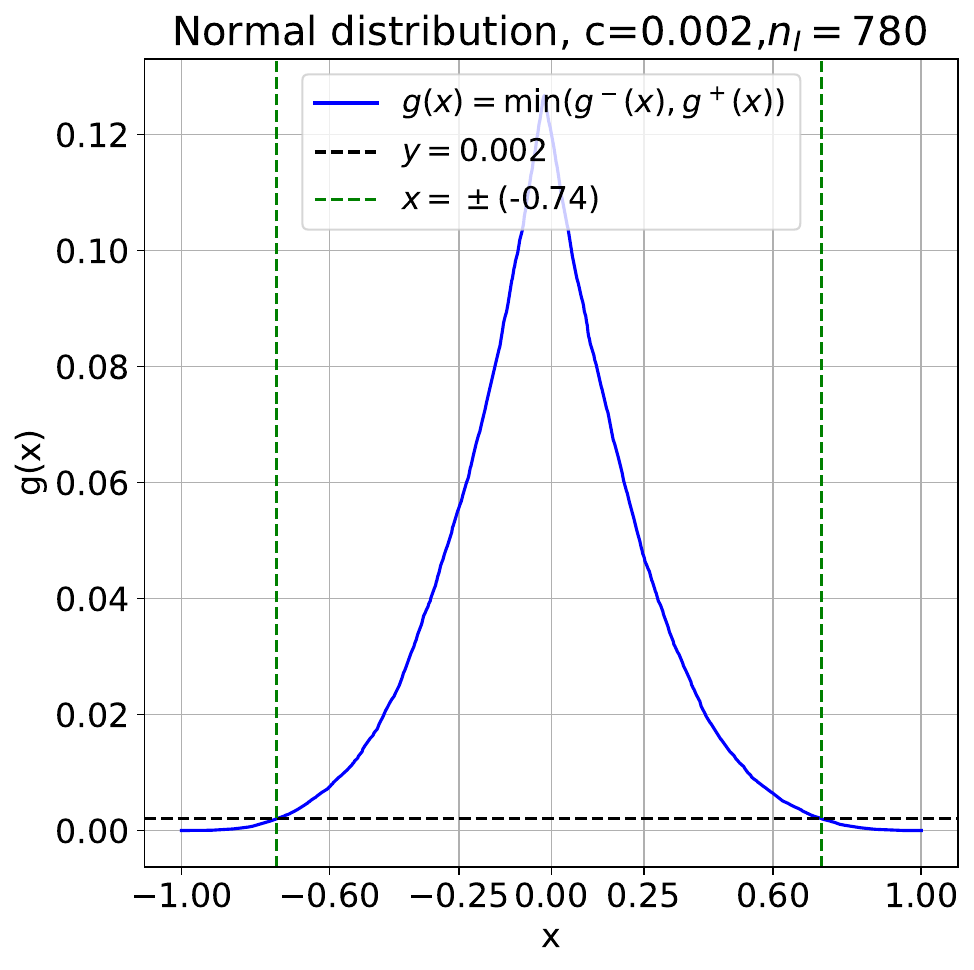}\\
    \includegraphics[width=0.37\textwidth]{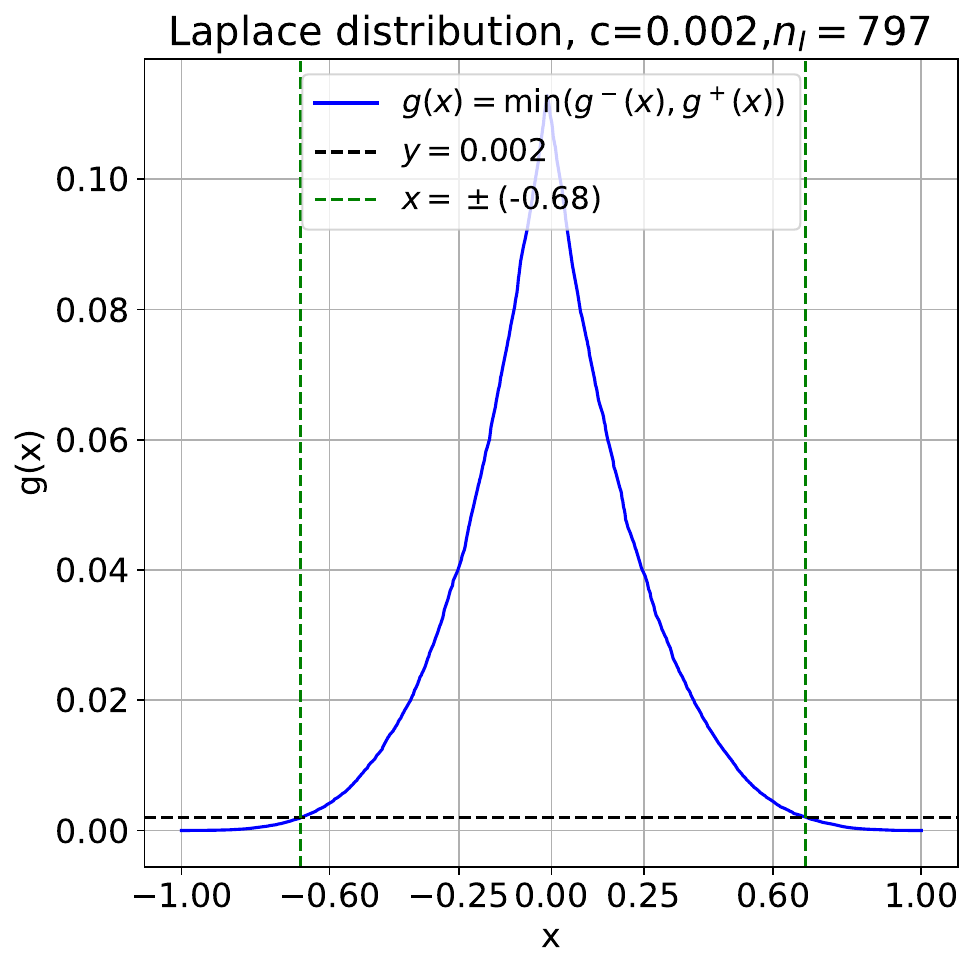}
    \includegraphics[width=0.4\textwidth]{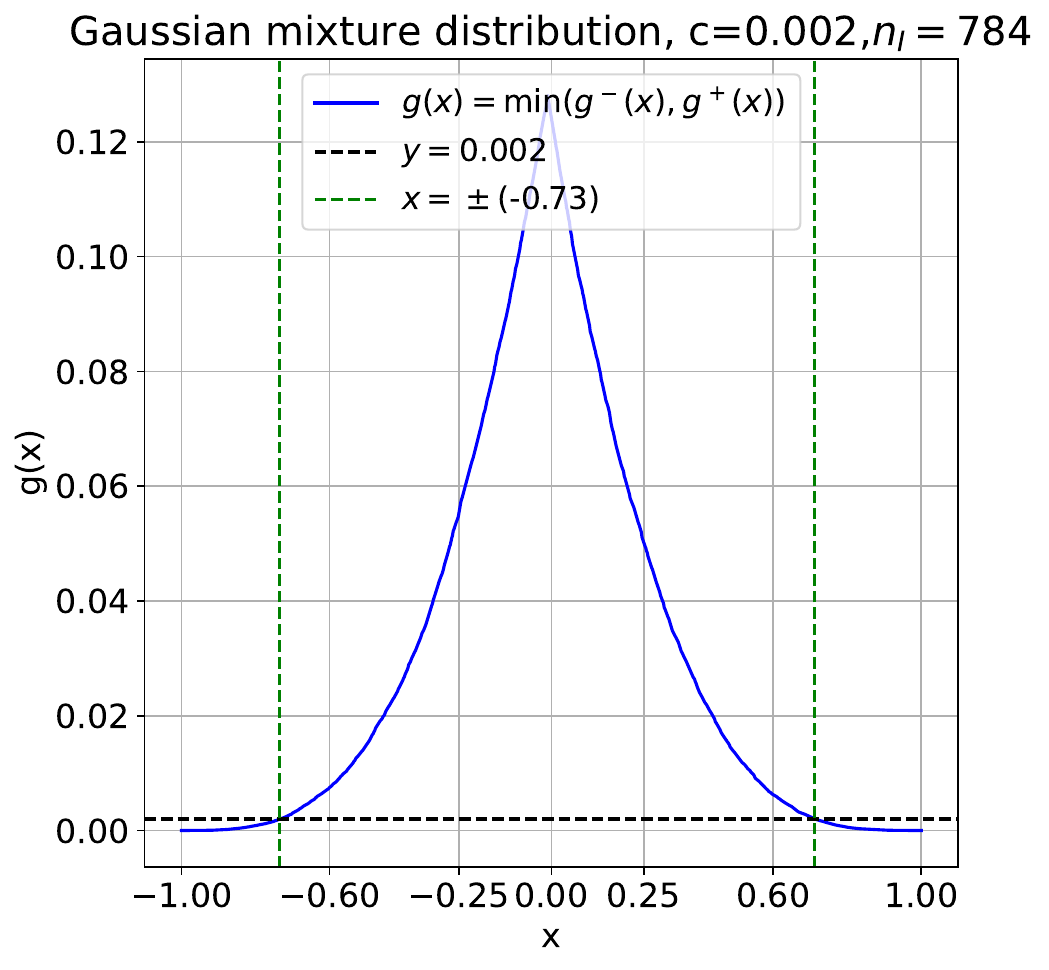}
    \caption{Illustration of the choice of interval $\mathcal{I}$ and the corresponding lower bound $c$ for $g(x)$.}\label{fig:ill}
    \label{fig:intervals}
\end{figure}

In this part, we consider the choice of $\mathcal{I}$ and $c$ under different data distributions. More specifically, we consider the following four distributions of $x$. 

\noindent\textbf{Uniform distribution}: $x\sim\mathrm{Unif}([-1,1])$.

\noindent\textbf{Normal distribution}: $x\sim\mathcal{N}(0,1)$.

\noindent\textbf{Laplace distribution}: $x\sim\mathrm{Laplace}(0,1)$.

\noindent\textbf{Gaussian mixture distribution}: $x\sim\begin{cases}
    \mathcal{N}(-0.5,0.25)\;\;\;\;\text{with probability}\;\frac{1}{2},\\\mathcal{N}(0.5,0.25)\;\;\;\;\;\;\;\text{with probability}\;\frac{1}{2}.
\end{cases}$

Note that the distributions above are widely applied in Differential Privacy (DP) \citep{zhao2022differentially,qiao2023offline,qiao2023near}. For each distribution, we sample $n=1000$ data points from the distribution (conditional on $x\in[-1,1]$) and construct the $g(x)$ function. Then we choose the interval $\mathcal{I}$ and the corresponding lower bound $c$ of $g(x)$ over $\mathcal{I}$. From Figure \ref{fig:ill}, we find that for all of the four distributions, with a constant $c\geq 0.002$, the interval $\mathcal{I}$ can be chosen to incorporate a large portion of the data ($n_{\mathcal{I}}\geq 0.65 n$).

\section{Proof for the Refined MSE Bound (Theorem \ref{thm:under})}\label{app:mse}
In this part, we base on the same conditions in Corollary \ref{cor:tv1}, which is the weighted $\TV^{(1)}$ upper bound. For an output stable solution $f$ satisfying the conclusion of Corollary \ref{cor:tv1}, we have $\int_{-x_{\max}}^{x_{\max}}|f^{\prime\prime}(x)|g(x)dx\leq\frac{1}{\eta}-\frac{1}{2}+\widetilde{O}(\sigma x_{\max})$ and we denote the right-hand side by $S$. In addition, according to the assumption that $g(x)\geq c$ for any $x\in\mathcal{I}$, we further have $\int_{\mathcal{I}}|f^{\prime\prime}(x)|dx\leq\frac{S}{c}$. Now we bound the complexity of the possible output function class. 

According to the definition of two-layer ReLU network and our assumption that $\|\theta\|_\infty \leq \rho$, it holds that:
\begin{equation}
    |f(0)|=\left|\sum_{i=1}^k w_i^{(2)}\phi\left(b_i^{(1)}\right)+b^{(2)}\right|\leq k\rho^2+\rho.
\end{equation}

\begin{equation}
    |f^\prime(0)|\leq\sum_{i=1}^k\left|w_i^{(2)}w_i^{(1)}\right|\leq k\rho^2.
\end{equation}

Define the set $\mathbb{T}=\left\{f:\mathcal{I}\rightarrow\mathbb{R}\;\big|\;|f(0)|\leq k\rho^2+\rho,\; |f^\prime(0)|\leq k\rho^2,\;\int_{\mathcal{I}}|f^{\prime\prime}(x)|dx\leq \frac{S}{c}\right\}$. According to the inequalities above, the possible output function (if restricted to $\mathcal{I}$) belongs to $\mathbb{T}$. We begin with an analysis of the metric entropy of the intermediate function set $\mathbb{T}_2$.

\begin{lemma}\label{lem:entropy2}
    Assume the set $\mathbb{T}_2=\left\{f:[-x_{\max},x_{\max}]\rightarrow\mathbb{R}\;\big|\;f(0)=f^\prime(0)=0,\;\int_{-x_{\max}}^{x_{\max}}|f^{\prime\prime}(x)|dx\leq C_2\right\}$ for some constant $C_2>0$, and the metric is $\ell_\infty$ distance $\|\cdot\|_\infty$, then there exists a universal constant $C_1>0$ such  that for any $\epsilon>0$, the metric entropy of $(\mathbb{T}_2,\|\cdot\|_\infty)$ satisfies 
    \begin{equation}
        \log N(\epsilon,\mathbb{T}_2,\|\cdot\|_\infty)\leq C_1\sqrt{\frac{C_2  x_{\max}}{\epsilon}},
    \end{equation}
    where $C_1$ can be chosen as the same $C_1$ in Lemma \ref{lem:entropy}.
\end{lemma}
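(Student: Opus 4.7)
The plan is to reduce $\mathbb{T}_2$ to the unit class $\mathbb{T}_1$ from Lemma~\ref{lem:entropy} by affine rescaling, exploiting the boundary conditions $f(0)=f'(0)=0$ to obtain the required sup-norm control ``for free''. This closely mirrors the argument used inside the proof of Lemma~\ref{lem:entropygp}.

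First I would show that the boundary conditions together with $\int_{-x_{\max}}^{x_{\max}}|f''|\le C_2$ force a sup-norm bound. Since $f'(0)=0$, for any $x\in[-x_{\max},x_{\max}]$ we have $|f'(x)|=|\int_0^x f''(s)\,ds|\le \int_{-x_{\max}}^{x_{\max}}|f''|\le C_2$. Then $f(0)=0$ gives $|f(x)|=|\int_0^x f'(t)\,dt|\le x_{\max}C_2$, so every $f\in\mathbb{T}_2$ satisfies $\|f\|_\infty\le x_{\max}C_2$. This is the key quantitative input that turns a $\mathrm{TV}^{(1)}$ constraint into an $L^\infty$ one, and without the anchoring $f(0)=f'(0)=0$ it would fail.

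Next I would rescale. Given $f\in\mathbb{T}_2$, define $g:[-1,1]\to\mathbb{R}$ by $g(u):=\frac{1}{x_{\max}C_2}f(x_{\max}u)$. By the previous step $\|g\|_\infty\le 1$, and a change of variables gives
\begin{equation}
\int_{-1}^1 |g''(u)|\,du = \frac{x_{\max}^2}{x_{\max}C_2}\int_{-1}^{1}|f''(x_{\max}u)|\,du=\frac{1}{C_2}\int_{-x_{\max}}^{x_{\max}}|f''(y)|\,dy\le 1,
\end{equation}
so $g\in\mathbb{T}_1$. Note that $f\mapsto g$ is an isometry up to the scale factor $x_{\max}C_2$: for any two functions $f_1,f_2$ and their rescalings $g_1,g_2$, $\|f_1-f_2\|_\infty = x_{\max}C_2\,\|g_1-g_2\|_\infty$.

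Finally I would invoke Lemma~\ref{lem:entropy} on $\mathbb{T}_1$ at resolution $\epsilon':=\epsilon/(x_{\max}C_2)$ to obtain a covering set $\{h_i\}_{i=1}^N$ with $\log N\le C_1 (\epsilon')^{-1/2}=C_1\sqrt{x_{\max}C_2/\epsilon}$. Define the candidate cover of $\mathbb{T}_2$ by $\tilde h_i(x):=x_{\max}C_2\,h_i(x/x_{\max})$. For any $f\in\mathbb{T}_2$, picking $h_i$ within $\epsilon'$ of its rescaled version $g$ yields $\|f-\tilde h_i\|_\infty=x_{\max}C_2\|g-h_i\|_\infty\le \epsilon$, establishing the claim with the same constant $C_1$.

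There is no real obstacle: the only substantive step is verifying the $\|f\|_\infty$ bound so that the rescaled $g$ lands in $\mathbb{T}_1$, and everything else is bookkeeping about how $\epsilon$ and the TV/sup norms transform under the affine change of variables $x\mapsto x_{\max}u$.
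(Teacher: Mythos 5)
Your proposal is correct and follows essentially the same route as the paper's proof: use $f(0)=f'(0)=0$ to convert the $\TV^{(1)}$ constraint into the bounds $|f'|\leq C_2$ and $\|f\|_\infty\leq C_2 x_{\max}$, rescale by $g(u)=\frac{1}{C_2 x_{\max}}f(x_{\max}u)$ so that $g\in\mathbb{T}_1$, and pull back an $\frac{\epsilon}{C_2 x_{\max}}$-cover of $\mathbb{T}_1$ from Lemma \ref{lem:entropy}. The bookkeeping of the scale factor and the resulting entropy bound match the paper exactly.
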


\begin{proof}[Proof of Lemma \ref{lem:entropy2}]
    Let the set $\mathbb{T}_1=\left\{f:[-1,1]\rightarrow\mathbb{R}\;\bigg|\;\int_{-1}^1|f^{\prime\prime}(x)|dx\leq 1, \;|f(x)|\leq 1\right\}$ (as in Lemma \ref{lem:entropy}). For a fixed $\epsilon>0$, according to Lemma \ref{lem:entropy}, there exists a $\frac{\epsilon}{C_2 x_{\max}}$-covering set of $\mathbb{T}_1$ with respect to $\|\cdot\|_\infty$, denoted as $\{h_i(x)\}_{i\in[N]}$, whose cardinality $N$ satisfies 
    \begin{equation}
        \log N\leq C_1\sqrt{\frac{C_2 x_{\max}}{\epsilon}}.
    \end{equation}
    We define $g_i(x)=C_2x_{\max}h_i(\frac{x}{x_{\max}})$ for all $i\in[N]$. Then $g_i$'s are all defined on $[-x_{\max},x_{\max}]$. Obviously, we have $\{g_i(x)\}_{i\in[N]}$ also has cardinality $N$.

    For any $f(x)\in\mathbb{T}_2$, we define $g(x)=\frac{1}{C_2 x_{\max}}f(x\cdot x_{\max})$ which is defined on $[-1,1]$. We now show that $g(x)\in\mathbb{T}_1$. First of all, for any $x\in[-x_{\max},x_{\max}]$, we have $|f^\prime(x)|\leq\int_{-x_{\max}}^{x_{\max}}|f^{\prime\prime}(x)|dx\leq C_2$. Therefore, for any $x\in[-x_{\max},x_{\max}]$, $|f(x)|\leq C_2 x_{\max}$, which implies that $|g(x)|\leq 1$ for any $x\in[-1,1]$. Meanwhile, it holds that
    \begin{equation}
        \begin{split}
            \int_{-1}^1 |g^{\prime\prime}(x)|dx=&\int_{-1}^1 \frac{1}{C_2 x_{\max}}\cdot x_{\max}^2|f^{\prime\prime}(x\cdot x_{\max})|dx\\\leq& \frac{1}{C_2}\int_{-x_{\max}}^{x_{\max}}|f^{\prime\prime}(x)|dx\leq 1.
        \end{split}
    \end{equation}
    Combining the two results, we have $g\in\mathbb{T}_1$. Therefore, there exists some $h_i$ such that $\|g-h_i\|_\infty\leq\frac{\epsilon}{C_2 x_{\max}}$. Since $f(x)=C_2 x_{\max}g(\frac{x}{x_{\max}})$, $\|g_i-f\|_\infty=C_2 x_{\max}\|h_i-g\|_\infty\leq \epsilon$.

    In conclusion, $\{g_i\}_{i\in[N]}$ is an $\epsilon$-covering of $\mathbb{T}_2$ with respect to $\|\cdot\|_\infty$. Moreover, the cardinality of $\{g_i\}_{i\in[N]}$ is $N$, which finishes the proof.
\end{proof}

With Lemma \ref{lem:entropy2}, we are ready to bound the metric entropy of $\mathbb{T}$.

\begin{lemma}\label{lem:entropy3}
    Assume the metric is $\ell_\infty$ distance $\|\cdot\|_\infty$, then the metric entropy of $(\mathbb{T},\|\cdot\|_\infty)$ satisfies 
    \begin{equation}
        \log N(\epsilon,\mathbb{T},\|\cdot\|_\infty)\leq O\left(\sqrt{\frac{x_{\max}S}{\epsilon}}\right),
    \end{equation}
    where $S$ is the right-hand side of Corollary \ref{cor:tv1} and $O$ also absorbs the constant $c$.
\end{lemma}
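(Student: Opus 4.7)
The key observation is that $\mathbb{T}$ adds only two scalar ``degrees of freedom'' ($f(0)$ and $f'(0)$) on top of the set $\mathbb{T}_2$ of Lemma \ref{lem:entropy2}, so its log-covering number should only exceed that of $\mathbb{T}_2$ by an $O(\log(1/\epsilon))$ amount, which is dominated by the $\sqrt{1/\epsilon}$ bound of Lemma \ref{lem:entropy2}. To make this precise, I would decompose any $f\in\mathbb{T}$ around $0\in\mathcal{I}$ (WLOG; otherwise use any fixed reference point $x_0\in\mathcal{I}$) as
\begin{equation*}
    f(x) = f(0) + f'(0)\cdot x + g(x), \qquad g(x) := f(x) - f(0) - f'(0)\cdot x,
\end{equation*}
so that $g(0)=g'(0)=0$ and $g''=f''$ on $\mathcal{I}$. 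Extending $g$ past $\mathcal{I}$ by its one-sided tangent lines at the endpoints of $\mathcal{I}$ yields a $C^1$ function $\tilde g$ on $[-x_{\max},x_{\max}]$ with $\tilde g''$ supported in $\mathcal{I}$; hence $\tilde g\in\mathbb{T}_2$ with $C_2=S/c$.

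I would then construct an $\epsilon$-cover of $\mathbb{T}$ as the product of (i) an $(\epsilon/3)$-cover of $[-(k\rho^2+\rho),k\rho^2+\rho]$ for the $f(0)$-coordinate, of cardinality $O(k\rho^2/\epsilon)$; (ii) an $\bigl(\epsilon/(3x_{\max})\bigr)$-cover of $[-k\rho^2,k\rho^2]$ for the $f'(0)$-coordinate, of cardinality $O(k\rho^2 x_{\max}/\epsilon)$; and (iii) an $(\epsilon/3)$-cover of $\mathbb{T}_2$ with $C_2=S/c$, whose log-cardinality is bounded by $C_1\sqrt{3x_{\max}S/(c\epsilon)}$ by Lemma \ref{lem:entropy2}. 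A triangle inequality argument
\begin{equation*}
\|f-\tilde f\|_\infty\le|f(0)-\tilde a|+|f'(0)-\tilde b|\,x_{\max}+\|g-\tilde g\|_\infty\le\epsilon
\end{equation*}
(in $\|\cdot\|_\infty$ on $\mathcal{I}\subset[-x_{\max},x_{\max}]$) shows that the product is a valid $\epsilon$-cover of $\mathbb{T}$.

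Summing the three log-cardinalities gives $\log N(\epsilon,\mathbb{T},\|\cdot\|_\infty)\le O(\log(k\rho^2 x_{\max}/\epsilon))+O(\sqrt{x_{\max}S/(c\epsilon)})$; in the small-$\epsilon$ regime of interest the square-root term dominates the logarithmic terms, and the constant $c>0$ is absorbed into $O(\cdot)$, yielding the claimed $\log N(\epsilon,\mathbb{T},\|\cdot\|_\infty)\le O(\sqrt{x_{\max}S/\epsilon})$.

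The only mildly nontrivial step is verifying that the tangent-line extension $\tilde g$ lies in $\mathbb{T}_2$: by construction $\tilde g$ is $C^1$, its second weak derivative vanishes outside $\mathcal{I}$ so $\int_{-x_{\max}}^{x_{\max}}|\tilde g''|=\int_{\mathcal{I}}|g''|\le S/c$, and $\tilde g(0)=g(0)=0$, $\tilde g'(0)=g'(0)=0$ since $0\in\mathcal{I}$. Everything else is routine product-cover bookkeeping, built on the bounds $|f(0)|\le k\rho^2+\rho$ and $|f'(0)|\le k\rho^2$ that follow directly from $\|\theta\|_\infty\le\rho$.
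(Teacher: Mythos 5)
Your proof is correct and follows essentially the same route as the paper: you decompose $f(x)=f(0)+f'(0)x+g(x)$ with $g(0)=g'(0)=0$, cover the two scalar coordinates and the $g$-piece separately (the latter via Lemma~\ref{lem:entropy2} after extending $g$ linearly beyond $\mathcal{I}$), and add the log-covering numbers. The paper's proof does exactly this; your explicit verification that the tangent-line extension $\tilde g$ lies in $\mathbb{T}_2$ simply fills in a step the paper handles with the parenthetical ``$g$ is extended linearly beyond the interval $\mathcal{I}$.''
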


\begin{proof}[Proof of Lemma \ref{lem:entropy3}]
    For any function $f\in\mathbb{T}$, it can be written as below:
    \begin{equation}
        f(x)=f(0)+f^\prime(0)x+g(x),
    \end{equation}
    where $g(x)=f(x)-f(0)-f^\prime(0)x$ satisfies that $g(0)=g^\prime(0)=0$ and $g^{\prime\prime}(x)=f^{\prime\prime}(x)$.
    Therefore, to cover $\mathbb{T}$ to $\epsilon$ accuracy, it suffices to cover the three parts to $\frac{\epsilon}{3}$ accuracy with respect to $\|\cdot\|_\infty$, respectively.

    For $f(0)$, since $|f(0)|\leq k\rho^2+\rho$, the covering number is bounded by 
    \begin{equation}
        N_1\leq \frac{6(k\rho^2+\rho)}{\epsilon}\leq \frac{8k\rho^2}{\epsilon}.
    \end{equation}
    For $f^\prime(0)x$, since $|f^\prime(0)|\leq k\rho^2$, the covering number is bounded by
    \begin{equation}
        N_2\leq \frac{6k\rho^2x_{\max}}{\epsilon}.
    \end{equation}
    Finally, for $g(x)$, since $g(x)\in\mathbb{T}_2$ with $C_2=\frac{S}{c}$ ($g$ is extended linearly beyond the interval $\mathcal{I}$), the covering number is bounded according to Lemma \ref{lem:entropy2} above.
    \begin{equation}
        \log N_3\leq C_1\sqrt{\frac{3x_{\max}S}{c\epsilon}}.
    \end{equation}

    Combining the three parts, the metric entropy is bounded by
    \begin{equation}
        \log N(\epsilon,\mathbb{T},\|\cdot\|_\infty)\leq \log N_1+\log N_2+\log N_3\leq O\left(\sqrt{\frac{x_{\max}S}{\epsilon}}\right),
    \end{equation}
    where $O$ also absorbs $c$, which is the constant lower bound of $g(x)$.
\end{proof}

According to the metric entropy above, we are ready to provide a refined (high probability) bound for mean squared error (restricted to $\mathcal{I}$). Note that we assume that the ground-truth function $f_0\in\mathbb{T}$, which is necessary for the mean squared error to vanish. 

\begin{lemma}\label{lem:msebound}
    Under the same conditions in Corollary \ref{cor:tv1}, for any interval $\mathcal{I}\subset[-x_{\max},x_{\max}]$ and a universal constant $c>0$ such that $g(x)\geq c$ for all $x\in\mathcal{I}$ and $f$ is optimized over $\mathcal{I}$, i.e. 
    $\sum_{x_i\in\mathcal{I}}(f(x_i)-y_i)^2\leq\sum_{x_i\in\mathcal{I}}(f_0(x_i)-y_i)^2$, if the output stable solution $\theta$ satisfies $\|\theta\|_\infty\leq \rho$ and the ground truth $f_0\in\mathbb{T}$, then with probability $1-\delta$ (over the random noises $\{\epsilon_i\}$), the function $f=f_\theta$ satisfies
    \begin{equation}
        \MSE_{\mathcal{I}}(f)=\frac{1}{n_{\mathcal{I}}}\sum_{x_i \in\mathcal{I}}\left(f(x_i)-f_0(x_i)\right)^2\leq O\left(\left(\frac{\sigma^2}{n_{\mathcal{I}}}\right)^{\frac{4}{5}}\left(x_{\max}S\right)^{\frac{2}{5}}\log\left(\frac{1}{\delta}\right)\right),
    \end{equation}
    where $n_{\mathcal{I}}$ is the number of data points in $\mathcal{D}$ such that $x_i\in\mathcal{I}$.
\end{lemma}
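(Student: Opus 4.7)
The strategy is the classical ``basic inequality plus localized Dudley chaining'' argument for least squares, tailored to the fact that the containing class $\mathbb{T}$ has the scale-free metric entropy $\log N(\epsilon,\mathbb{T},\|\cdot\|_\infty) \lesssim \sqrt{x_{\max} S/\epsilon}$ from Lemma \ref{lem:entropy3}.  Because the final bound is an MSE, the chaining estimate at a \emph{fixed} radius will feed back into an inequality at the \emph{random} radius $\|f-f_0\|_n$, producing a self-bounding inequality that we solve.

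First, the optimality hypothesis $\sum_{x_i \in \mathcal{I}}(f(x_i)-y_i)^2 \le \sum_{x_i \in \mathcal{I}}(f_0(x_i)-y_i)^2$, after substituting $y_i = f_0(x_i)+\epsilon_i$ and cancelling $\sum \epsilon_i^2$, rearranges to the standard basic inequality
\[
\MSE_{\mathcal{I}}(f) \;\le\; \frac{2}{n_{\mathcal{I}}}\sum_{x_i \in \mathcal{I}} \epsilon_i\big(f(x_i) - f_0(x_i)\big).
\]
Next, $\|\theta\|_\infty \le \rho$ yields $|f(0)| \le k\rho^2 + \rho$ and $|f'(0)| \le k\rho^2$, while Corollary \ref{cor:tv1} restricted to $\mathcal{I}$ (using $g \ge c$) gives $\int_{\mathcal{I}} |f''(x)|\,dx \le S/c$ where $S := \frac{1}{\eta}-\frac{1}{2}+\widetilde{O}(\sigma x_{\max})$. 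So $f \in \mathbb{T}$; combined with $f_0 \in \mathbb{T}$, the difference $h := f - f_0$ lies in the class $\mathcal{H} := \mathbb{T} - f_0$, whose $\|\cdot\|_\infty$-entropy is again $O(\sqrt{x_{\max} S/\epsilon})$.

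Second, for the conditionally-Gaussian process $h \mapsto \tfrac{1}{n_{\mathcal{I}}}\sum_{x_i \in \mathcal{I}} \epsilon_i h(x_i)$ and any radius $t>0$, Dudley's entropy integral bounds the local expected supremum by
\[
\mathbb{E}\!\left[\sup_{h\in\mathcal{H},\,\|h\|_n \le t} \tfrac{1}{n_{\mathcal{I}}}\sum_{x_i \in \mathcal{I}} \epsilon_i h(x_i)\right] \;\lesssim\; \frac{\sigma}{\sqrt{n_{\mathcal{I}}}} \int_0^{t} \sqrt{\log N(\epsilon,\mathcal{H},\|\cdot\|_n)}\, d\epsilon \;\lesssim\; \frac{\sigma (x_{\max} S)^{1/4}}{\sqrt{n_{\mathcal{I}}}}\, t^{3/4},
\]
where the second inequality uses $\|\cdot\|_n \le \|\cdot\|_\infty$ and evaluates $\int_0^t \epsilon^{-1/4}\,d\epsilon \propto t^{3/4}$.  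Borell--TIS concentration adds a $\sigma t\sqrt{\log(1/\delta)/n_{\mathcal{I}}}$ deviation term, which is lower order.

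Third, one converts this local bound at a fixed $t$ into a bound at the random $\|h\|_n = \sqrt{\MSE_{\mathcal{I}}(f)}$ via a peeling argument: partition the trivial bound $[0,R]$ with $R = O(k\rho^2)$ into dyadic shells $[2^{-k-1}R,2^{-k}R]$, apply the previous display inside each shell, and take a union bound over the $O(\log(n R))$ shells.  This yields, uniformly over $h \in \mathcal{H}$ with probability $1-\delta$,
\[
\tfrac{1}{n_{\mathcal{I}}}\sum_{x_i \in \mathcal{I}} \epsilon_i h(x_i) \;\le\; C \cdot \frac{\sigma (x_{\max} S)^{1/4}\sqrt{\log(1/\delta)}}{\sqrt{n_{\mathcal{I}}}}\cdot \|h\|_n^{3/4}.
\]
Plugging into the basic inequality with $\|h\|_n^2 = \MSE_{\mathcal{I}}(f)$ gives the self-bounding inequality $\MSE_{\mathcal{I}}(f) \le C' \sigma(x_{\max}S)^{1/4} n_{\mathcal{I}}^{-1/2} \sqrt{\log(1/\delta)} \cdot \MSE_{\mathcal{I}}(f)^{3/8}$.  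Isolating $\MSE_{\mathcal{I}}(f)^{5/8}$ and raising to the $8/5$ power produces the claimed rate $(\sigma^2/n_{\mathcal{I}})^{4/5}(x_{\max}S)^{2/5}\log(1/\delta)$.

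\textbf{Main obstacle.} The delicate step is the peeling argument: $f$ depends on the data, so $\|h\|_n$ is random and one cannot simply apply the local chaining bound with a deterministic $t$.  The cleanest way to handle this is to work with the star-hull $\mathrm{star}(\mathcal{H},0)$ (which leaves the entropy essentially unchanged) and then invoke the critical-radius machinery (e.g., Theorem 13.5 in Wainwright, 2019), ensuring that the exponent $3/4$ on $\|h\|_n$ survives and that the logarithmic overhead from the union bound over dyadic shells is absorbed into the $\log(1/\delta)$ factor.  All other steps are direct computations from the entropy bound and the basic inequality.
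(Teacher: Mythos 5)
Your proposal is correct and takes essentially the same approach as the paper. You expand the Dudley chaining and peeling computations that are packaged inside Corollary 13.7 and Theorem 13.5 of Wainwright (2019), and then in your ``Main obstacle'' paragraph you point to exactly those results (and the star-shaped localization) as the clean way to close the argument --- which is precisely what the paper does, with the one minor shortcut that $\mathbb{T}$ is convex and $f_0\in\mathbb{T}$, so $\mathbb{T}-\{f_0\}$ is already star-shaped and no separate star-hull step is needed.
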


\begin{proof}[Proof of Lemma \ref{lem:msebound}]
    According to the assumption that $f$ is optimized over $\mathcal{I}$, we have
    \begin{equation}
        \sum_{x_i\in\mathcal{I}}(f(x_i)-y_i)^2\leq\sum_{x_i\in\mathcal{I}}(f_0(x_i)-y_i)^2.
    \end{equation}
    Similar to the calculation in Lemma \ref{lem:mse}, it holds that
    \begin{equation}
        \frac{1}{2}\MSE_{\mathcal{I}}(f)=\frac{1}{2n_{\mathcal{I}}}\sum_{x_i\in\mathcal{I}} \left(f(x_i)-f_0(x_i)\right)^2\leq\frac{1}{n_{\mathcal{I}}}\sum_{x_i\in\mathcal{I}} \epsilon_i\left(f(x_i)-f_0(x_i)\right).
    \end{equation}
    
    It is obvious that the function class $\mathbb{T}$ is convex, together with the assumption that $f_0\in\mathbb{T}$, we have the function set $\mathbb{T}^\star:=\mathbb{T}-\{f_0\}$ is star-shaped (details in Section 13 of \citet{wainwright2019high}).

    Note that the metric entropy of $\mathbb{T}$ satisfies that $\log N(\epsilon,\mathbb{T},\|\cdot\|_\infty)\leq O\left(\sqrt{\frac{x_{\max}S}{\epsilon}}\right)$. According to Corollary 13.7 of \citet{wainwright2019high}, the critical radius $r$ satisfies that 
    \begin{equation}
        r^2\leq O\left(\left(\frac{\sigma^2}{n_{\mathcal{I}}}\right)^{\frac{4}{5}}\left(x_{\max}S\right)^{\frac{2}{5}}\right).
    \end{equation}

    Finally, according to Theorem 13.5 of \citet{wainwright2019high}, we have with probability $1-\delta$,
    \begin{equation}
        \MSE_{\mathcal{I}}(f)\leq O\left(\left(\frac{\sigma^2}{n_{\mathcal{I}}}\right)^{\frac{4}{5}}\left(x_{\max}S\right)^{\frac{2}{5}}\log\left(\frac{1}{\delta}\right)\right),
    \end{equation}
    which finishes the proof.
\end{proof}

Finally, Theorem \ref{thm:under} is derived by plugging in the definition of $S$.
\begin{theorem}[Restate Theorem \ref{thm:under}]\label{thm:reunder}
    Under the same conditions in Corollary \ref{cor:tv1}, for any interval $\mathcal{I}\subset[-x_{\max},x_{\max}]$ and a universal constant $c>0$ such that $g(x)\geq c$ for all $x\in\mathcal{I}$ and $f$ is optimized over $\mathcal{I}$, i.e. 
    $\sum_{x_i\in\mathcal{I}}(f(x_i)-y_i)^2\leq\sum_{x_i\in\mathcal{I}}(f_0(x_i)-y_i)^2$, if the output stable solution $\theta$ satisfies $\|\theta\|_\infty\leq \rho$ (for some constant $\rho>0$) and the ground truth $f_0\in\mathrm{BV}^{(1)}(k\rho^2,\frac{1}{c}\widetilde{O}(\frac{1}{\eta}+\sigma x_{\max}))$, then with probability $1-\delta$ (over the random noises $\{\epsilon_i\}$), the function $f=f_\theta$ satisfies
    \begin{equation}
        \MSE_{\mathcal{I}}(f)=\frac{1}{n_{\mathcal{I}}}\sum_{x_i \in\mathcal{I}}\left(f(x_i)-f_0(x_i)\right)^2\leq \widetilde{O}\left(\left(\frac{\sigma^2}{n_{\mathcal{I}}}\right)^{\frac{4}{5}}\left(\frac{x_{\max}}{\eta}+\sigma x_{\max}^2\right)^{\frac{2}{5}}\right),
    \end{equation}
    where $n_{\mathcal{I}}$ is the number of data points in $\mathcal{D}$ such that $x_i\in\mathcal{I}$.
\end{theorem}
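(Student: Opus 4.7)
\textbf{Proof plan for Theorem \ref{thm:reunder}.} The overall strategy is to reduce the MSE bound to a standard nonparametric regression problem over a carefully chosen function class whose metric entropy scales like $\epsilon^{-1/2}$, and then apply a localized complexity (critical radius) argument with a self-bounding step-loss analysis. First I would invoke Corollary \ref{cor:tv1} (valid under the stated assumptions) to obtain the weighted total variation bound $\int_{-x_{\max}}^{x_{\max}} |f''(x)|g(x)\,dx \leq S$, where $S := \frac{1}{\eta} - \frac{1}{2} + \widetilde{O}(\sigma x_{\max})$. Since $g(x) \geq c$ on $\mathcal{I}$, this immediately yields the unweighted bound $\int_{\mathcal{I}} |f''(x)|\,dx \leq S/c$. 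In parallel, the parameter bound $\|\theta\|_\infty \leq \rho$ produces $|f(0)| \leq k\rho^2 + \rho$ and $|f'(0)| \leq k\rho^2$ by direct inspection of the two-layer ReLU parameterization. This lets me define a function class $\mathbb{T}$ (restricted to $\mathcal{I}$) characterized by these three constraints on $f(0)$, $f'(0)$, and $\int_{\mathcal{I}} |f''|$, and conclude that both the learned $f$ and (by the BV$^{(1)}$ assumption on $f_0$) the ground truth $f_0$ lie in $\mathbb{T}$.

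Next I would control the $\ell_\infty$ metric entropy of $\mathbb{T}$. The core ingredient is Lemma \ref{lem:entropy}, which bounds the entropy of the unit BV$^{(1)}$ ball via the Besov space $B^2_{1,\infty}$ embedding, giving $\log N(\epsilon, \mathbb{T}_1, \|\cdot\|_\infty) \lesssim \epsilon^{-1/2}$. Decomposing any $f \in \mathbb{T}$ as $f(x) = f(0) + f'(0)x + g(x)$ with $g(0) = g'(0) = 0$ and $\int |g''| \leq S/c$, I would cover the three pieces separately: the first two give logarithmic contributions in $\epsilon$, and the third is handled by rescaling so that Lemma \ref{lem:entropy} applies with constant $C_2 = S/c$. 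The resulting bound is
\begin{equation}
\log N(\epsilon, \mathbb{T}, \|\cdot\|_\infty) \leq O\!\left(\sqrt{\frac{x_{\max} S}{\epsilon}}\right),
\end{equation}
with the constant $c$ absorbed into $O(\cdot)$.

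The final step is the self-bounding MSE argument. The optimization assumption $\sum_{x_i \in \mathcal{I}} (f(x_i) - y_i)^2 \leq \sum_{x_i \in \mathcal{I}} (f_0(x_i) - y_i)^2$ expands, after using $y_i = f_0(x_i) + \epsilon_i$, into the basic inequality $\frac{1}{2}\MSE_{\mathcal{I}}(f) \leq \frac{1}{n_{\mathcal{I}}}\sum_{x_i \in \mathcal{I}} \epsilon_i(f(x_i) - f_0(x_i))$. Since $\mathbb{T}$ is convex and contains $f_0$, the shifted class $\mathbb{T} - \{f_0\}$ is star-shaped, and Corollary 13.7 of \citet{wainwright2019high} translates the $\epsilon^{-1/2}$ entropy into a critical radius satisfying $r^2 \lesssim (\sigma^2/n_{\mathcal{I}})^{4/5}(x_{\max} S)^{2/5}$ (balancing $r^2 = \sigma r^{3/4}(x_{\max}S)^{1/4}/\sqrt{n_{\mathcal{I}}}$). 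Theorem 13.5 of \citet{wainwright2019high} then gives the high-probability MSE bound $\MSE_{\mathcal{I}}(f) \lesssim r^2 \log(1/\delta)$. Substituting $S = \widetilde{O}(1/\eta + \sigma x_{\max})$ gives the stated bound, after noting $x_{\max} S = \widetilde{O}(x_{\max}/\eta + \sigma x_{\max}^2)$.

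The main technical obstacle I anticipate is the entropy computation for $\mathbb{T}$: the BV$^{(1)}$-to-Besov embedding in Lemma \ref{lem:entropy} is only stated for the unit ball on $[-1,1]$, so careful rescaling of both the domain (factor $x_{\max}$) and the TV semi-norm (factor $S/c$) is needed to preserve both the function value bound and the second-derivative bound simultaneously. A secondary subtlety is that the localized complexity machinery in \citet{wainwright2019high} is typically applied under $L^2(\P_n)$ rather than $\|\cdot\|_\infty$, but since $\ell_\infty$-covers dominate $L^2(\P_n)$-covers, the entropy bound transfers at no cost. Everything else is a straightforward application of existing results.
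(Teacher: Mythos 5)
Your proposal matches the paper's proof essentially step for step: Corollary \ref{cor:tv1} gives the weighted $\TV^{(1)}$ bound, the lower bound $g\geq c$ on $\cI$ converts it to an unweighted bound, the $\|\theta\|_\infty\leq\rho$ constraint bounds $f(0)$ and $f'(0)$, the decomposition $f(x)=f(0)+f'(0)x+g(x)$ combined with the rescaled Besov-embedding entropy bound (the paper's Lemmas \ref{lem:entropy2} and \ref{lem:entropy3}) gives $\log N(\epsilon,\mathbb{T},\|\cdot\|_\infty)=O(\sqrt{x_{\max}S/\epsilon})$, and the self-bounding/critical-radius argument via Corollary 13.7 and Theorem 13.5 of \citet{wainwright2019high} delivers the $n_\cI^{-4/5}$ rate. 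Your anticipated subtleties (the domain/TV-norm rescaling of Lemma \ref{lem:entropy}, and that $\ell_\infty$ covers dominate $L^2(\P_n)$ covers for the localization machinery) are handled in the paper in exactly the way you describe, so this is the same proof.
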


\begin{proof}[Proof of Theorem \ref{thm:reunder}]
    Note that $\mathrm{BV}^{(1)}(k\rho^2,\frac{1}{c}\widetilde{O}(\frac{1}{\eta}+\sigma x_{\max}))$ is a subset of $\mathbb{T}$. Then the proof directly results from Lemma \ref{lem:msebound} and $S=\widetilde{O}\left(\frac{1}{\eta}+\sigma x_{\max}\right)$.
\end{proof}

\subsection{The Improved Results for the Under-parameterized Case}\label{app:nlarge}

We assume that $n/k$ is large enough such that the additional term in the $\TV^{(1)}$ bound vanishes.

\begin{assumption}\label{ass:nlarge}
    We assume that $\frac{n}{k}$ is large enough such that the last term in \eqref{equ:improvemse} $\widetilde{O}(\sigma x_{\max}\sqrt{k/n})\leq \frac{1}{2}$, which further implies that $\int_{-x_{\max}}^{x_{\max}}|f^{\prime\prime}(x)|g(x)dx\leq\frac{1}{\eta}$, where $g(x)$ is defined as \eqref{equ:g}.
\end{assumption}

Assumption \ref{ass:nlarge} requires that $\frac{n}{k}$ is larger than some constant, which naturally holds if $k=n^{1-\alpha}$ for some $\alpha>0$ and $n$ is sufficiently large. Under such assumption, we improve the MSE upper bound.

\begin{theorem}\label{thm:nlarge}
    Under the same conditions in Corollary \ref{cor:bettertv1}, assume that Assumption \ref{ass:nlarge} holds. For any interval $\mathcal{I}\subset[-x_{\max},x_{\max}]$ and a universal constant $c>0$ such that $g(x)\geq c$ for all $x\in\mathcal{I}$ and $f$ is optimized over $\mathcal{I}$, i.e. 
    $\sum_{x_i\in\mathcal{I}}(f(x_i)-y_i)^2\leq\sum_{x_i\in\mathcal{I}}(f_0(x_i)-y_i)^2$, if the output stable solution $\theta$ satisfies $\|\theta\|_\infty\leq \rho$ (for some constant $\rho>0$) and the ground truth $f_0\in\mathrm{BV}^{(1)}(k\rho^2,\frac{1}{c\eta})$, then with probability $1-\delta$ (over the random noises $\{\epsilon_i\}$), the function $f=f_\theta$ satisfies
    \begin{equation}
        \MSE_{\mathcal{I}}(f)=\frac{1}{n_{\mathcal{I}}}\sum_{x_i \in\mathcal{I}}\left(f(x_i)-f_0(x_i)\right)^2\leq \widetilde{O}\left(\left(\frac{\sigma^2}{n_{\mathcal{I}}}\right)^{\frac{4}{5}}\left(\frac{x_{\max}}{\eta}\right)^{\frac{2}{5}}\right),
    \end{equation}
    where $n_{\mathcal{I}}$ is the number of data points in $\mathcal{D}$ such that $x_i\in\mathcal{I}$.
\end{theorem}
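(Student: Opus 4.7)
The plan is to mirror the proof of Theorem \ref{thm:reunder} line by line, replacing the $\TV^{(1)}$ bound from Corollary \ref{cor:tv1} with the sharper one from Corollary \ref{cor:bettertv1} that is available under Assumption \ref{ass:nlarge}. The entire analytic machinery (function-class definition, metric entropy estimate via Lemma \ref{lem:entropy3}, critical-radius/self-bounding chaining argument in Lemma \ref{lem:msebound}) is already in place; only the effective complexity parameter $S$ changes, and the result follows essentially by substitution.

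First I would observe that Corollary \ref{cor:bettertv1} combined with Assumption \ref{ass:nlarge} yields, with probability $1-\delta$ over the noises,
\begin{equation*}
\int_{-x_{\max}}^{x_{\max}} |f^{\prime\prime}(x)|\,g(x)\,dx \;\leq\; \frac{1}{\eta} - \frac{1}{2} + \widetilde{O}\!\left(\sigma x_{\max}\sqrt{k/n}\right) \;\leq\; \frac{1}{\eta},
\end{equation*}
since by Assumption \ref{ass:nlarge} the additional term is $\leq 1/2$. Localizing to $\mathcal{I}$ via the lower bound $g(x)\geq c$ gives $\int_{\mathcal{I}} |f^{\prime\prime}(x)|\,dx \leq 1/(c\eta)$. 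In the notation of Lemma \ref{lem:msebound}, the relevant parameter improves from $S=\widetilde{O}(1/\eta+\sigma x_{\max})$ to $S=1/\eta$, eliminating the $\sigma x_{\max}$ summand entirely.

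Next, I would carry over the function class
\begin{equation*}
\mathbb{T} = \left\{ f : \mathcal{I} \rightarrow \R \;\middle|\; |f(0)|\leq k\rho^2+\rho,\; |f^{\prime}(0)|\leq k\rho^2,\; \int_{\mathcal{I}}|f^{\prime\prime}(x)|\,dx \leq \tfrac{1}{c\eta} \right\}
\end{equation*}
from Appendix \ref{app:mse}, and verify that both the stable ReLU-network solution $f=f_{\theta}$ and the ground-truth $f_0\in\mathrm{BV}^{(1)}(k\rho^2, 1/(c\eta))$ belong to $\mathbb{T}$. For $f_{\theta}$, the pointwise bounds $|f(0)|\leq k\rho^2+\rho$ and $|f^{\prime}(0)|\leq k\rho^2$ follow from $\|\theta\|_\infty\leq\rho$ exactly as in the proof of Theorem \ref{thm:reunder}; the total-variation bound was established in the previous paragraph. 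For $f_0$, the $\mathrm{BV}^{(1)}$ membership directly provides both $\|f_0\|_\infty\leq k\rho^2$ and the $1/(c\eta)$ total-variation bound, hence $f_0\in\mathbb{T}$.

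Finally I would invoke Lemma \ref{lem:entropy3} with $S=1/\eta$ to obtain $\log N(\epsilon,\mathbb{T},\|\cdot\|_\infty) \leq O(\sqrt{x_{\max}/(\epsilon\eta)})$, and then apply the chaining argument of Lemma \ref{lem:msebound} (which in turn uses Corollary 13.7 and Theorem 13.5 of \citet{wainwright2019high}) to conclude that with probability $1-\delta$,
\begin{equation*}
\MSE_{\mathcal{I}}(f) \;\leq\; \widetilde{O}\!\left(\left(\tfrac{\sigma^2}{n_{\mathcal{I}}}\right)^{4/5}\!\left(\tfrac{x_{\max}}{\eta}\right)^{2/5}\right).
\end{equation*}
There is no genuinely new technical obstacle: the whole point of the improved $\TV^{(1)}$ bound in Corollary \ref{cor:bettertv1} is precisely to isolate $S=1/\eta$ as the controlling quantity in the under-parameterized regime. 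The only bookkeeping is a union bound over the two high-probability events (the improved $\TV^{(1)}$ bound and the critical-radius inequality), which costs a $\log(1/\delta)$ factor absorbed into $\widetilde{O}(\cdot)$.
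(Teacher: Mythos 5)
Your proposal is correct and follows essentially the same route as the paper: the paper's own proof of this theorem is the one-line remark that it is ``identical to Theorem~\ref{thm:reunder}, with $S$ replaced by $\frac{1}{\eta}$,'' and you have carried out precisely that substitution, correctly sourcing the improved bound $S=1/\eta$ from Corollary~\ref{cor:bettertv1} together with Assumption~\ref{ass:nlarge} and then rerunning Lemma~\ref{lem:entropy3} and Lemma~\ref{lem:msebound}.
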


\begin{proof}[Proof of Theorem \ref{thm:nlarge}]
    The proof is identical to Theorem \ref{thm:reunder}, with $S$ replaced by $\frac{1}{\eta}$.
\end{proof}

\begin{remark}
    Compared to Theorem \ref{thm:under}, Theorem \ref{thm:nlarge} is better on the dependence of $\eta$ by removing the additional term $\sigma x_{\max}^2$. Such improvement results from the improved $\TV^{(1)}$ bound (Corollary \ref{cor:bettertv1}) and the fact that $n/k$ is sufficiently large.
\end{remark}

\section{Twice-Differentiable Interpolating Solution with Noisy Labels must be ``Sharp''}\label{app:geometry_of_interpolating_solution}

Recall that in the counter-example, we fix $x_i=\frac{2x_{\max}i}{n-1}-\frac{(n+1)x_{\max}}{n-1}$ for $i\in[n]$ and $f_0(x)=0$ for any $x$, which implies that $y_i$'s are independent random variables from $\mathcal{N}(0,\sigma^2)$.

\begin{proposition}\label{prop:sharp_interpolating_solution}
    In the example above, assume $f=f_\theta$ is an interpolating solution where $\loss$ is twice differentiable at $\theta$, then with probability $1-\delta$, we have
    \begin{equation}
        \lambda_{\max}(\nabla^2_\theta \cL(\theta)) = \Omega\left(\sigma n\left[n-24\log\left(\frac{1}{\delta}\right)\right]\right).
    \end{equation}
\end{proposition}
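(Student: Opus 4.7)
The plan is to derive this as an immediate consequence of two already-established results: the weighted $\TV^{(1)}$ \emph{lower} bound for interpolating solutions proved in Theorem~\ref{thm:counter} and the weighted $\TV^{(1)}$ \emph{upper} bound in terms of $\lambda_{\max}(\nabla^2_\theta \cL(\theta))$ proved in Theorem~\ref{thm:tv1}. The setup of the proposition (fixed equispaced design, $f_0 \equiv 0$, $y_i \sim \cN(0,\sigma^2)$) is exactly the counter-example for which Theorem~\ref{thm:counter} applies, so with probability at least $1-\delta$ over the noise $\{\epsilon_i\}$, every interpolating function $f$ satisfies
\begin{equation}
\int_{-x_{\max}}^{x_{\max}} |f''(x)|\, g(x)\, dx \;=\; \Omega\!\left(\sigma n\bigl[n - 24\log(1/\delta)\bigr]\right).
\end{equation}

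Next I would invoke the agnostic bound \eqref{eq:flatness2tv_agnostic} from Theorem~\ref{thm:tv1}, which holds for any $f=f_\theta$ at which $\cL$ is twice differentiable (no dependence on the noise model), and gives
\begin{equation}
\int_{-x_{\max}}^{x_{\max}} |f''(x)|\, g(x)\, dx \;\leq\; \frac{\lambda_{\max}(\nabla^2_\theta \cL(\theta))}{2} - \frac{1}{2} + x_{\max}\sqrt{2\cL(\theta)}.
\end{equation}
Because $f$ interpolates the data, $\cL(\theta)=0$, so the last term vanishes and the upper bound collapses to $\tfrac{1}{2}\lambda_{\max}(\nabla^2_\theta \cL(\theta)) - \tfrac{1}{2}$.

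Chaining the two bounds on the same high-probability event (both statements are over the same source of randomness, the Gaussian noise $\{\epsilon_i\}$, so no additional union bound is needed for Theorem~\ref{thm:tv1}'s deterministic inequality) yields
\begin{equation}
\frac{\lambda_{\max}(\nabla^2_\theta \cL(\theta))}{2} - \frac{1}{2} \;\geq\; \Omega\!\left(\sigma n\bigl[n - 24\log(1/\delta)\bigr]\right),
\end{equation}
and rearranging absorbs the $-1/2$ into the $\Omega(\cdot)$ whenever the right-hand side is large (e.g.\ when $n$ exceeds a constant depending on $\sigma$). There is no real obstacle here; the proposition is essentially a one-line corollary, and the only thing worth double-checking is that the twice-differentiability assumption on $\theta$ is exactly what is needed to apply \eqref{eq:flatness2tv_agnostic} (it is, by hypothesis), and that the constants in the $\Omega(\cdot)$ from Theorem~\ref{thm:counter} carry through unchanged after multiplying by $2$ and absorbing the $-1/2$.
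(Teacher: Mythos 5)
Your proof is correct and follows essentially the same route as the paper's: the paper also combines the high-probability lower bound on $\int|f''|g$ from Theorem~\ref{thm:counter} with the fact that for an interpolating solution the cross-term in the Hessian vanishes, and then invokes Lemma~\ref{lem:lambda} directly; you instead invoke the packaged inequality \eqref{eq:flatness2tv_agnostic} and set $\cL(\theta)=0$, which is the same chain of reasoning one layer of abstraction up. The rearrangement $\lambda_{\max} \geq 1 + 2\,\Omega(\cdot) \geq \Omega(\cdot)$ is in fact unconditional, so your hedge about ``$n$ exceeding a constant'' is unnecessary but harmless.
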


\begin{proof}[Proof of Proposition \ref{prop:sharp_interpolating_solution}]
    According to Theorem \ref{thm:counter}, with probability $1-\delta$, we have 
    \begin{equation}
        \int_{-x_{\max}}^{x_{\max}} |f^{\prime\prime}_\theta(x)|g(x)dx = \Omega\left(\sigma n\left[n-24\log\left(\frac{1}{\delta}\right)\right]\right),
    \end{equation}
    where $g(x)$ is defined in \eqref{equ:g}. Meanwhile, note that $f_\theta$ is an interpolating solution, and therefore
   \begin{equation}
   \begin{split}
    \nabla^2_{\theta}\loss(\theta)=&\frac{1}{n}\sum_{i=1}^n(\nabla_\theta f_{\theta}(x_i))(\nabla_\theta f_{\theta}(x_i))^T+\frac{1}{n}\sum_{i=1}^n(f_{\theta}(x_i)-y_i)\nabla^2_{\theta}f(x_i)\\=&
    \frac{1}{n}\sum_{i=1}^n(\nabla_\theta f_{\theta}(x_i))(\nabla_\theta f_{\theta}(x_i))^T.
    \end{split}
    \end{equation}
   Finally, combining the results and applying Lemma \ref{lem:lambda}, it holds that
   \begin{equation}
   \begin{split}
       &\lambda_{\max}(\nabla^2_\theta \cL(\theta))=\lambda_{\max}\left(\frac{1}{n}\sum_{i=1}^n(\nabla_\theta f_{\theta}(x_i))(\nabla_\theta f_{\theta}(x_i))^T\right)\\\geq& 1+2\int_{-x_{\max}}^{x_{\max}}|f^{\prime\prime}_\theta(x)|g(x)dx\geq \Omega\left(\sigma n\left[n-24\log\left(\frac{1}{\delta}\right)\right]\right),
   \end{split}
   \end{equation}
   which finishes the proof.
\end{proof}

\section{Technical Lemmas}
\begin{lemma}[Lemma F.4 in \citet{dann2017unifying}]\label{lem:dann}
Let $F_{i}$ for $i = 1,\cdots$ be a filtration and $X_{1},\cdots, X_{n}$ be a sequence of Bernoulli random variables with $\mathbb{P}(X_{i} = 1|F_{i-1}) = P_{i}$ with $P_{i}$ being $F_{i-1}$-measurable and $X_{i}$ being $F_{i}$ measurable. It holds that
$$\mathbb{P}\left[\exists\, n: \sum_{t=1}^{n}X_{t} < \sum_{t=1}^{n}P_{t}/2-W \right]\leq e^{-W}.$$	
\end{lemma}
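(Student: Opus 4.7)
The plan is to prove this anytime lower-tail bound via the classical exponential-supermartingale construction, closed off with Ville's maximal inequality. Writing $S_n = \sum_{t=1}^n X_t$ and $M_n = \sum_{t=1}^n P_t$, the target is to show $\mathbb{P}[\exists\, n:\, S_n < M_n/2 - W] \le e^{-W}$.

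The first step is to introduce, for a parameter $\lambda > 0$ to be fixed later, the process
\[
L_n := \exp\!\left(-\lambda S_n + (1-e^{-\lambda}) M_n\right), \qquad L_0 := 1,
\]
and to verify that $\{L_n\}$ is a non-negative supermartingale with respect to $\{F_n\}$. The $F_{t-1}$-measurability of $P_t$ lets us treat $P_t$ as a constant inside the conditional expectation, giving $\mathbb{E}[e^{-\lambda X_t} \mid F_{t-1}] = 1 - P_t(1 - e^{-\lambda})$; combined with the elementary inequality $1 - x \le e^{-x}$, applied at $x = P_t(1 - e^{-\lambda}) \in [0,1]$, one obtains $\mathbb{E}[L_n \mid F_{n-1}] \le L_{n-1}$.

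The second step is to choose $\lambda = 1$ and argue that on the event $\{S_n < M_n/2 - W\}$, the supermartingale satisfies $L_n \ge e^W$. A direct substitution yields
\[
L_n > \exp\!\left(W + M_n\left(\tfrac12 - e^{-1}\right)\right) \ge e^W,
\]
where the first inequality comes from multiplying $-S_n > W - M_n/2$ by $\lambda = 1$, and the second uses $\tfrac12 - e^{-1} > 0$ together with $M_n \ge 0$. Hence the event in the lemma is contained in $\{\sup_n L_n \ge e^W\}$, to which Ville's maximal inequality for non-negative supermartingales assigns probability at most $\mathbb{E}[L_0]/e^W = e^{-W}$, completing the proof.

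There is no real obstacle; the only design choice is the parameter $\lambda$, which must satisfy two competing constraints: $\lambda \ge 1$ (so that rescaling $-S_n > W - M_n/2$ still produces a leading $W$ in the exponent) and $1 - e^{-\lambda} \ge \lambda/2$ (so that the coefficient of $M_n$ is non-negative, letting us discard the $M_n$ term via $M_n \ge 0$). The value $\lambda = 1$ is the simplest choice that satisfies both; any $\lambda$ in a small neighborhood of $1$ would work equally well.
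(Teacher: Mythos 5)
Your proof is correct. Note that the paper does not actually prove this lemma itself---it cites it directly as Lemma F.4 of \citet{dann2017unifying} and treats it as a black-box technical tool---so there is no in-paper proof to compare against. Your argument via the exponential supermartingale $L_n = \exp(-\lambda S_n + (1-e^{-\lambda})M_n)$ followed by Ville's maximal inequality is the standard (and, as far as I am aware, the same as the original reference's) route to such anytime lower-tail bounds for Bernoulli martingale sums: the conditional MGF computation $\mathbb{E}[e^{-\lambda X_t}\mid F_{t-1}] = 1 - P_t(1-e^{-\lambda})$ together with $1-x\le e^{-x}$ gives the supermartingale property, and the choice $\lambda=1$ correctly satisfies the two constraints you identify ($\lambda\ge 1$ so the $W$ term survives, and $1-e^{-\lambda}\ge \lambda/2$ so the $M_n$ coefficient is nonnegative; the latter holds on roughly $[0,1.59]$, so $\lambda=1$ is comfortably inside the feasible set). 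The step-by-step verification all checks out, including the event inclusion $\{\exists\,n: S_n < M_n/2 - W\}\subseteq\{\sup_n L_n\ge e^W\}$ and the application of Ville's inequality with $\mathbb{E}[L_0]=1$.
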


\begin{lemma}[Covering Number of Euclidean Ball \citep{wainwright2019high}]\label{lem:cover}
For any $\epsilon>0$, the $\epsilon$-covering number of the Euclidean ball in $\mathbb{R}^d$ with radius $R>0$ is upper bounded by $(1+\frac{2R}{\epsilon})^d$.
\end{lemma}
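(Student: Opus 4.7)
The plan is to use the standard volume-comparison argument via packing. Let $B(0,R) \subset \R^d$ denote the Euclidean ball of radius $R$, and let $N$ be the $\epsilon$-packing number, i.e., the maximum number of points $x_1,\ldots,x_N \in B(0,R)$ such that $\|x_i - x_j\|_2 > \epsilon$ for all $i \neq j$. I will first recall the standard fact that a maximal $\epsilon$-packing is automatically an $\epsilon$-cover (otherwise one could add a point uncovered by all existing packing centers, contradicting maximality). Consequently, the $\epsilon$-covering number is bounded above by the $\epsilon$-packing number $N$.

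Next, I will upper bound $N$ by a volume argument. Given any $\epsilon$-packing $\{x_i\}_{i=1}^N$ of $B(0,R)$, the open balls $B(x_i, \epsilon/2)$ are pairwise disjoint because $\|x_i - x_j\|_2 > \epsilon$. Moreover, each such ball is contained in the enlarged ball $B(0, R + \epsilon/2)$ by the triangle inequality (for any $y \in B(x_i,\epsilon/2)$, $\|y\|_2 \leq \|x_i\|_2 + \epsilon/2 \leq R + \epsilon/2$). Writing $V_d$ for the volume of the unit ball in $\R^d$, disjointness and containment yield
\begin{equation}
N \cdot V_d (\epsilon/2)^d \;=\; \sum_{i=1}^N \mathrm{vol}(B(x_i,\epsilon/2)) \;\leq\; \mathrm{vol}(B(0, R + \epsilon/2)) \;=\; V_d (R + \epsilon/2)^d.
\end{equation}

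Dividing both sides by $V_d (\epsilon/2)^d$ gives $N \leq \left( \frac{R + \epsilon/2}{\epsilon/2} \right)^d = (1 + 2R/\epsilon)^d$, and combining with the first step yields the claimed bound on the covering number. There is no real obstacle: the argument is entirely geometric, and the only subtle point is the observation that a maximal packing is a cover, which is a brief proof by contradiction. Everything else is a one-line volume inequality.
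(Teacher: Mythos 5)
Your proof is correct and follows the standard volume-comparison argument: a maximal $\epsilon$-packing is an $\epsilon$-cover, and disjointness of the $\epsilon/2$-balls around packing centers inside $B(0, R+\epsilon/2)$ gives the bound $(1+2R/\epsilon)^d$. The paper does not give its own proof of this lemma, citing it directly from \citet{wainwright2019high}, and your argument is exactly the one in that reference (and the standard one throughout the literature), so there is nothing further to compare.
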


\begin{lemma}[Hoeffding's inequality \citep{sridharan2002gentle}]\label{lem:hoeffding}
    Suppose $X_1, X_2, \cdots, X_n$ are a sequence of independent, identically distributed (i.i.d.) random variables with mean $0$. Let $\bar{X}=\frac{1}{n}\sum_{i=1}^n X_i$. Suppose that $X_i \in [-b, b]$ with probability 1, then with probability $1-\delta$,
    \begin{equation}
        |\bar{X}|\leq b\cdot\sqrt{\frac{2\log(2/\delta)}{n}}.
    \end{equation}
\end{lemma}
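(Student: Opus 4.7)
The plan is to follow the classical Chernoff-bound approach for bounded random variables. The first step will be to establish a moment generating function (MGF) bound for a single bounded, mean-zero variable: by Hoeffding's lemma, for any $\lambda \in \mathbb{R}$ and any random variable $X$ with $\mathbb{E}[X]=0$ and $X\in[-b,b]$ almost surely, we have $\mathbb{E}[e^{\lambda X}] \leq e^{\lambda^2 b^2/2}$. This can be proven by convexity of $e^{\lambda x}$ (writing $x = \frac{b+x}{2b}\cdot b + \frac{b-x}{2b}\cdot(-b)$), then analyzing the resulting function of $\lambda$ via a Taylor expansion of its logarithm and bounding its second derivative by $b^2$.

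Next, I would apply the Chernoff method to the upper tail $\mathbb{P}(\bar X \geq t)$. By Markov's inequality applied to $e^{\lambda n \bar X}$ and independence of the $X_i$,
\begin{equation}
\mathbb{P}(\bar X \geq t) \;\leq\; e^{-\lambda n t} \prod_{i=1}^n \mathbb{E}[e^{\lambda X_i}] \;\leq\; e^{-\lambda n t + n\lambda^2 b^2/2}.
\end{equation}
Optimizing over $\lambda > 0$ (choosing $\lambda = t/b^2$) yields $\mathbb{P}(\bar X \geq t) \leq \exp(-n t^2/(2 b^2))$. Applying the same argument to $-X_i$ gives the symmetric lower-tail bound, and a union bound over the two tails yields $\mathbb{P}(|\bar X| \geq t) \leq 2\exp(-n t^2/(2 b^2))$.

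Finally, to convert the concentration bound into the high-probability form stated, I would set the right-hand side equal to $\delta$ and invert: solving $2\exp(-n t^2/(2 b^2)) = \delta$ gives $t = b\sqrt{2\log(2/\delta)/n}$, which is precisely the claimed bound.

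There is no real obstacle here since this is a textbook result; the only step that requires genuine calculation is Hoeffding's lemma itself, which is standard. The main care is to get the constants right (the factor of $2$ inside the logarithm comes from the union bound over both tails, and the factor of $2$ in the numerator of the square root comes from the sub-Gaussian parameter $b^2$ rather than a tighter bound like $(b_+ - b_-)^2/4$ that would apply for non-symmetric intervals).
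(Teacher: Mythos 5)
Your proof is correct and is the standard Chernoff-method derivation of Hoeffding's inequality (Hoeffding's lemma for the MGF, Markov on $e^{\lambda n \bar X}$, optimize $\lambda = t/b^2$, union bound over both tails, invert). The paper itself does not prove this lemma---it is stated as a cited textbook fact---so there is nothing to compare against; your argument is a self-contained and correct substitute, and your bookkeeping of constants (the $2$ inside the logarithm from the two-sided union bound, the $b^2$ sub-Gaussian variance proxy for the symmetric interval $[-b,b]$) matches the stated form exactly.
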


\begin{lemma}[Multiplicative Chernoff bound \citep{chernoff1952measure}]\label{lem:chernoff}
Let $X$ be a Binomial random variable with parameters $p, n$. Then for any $\delta \in [0,1]$, it holds that:
\begin{equation}
    \mathbb{P}[X > (1 + \delta)pn] < e^{-\frac{\delta^2 pn}{3}},
\end{equation} 
\begin{equation}
    \mathbb{P}[X < (1 - \delta)pn] < e^{-\frac{\delta^2 pn}{2}}.
\end{equation}
\end{lemma}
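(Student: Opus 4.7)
\textbf{Proof proposal for Lemma \ref{lem:chernoff}.} The plan is to use the classical Chernoff exponential-moment method. Write $X=\sum_{i=1}^n X_i$ where $X_i$ are i.i.d. Bernoulli$(p)$. For any $t>0$, Markov's inequality applied to $e^{tX}$ gives
\begin{equation*}
    \mathbb{P}[X>(1+\delta)pn] \;\le\; e^{-t(1+\delta)pn}\,\mathbb{E}[e^{tX}] \;=\; e^{-t(1+\delta)pn}\,(1-p+pe^t)^n.
\end{equation*}
Using the elementary inequality $1+x\le e^x$ with $x=p(e^t-1)$, we obtain $(1-p+pe^t)^n\le \exp(pn(e^t-1))$, hence
\begin{equation*}
    \mathbb{P}[X>(1+\delta)pn] \;\le\; \exp\bigl(pn(e^t-1)-t(1+\delta)pn\bigr).
\end{equation*}

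Next I would optimize in $t>0$ by choosing $t=\log(1+\delta)$, which yields the standard ``binomial KL'' form
\begin{equation*}
    \mathbb{P}[X>(1+\delta)pn] \;\le\; \Bigl(\tfrac{e^{\delta}}{(1+\delta)^{1+\delta}}\Bigr)^{pn} \;=\; \exp\!\bigl(-pn\,\psi_+(\delta)\bigr),\qquad \psi_+(\delta):=(1+\delta)\log(1+\delta)-\delta.
\end{equation*}
For the lower tail, I would instead apply Markov's inequality to $e^{-tX}$ with $t>0$, mirror the calculation, and optimize at $t=-\log(1-\delta)$ to obtain $\mathbb{P}[X<(1-\delta)pn]\le \exp(-pn\,\psi_-(\delta))$ with $\psi_-(\delta):=(1-\delta)\log(1-\delta)+\delta$.

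The remaining task is to weaken the (tight) rate functions into the clean quadratic bounds stated in the lemma. I would verify by elementary calculus, on the interval $\delta\in[0,1]$, that
\begin{equation*}
    \psi_+(\delta) \;\ge\; \tfrac{\delta^2}{3}, \qquad \psi_-(\delta) \;\ge\; \tfrac{\delta^2}{2}.
\end{equation*}
Each is a one-variable inequality proved by comparing Taylor series at $\delta=0$: differentiating $\psi_+(\delta)-\delta^2/3$ twice shows the second derivative is nonnegative on $[0,1]$, and both the function and its first derivative vanish at $0$; the same scheme (in fact slightly easier, since $\psi_-$ has a Taylor series with only nonnegative coefficients when expanded in $\delta$) handles the lower tail with the sharper constant $1/2$. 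Substituting these quadratic lower bounds back into the exponential tail bounds yields the stated inequalities.

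The only mildly delicate step is the quadratic comparison at the end—choosing the constants $1/3$ and $1/2$ correctly so that the inequalities hold on the full range $\delta\in[0,1]$ rather than only near $0$—but this is a routine univariate calculus check; the rest is the textbook Chernoff-bound template and presents no real obstacle.
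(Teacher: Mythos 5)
Your proposal is the standard Chernoff/MGF derivation, and it is essentially correct; note that the paper itself does not prove this lemma at all—it is stated in the Technical Lemmas appendix with a citation to \citet{chernoff1952measure}—so there is no in-paper argument to compare against, and supplying the textbook proof is exactly the right move.

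One small but genuine slip in your final calculus step: for the upper tail you claim that $h(\delta):=\psi_+(\delta)-\delta^2/3=(1+\delta)\log(1+\delta)-\delta-\delta^2/3$ has nonnegative second derivative on all of $[0,1]$. That is false: $h''(\delta)=\frac{1}{1+\delta}-\frac{2}{3}<0$ for $\delta>\frac{1}{2}$, so the convexity argument breaks on half the interval. The inequality $\psi_+(\delta)\ge\delta^2/3$ on $[0,1]$ is nevertheless true, and the fix is a first-derivative (monotonicity) argument rather than a second-derivative one: $h'(\delta)=\log(1+\delta)-\tfrac{2}{3}\delta$ is concave with $h'(0)=0$ and $h'(1)=\log 2-\tfrac{2}{3}>0$, hence $h'\ge 0$ on $[0,1]$, so $h$ is nondecreasing and $h(\delta)\ge h(0)=0$. (Alternatively, invoke the standard bound $\psi_+(\delta)\ge\frac{\delta^2}{2+2\delta/3}\ge\frac{3\delta^2}{8}\ge\frac{\delta^2}{3}$ for $\delta\le 1$.) Your lower-tail verification is fine as sketched, since $\psi_-(\delta)=\sum_{k\ge 2}\frac{\delta^k}{k(k-1)}$ indeed has nonnegative Taylor coefficients with leading term $\delta^2/2$. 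With that one patch, the proof is complete.
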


\end{document}